\documentclass{article}

\usepackage[preprint,nonatbib]{neurips_2020}

\usepackage{graphicx}
\usepackage{amssymb,amsmath}
\usepackage{latexsym}
\usepackage{pst-node,pst-tree}
\usepackage{xcolor}
\usepackage{algorithmic}
\usepackage[linesnumbered,ruled]{algorithm2e}
\usepackage{bm}
\usepackage{setspace}
\usepackage{subfig}
\usepackage{booktabs}
\usepackage{amsthm}
\usepackage{url}
\usepackage[version=4]{mhchem}
\usepackage{wrapfig}
\usepackage{textcomp}
\usepackage{changepage}
\usepackage{caption}
\usepackage{romannum}
\usepackage{tikz}
\usepackage{cancel}
\usepackage{enumitem}
\usepackage{mathtools}
\usepackage{float}
\usepackage{dsfont}
\usepackage{titletoc}
\usepackage{upgreek}

\usepackage[super]{nth}

\usepackage[utf8]{inputenc} \usepackage[T1]{fontenc}    \usepackage{hyperref}       \usepackage{amsfonts}       \usepackage{nicefrac}       \usepackage{microtype}

\definecolor{scarlet}{RGB}{190, 1, 25}
\definecolor{crimson}{RGB}{153, 0, 0}
\definecolor{waterblue}{RGB}{55, 120, 191}
\definecolor{tangerine}{RGB}{249, 115, 6}
\definecolor{grassgreen}{RGB}{77, 164, 9}

\newcommand{\comment}[1]{}
\newcommand{\techreport}[1]{}
\newcommand{\mmp}[1]{\textcolor{crimson}{\em MMP: #1}}
\newcommand{\yuchaz}[1]{\textcolor{waterblue}{\em Yuchia: #1}}

\newcommand{\boundarymap}{{\sc BoundaryMap}}
\newcommand{\manifoldhelmholtzian}{{\sc ManifoldHelmholtzian}}

\newcommand{\pdbootstrap}{{\sc BootstrapPD}}
\newcommand{\vrcomplex}{{\sc VRComplex}}

\renewcommand{\vec}[1]{\bm{\mathbf{#1}}}
\newcommand{\inv}[1]{\frac{1}{#1}}
\DeclareMathOperator*{\argmin}{argmin}  \newcommand{\img}{\mathrm{im}}
\newcommand{\scx}{\mathrm{SC}}
\newcommand{\diag}{\mathrm{diag}}

\newcommand{\pequal}{\mathrel{\phantom{=}}}
\newcommand{\rrr}{\mathbb R}

\newcommand{\vecxyz}{\vec x, \vec y, \vec z}
\newcommand{\vecxy}{\vec x, \vec y}

\newcommand{\dd}{\mathsf{d}}

\newcommand{\LL}{\vec{\mathcal L}}
\newcommand{\M}{{\mathcal M}}  \newcommand{\T}{{\mathcal T}}

\newcommand{\romanenu}[1]{(\romannum{#1})}

\newcommand{\bit}{\begin{itemize}}
\newcommand{\eit}{\end{itemize}}
\newcommand{\benum}{\begin{enumerate}}
\newcommand{\eenum}{\end{enumerate}}
\newcommand{\beq}{\begin{equation}}
\newcommand{\eeq}{\end{equation}}

\theoremstyle{definition}
\newtheorem{definition}{Definition}
\newtheorem*{remark}{Remark}
\newtheorem*{example}{Example}

\theoremstyle{plain}

\newtheorem{theorem}[definition]{Theorem}
\newtheorem{corollary}[definition]{Corollary}
\newtheorem{lemma}[definition]{Lemma}
\newtheorem{proposition}[definition]{Proposition}
\newtheorem*{claim}{Claim}

\newtheorem{assumption}{Assumption}

\renewenvironment{proof}{\paragraph{Proof.}}{\hfill\qed}
\newenvironment{proofsk}{\paragraph{Sketch of proof.}}{\hfill\qed}
\newenvironment{proofof}[1]{\paragraph{Proof of #1.}}{\hfill\qed}

\newcommand{\iksetremovej}{i_0\cdots \hat{i_j}\cdots i_k}
\newcommand{\iksetinsertj}{i_0\cdots \check{i_j}\cdots i_k}

\newcommand{\suppnumberprefix}{S}
\newcommand{\setupsupp}{
	\clearpage
	\newpage
	\appendix

	\renewcommand{\thefigure}{\suppnumberprefix\arabic{figure}}
	\setcounter{figure}{0}

	\renewcommand{\thetable}{\suppnumberprefix\arabic{table}}
	\setcounter{table}{0}

	\renewcommand{\theequation}{\suppnumberprefix\arabic{equation}}
	\setcounter{equation}{0}

	\renewcommand{\thealgocf}{\suppnumberprefix\arabic{algocf}}
	\setcounter{algocf}{0}

	\renewcommand{\thedefinition}{\suppnumberprefix\arabic{definition}}
	\setcounter{definition}{0}
	\section*{\centering Supplementary Material of \\ \titlename}

	\pagenumbering{roman}
	\setcounter{page}{1}
	\section*{Table of Contents}
	\startcontents[appendix]
	\printcontents[appendix]{l}{1}{\setcounter{tocdepth}{2}}
	\newpage

\pagenumbering{arabic}
	\setcounter{page}{1}
}

\let\vec\mathbf
\newcommand{\va}{{\vec a}}
\newcommand{\vb}{{\vec b}}

\newcommand{\ve}{{\vec e}}
\newcommand{\vf}{{\vec f}}
\newcommand{\vg}{{\vec g}}

\newcommand{\vn}{{\vec n}}

\newcommand{\vs}{{\vec s}}

\newcommand{\vu}{{\vec u}}
\newcommand{\vv}{{\vec v}}
\newcommand{\vw}{{\vec w}}
\newcommand{\vx}{{\vec x}}
\newcommand{\vy}{{\vec y}}
\newcommand{\vz}{{\vec z}}

\newcommand{\vB}{{\vec B}}

\newcommand{\vF}{{\vec F}}

\newcommand{\vI}{{\vec I}}

\newcommand{\vL}{{\vec L}}

\newcommand{\vW}{{\vec W}}
\newcommand{\vX}{{\vec X}}

\newcommand{\rotvert}{\rotatebox[origin=c]{90}{$\vert$}}
\newcommand{\rowsvdots}{\multicolumn{1}{@{}c@{}}{\vdots}}
 \renewcommand{\mmp}[1]{} \renewcommand{\yuchaz}[1]{}

\def\titlename{Helmholtzian Eigenmap: Topological feature discovery \& edge flow learning from point cloud data}

\title{\titlename}

\author{
  Yu-Chia Chen \\
  Meta Platforms, Inc. \\
  Seattle, WA 98109 \\
  \texttt{yuchaz@uw.edu} \\
  \And
  Weicheng Wu\\
  Department of Statistics \\
  University of Washington \\
  Seattle, WA 98195 \\
  \texttt{weich555@uw.edu} \\
  \And
  Marina Meil\u{a} \\
  Department of Statistics \\
  University of Washington \\
  Seattle, WA 98195 \\
  \texttt{mmp2@uw.edu} \\
  \And
  Ioannis G. Kevrekidis \\
  Chemical and Biomolecular Engineering \& \\
  Applied Mathematics and Statistics \\
Johns Hopkins University\\
  Baltimore, MD 21218\\
  \texttt{yannisk@jhu.edu}
}

\begin{document}
\pagenumbering{arabic}

\maketitle

\begin{abstract}
The manifold Helmholtzian (1-Laplacian) operator $\Delta_1$ elegantly
generalizes the Laplace-Beltrami operator to vector fields on a
manifold $\M$.  In this work, we propose the estimation of the manifold
Helmholtzian from point cloud data by a weighted 1-Laplacian
$\LL_1$. While higher order Laplacians have been introduced and studied,
this work is the first to present a graph Helmholtzian constructed from a simplicial complex as a {\em consistent} estimator for
the continuous operator in a non-parametric setting.
Equipped with the geometric and topological information about $\M$,
the Helmholtzian is a useful tool for the analysis of flows and vector
fields on $\M$ via the Helmholtz-Hodge theorem. In addition, the $\LL_1$
allows the smoothing, prediction, and 
feature extraction of the flows. We
demonstrate these possibilities on substantial sets of synthetic and real
point cloud datasets with non-trivial topological structures;
and provide theoretical results on the limit of $\LL_1$ to $\Delta_1$.
\end{abstract}

\section{Motivation}
In this paper we initiate the estimation of higher order Laplacian
operators from point cloud data, with a focus on the first order
Laplacian operator $\Delta_1$ of a manifold. Laplacians are known to
be intimately tied to a manifold's topology and geometry. While the
{\em Laplace-Beltrami} operator $\Delta_0$, an operator acting on
functions ($0$-forms), is well studied and pivotal in classical
manifold learning; estimating the $1$-Laplacian $\Delta_1$, an
operator acting on vector fields ($1$-forms), for a manifold has rarely been
attempted yet. Order $k$ Laplacian operators (on $k$-forms), denoted $\Delta_k$, exist as well, for $k=1,2,\ldots$.

The discrete operator analogue of $\Delta_k$, known as the {\em $k$-Hodge Laplacian matrix} $\vec L_k$,
has been proposed more than 7 decades ago \cite{EckmannB:44}.
The beauty of the aforementioned framework generated numerous applications
in areas such as numerical analysis \cite{ArnoldD.F.W+:10,DodziukJ:76},
edge flow learning on graphs \cite{SchaubMT.B.H+:20,JiaJ.S.S+:19},
pairwise ranking \cite{JiangX.L.Y+:11}, and game theory \cite{CandoganO.M.O+:11}.

Being able to estimate $\Delta_1$ by a discrete Helmholtzian $\LL_1$ acting on the edges of a graph can support many applications, just as the $\Delta_0$ estimator by weighted Laplacians successfully did.
For instance, \romanenu{1} topological information, i.e.,
the first Betti number $\beta_1$ \cite{LimLH:20}, can be obtained by the dimension of the null space of $\LL_1$;
\romanenu{2} low dimensional representation of the space of vector fields on a manifold are made possible, similar to the dimensionality
reduction algorithms such as Laplacian Eigenmap from the discrete estimates of $\Delta_0$;
\romanenu{3} the well known {\em Helmholtz-Hodge decompositon} (HHD) \cite{BhatiaH.N.P+:13,LimLH:20}
allows us to test, e.g., if a vector field on the manifold $\M$ is approximately a gradient
or a rotational field;
lastly \romanenu{4}, edge flow semi-supervised learning (SSL) and unsupervised learning algorithms, i.e., flow prediction and flow smoothing in edge space,
can be easily derived from the well-studied node based learning models
\cite{BelkinM.N.S+:06,OrtegaA.F.K+:18} with the aid of $\LL_1$.

In this work, we propose a discrete estimator ($\LL_1$) of the Helmholtzian $\Delta_1$
with proper triangular weights which resembles the well known Vietoris-Rips (VR) complex in
the persistent homology theory. We show separately the (pointwise) convergence
of the down and the up components of the discrete graph Helmholtzian $\LL_1$ to the continuous operators $\Delta_1^{\rm down}$  and $\Delta_1^{\rm up}$.
In addition, we present several applications to graph signal
processing and semi-supervised learning algorithms on the edge flows
with the constructed $\LL_1$.
We support our theoretical claims and illustrate the versatility of
the proposed $\Delta_1$ estimator with extensive empirical results on
synthetic and real datasets with non-trivial manifold structures.

In the next section we briefly introduce Hodge theory and higher
order Laplacians. Section \ref{sec:algorithms} presents the $\LL_1$
construction algorithm. The theoretical results are in Section
\ref{sec:consistency-analysis}.  Sections
\ref{sec:application-laplacian} and \ref{sec:experiments} provide
several applications of the estimated Helmholtzian to the analysis of
vector fields.  Those Figure/Table/Equation/Theorem references with
prefix \suppnumberprefix{} are in the Supplement.

\section{Background: Hodge theory}
\label{sec:background}
\paragraph{Simplicial complex}
A natural extension of a graph to higher dimensional relations is
called a {\em simplicial complex}.  Define a $k$-simplex to be a
$k$-dimensional polytope which is the convex hull of its $k+1$
(affinely independent) vertices.  A simplicial complex $\scx$ is a set
of simplices such that every {\em face} of a simplex from $\scx$ is also in
$\scx$.  Let $\Sigma_k$ be the collection of $k$-simplices $\sigma_k$
of $\scx$; we write $\scx = (\Sigma_0,\Sigma_1,\cdots,\Sigma_\ell)$, with $n_k = |\Sigma_k|$. A graph is $G = (V, E) = (\Sigma_0, \Sigma_1)$, with $n_0=|V|=n$. In this work, we focus on dimension $k\leq 2$ and simplicial complexes of the form  $\scx_2=(V, E, T)\equiv (\Sigma_0, \Sigma_1, \Sigma_2)$, where $T$ is the set of triangles of $\scx$.
\paragraph{$k$ (co-)chain}
Given an arbitrary orientation for each simplex $\sigma_i^k\in
\Sigma_k$, one can define the finite-dimensional vector space
$\mathcal C_k$ over $\Sigma_k$ with coefficients in $\mathbb R$. An
element $\vec{\omega}_k\in \mathcal C_k$ is called a {\em $k$-chain}
and can be written as $\vec{\omega}_k = \sum_i \omega_{k,i}
\sigma^k_i$.
Since $\mathcal C_k$ is isomorphic to $\rrr^{n_k}$,
$\vec{\omega}_k$ can be represented by a vector of coordinates
$\vec{\omega}_k = (\omega_{k,1},\cdots,\omega_{k, n_k})^\top\in\mathbb R^{n_k}$.
$\mathcal C^k$ denotes the dual space of $\mathcal C_k$; an element of $\vec{\omega}^k\in \mathcal C^k$ is called a {\em $k$-cochain}.
Even though they are intrinsically different, we will use {\em chains} and {\em cochains} interchangeably
for simplicity in this work. 
Readers are encouraged to consult \cite{LimLH:20} for thorough discussions on
the distinction between these two terms.

\paragraph{(Co-)boundary map}
The {\em boundary map (operator)} $\mathcal B_k: \mathcal C_k \to\mathcal C_{k-1}$ (defined rigorously in the Supplement
\ref{sec:background-ext-calculus}) maps a simplex $\sigma_k$ to the $k-1$-chain of its faces, with signs given by the
orientation of the simplex w.r.t. each face. For example,
let $x, y, z\in V$, edges $[x,y],[y,z],[x,z] \in E$, and a triangle $t=[x,y,z]\in T$, we have $B_2(t)=[x,y]+[y,z]-[x,z]$.
Since $\mathcal C_k$ is isomorphic to $\mathbb R^{n_k}$, one can represent $\mathcal B_k$ by a {\em boundary map (matrix)}
$\vec{B}_k \in\{0,\pm 1\}^{n_{k-1}\times n_k}$. The entry
$(\vec{B}_k)_{\sigma_{k-1},\sigma_k}$ represents the orientation of $\sigma_{k-1}$ as a face of $\sigma_k$,
or equals 0 when the two are not adjacent.
For $k=1$, the boundary map is the node to edge {\em graph incidence matrix}, i.e.,
$(\vB_1)_{[a], [xy]} = 1$ if $a = x$, $(\vB_1)_{[a], [x,y]}=-1$ if $a=y$, and zero otherwise;
for $k=2$, each column of $\vec{B}_2$ contains
the orientation of a triangle w.r.t. its edges.
In other words, $(\vB_2)_{[a,b], [x,y,z]} = 1$ if $[a,b] \in \{[x,y], [y,z]\}$, $(\vB_2)_{[a,b], [x,y,z]} = -1$
if $[a,b] = [x,z]$, and $0$ otherwise.
In this paper, we will only work with simplices up to dimension $k=2$.
Closely related to {\em boundary map} is the {\em co-boundary map}. This operator
is the adjoint of the {\em boundary map} and it maps a simplex to its co-faces, i.e.,
$\mathcal B^k: \mathcal C^{k-1}\to \mathcal C^{k}$.
The corresponding {\em co-boundary matrix} is simply the transpose of the
{\em boundary matrix}, i.e., $\vec{B}_k^\top$.
Pseudocode for constructing $\vec{B}_k$ can be found in Algorithm \ref{alg:boundary-map}.

\paragraph{The discrete $k$-Laplacian}
The unnormalized $k$-Laplacian $\vec{L}_k = \vec{B}_k^\top\vec{B}_k + \vec{B}_{k+1}\vec{B}_{k+1}^\top$
was first introduced by \cite{EckmannB:44} as a discrete analog to $\Delta_k$.
One can verify that $\vec{L}_0 = \vec{B}_1\vec{B}_1^\top$ represents the {\em unnormalized} graph
Laplacian \cite{ChungF:96}.  To extend the aforementioned construction
to a weighted $k$-Laplacian, one can introduce a diagonal non-negative
{\em weight matrix} $\vW_k$ of dimension $n_k$, with
$(\vec{W}_k)_{[\sigma_k, \sigma_k]}$ being the weight for simplex
$\sigma_k$.  For unweighted $k$-Laplacians, $\vec{W}_{k-1}, \vec{W}_{k}, \vec{W}_{k+1}$ are equal to
the unit matrices.
By analogy to the {\em random walk} graph Laplacian,
\cite{HorakD.J:13} define a (weighted) {\em random walk} $k$-Hodge Laplacian by
$
	\vec{\mathcal L}_k = \vec{B}_k^\top\vec{W}_{k-1}^{-1}\vec{B}_k\vec{W}_k + \vec{W}_k^{-1}\vec{B}_{k+1}\vec{W}_{k+1}\vec{B}_{k+1}^\top.
$
Of specific interest to us are the weighted Laplacians for $k=0$
(graph Laplacian) and $k=1$ (graph Helmholtzian). The operator
$\vec{\mathcal
L}_0=\vec{W}_0^{-1}\vec{B}_{1}\vec{W}_1\vec{B}_{1}^\top$ coincides
with the {\em random walk} graph Laplacian \cite{ChungF:96}. The Helmholtzian is defined as
\begin{equation}
\label{eq:weighted-laplacian}
\LL_1 =  a\cdot \underbrace{\vB_1^\top\vW_0^{-1}\vB_1\vW_1}_{\LL_1^\mathrm{down}} + b\cdot \underbrace{\vW_1^{-1}\vB_2\vW_2\vB_2^\top}_{\LL_1^\mathrm{up}}.
\end{equation}
Here $a, b$ are non-negative constants which were usually set to $1$ in the previous studies, we will show that $a=\frac{1}{4}$ and $b=1$ in Section \ref{sec:consistency-analysis}.
Since $\vec{\mathcal
L}_k$ is an asymmetric matrix, one can symmetrize it by
$\vec{\mathcal L}_1^s = \vec{W}_1^{1/2} \vec{\mathcal
L}_1\vec{W}_1^{-1/2}$
while preserving the spectral properties \cite{SchaubMT.B.H+:20};
$\LL_1^s$ is called the {\em
symmetrized} or {\em renormalized} $1$-Hodge Laplacian.
For more information about the boundary map and $k$-Laplacian, please
refer to \cite{LimLH:20,HorakD.J:13,SchaubMT.B.H+:20}.

\begin{flalign}
	\hspace{-10pt}\mathcal C_1 \simeq \mathbb R^{n_1} = \lefteqn{\overbracket{\phantom{\overbrace{\img\left(\vec{W}_1^{1/2}\vec{B}_{1}^\top\right)}^{temp} \oplus \ker\left(\vec{\mathcal L}_1\right)}}^{\ker\left(\vec{B}_{2}^\top\vec{W}_1^{-1/2}\right) = \ker(\text{curl})}}
	\underbrace{\img\left(\vec{W}_1^{1/2}\vec{B}_{1}^\top\right)}_{\text{gradient}} \oplus \underbracket{\overbrace{\ker\left(\vec{\mathcal L}_1\right)}^{\text{harmonic}} \oplus \underbrace{\img\left(\vec{W}_1^{-1/2}\vec{B}_{2}\right)}_{\text{curl}}}_{\ker\left(\vec{B}_1\vec{W}_1^{1/2}\right) = \ker(\text{gradient})}.
\label{eq:hhd-equation}
\end{flalign}

\paragraph{(Normalized) Hodge decomposition}
This celebrated result expresses a vector field as the direct sum of a
{\em gradient}, a {\em harmonic}, and a {\em rotational} vector field.
The eigenvectors of the $k$-Laplacian, which span the vector space $\mathcal C_k$
of $k$-chains, specifically form the bases of three different subspaces (the
image of $\LL_k^\mathrm{down}$, the image of $\LL_k^\mathrm{up}$, and the
kernel of both operators).
Here, we mainly discuss the
{\em normalized Hodge decomposition} defined by \cite{SchaubMT.B.H+:20}
for $0$ or $1$-cochains as well as the Laplacian operators $\LL_0$ or $\vec{\mathcal L}_{1}^s$;
but they can be generalized to higher order cochains and Laplacians.
For $k=0$ (graph Laplacian), the first term in the decomposition vanishes and we have
only the decomposition of the null and image of $\vec{\mathcal L}_0$.
As for $k=1$ (Helmholtzian), one can obtain the decomposition as in \eqref{eq:hhd-equation}.
Here the symbols $\ker, \img$ denote respectively
the null space and image of an operator.
For any 1-cochain $\vec{\omega} \in \mathbb R^{n_1}$ we can write
$\vec{\omega} = \vec{g} \oplus \vec{r} \oplus \vec{h}$, with
$\vec{g} = \vW_1^{1/2}\vec{B}_1^\top\vec{p}$  the gradient,
$\vec{r} = \vW_1^{-1/2}\vec{B}_2 \vec{v}$ the curl, and $\vec{h}$ the harmonic flow component.
The flows $\vec{g}, \vec{r}, \vec{h}$ can be estimated by least squares, i.e., $\hat{\vec{p}} = \mathrm{argmin}_{\vec{p}\in\mathbb R^{n_0}} \|\vW_1^{1/2}\vec{B}_1^\top\vec{p} - \vec{\omega}\|^2$,
$\hat{\vec{v}} = \mathrm{argmin}_{\vec{v}\in\mathbb R^{n_2}} \|\vW_1^{-1/2}\vec{B}_2 \vec{v} - \vec{\omega}\|^2$, and finally
$\hat{\vec{h}} = \vec{\omega} - \vW_1^{1/2}\vec{B}_1^\top \hat{\vec{p}} - \vW_1^{-1/2}\vec{B}_2\hat{\vec{w}}$.

\paragraph{$k$-Laplacian operators on manifolds} These operators act on {\em differential forms} of order $k$ \cite{DesbrunM.K.T+:08, WhitneyH:05}, the continuous  analogues to $k$-cochains. For instance,  a $0$-form is a scalar function, and an $1$-form a vector field.
The $k$-Hodge Laplacian is defined to be
$\Delta_k \coloneqq \mathsf{d}_{k-1}\updelta_k + \updelta_{k+1}\mathsf{d}_{k} = (\mathsf d + \updelta)^2$.
Similar to the discrete operator, $\Delta_k$ can be written as the sum of down
$k$-Laplacian ($\Delta_k^\mathrm{down} = \dd_{k-1}\updelta_k$) and up
$k$-Laplacian ($\Delta_k^\mathrm{up} = \updelta_{k+1}\mathsf{d}_{k}$).
The operators $\updelta_k$ and $\mathsf{d}_k$, called respectively {\em exterior derivative} and {\em co-derivative} are the differential analogues of the boundary $\vB_k$ and co-boundary operators $\vB_{k-1}$ (definitions in Supplement \ref{sec:background-ext-calculus}).
The well-known Laplace-Beltrami operator is $\Delta_0 = \updelta_1\mathsf d_0$. For our paper, the main object of interest is the 1-Hodge Laplacian $\Delta_1=\mathsf{d}_0\updelta_1+\updelta_2\mathsf{d}_1$, also known as {\em Helmholtzian}.
For $d = 3$, one has $\mathsf{d}_0 = \mathrm{grad}$, and $\updelta_1 = -\mathrm{div}$, and $\mathsf{d}_1$ corresponds to
$\mathrm{curl}$. The vector Laplacian in 3D (coordinate-wise $\Delta_0$) corresponds to $\Delta_1$,
i.e., $\Delta_1 = -\nabla^2 = -\mathrm{grad}\,\mathrm{div} + \mathrm{curl}\,\mathrm{curl}$ \cite{BhatiaH.N.P+:13}.
For a 1-form $\zeta_1 = (f_1,\cdots, f_d)$, the expression of $\Delta_1\zeta_1$ in local coordinates can be found in Proposition \ref{thm:hodge-1-laplacian-local-coord} in Supplement \ref{sec:lemmas-exterior-calculus}. In particular, if $\zeta$ is purely {\em curl} ($\dd_0\updelta_1\zeta = 0$) or {\em gradient flow} ($\updelta_0\dd_1\zeta = 0$), then $\Delta_1\zeta$ is a coordinate-wise 0-Laplacian, i.e., $\Delta_1\zeta \propto (\Delta_0 f_1,\cdots \Delta_0 f_d)$, as shown in Corollary \ref{thm:pure-curl-grad-1-Laplacian} or in \cite{LeeJM:06}.
 \section{Problem formulation and main algorithm}
\label{sec:algorithms}
We now formally describe the aim of this work.
Suppose we observe data $\vec{X}\in\rrr^{n\times D}$,
with data points denoted by $\vec{x}_i \in \rrr^D \,\forall\, i\in [n]$, that are sampled from a smooth $d$-dimensional submanifold $\mathcal M \subset \rrr^D$; and the sampling density is uniform on $\M$. In this paper, a point on $\M$ has two notations: $\vec x$ is its $\rrr^D$ coordinate vector, while $x$ is the coordinate free point on $\M$. For instance, computations are always in $\rrr^D$ coordinates; whereas $V,E,T$, and geodesics refer to the coordinate-free representations.

\begin{algorithm}[H]
    \setstretch{1.15}
    \SetKwInOut{Input}{Input}
    \SetKwInOut{Output}{Return}
    \SetKwComment{Comment}{$\triangleright $\ }{}
\Input{data $\vec{X}\in\mathbb R^{n\times D}$, radius $\delta$, kernel bandwidth $\varepsilon$}
$\scx_2 = (V,E,T) \gets \text{\vrcomplex}$$(\texttt{data}=\vec{X},\, \texttt{max\_dist}=\delta,\, \texttt{max\_dim}=2)$ \\
    $\vec{B}_1 = \textsc{BoundaryMap}(V, E)$ \,\Comment{Algorithm \ref{alg:boundary-map}}
    $\vec{B}_2 = \textsc{BoundaryMap}(E, T)$ \label{step:b2}\\
$\vec{W}_2 \gets \diag\{w^{(2)}(t),\,t\in T\}$ as in \eqref{eq:vr-kernel-general-form}\label{step:w2}\\
    $\vec{W}_1 \gets \diag\{|\vec{B}_2|\vec{W}_2\vec{1}_{n_2}\}$
\label{step:w1}\\
    $\vec{W}_0 \gets \diag\{|\vec{B}_1|\vec{W}_1\vec{1}_{n_1}\}$
\label{step:w0}\\
    $\vec{\mathcal L}_1 \gets \inv{4} \vec{B}_1^\top \vec{W}_0^{-1}\vec{B}_1\vec{W}_1 + \vec{W}_1^{-1}\vec{B}_2\vec{W}_2\vec{B}_2^\top$ \label{step:ll1} \Comment{Set $a = \inv{4}; b=1$ in \eqref{eq:weighted-laplacian}}
    $\vec{\mathcal L}_1^s \gets \vec{W}_1^{1/2} \vec{\mathcal L}_1\vec{W}_1^{-1/2}$ \label{step:ll1s}\\
\Output{Helmholtzian $\vec{\mathcal L}_1$, symmetrized Helmholtzian $\vec{\mathcal L}_1^s$}
\caption{{\manifoldhelmholtzian}}
    \label{alg:manifold-helmholtzian-learning}
\end{algorithm}

Our aim is to approximate $\Delta_1$ by a suitably weighted Helmholtzian $\LL_1$ on a 2-simplicial complex $\scx_2=(V,E,T)$, with nodes located at the data points.
The steps of this construction are given in Algorithm \ref{alg:manifold-helmholtzian-learning}.
The first \ref{step:b2} steps produce
the simplicial complex $\scx_2$ and the boundary matrices $\vB_1, \vB_2$ from $\vX$. There are
multiple ways to build an $\scx_\ell$ from point
cloud data \cite{OtterN.P.T+:17}; here the {\em Vietoris-Rips} (VR) \cite{ChazalF.M:17}
complex is used for its efficient runtime
and natural fit with the chosen triangular kernel
which will be described below.
The VR complex is an {\em abstract simplicial complex} defined on the finite
metric space. An $\ell$-simplex $\sigma_\ell$ is included in the complex
if all the pairwise distances between its vertices are smaller than some
radius $\delta$.
The VR complex is a special case of the {\em clique complex};
one can easily build such complex from a $\delta$-radius graph by including
all the cliques in the graph.
Note that a VR complex built from a point cloud dataset $\vX\in\rrr^D$ cannot always be
embedded in $\rrr^D$ due to the possible crossings between
simplices. For the VR 2-complex $\scx_2$ constructed from a point cloud,
the vertex set is the
data $\vec{X}$, and two vertices are connected if they are at distance
$\delta$ or less. A triangle $t$ in $\scx_2$ is formed when 3 edges of $t$ are
all connected.
The edges $E$ and triangles $T$ are represented as
lists of tuples of lengths $n_1$ and $n_2$, respectively.  From them,
$\vec{B}_1$, $\vec{B}_2$ are constructed in linear time
w.r.t. $n_1$ and $n_2$. In the worst case scenario, one needs $n_1 =
\mathcal O(n^2)$ in memory. Luckily, this is oftentimes a corner case
due to the manifold assumption. The memory size can further be reduced
by different approximation methods
\cite{DeyTK.F.W+:13,SheehyDR:13,KerberM.S:13} in building $\scx_2$.
We implemented Algorithm \ref{alg:manifold-helmholtzian-learning} in
\texttt{python}.  The $\scx_2$ is built with \texttt{gudhi}
\cite{MariaC.B.G+:14}.  Upon constructing the $\scx_2$,
Algorithm \ref{alg:boundary-map} $\textsc{BoundaryMap}$ is implemented
by \texttt{numba} \cite{LamSK.P.S+:15} to speed up the for-loop
operation by multi-threading.

Steps \ref{step:w2}--\ref{step:w0} construct the weigths matrices $\vec{W}_2,\vec{W}_1,\vec{W}_0$. The crucial step is the weighting of the triangles, which is described below. Once the weights of the triangles are given, the weights of the lower dimensional simplices are determined by the consistency conditions required by the boundary operator. It then follows that the weight of vertex
$v\in V$ equals its degree $[\vec{W}_0]_{vv}=\sum_{e\in E} |[\vec{B}_1]_{ve}|[\vec{W}_1]_{ee}$ and similarly, $[\vec{W}_1]_{ee}=\sum_{t\in T} |[\vec{B}_2]_{et}|[\vec{W}_2]_{tt}$. Finally, $\LL_1$ and $\LL_1^s$ are obtained by directly applying the definitions in Section \ref{sec:background}.

\paragraph{The triangle kernel $w^{(2)}$}
There are multiple choices of kernels, e.g., constant values on triangles \cite{SchaubMT.B.H+:20}, or
weights based on $\vB_2^\top$ \cite{GradyLJ.P:10}. The former fails to capture the size and geometry of a triangle while the latter
violates the assumption that we are building $\scx_2$ from a point cloud.
Here we introduce a kernel which weighs triangles by the product of the pairwise edge kernels.
\begin{equation}
\begin{gathered}
    w^{(2)}(\vecxyz) = \kappa(\vecxy)\cdot\kappa(\vec x, \vec z)\cdot\kappa(\vec y,\vec z) \,\text{ for }\, x, y, z \in T, \\
    \,\text{ where }\, \kappa(\vecxy) = \kappa\left(\|\vec x-\vec y\|^2 / \varepsilon^2\right). \\
\end{gathered}
\label{eq:vr-kernel-general-form}
\end{equation}
with $\kappa(\cdot)$ any exponentially decreasing function. In this work,
we use the exponential kernel $\kappa(u) = \exp(-u)$.
With the aid of an exponentially decreasing function $\kappa(\cdot)$ in
\eqref{eq:vr-kernel-general-form},
one can filter out structures that are topological noise in $\LL_1$, as
we will show in Section \ref{sec:experiments}.
Note that there is a resemblance between \eqref{eq:vr-kernel-general-form}
and the VR complex. By definition, a triangle $x, y, z$ in the VR complex
is formed if and only if
$\mathds{1}(\|\vec x-\vec y\|<\varepsilon)\mathds{1}(\|\vec y-\vec z\|<\varepsilon)\mathds{1}(\|\vec z-\vec x\|<\varepsilon)$ equals $1$.
Hence the VR complex itself is given by using $\kappa(u) = \mathds{1}(u < 1)$
in \eqref{eq:vr-kernel-general-form}.

\paragraph{Selecting the parameters $\varepsilon$ and $\delta$}
Asymptotically, as $n\rightarrow \infty$, the kernel widths
corresponding to $\vec{W}_1$, $\vec{W}_2$ must decrease towards 0 at rates that are mutually
compatible.
The consistency analysis in
Proposition \ref{thm:asymptotic-expansion-L-1-up} of Section \ref{sec:consistency-analysis} suggests a choice of
$\varepsilon = \mathcal O(\delta^{\frac{2}{3}})$.
From Section
\ref{sec:consistency-analysis} we can also see that, since the down Helmholtzian is
consistent if the corresponding graph Laplacian is consistent, one can
choose $\delta$ by
\cite{Joncas2017-kn}.
A data-driven approach for choosing the $\varepsilon$ parameter
is currently lacking, and we leave it as future work.

\paragraph{Choice of $a, b$}
In Section \ref{sec:consistency-analysis}, we will analyze separately the convergence
of the up and down Laplacian, i.e.,  $\LL_1^\mathrm{down}\to a\cdot\Delta_1^\mathrm{down}$ 
and $\LL_1^\mathrm{up}\to b\cdot\Delta_1^\mathrm{up}$. From the proof, it follows that $a = \inv{4}; b = 1$. Please refer to Supplement
\ref{sec:discussion-choice-of-a-b} for more details and discussions.
 \section{Consistency results for graph Helmholtzian}
\label{sec:consistency-analysis}
This section investigates the continuous limits of the discrete operators
$\LL_1^\mathrm{down} = \vec{B}_1^\top\vec{W}_0^{-1}\vec{B}_1\vec{W}_1$ and $\LL_1^\mathrm{up} =
\vec{W}_1^{-1}\vec{B}_{2}\vec{W}_2\vec{B}_{2}^\top$.
We assume the following for our analysis.
\begin{assumption}
The data $\vec{X}$ are sampled i.i.d. from a uniform density supported
on a $d$ dimensional manifold $\M\subseteq \rrr^D$ that is of
class $C^3$ and has bounded curvature. W.l.o.g., we assume that the volume of $\M$ is 1; and we denote by $\mu$ the Lebesgue measure on $\M$.
\label{assu:manifold-data-assu}
\end{assumption}
\begin{assumption}
The kernel $\kappa(\vx,\vy)$ of $w^{(2)}(\vecxyz)$ in
\eqref{eq:vr-kernel-general-form} is of class $C^3$ and has exponential decay.
\label{assu:kernel-assu}
\end{assumption}
We proofs of the pointwise consistency of $\LL_1^\mathrm{up}$ and $\LL_1^\mathrm{down}$; in addition, we also show the spectral consistency of $\LL_1^\mathrm{down}$ based on results for the spectral consistency of the related $\LL_0$.

\paragraph{Pointwise convergence of the up Laplacian $\LL_1^{\rm up}$}
Let $\vec\gamma(t)$ for $t\in[0, 1]$ be
the geodesic curve connecting $x, y$ with $\vec\gamma(0) = \vx,
\vec\gamma(1) = \vy$, and $\vec\gamma'(t) = \dd\vec\gamma(t) / \dd t$. A
1-form (vector field) $\zeta$ on $\M$ induces the $1$-cochain
$\vec{\omega}$ on $E$ by  $\vec{\omega}([x,y]) = \omega_{xy} =
\int \zeta(\vec\gamma(t))^\top\vec\gamma'(t) \dd t$ for any edge
$[x,y]\in E$.
For notational simplicity, let $f_{xyz} = {\omega}_{xy} + {\omega}_{yz} + {\omega}_{zx}$.
The goal is to show the consistency of $\LL_1^{\mathrm{up}}$ for a
fixed edge $[x,y]$, i.e., to show that $\vec{\mathcal
  L}_1^{\mathrm{up}} \vec\omega_{xy} \to c\int_0^1 (\Delta_1
\zeta)(\vec\gamma(t))^\top\vec\gamma'(t)\dd t$.  First we obtain the
discrete form of the unnormalized (weighted) up-Laplacian operating on
a 1-cochain $\vec\omega$.
\begin{lemma}\label{thm:discrete-form-L-up}
Let $\vec{\omega}\in \rrr^{n_1}$ be a $1$-cochain induced on $\scx_2$ by  vector field $\zeta$. For any $x,y,z\in V$, we denote by $[x', y', z']$ the canonical ordering of the triangle $t\in T$ with vertex set $\{x, y, z\}$ (if one exists).
Then, $[ \vec{B}_2 \vec{W}_2\vec{B}_2^\top\vec\omega]_{[x, y]} = \sum_{z\in\{v\in V: [x', y', v'] \in T\}}w^{(2)}(\vec x, \vec y, \vec z) f_{xyz}$.
\end{lemma}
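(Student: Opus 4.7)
The plan is a direct matrix computation; the only real content is that the signs imposed by the boundary map $\vB_2$ exactly cancel the sign changes produced by reordering the vertices of a triangle in $f_{xyz}$.

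First I would unfold the product by its definition:
\[
\bigl[\vec{B}_2\vec{W}_2\vec{B}_2^\top\vec\omega\bigr]_{[x,y]}
\;=\;\sum_{t\in T}(\vec{B}_2)_{[x,y],t}\,w^{(2)}(t)\,\bigl[\vec{B}_2^\top\vec\omega\bigr]_t,
\]
so only triangles $t$ containing the edge $[x,y]$ contribute, and for each such triangle the third vertex $z$ ranges over the set $\{v\in V:\{x',y',v'\}\text{ forms a triangle in }T\}$ appearing in the statement. For a fixed $t$ with canonical ordering $[x',y',z']$, I would compute
\[
\bigl[\vec{B}_2^\top\vec\omega\bigr]_t
\;=\;\omega_{x'y'}+\omega_{y'z'}-\omega_{x'z'}
\;=\;\omega_{x'y'}+\omega_{y'z'}+\omega_{z'x'}
\;=\;f_{x'y'z'},
\]
using the antisymmetry $\omega_{ab}=-\omega_{ba}$ that comes from $\omega$ being a $1$-cochain induced by the line integral of $\zeta$ along the geodesic (reversing the orientation of the edge flips the sign of the integral). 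I would also remark that $w^{(2)}$ is invariant under permutations of its three vertex arguments by inspection of \eqref{eq:vr-kernel-general-form}, so $w^{(2)}(t)=w^{(2)}(\vec x,\vec y,\vec z)$ for any labelling of the same triangle.

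The crux is the sign identity
\[
(\vec{B}_2)_{[x,y],t}\cdot f_{x'y'z'}\;=\;f_{xyz}.
\]
I would verify this by enumerating the three possible matchings of the edge $[x,y]$ against the ordered boundary $\partial[x',y',z']=[y',z']-[x',z']+[x',y']$. If $\{x,y\}=\{x',y'\}$, then $(x,y,z)=(x',y',z')$ and the boundary entry is $+1$, giving $f_{xyz}=f_{x'y'z'}$. If $\{x,y\}=\{y',z'\}$, the entry is $+1$ and $(x,y,z)$ is the cyclic rotation $(y',z',x')$, so $f_{xyz}=f_{x'y'z'}$ by cyclic invariance of $f$. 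If $\{x,y\}=\{x',z'\}$, the entry is $-1$ and $(x,y,z)=(x',z',y')$ is a transposition of the canonical order, so $f_{xyz}=-f_{x'y'z'}$; again the product is $f_{xyz}$. Combining this with the two observations above yields
\[
\bigl[\vec{B}_2\vec{W}_2\vec{B}_2^\top\vec\omega\bigr]_{[x,y]}
\;=\;\sum_{z\in\{v:\,[x',y',v']\in T\}}w^{(2)}(\vec x,\vec y,\vec z)\,f_{xyz},
\]
which is the claim.

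The only subtlety is purely bookkeeping — making sure that the \emph{canonical} orientation attached to the triangle (which drives the signs in $\vec{B}_2$) and the \emph{chosen} orientation $(x,y,z)$ (which drives $f_{xyz}$) always differ by the same sign. This is precisely what the three-case check above establishes, and I don't foresee any further difficulty.
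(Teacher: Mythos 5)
Your proof is correct and follows essentially the same route as the paper's: expand the matrix product over the triangles containing $[x,y]$ and check, case by case on which boundary edge $[x,y]$ is, that the sign $(\vec{B}_2)_{[x,y],t}$ converts $f_{x'y'z'}$ into $f_{xyz}$. If anything your organization is slightly cleaner, since by recognizing $[\vec{B}_2^\top\vec\omega]_t = f_{x'y'z'}$ outright you avoid the paper's detour of first splitting off the diagonal term $w^{(1)}(\vec x,\vec y)\,\omega_{[x,y]}$ and then cancelling it using $\vec{W}_1=\diag(|\vec{B}_2|\vec{W}_2\vec{1}_{n_2})$.
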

Lemma \ref{thm:discrete-form-L-up} is proved in Supplement
\ref{sec:proof-thm-discrete-form-L-up}. From Lemma
\ref{thm:discrete-form-L-up}, it is enough to consider {\em divergence
  free} 1-forms for the limit of $\LL_1^{\mathrm{up}}$, because $f_{xyz} = 0$ if $\zeta$ is curl free.
The following proposition shows the
asymptotic expansion for the integral form of $\sum_z w_T(\vecxyz)f_{xyz}$ (Lemma \ref{thm:discrete-form-L-up}) when $n$ is large.

\begin{proposition}
If Assumptions \ref{assu:manifold-data-assu}--\ref{assu:kernel-assu} hold,
 $\vec{\upsilon}$ is of class $C^4(\M)$,
then
\begin{equation}
	\begin{split}
	\int_\M w_T(\vecxyz)f_{xyz}\mathsf{d}\mu(\vec z) =& 
\frac{2}{3}\varepsilon^{2+d}C_2\int_0^1 [\updelta \dd\vec{\upsilon}(\vec\gamma_\M(t))]^\top \vec\gamma_\M'(t) \dd t \\&+\mathcal O(\varepsilon^{4+d}) + \mathcal O(\varepsilon^{d+2}\delta^2)+\mathcal O(\varepsilon^{d}\delta^3).
 \end{split}
\label{eq:asymptotic-expansion-L-1-up-result}
\end{equation}
with 
$C_2 = \varepsilon^{-(4+d)}\int_{\vz\in \rrr^d}\kappa^2(\frac{\|\vz\|^2}{\varepsilon^2})\vz_1^2\vz_2^2\dd\vz$ and $z_i$ representing the i-th coordinate of $\vz$.
\label{thm:asymptotic-expansion-L-1-up}
\end{proposition}

\begin{proofsk}
We first prove the case when $\M = \rrr^d$ (Lemma
\ref{thm:asymptotic-expansion-L-1-up-Rd}).  Consider a triangle
$[x,y,z]\in T$, where $z$ will be integrated over. We parametrize the
path segments $x\to y$, $y\to z$, and $z\to x$ by $\vec u(t)$, $\vec
v(t)$, and $\vec w(t)$, respectively. By changing the variables from 
$\vec w(t)$ and $\vec v(t)$ to $\vec u(t)$, we express all three line integrals
as integrals along the segment $[x,y]$. Next, when $\M\subseteq\rrr^D$, we
bound the error terms of approximating the integration from $\M$ to the
tangent plane $\T_\vx\M$ at $\vx$ with $\mathcal O(\varepsilon^4)$ in
Lemma \ref{thm:coordinate-change-errors}. Combining this Lemma with
Lemma \ref{thm:asymptotic-expansion-L-1-up-Rd} concludes the proof.
\end{proofsk}

Proposition \ref{thm:asymptotic-expansion-L-1-up} implies $\delta$ should decay faster than $\varepsilon$. So one can
choose $\delta=\mathcal{O}(\varepsilon^{3/2})$.Now we can analyze the pointwise bias of the estimator.
\begin{theorem}
Under the assumptions of Proposition \ref{thm:asymptotic-expansion-L-1-up}, let $\delta=\mathcal{O}(\varepsilon^{3/2})$, then, for any fixed $x,y\in \M$, we have
\begin{equation}
 	\mathbb E\left[(\LL_1^\mathrm{up})_{[x,y]}\right] = \frac{2}{3}\varepsilon^{2}\frac{C_2}{C_0}\int_0^1 [\updelta \dd\vec{\upsilon}(\vec\gamma_\M(t))]^\top \vec\gamma_\M'(t) \dd t +\mathcal O(\varepsilon^{4})+\mathcal O(\varepsilon^{2}\delta n^{-1}).
	\label{eq:pointwise-bias-L1-up}
\end{equation}
where  we define $C_0 = \frac{1}{\varepsilon^d}\int_{\vz\in \rrr^d}\exp\left(-\frac{2|\vz\|^2}{\varepsilon^2}\right)\dd\vz$.
In the above, the expectation is taken over samples $\vec{X}$ of size $n$, to which the points $\vx,\vy$ are added.
\label{thm:pointwise-bias-L1-up}
\end{theorem}
\begin{proofsk}
The proof follows from the {\em Monte Carlo} approximation \cite{NadlerB.L.C+:06,BerryT.S:19} of the RHS of Lemma \ref{thm:discrete-form-L-up}, i.e.,
$$\mathbb E\left[\inv{n} \sum_z w_T(\vec x, \vec y, \vec z) f_{xyz}\right] = \int_\mathcal{M} w_T(\vec x, \vec y, \vec z) f_{xyz} \dd\mu(\vz).$$
Combining the result of Proposition \ref{thm:asymptotic-expansion-L-1-up}
(the $\varepsilon$ bias)
and the standard ratio estimator (the $n^{-1}$ bias) completes the proof.
The  proof details are in \ref{sec:proof-of-pointwise-bias-L1-up}.
\end{proofsk}

Note that the rate for $\varepsilon$ for the $\Delta_0$ estimator ($\LL_0$) is
slower than $n^{-1}$, see e.g., \cite{SingerA:06,BerryT.S:19}.
One can thus drop the $n^{-1}$ term in \eqref{eq:pointwise-bias-L1-up} using the
similar bandwidth parameter as the $\Delta_0$ estimator.
%
\paragraph{Pointwise convergence of the down Laplacian $\LL_1^{\rm down}$}
For the pointwise convergence of the down Helmholtzian, 
the following proposition shows the asymptotic expansion for the integral when n is large

\begin{proposition}
 \label{thm:asymptotic-expansion-L-1-down}
    If Assumptions \ref{assu:manifold-data-assu}--\ref{assu:kernel-assu} hold,
 $\vec{\upsilon}$ is of class $C^4(\M)$,
then
\begin{equation}
    \begin{split}
        \,&\pequal\,\int_{\vz\in\M} (w_E(\vy,\vz) f_{yz}-w_E(\vx,\vz) f_{xz}) \dd\vz \\
        &=\frac{8}{3}\varepsilon^{2d+2}C_0C_1\int_0^1 (\dd\updelta \vec{\upsilon})(\vec\gamma_\M'(t))^\top \vec\gamma_\M'(t) \dd t +\mathcal O(\varepsilon^{4+2d})+\mathcal O(\varepsilon^{1+2d}\delta^2),
    \end{split}
\end{equation}
where $\int_{\vz\in \rrr^d}\exp\left(-\frac{\|\vz-\vx\|^2}{2\varepsilon^2}\right)(\vz-\vx)_j^2\dd\vz=4\varepsilon^{d+2}C_1$.
\end{proposition}
With this proposition, now we can analyze the pointwise convergence of the down Helmholtzian

\begin{theorem}
\label{thm:L_1-down}
Under the assumptions of Proposition \ref{thm:asymptotic-expansion-L-1-down}, let $\delta=\mathcal{O}(\varepsilon^{3/2})$, then, for any fixed $x,y\in \M$, we have
\begin{equation}
	\mathbb E\left[(\LL_1^\mathrm{down})_{[x,y]}\right]= \frac{2}{3}\varepsilon^{2}\frac{C_1}{C_0}\int_0^1 [\dd\updelta \vec{\upsilon}(\vec\gamma_\M(t))]^\top \vec\gamma_\M'(t) \dd t \\
        + \mathcal O(\varepsilon^4)+\mathcal O(\varepsilon\delta^2)+\mathcal O(n^{-1}\varepsilon^2\delta).
	\label{eq:pointwise-bias-L1-down}
\end{equation}
In the above, the expectation is taken over samples $\vec{X}$ of size $n$, to which the points $\vx,\vy$ are added.

\end{theorem}
\begin{figure}[!htb]
\centering
\pgfdeclarelayer{nodelayer}
\pgfdeclarelayer{edgelayer}
\pgfsetlayers{nodelayer,edgelayer}   \usetikzlibrary{arrows.meta}

\begin{tikzpicture}
	\begin{pgfonlayer}{nodelayer}
		\node (0) at (-3, 1) {Down Helmholtzian $\vec{\mathcal L}_1^\mathrm{down}$};
		\node (1) at (-3, -1) {Graph Laplacian $\vec{\mathcal L}_0$};
		\node (2) at (3, 1) {$\Delta_1^\mathrm{down} = \mathsf{d}_0\updelta_1$};
		\node (3) at (3, -1) {$\Delta_0 = \updelta_1\mathsf{d}_0$};
		\node [anchor=west] (4) at (-3, 0) {
    \begin{tabular}{l}
        Lemma \ref{thm:spectral-dependency-agree-spectrum}
    \end{tabular}};
		\node [anchor=east] (5) at (3, 0) {
    \begin{tabular}{r}
        \cite{DodziukJ.M:95,ColboisB:06}
    \end{tabular}};
    \node [anchor=south] (6) at (0, -1) {Proposition \ref{thm:consistency-L0-with-w1}};
	\end{pgfonlayer}
	\begin{pgfonlayer}{edgelayer}
		\draw [-{Stealth[length=7pt]}] (0) to (1);
		\draw [dashed,line width=.8pt, {Stealth[length=7pt,open]}-{Stealth[length=7pt,open]}] (0) to (2);
		\draw [-{Stealth[length=7pt]}] (1) to (3);
		\draw [-{Stealth[length=7pt]}] (3) to (2);
	\end{pgfonlayer}
\end{tikzpicture}

\caption{Outline of the (spectral) consistency proof of $\LL_1^\mathrm{down}$.}
\label{fig:proof-flow-down-laplacian}
\end{figure}

\paragraph{Spectral consistency of the down Laplacian $\LL_1^{\mathrm{down}}$}
The proof for spectral consistency of the 1 down-Laplacian is outlined in Figure
\ref{fig:proof-flow-down-laplacian}.
In short, by linking the
spectra/eigenforms of $\Delta_1^{\rm down}$ to $\Delta_0$ as well as their discrete counterparts
(two vertical arrows), one can show the consistency of $\LL_1^{\rm down}$ (horizontal dashed line)
using the known spectral convergence of the discrete graph Laplacian 
$\LL_0$ to the the Laplace-Beltrami operator $\Delta_0$
\cite{CoifmanRR.L:06,BerryT.S:19} (horizontal solid arrow). 
The details and proofs are in Supplement
\ref{sec:spectral-consistency-down-lapla}.

Here we have derived the continuous operator limits of the up and down Laplacian terms. We have shown that $\LL_1^{\rm down}$ converges to (a constant times) $\Delta_1^{\rm down}$ spectrally. For $\LL_1^{\rm up}$ and $\LL_1^{\rm down}$, we have shown that the pointwise limit exists, and that it equals $\Delta_1^{\rm up}$  and $\Delta_1^{\rm down}$ multiplied with a constant. 

Since the limits of $\LL_1^{\rm up}$ and $\LL_1^{\rm down}$ have different scalings, the estimator $\LL_1$ of $\Delta_1$ is a weighted sum of the two terms with coefficients $a,b$ as in \eqref{eq:weighted-laplacian}. 
From \eqref{eq:c1-c2}, we use $a = \inv{4}; b=1$ based on the pointwise limit,  and empirical observations on the synthetic datasets validate it. Please refer to Section \ref{sec:experiments} and Supplement \ref{sec:discussion-choice-of-a-b} for more details.
 \section{Related works and discussion}
\label{sec:related-works}
\paragraph{Consistency results of Laplace type operators on a manifold}
Numerous {\em non-linear dimensionality reduction} algorithms from point cloud data, e.g.,
\cite{CoifmanRR.L:06,belkin:07,TingD.H.J+:10},
investigated the consistency of functions ($0$-forms) on $\M$.
Spectral exterior calculus (SEC) \cite{BerryT.G:20}
extended the existing consistency results of
$0$-forms to $1$-forms
by building a frame (overcomplete set) to approximate the spectrum of $\Delta_1$
from $\LL_0$.
The SEC only has $\mathcal O(n^3)$ dependency in computing the 
eigenvectors of $\LL_0$. Therefore, it is well-suited for topological feature
discovery when large number of points are sampled from $\M$.
Nevertheless, the algorithm involves several fourth order {\em dense}
tensor computations with size $m$, the number of the 
eigenvectors of 1-Laplacian to estimate, and results in a
$\mathcal O(m^4)$ dependency in memory and $\mathcal O(m^6)$
in runtime. These dependencies may cause difficulties in applying SEC to
the edge flow learning scenarios in real datasets, since higher
frequency terms of the 1-Laplacian are oftentimes needed ($m\geq 100$).
On the other end, \cite{SingerA.W:11} studied the discrete approximation of the
{\em Connection Laplacian}, 
a differential operator acting on tensor
bundles of a manifold. This is intrinsically different from the $1$ Hodge Laplacian
we discussed.

\paragraph{Random walks on discrete $k$-Laplacian operator}
\cite{SchaubMT.B.H+:20} studied random walks on the edges of
normalized 1-Hodge Laplacian of the pre-determined graph.
For points sampled from $\M$, they proposed an {\em ad hoc}
hexagonal binning method to construct the $\scx_2$ from the trajectories;
theoretical aspects of the binning method were not discussed.
On the theoretical front, frameworks of random walks on simplices of down \cite{MukherjeeS.S:16}
and up \cite{ParzanchevskiO.R:17} $k$-Laplacian have also been visited.
These works focused on the connection between random walks on a simplicial complex and
spectral graph theory.
Our graph Helmholtzian $\LL_1^s$ based on pairwise triangle weights 
makes it possible to extend their frameworks to the point cloud datasets.

\paragraph{Persistent homology}
Persistent Homology (PH) theory enables us to study the topological
features of a point cloud in multiple spatial resolutions. The direct
application of PH is the estimation of the $k$-th Betti number,
i.e., the number of $k$ dimensional {\em holes}, from
$\vX\subseteq \M$.  PH algorithms applied to real data typically output large numbers of $k$-holes with low persistences. Therefore, one selects the statistically significant topological
features by some bootstrapping-based methods, e.g., the quantile of the {\em bottleneck
  distances} between estimated {\em persistent diagram} (PD) and the
bootstrapped PDs.
Readers are encouraged to refer to \cite{WassermanL:18,ChazalF.M:17} for more details. 
The PH theories are powerful in finding $\beta_k$ when $k\geq 2$;
in contrast, the Laplacian based methods are found effective in edge 
flow learning and smoothing for their abilities to keep track of the
orientations
(see Section \ref{sec:application-laplacian}).

\paragraph{Edge flow learning}
\cite{JiaJ.S.S+:19} proposed a graph based edge flow SSL algorithm for
 divergence free flows using the
unnormalized down-Laplacian $\vec{B}_1^\top\vec{B}_1$.
Another method, \cite{GodsilC.R:13}, transforms the edge data into
node space via line graph transformation; the problem is thus
turned into a vertex based SSL and solved by well studied tools, e.g.,
\cite{ZhuX.G.L+:03,BelkinM.N.S+:06}.
These algorithms assume the data is a graph, and therefore special techniques are needed
to convert the point clouds to graphs. Moreover, both methods 
are designed for flows that are (approximately) divergence free.
We provide experimental evaluations of these algorithms in Supplement
\ref{sec:urban-traffic-and-comp-jiaj}.

 \section{Applications of the constructed graph Helmholtzian}
\label{sec:application-laplacian}

\paragraph{$k$-th Betti number}
The spectrum of $\LL_1$ contains information about the manifold topology. 
Since $\vec{B}_k\vec{B}_{k+1} = 0$ \cite{LimLH:20},
$\img(\vec{B}_{k+1})$ is a subspace of $\ker(\vec{B}_k)$. One can
define the $k$-th homology vector space $\mathcal H_k$ as the space of $k$
dimensional simplices that are not the face of
any $k+1$-simplex in $\scx_\ell$. In mathematical terms, $\mathcal H_k
\coloneqq \ker(\vec{B}_k) / \img(\vec{B}_{k+1})$. The dimension of the
$k$-th homology space, or the number of the $k$ dimensional {\em
  holes}, is defined to be the $k$-th {\em Betti number} $\beta_k$.
It can be shown that $\beta_k$ is the dimension of the null space of
$k$-th Laplacian, i.e., $\beta_k = \dim(\ker(\vec{L}_k))$. Note that
$\beta_0 = \dim(\ker(\vec{L}_0))$, the number of zero eigenvalues of the
graph Laplacian, corresponds to the number of connected components in
the graph. Similarly, $\beta_1$, the dimension of the null space of the
Helmholtzian, represents the number of loops in $\scx_\ell$ (for $l\geq 2$).
The $k$-th Betti number
$\beta_k$ can also be obtained from the {\em random walk} $k$
Laplacian $\vec{\mathcal L}_k$ as $\beta_k =
\dim(\ker(\vec{\mathcal L}_k))$, or by persistent homology (PH) theories \cite{WassermanL:18,ChazalF.M:17}.

\paragraph{Edge flow smoothing and graph signal processing from point cloud data}
The graph Laplacian has long been used in node-based signal smoothing
on graphs \cite{OrtegaA.F.K+:18}.  \cite{SchaubMT.S:18} proposed
an edge flow smoothing algorithm using only the 
{\em unnormalized down Laplacian} $\vec L_1^\mathrm{down} =
\vec B_1^\top\vec B_1$.
These ideas apply naturally to the regularization with $\LL_1^s$ as follows. 
The smoothed flow $\hat{\vec{\omega}}$ is obtained by projecting
the noisy flow $\vec{\omega}$ to the low frequency eigenspace of $\LL_1^s$,
i.e., by solving the damped least squares problem 
\beq\label{eq:edge-flow-smooth}
\hat{\vec{\omega}} = \argmin_\vv \|\vv-\vec\omega\|^2 + \alpha \vv^\top\LL_1^s\vv.
\eeq
The optimal solution to the aforementioned minimization problem is
$\hat{\vec{\omega}} = (\vI_{n_1} + \alpha \LL_1^s)^{-1} \vec\omega$.
Note that the $\vL_1^\mathrm{down}$-based smoothing algorithm proposed by
\cite{SchaubMT.S:18} would fail to filter noise in the {\em curl} space,
for curl flows are in the space of $\ker(\vec L_1^\mathrm{down})$.
In contrast, the proposed algorithm can successfully smooth out the high 
frequency noise in either the {\em curl} or the {\em gradient} spaces
by using the weighted Laplacian $\LL_1^s$, which encodes the information from
both the up and down Laplacians.

\paragraph{Semi-supervised edge flow learning by 1-Laplacian regularizer}
Similar to the SSL by {\em Laplacian Regularized Least Squares} (LaplacianRLS) on $0$-forms (node-based SSL)
\cite{BelkinM.N.S+:06}, here we propose a framework
for SSL on discrete $1$-cochains (edge-based SSL).
Define the kernel between two edges to be $\mathcal K(e_i, e_j) = 1$ if $e_i, e_j$ share the same coface (triangle) or face (node),
and $0$ otherwise. Let $\mathfrak H$ be the corresponding reproducing kernel Hilbert space (RKHS).
Given the set of known edges $S\subset [n_1]$ (training set), we optimize over edge flows $g\in \mathfrak H$ with the loss function 
\begin{equation}
	\inv{|S|} \sum_{e\in S} \left(g(e) - \omega_e\right)^2 + \lambda_1 \|g\|^2_{\mathfrak H} +  \frac{\lambda_2}{n_1^2} g^\top \vec{\mathcal L}_1^s g.
	\label{eq:laprls-loss}
\end{equation}
From the {\em representer theorem}, the optimal solution is
$g^*(e) = \sum_{e'\in E} \alpha_{e'}^* \mathcal K(e', e)$, with
$\vec{\alpha}^* = \left(\diag\left(\vec{1}_S\right)\vec{\mathcal K} + \lambda_1 |S| \vec{I}_{n_1} + 
	\frac{\lambda_2 |S|}{n_1^2} \vec{\mathcal L}_k^s \vec{\mathcal K}  \right)^{-1} \vec{\omega}$,
where $\vec{\mathcal K}$ is a $n_1\times n_1$ kernel matrix with $[\vec{\mathcal K}]_{e_i, e_j} = \mathcal K(e_i, e_j)$.
A possible extension of the proposed $\LL_1$-RLS is to use HHD as in
\eqref{eq:hhd-equation}. More specifically,
the eigenspace of $\LL_1$ can be decomposed into two subspaces corresponding to
the gradient and curl spaces, respectively. One can weight differently the
importance between $\LL_1^\mathrm{up}$ and $\LL_1^\mathrm{down}$.
To achieve this, we apply a slight change to the second regularization term, i.e.,
$\frac{\lambda_2^\mathrm{up}}{n_1^2} g^\top\LL_1^{s, \mathrm{up}} g + \frac{\lambda_2^\mathrm{down}}{n_1^2} g^\top\LL_1^{s, \mathrm{down}} g$.
We call the proposed variant UpDownLaplacianRLS,
which will become to LaplacianRLS if
$\lambda_2^\mathrm{down} = \lambda_2^\mathrm{up}$.
Note that it is possible to extend other variants of node-based SSL algorithms
(e.g., the label propagation algorithm \cite{ZhuX.G:02} for classification) to
edge-based SSL; or introduce more powerful kernels $\mathcal K$ that capture the
orientations and similarities of edges.
Here we simply use manifold regularization regression
with a simple binary $\mathcal K$
to illustrate the effectiveness of the proposed Helmholtzian.

 \section{Experiments}
\label{sec:experiments}

We demonstrate the proposed manifold Helmholtzian construction and its
applications to edge flow learning (SSL and smoothing) on four synthetic datasets: 
{\em circle}, {\em torus}, {\em flat torus}, and {\em 2 dimensional strip}.
Additionally, we analyze several real datasets from chemistry, oceanography, and
RNA single cell sequencing. All datasets are described in Supplement
\ref{sec:datasets}.
We use $a = \inv{4}; b = 1$ everywhere, except in the experiments involving UpDownLaplacianRLS, where the scaling constants are set by cross validation.

\begin{figure}[!htb]
    \hfill
    \subfloat[][True vs. estimated $\lambda$'s of circle]
    {\includegraphics[height=2.7cm]{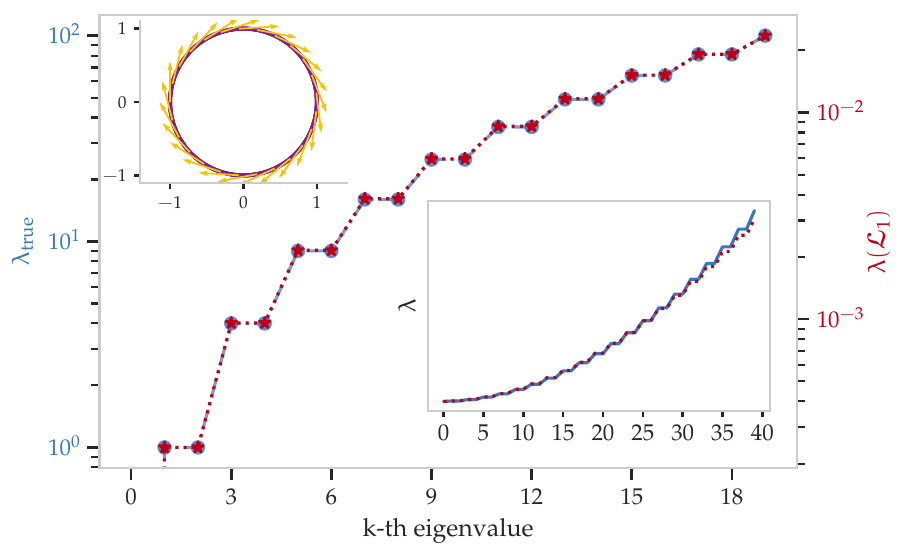}
    \label{fig:lambda-true-esti-circle}}\hfill
\subfloat[][Estimated $\lambda$'s of torus]
    {\includegraphics[height=2.7cm]{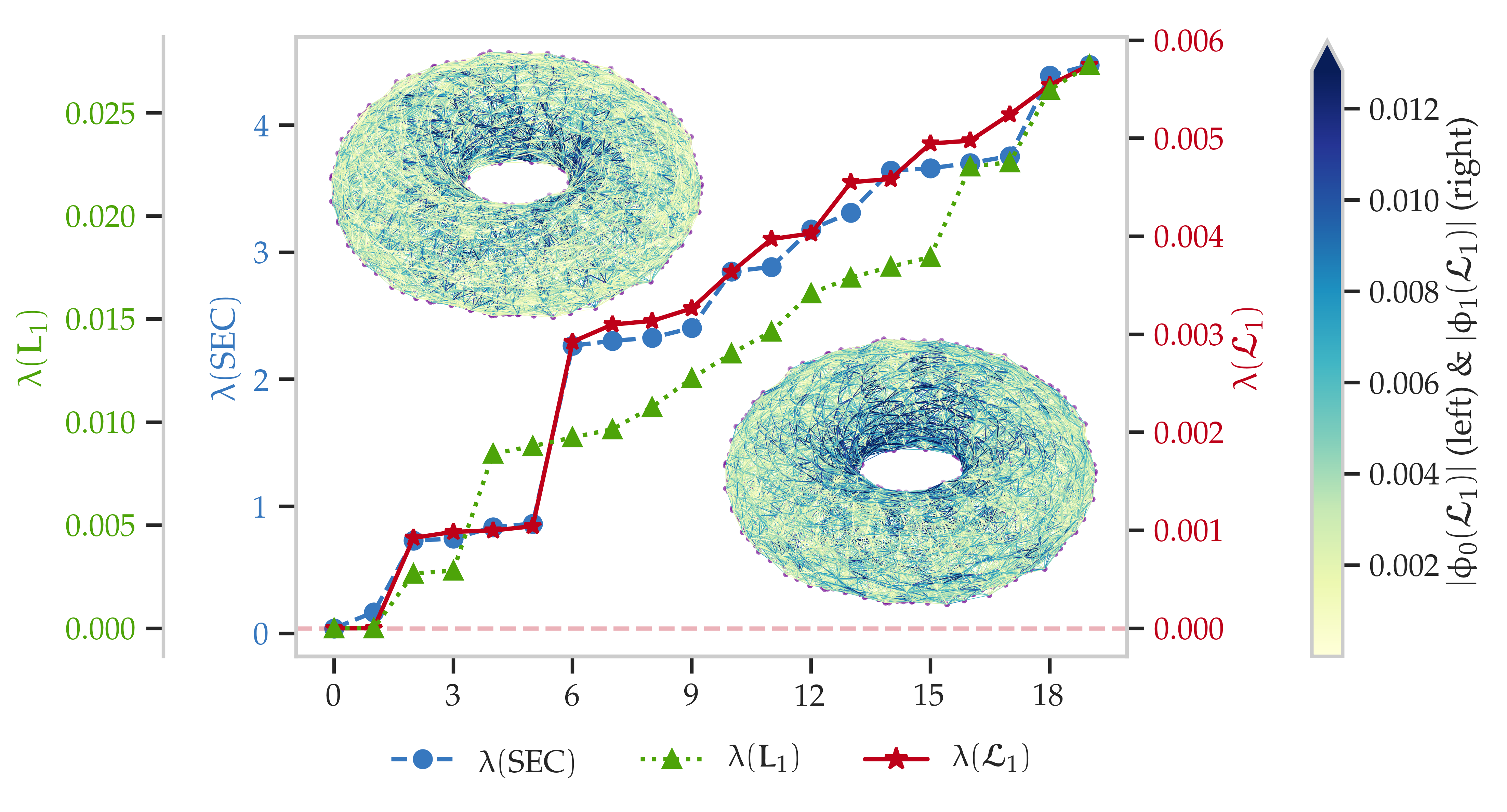}
    \label{fig:lambda-baselines-torus}}\hfill
\subfloat[][Estimated $\lambda$'s of a flat torus]
    {\includegraphics[height=2.7cm]{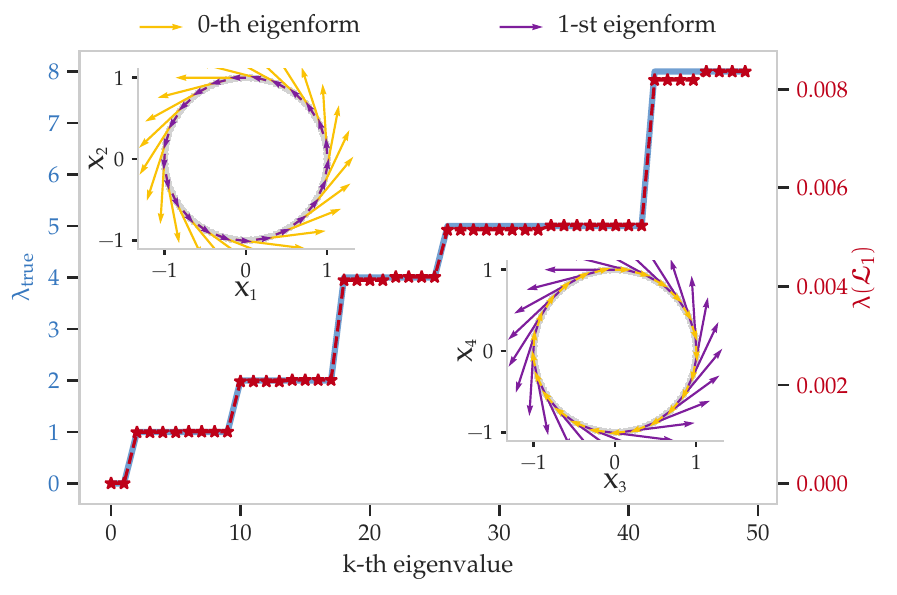}
    \label{fig:lambda-baselines-flat-torus}}\hfill

    \subfloat[][Estimated $\lambda$'s of ethanol dataset]
    {\includegraphics[height=2.85cm]{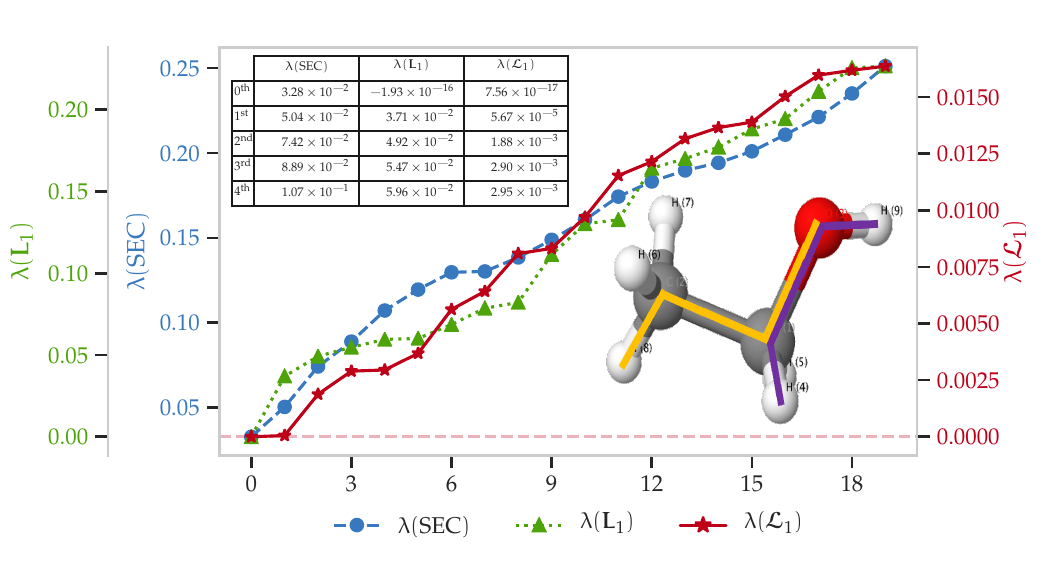}
    \label{fig:lambda-baselines-ethanol}}\hfill
\subfloat[][PD of ethanol]
    {\includegraphics[height=2.85cm]{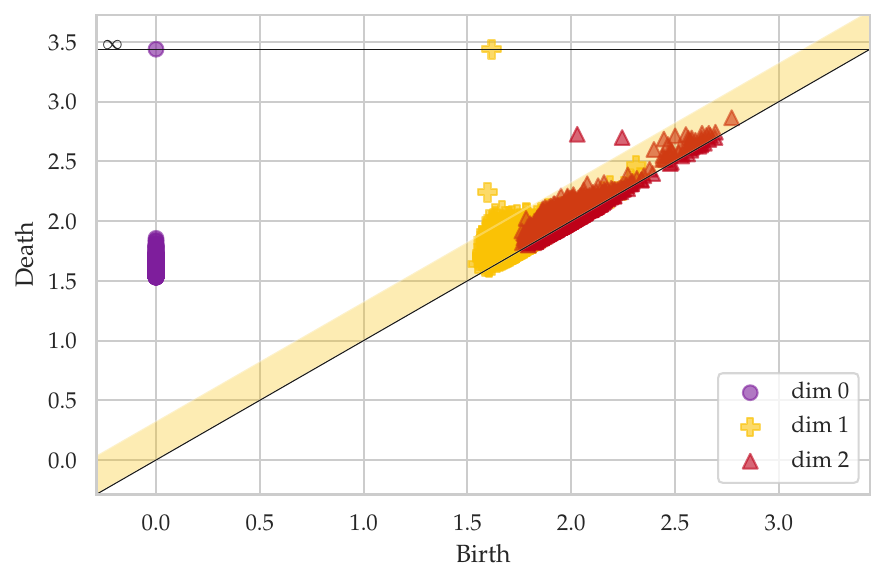}
    \label{fig:ethanol-persist-diag}}\hfill
\subfloat[][$\vec\phi_0,\vec\phi_1$ in the torsion space]
    {\includegraphics[height=2.85cm]{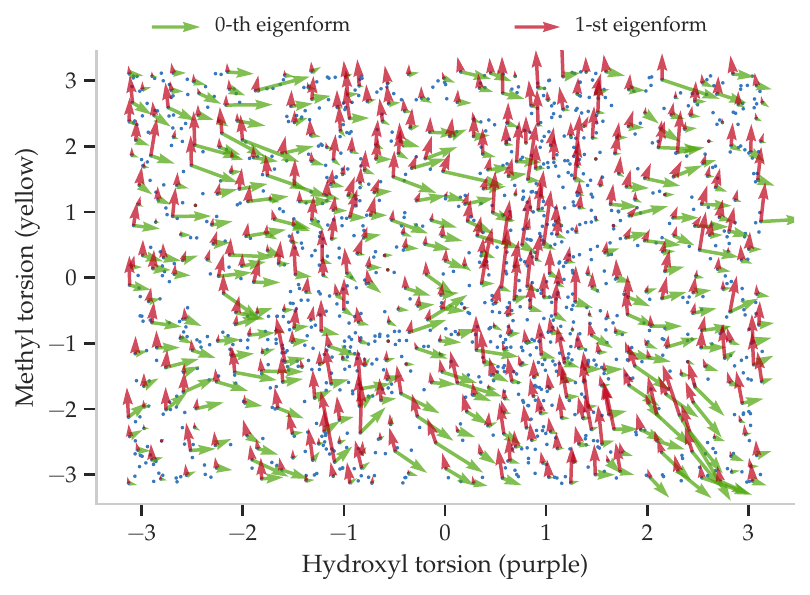}
    \label{fig:ethanol-first-two-eigenform-tau}}\hfill

    \subfloat[][Estimated $\lambda$'s of MDA dataset]
    {\includegraphics[height=2.9cm]{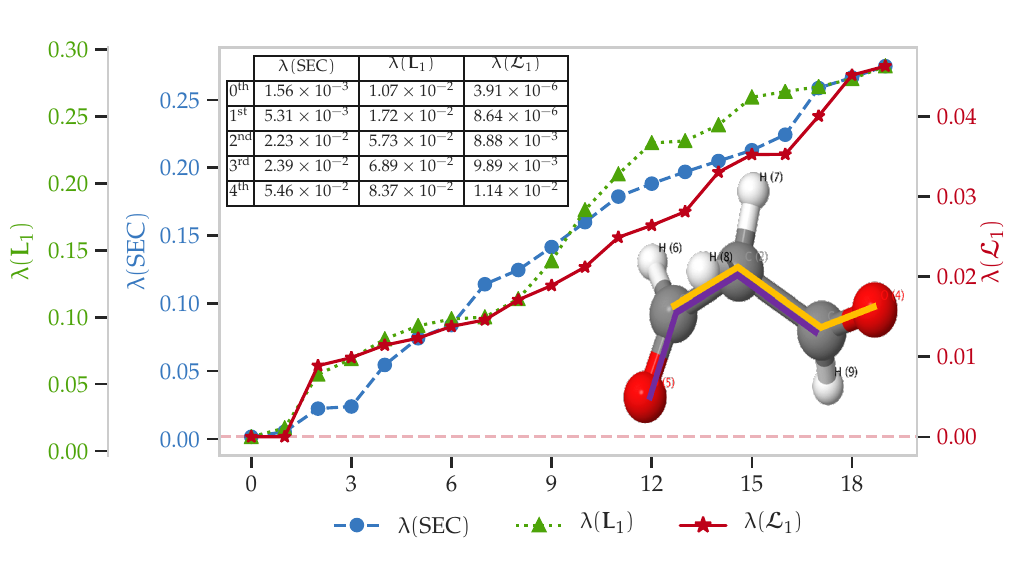}
    \label{fig:lambda-baselines-mda}}\hfill
\subfloat[][PD of MDA]
    {\includegraphics[height=2.9cm]{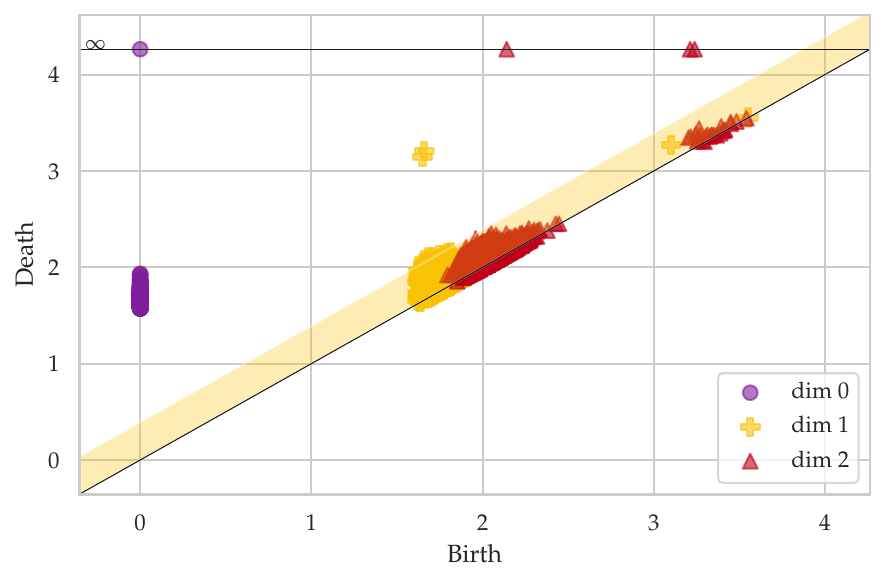}
    \label{fig:mda-persist-diag}}\hfill
\subfloat[][$\vec\phi_0,\vec\phi_1$ in the torsion space]
    {\includegraphics[height=2.9cm]{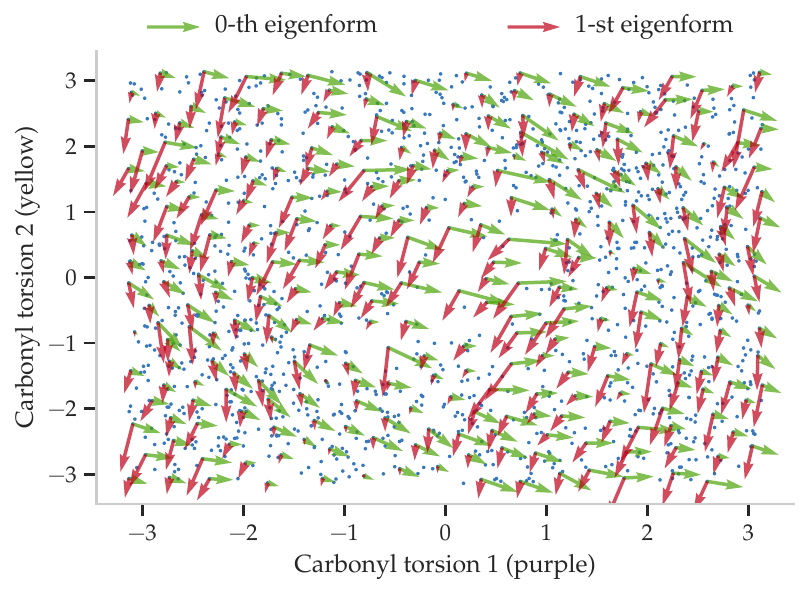}
    \label{fig:mda-first-two-eigenform-tau}}\hfill

    \caption{The first Betti number $\beta_1$ estimation for the synthetic manifolds (first row, left to right are unit circle, torus, and flat torus), ethanol (second row), and malondialdehyde (third row) datasets. The estimated harmonic eigenforms of the synthetic datasets can be found in the inset plots of (a--c). Readers are encouraged to zoom in on these plots for better views of the vector fields and cochains. For the second and the third row, subfigures from left to right correspond to the estimated $\lambda$'s from different methods, persistent diagram of the point cloud data, and the two harmonic flows in the torsion space, respectively.}
    \label{fig:dim-ker-L1-exp}
\end{figure}

\subsection{Dimension of loop space $\beta_1$}
For the first Betti number $\beta_1$, we report the eigenvalues of
$\Delta_1$ estimated by SEC \cite{BerryT.G:20} (in blue), unweighted
random walk Laplacian by letting $\vec{W}_2 = \vec I_{n_2}$ (green curve
in Figure \ref{fig:dim-ker-L1-exp}), and the proposed weighted
1-Laplacian $\vec{\mathcal L}_1$ (in red). Betti numbers can also be
estimated from a {\em Persistence Diagram (PD)}. We present the PD
with 95\% confidence interval estimated from 7,000 bootstrap samples
(see also \cite{WassermanL:18} and Algorithm \ref{alg:bootstrap-pd})
for two chemistry datasets, i.e., the ethanol and malondialdehyde (MDA)
data.  All experiments are replicated at least 5 times with very
similar results.
The eigenvalues of the circle $\Delta_1$ are $\lambda_k = \left(\lceil k / 2\rceil\right)^2$ for $k = 0, 1,\cdots$.
In Figure \ref{fig:lambda-true-esti-circle} we overlay the ground truth eigenvalues (blue) and
the estimated eigenvalues (red) in log scale. The zeroth eigenvalue of $\LL_1^s$ is close to zero, and therefore is
clipped from the plot.
The lower right inset plot shows the first 40 eigenvalues
in linear scale. The spectrum started to diverge when $k \approx 30$. The upper left inset plot is the
original data (purple) and the estimated vector field\footnote{To estimate a vector field from a 1 co-chain, one can solve a linear system as in \eqref{eq:linear-reverse-interpolation-edge-flow-damped} in Supplement
\ref{sec:vector-field-cochain-map} in the Least-Squares sense.}
 (yellow arrow) corresponding to the first eigenflow, which is a harmonic flow.
Figure \ref{fig:lambda-baselines-torus} shows the computed eigenvalues
of different algorithms on the synthetic torus dataset. A torus has two 1
dimensional loops. The first two eigenvalues of $\vec{L}_1$ or $\LL_1$ are close to zero, but the null space estimated by SEC has dimension 1. 
Inset
plots show the $\scx_2$ edges colored with the intensity of the first two
eigenflows of $\vec{\mathcal L}_1$, indicating that the
first eigenflow (upper left) is along the outer (bigger) circle
while the second eigenvector (lower right) belongs to the inner (smaller)
circle.
Figure \ref{fig:lambda-baselines-flat-torus} shows the first fifty estimated
spectrum (red) of the flat torus dataset overlaid with the ground truth (blue).
The first two eigenflows correspond to harmonic flows ($\beta_1 = 2$).
The spectrum of {\it flat torus} is piecewise constant, with half of the
eigenflows for each distinct eigenvalue corresponding to gradient and the other
half to curl flows. The two inset plots present the first two harmonic flows
in the original space, which are shown to be parameterizing two different loops
in the flat torus.

The second row of Figure \ref{fig:dim-ker-L1-exp} shows the experiment
on the ethanol dataset, whose ambient and (estimated) intrinsic dimension are
$D=102$ and $d=2$, respectively.
The dataset is known to be a noisy non-uniformly
sampled torus (see e.g., Figure \ref{fig:eth-pca-0}) with the
second (inner) loop difficult to detect due to an asymmetric topological structure.
The two harmonic eigenflows correspond to the
relative rotations of the Hydroxyl (purple) and Methyl (yellow)
functional groups as shown in the inset plot of
Figure \ref{fig:lambda-baselines-ethanol}.
To remove the non-uniformity effect,
we subsample $n = 1,500$ points that are farthest to each other.
However, as shown in Figure \ref{fig:eth-pca-0},
the non-uniformity sampling effect is still drastic.
For this
dataset, only the spectrum of the proposed Helmholtzian $\vec{\mathcal
  L}_1$ captures the correct $\beta_1=2$, which can be confirmed
in Figure
\ref{fig:lambda-baselines-ethanol}
(see Supplement \ref{sec:exp-detail-ethanol} for the
discussion on the first 10 eigenflows); by contrast, SEC, $\mathbf L_1$ and the PD
estimate $\beta_1$ to 1, 1, and more than 2, respectively.
Due to the proposed triangle weight as in \eqref{eq:vr-kernel-general-form},
one can successfully remove the topological noise while preserving the true
signal (smaller inner loop) in the weighted $\LL_1^s$.
Without the weighting function with exponential decay, the SEC and $\vL_1$ fail to detect
the second inner loop thus reporting $\beta_1 = 1$.
Apart from $\beta_1$, one can also obtain estimates of the two harmonic eigenflows
from the first two eigenvectors of $\LL_1^s$. These two harmonic flows correspond to the two independent loops of this manifold. The first two eigenforms estimated from
the eigenvectors cochain can be found in Figures \ref{fig:eth-pca-0} and
\ref{fig:eth-pca-1} in Supplement. With our prior knowledge that the purple
and yellow rotors parametrize the loops in the dataset, we map the two eigenforms
that reside in the PCA space $\vX$ to the torsion space using
\eqref{eq:vec-field-mapping-jacobian} (see more discussions in Supplement
\ref{sec:vec-field-mapping}) as illustrated in Figure
\ref{fig:ethanol-first-two-eigenform-tau}. As clearly shown in the figure, the
zeroth eigenform (green) aligns with the direction of increase of the Hydroxyl rotor
(purple in the inset of \ref{fig:lambda-baselines-ethanol}), while the first
eigenform (red) matches perfectly with the derivative of Methyl rotor (yellow
in the inset of \ref{fig:lambda-baselines-ethanol}). Note also that one can clearly see
the non-uniform sampling effect in Figure \ref{fig:ethanol-first-two-eigenform-tau}.
More specifically, more points are sampled when Hydroxyl torsion value
is around -1 or 1 compared to other values.

The third row of Figure \ref{fig:dim-ker-L1-exp} shows the $\beta_1$
estimation result on the MDA dataset. This dataset has similar
topological structure as the ethanol dataset; that is to say, they are
both non-uniformly sampled tori. The two loops are parametrized by
the two Carbonyl bond torsions as illustrated in the inset plot of
Figure \ref{fig:lambda-baselines-mda}.  Compared to the ethanol
dataset, the MDA dataset is easier in a sense that the two loops are
more symmetric to each other. However, this dataset is harder to
visualize for the torus is embedded in a 4 dimensional space.  A clear
separation between the zeroth and the first eigenvalues of $\LL_1$ can be
seen in the estimated spectrum (Figure
\ref{fig:lambda-baselines-ethanol}, see also the inset table) with the help of
triangle weights \eqref{eq:vr-kernel-general-form}.  Even
though the estimated dimensions of the null space of SEC and $\vL_1$ are both two,
we do not observe such clear gaps between the first two estimated
eigenvalues compared to that of $\LL_1$.  The bootstrapped PD with
95\% confidence interval shows that there are at least two
loops. However, statistically significant loops generated by the topological
noise, which are close to the diagonal of the PD, are still visible.
Similar to the ethanol dataset, we map the first two eigenforms to the two
Carbonyl torsion space, as shown in Figure \ref{fig:mda-first-two-eigenform-tau}.
It confirms our prior knowledge that these two eigenforms parametrize the
yellow and the purple torsions.

\begin{figure}[!htb]
    \subfloat[][]
    {\includegraphics[height=2.25cm]{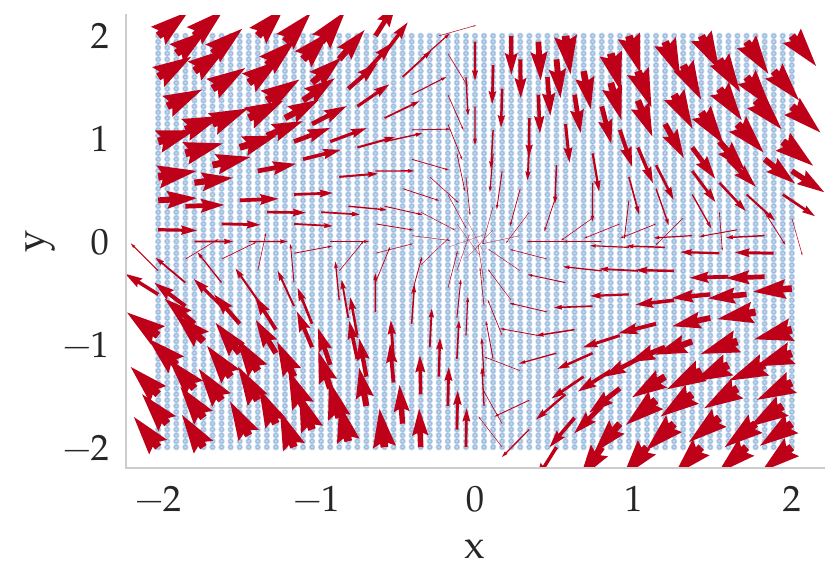}
    \label{fig:synth-field-2D-strip}}\hfill
\subfloat[][]
    {\includegraphics[height=2.25cm]{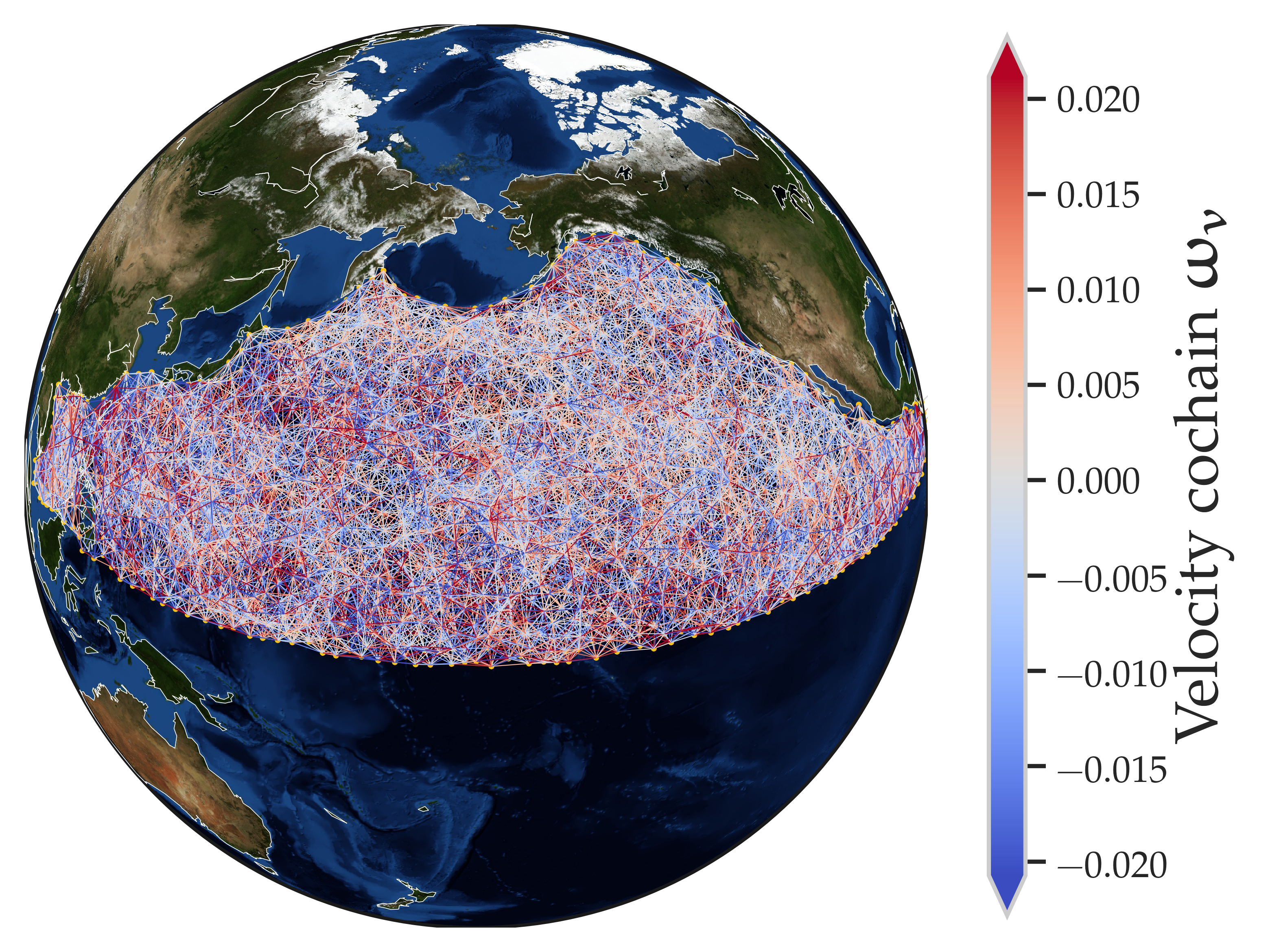}
    \label{fig:buoys-velocity-field}}\hfill
\subfloat[][]
    {\includegraphics[height=2.25cm]{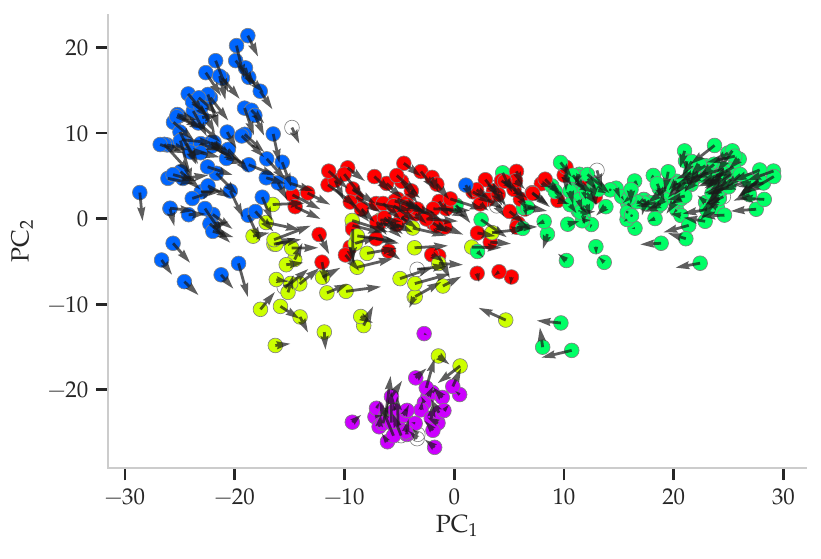}
    \label{fig:rna-chromaffin-velocity-field}}\hfill
\subfloat[][]
    {\includegraphics[height=2.25cm]{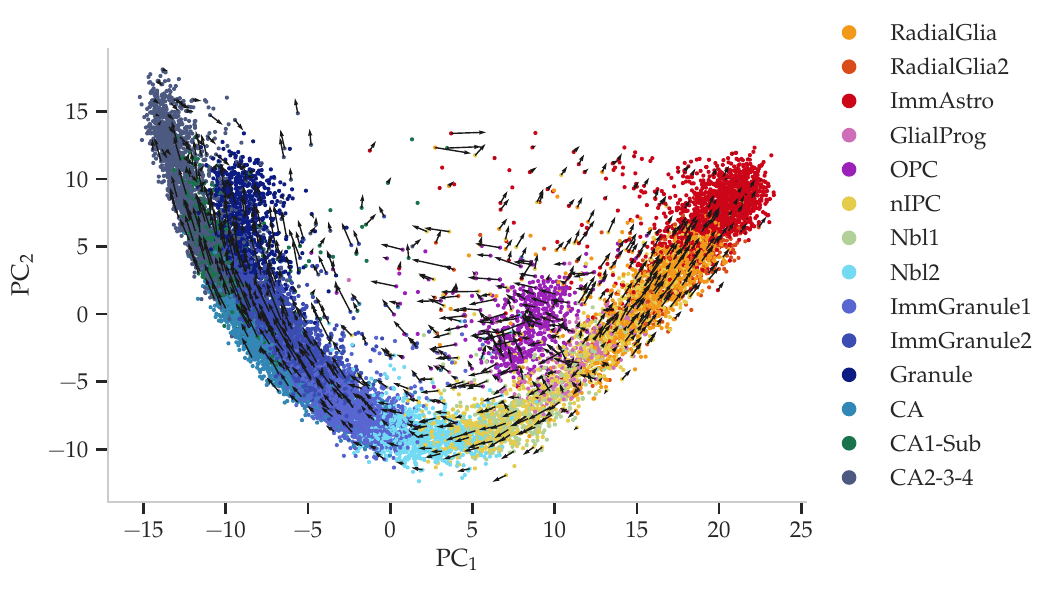}
    \label{fig:rna-hippocampus-500-velocity-field}}\hfill

    \subfloat[][]
    {\includegraphics[height=2.5cm]{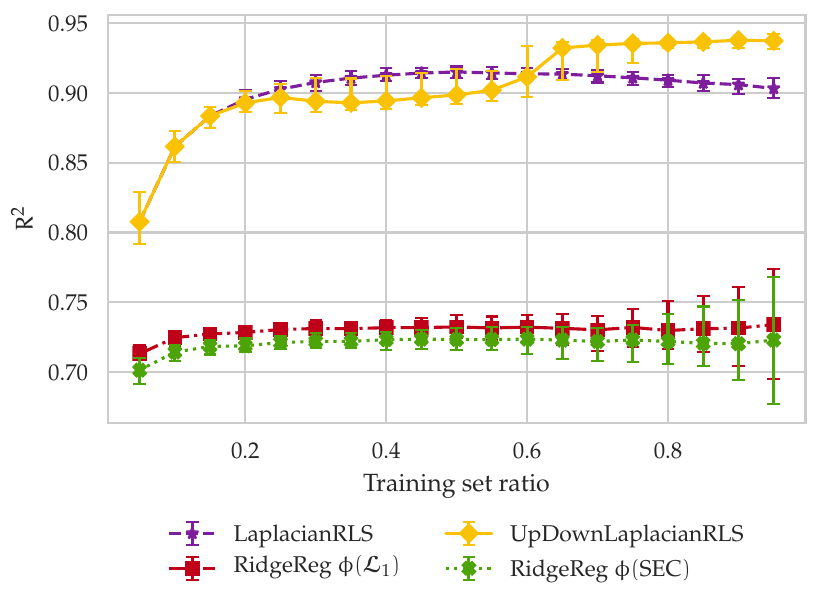}
    \label{fig:ssl-r2-2D-strip}}\hfill
\subfloat[][]
    {\includegraphics[height=2.5cm]{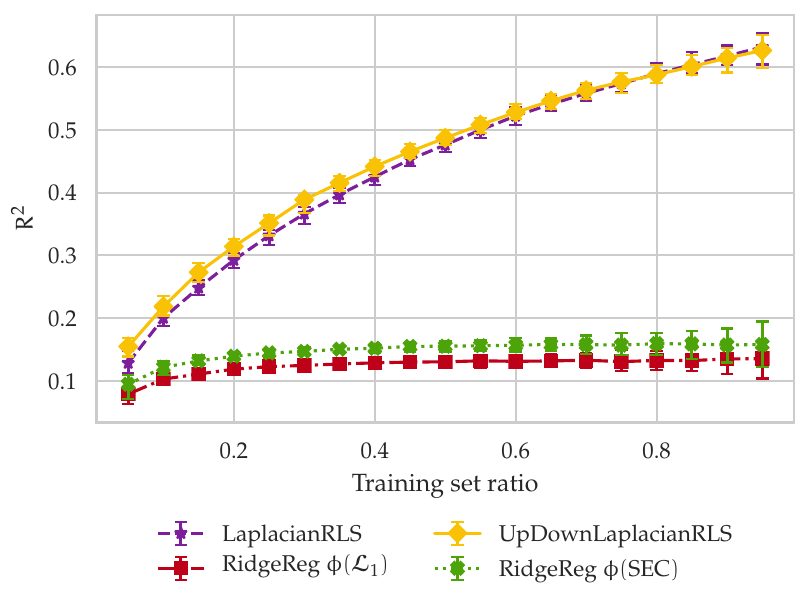}
    \label{fig:ssl-r2-buoys}}\hfill
\subfloat[][]
    {\includegraphics[height=2.5cm]{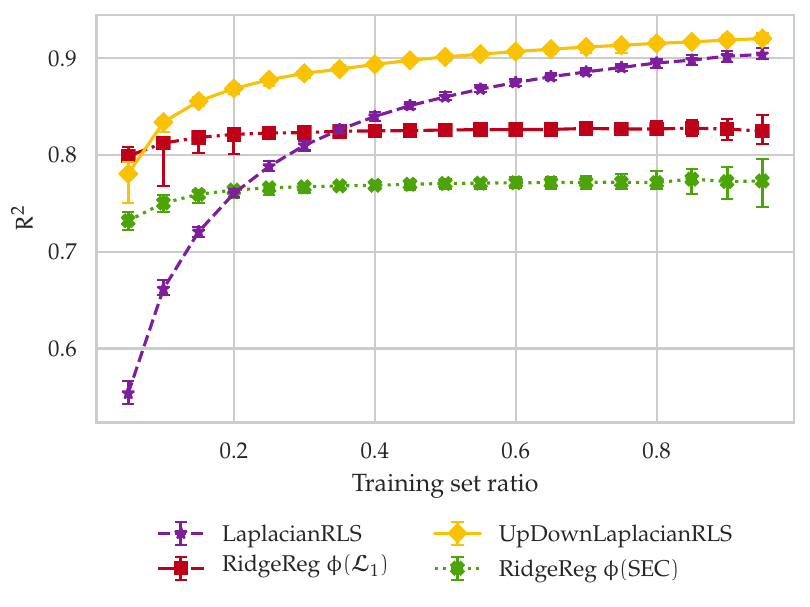}
    \label{fig:ssl-r2-chromaffin}}\hfill
\subfloat[][]
    {\includegraphics[height=2.5cm]{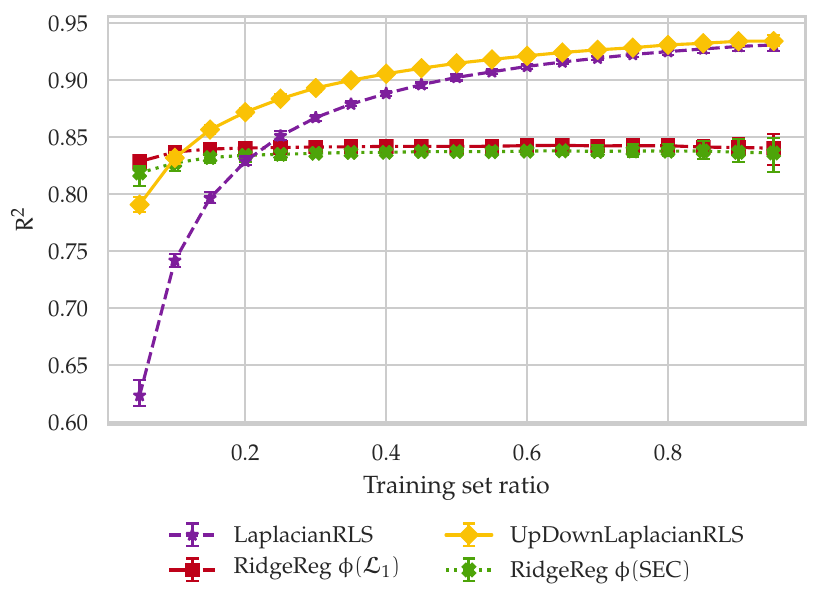}
    \label{fig:ssl-r2-hippocampus-k500}}\hfill

    \caption{Edge flow SSL results for (columns from left to right) synthetic field on 2D strip, velocity field of ocean buoy, RNA velocity field of the chromaffin cells, and RNA velocity of the mouse hippocampus cell differentiation dataset. The first row is the velocity field/1-cochain overlaid on the original point cloud data, while the second row represents the $R^2$ score of different edge flow SSL algorithms.}
    \label{fig:ssl-result}
\end{figure}

\subsection{Edge flow prediction by semi-supervised learning}
For each of the datasets in Figure \ref{fig:ssl-result}, we construct the
$\LL_1^{\rm up}$, $\LL_1^{\rm down}$, and the symmetrized Laplacian $\LL_1^s$
as described in Section \ref{sec:application-laplacian}.
We then predict the flows on a
fraction of the edges (test set) from the flows of the remaining edges
(training set), with edges randomly split into train and test set, for
train set ratio ranges from $0.05$ to $0.95$. We report the coefficient of 
determination, $R^2$, as our performance metrics. The hyperparameters (all in range
$[10^{-5}, 10^5]$) of the LaplacianRLS (in purple) and
UpDownLaplacianRLS (in yellow) on the 2D strip and ocean datasets are
selected by a 5-fold cross validation (CV) for each train/test
split. To reduce the computation time, the hyperparameters for the
larger datasets (RNA velocity datasets) are chosen by a 5-fold CV when
the train set ratio is $0.2$, and are used for all the other train sample
sizes.  Two other baselines are ridge regression on the first 100
eigenvectors estimated by $\LL_1^s$ (in red) and by SEC (in green).
The $\ell_2$ regularization parameter (in range
$[10^{-5}, 10^5]$) for these two baselines is chosen by a 5-fold CV for
different train set size.  The experiments are repeated fifty times
and we report the median $R^2$ value.  The lower and upper error bars
correspond to the 5th and 95th percentile of the $R^2$ values for
different train/test split, respectively.

Figure \ref{fig:ssl-r2-2D-strip} shows the results of predicting the
simple synthetic field shown in Figure \ref{fig:synth-field-2D-strip},
which composed of 70\% curl flow and 30\% gradient flow. The results of
ridge regression on the low frequency eigenvectors indicate that 
100 eigenvectors estimated by either $\LL_1^s$ (red) or SEC (green) are not enough
even for predicting a simple field in Figure
\ref{fig:synth-field-2D-strip}.

The first real dataset we used contains ocean buoy trajectories across the globe.\footnote{Data from the AOML/NOAA Drifter Data Assembly \url{http://www.aoml.noaa.gov/envids/gld/}} The ambient dimension of the data is $D = 3$ (earth surface)
and intrinsic dimension is $d = 2$.
In this paper we subsample $n=1,500$ farthest buoys located in the North Pacific ocean.
The $\scx_2$ is constructed in the {\em earth-centered, earth-fixed} (ECEF) coordinate
system. Supplement \ref{sec:exp-detail-ocean} has more details about the data and how to
preprocess the edge flow. Figure \ref{fig:buoys-velocity-field} shows
the constructed $\scx_2$ of the buoys, with edges colored by the
velocity cochain $\vec \omega$.
Figure \ref{fig:ssl-r2-buoys} reports the $R^2$ scores of the edge flow
prediction. As clearly shown in the plot, higher frequency terms are
needed to successfully predict the edge flow. However, this is clearly
infeasible using the SEC approach as discussed in Section \ref{sec:related-works}.

Next, we investigate the edge flow SSL on the RNA single cell
sequencing manifold equipped with the RNA velocity
\cite{LaMannoG.S.Z+:18}\footnote{All RNA datasets are from
  \url{http://velocyto.org} (with preprocessing codes)}, as shown in
the third and the fourth column of Figure \ref{fig:ssl-result}. These
datasets have non-trivial manifold structures, which make the SSL problem
more challenging.  The Chromaffin cell differentiation dataset has $n
= 384$ and $D = 5$, while the mouse hippocampus dataset has $18,140$
cells in total and $D = 10$ in the PCA space.  We subsample the
farest $n=800$ cells in the mouse hippocampus while using all the
cells in the chromaffin dataset. The RNA velocity fields of the Chromaffin
and the mouse hippocampus datasets in the first
two principal components are presented in Figure
\ref{fig:rna-chromaffin-velocity-field} and
\ref{fig:rna-hippocampus-500-velocity-field}, respectively.  As
expected, the LaplacianRLS and UpDownLaplacianRLS algorithms
outperform the SSL algorithms using only low frequency terms of the
estimated $\Delta_1$.  Note also that compared with the SSL in the
simple manifold structure in the first two columns, the
UpDownLaplacianRLS for the RNA velocity data has more performance
gains when the training set sizes are small.

\subsection{Inverse problem: estimate the underlying velocity field from the trajectory data using SSL}
\begin{figure}[!htb]
    \subfloat[][]
    {\includegraphics[width=.35\linewidth]{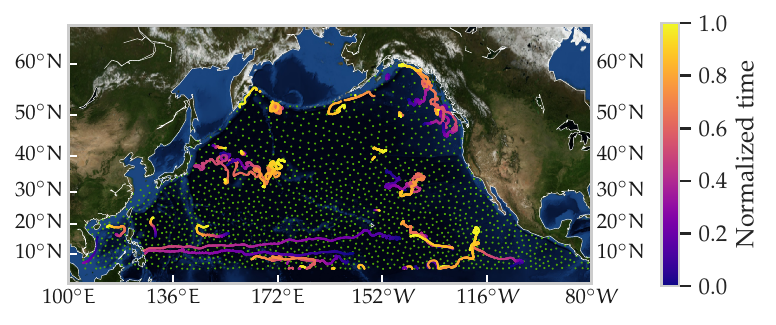}
    \label{fig:inv-prob-ocean-traj}}\hfill
\subfloat[][]
    {\includegraphics[width=.3\linewidth]{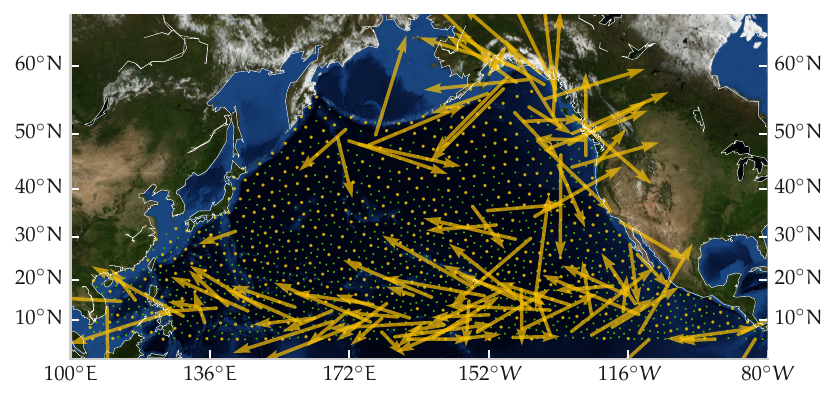}
    \label{fig:inv-prob-ocean-vec-count}}\hfill
\subfloat[][]
    {\includegraphics[width=.3\linewidth]{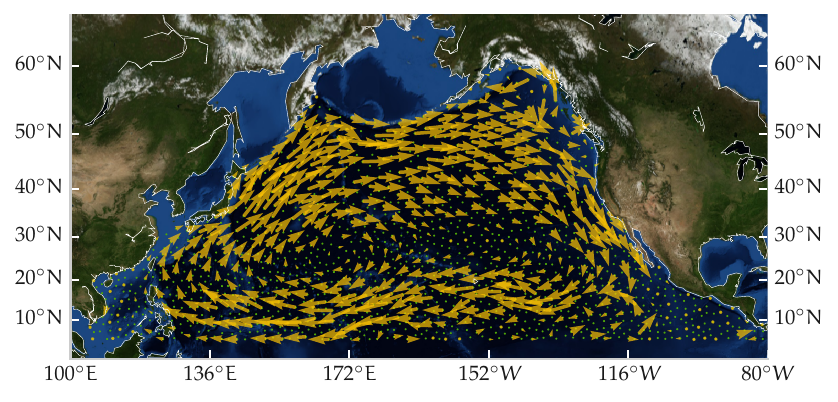}
    \label{fig:inv-prob-ocean-vec-ssl}}\hfill

    \subfloat[][]
    {\includegraphics[width=.32\linewidth]{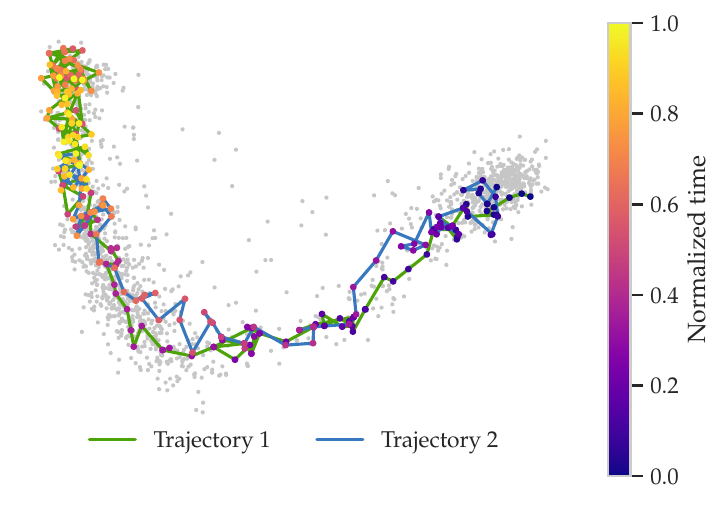}
    \label{fig:inv-prob-rna-emb-traj}}\hfill
\subfloat[][]
    {\includegraphics[width=.31\linewidth]{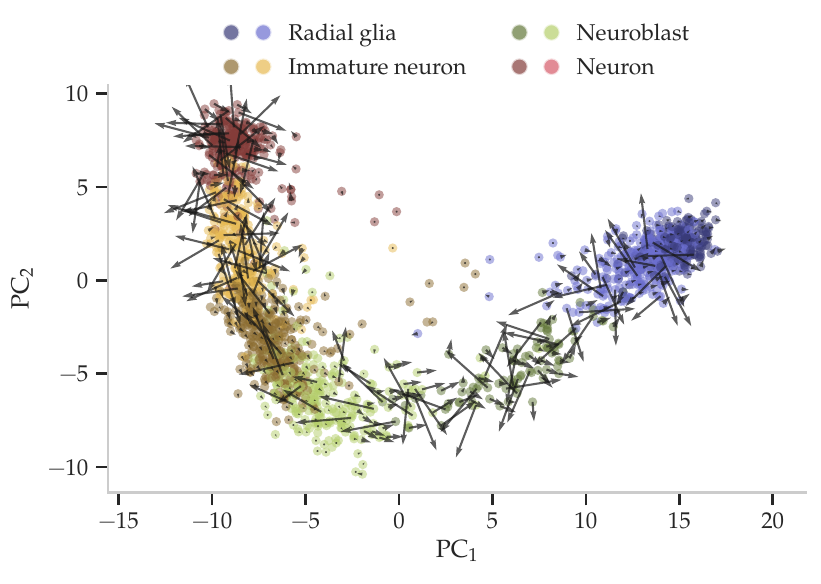}
    \label{fig:inv-prob-rna-emb-vec-count}}\hfill
\subfloat[][]
    {\includegraphics[width=.31\linewidth]{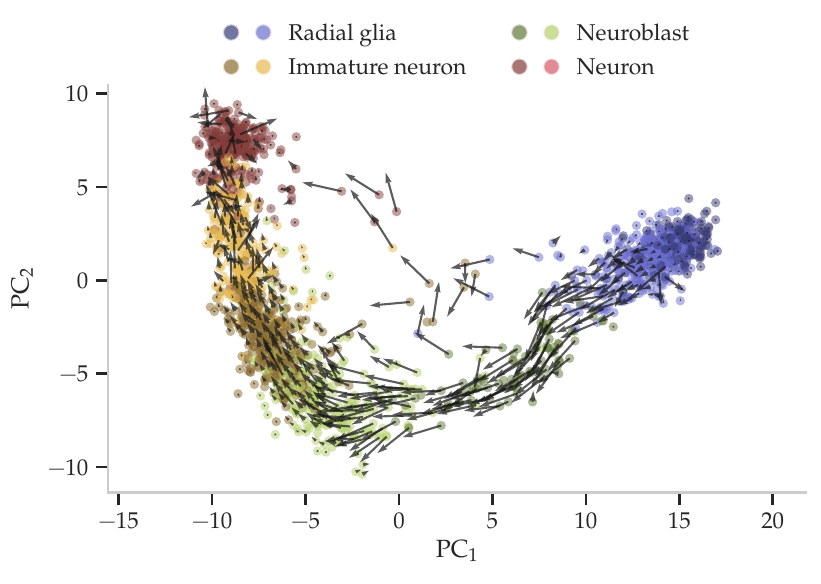}
    \label{fig:inv-prob-rna-emb-vec-ssl}}\hfill
    \caption{Results of estimating underlying velocity field from the partially observed trajectories. The first and second row are the ocean and the human glutamatergic neuron cell differentiation dataset, respectively. Columns from left to right present the observed trajectories, the estimated velocity field from a zero-padded 1-cochain, and the estimated field from the SSL interpolated 1-cochain.}
    \label{fig:inv-prob}
\end{figure}

One application of the Edge flow SSL is to estimate the underlying
velocity field 
given a sparse set of observations (trajectories).
A trajectory can be thought of as a partially
observed 1-cochain with the value of $e = (i, j)$ constructed by
counting number of times the trajectory goes from node $i$ to node $j$
(counted as negative if pass from $j$ to $i$).
We use the UpDownLaplacianRLS algorithm described in Section
\ref{sec:application-laplacian} and show the estimated fields from the
interpolated 1-cochains in Figure \ref{fig:inv-prob}.
The parameters of the SSL algorithm are chosen using a 5-fold CV when test
set ratio is
0.6 of all the observed edges.
For comparison, we also present the velocity fields estimated by the zero-padded
1-cochains.
The ocean
drifter data themselves are trajectories, we sampled 20 trajectories
from the dataset as shown in Figure \ref{fig:inv-prob-ocean-traj}.
Since we subsample the furtherest $n=1,500$ points from the original dataset
to construct the $\scx_2 = (V, E, T)$,
the trajectories will contain points that are not in the vertex set $V$.
To address this issue, one can treat the vertex set $V$ as landmarks of the
original dataset and map each point in the sampled trajectories to its
nearest point in $V$.
The 1-cochain is then constructed from the mapped trajectories.
The figure shows that we can obtain a highly interpretable velocity field that corresponds to the
North pacific Gyre as in Figure \ref{fig:inv-prob-ocean-vec-ssl},
compared with the estimated velocity field from the zero-padded
1-cochain (Figure
\ref{fig:inv-prob-ocean-vec-count}). The algorithm is surprisingly
powerful in the sense that the sampled trajectories do not even cover
the west-traveling buoys at 40$^\circ$ N nor the south-bounding
drifters near the west coast of the U.S.

The human glutamatergic
neuron cell differentiation dataset do not have the temporal
information. To illustrate our method,
we estimate the transition probability matrix of a Markov chain from
the RNA velocity field, which is
computed from a $550$-nearest neighbor (NN) graph. We then
sampled two random walk trajectories
(Figure \ref{fig:inv-prob-rna-emb-traj}) from the constructed Markov chain
on the $550$-NN graph.
Note that the $k$-NN graph above, which is used to estimate the
RNA velocity field, can be different from the 1-skeleton of the
$\scx_2$ constructed in Algorithm \ref{alg:manifold-helmholtzian-learning}.
The estimated field using the UpDownLaplacianRLS in Figure
\ref{fig:inv-prob-ocean-vec-ssl} shows a smooth velocity field
compared with the field estimated from the zero-padded 1-cochain in Figure
\ref{fig:inv-prob-ocean-vec-count}.

\subsection{Edge flow smoothing}
\begin{figure}[!htb]
    \subfloat[][Original vector field]
    {\includegraphics[width=.48\linewidth]{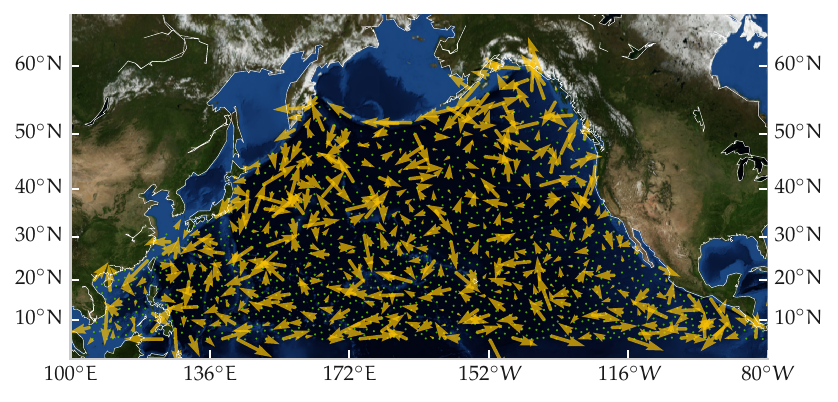}
    \label{fig:orig-vec-field}}\hfill
\subfloat[][$\alpha = 5$]
    {\includegraphics[width=.48\linewidth]{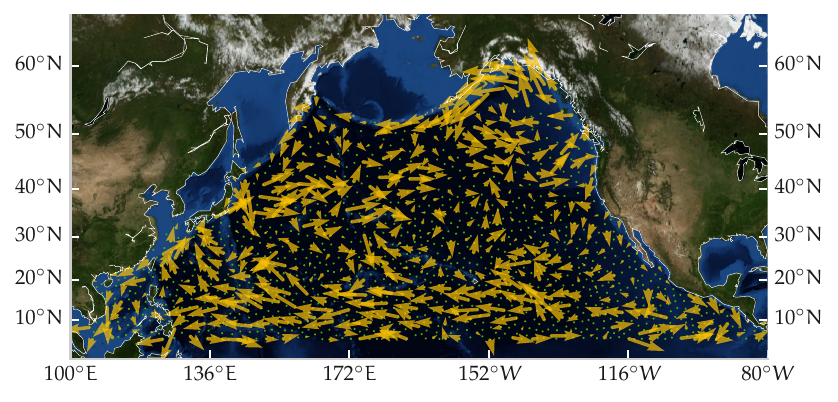}
    \label{fig:smoothed-vec-field-alpha-5}}\hfill

    \subfloat[][$\alpha = 50$]
    {\includegraphics[width=.48\linewidth]{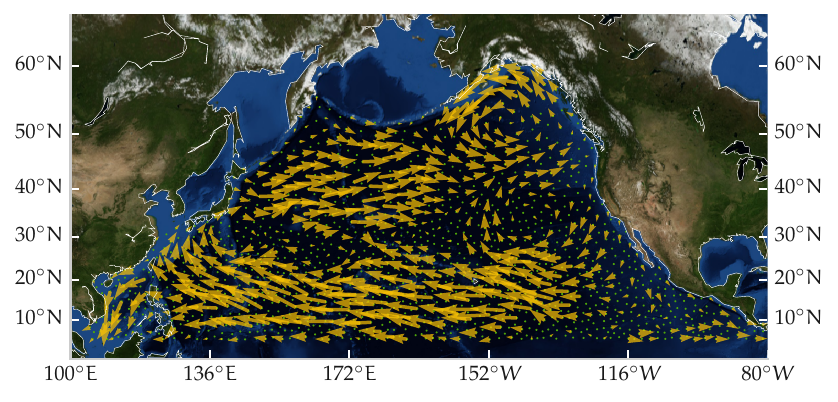}
    \label{fig:smoothed-vec-field-alpha-50}}\hfill
\subfloat[][$\alpha = 500$]
    {\includegraphics[width=.48\linewidth]{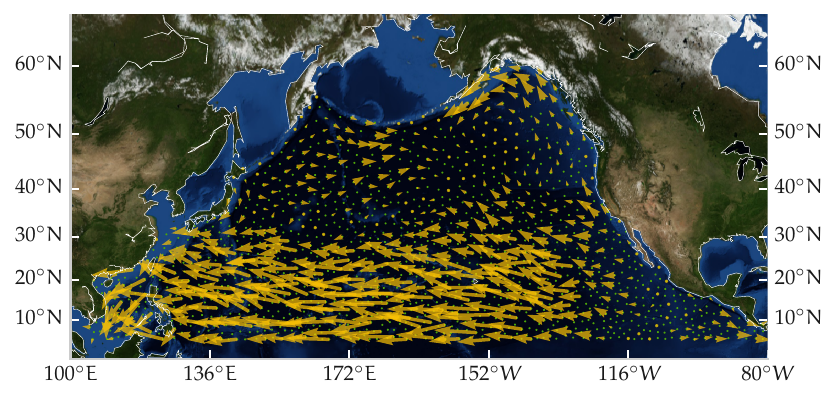}
    \label{fig:smoothed-vec-field-alpha-500}}\hfill

    \caption{Edge flow smoothing on the ocean buoys velocity field dataset with different smoothing constant $\alpha$.}
    \label{fig:gsp-result}
\end{figure}

Figure \ref{fig:gsp-result} shows the result $\hat{\vec\omega}$ of edge
flow smoothing presented in \eqref{eq:edge-flow-smooth}. 
Figure \ref{fig:orig-vec-field} and
\ref{fig:smoothed-vec-field-alpha-5}--\ref{fig:smoothed-vec-field-alpha-500}
show the original and the smoothed vector fields with different smoothing parameters
$\alpha$, respectively.
Several patterns, e.g., North Equatorial and Kuroshio currents, are visible in the original velocity
field. However, a well known North Pacific current at $40^\circ$N is not as apparent as the aforementioned
currents. By contrast, the smoothed flow with $\alpha=50$ makes the North
Pacific Gyre visible. The model with $\alpha=5$ in Figure
\ref{fig:smoothed-vec-field-alpha-5} corresponds to the case of ``under-smoothing'',
while that with $\alpha=500$ represents the case of ``over-smoothing''.
Note that the vector fields are plotted on the Mercator projection purely for
visualization purposes. The $\scx_2$ are constructed on the ECEF system
similar as before.

\section{Conclusion}
The main contribution of the paper is to \romanenu{1}  propose an estimator of the
Helmholtzian $\Delta_1$ of a manifold in the form of a weighted 1-Laplacian of a
two dimensional simplicial complex $\scx_2$, whose vertex set includes points sampled
from a manifold.
With the proposed kernel function for triangles, which is a
core part of the construction of this estimator, \romanenu{2} we further derive
(Section \ref{sec:consistency-analysis}) the infinite sample limit of 1 up-Laplacian 
$\LL_1^\mathrm{up}$ and 1 down-Laplacian $\LL_1^\mathrm{down}$ under the assumption that the points are sampled from a
constant density supported on $\M$, which proves the pointwise consistency and $a=\frac{1}{4}$ in $b=1$ in the Helmholtzian. The spectral consistency of the corresponding $\LL_1^\mathrm{down}$
is also shown using the spectral dependency to the well-studied graph Laplacian.
\romanenu{3} This work opens up avenues for the extensions of the
well-studied node based Laplacian based algorithms to edge flow
learning.  This includes, but is not limited to, semi-supervised
learning (SSL) \cite{BelkinM.N.S+:06} on edge flows and
smoothing/noise reduction for vector fields \cite{SchaubMT.S:18}.
The classical Laplacian-type SSL and graph signal processing algorithms
are thus applied to edge flow learning scenarios with the constructed
weighted Helmholtzian $\LL_1^s$. Furthermore, \romanenu{4} the effectiveness of
the proposed weighted Helmholtzian are shown by comprehensive
experiments on synthetic and real-world datasets.  The proposed
framework is a significant building block for new applications and
algorithms in other domains of study, such as language datasets
\cite{ZhuX:13} with \texttt{word2vec} embedding
\cite{MikolovT.S.C+:13}, (multivariate) time series
\cite{GiustiC.G.B+:16}, and 3D motion datasets \cite{AliS.B.S+:07}.

\section*{Acknowledgements}
The authors acknowledge partial support from the U.S. Department of Energy's Office of Energy Efficiency and Renewable Energy (EERE) under the Solar Energy Technologies Office Award Number DE-EE0008563 (M.M. and Y.C.), from
the NSF DMS PD 08-1269 (M.M. and Y.C.),
and an ARO MURI (I.G.K.). 
Part of this research was performed while the three authors were visiting
the
Institute for Pure and Applied Mathematics (IPAM), which is supported
by the National Science Foundation (Grant No. DMS-1925919). In addition,
M.M. gratefully acknowledges an IPAM Simons Fellowship.
This work was facilitated through the use of advanced computational, storage, and networking infrastructure provided by the Hyak supercomputer system and funded by the STF at the University of Washington.
M.M. and Y.C. thank the Tkatchenko
and Pfaendtner labs and in particular to Stefan Chmiela and Chris Fu for
providing the molecular
dynamics data and for many hours of brainstorming and advice, some of
which took place at IPAM.

\section*{Disclaimer}
The views expressed herein do not necessarily represent the views of the U.S. Department of Energy or the United States Government.

\newcommand{\etalchar}[1]{$^{#1}$}

\bibliographystyle{alpha}

\setupsupp

\twocolumn
\section{Notational table}
\label{sec:notational-table}
\label{sec:first-sec-in-supplement}
\begin{table}[!htb]
\begin{adjustwidth}{-50pt}{0pt}
\caption{Notational table}
\label{tab:notation-summary}
\centering
\begin{tabular}{cl}
\toprule
\multicolumn{2}{c}{Matrix operation} \\
\midrule
$\vec{M}$ & Matrix \\
$\vec{m}_i$ & Vector represents the $i$-th row of $\vec{M}$  \\ 
$\vec{m}_{:,j}^T$ & Vector represents the $j$-th column of $\vec{M}$  \\
$m_{ij}$ & Scalar represents $ij$-th element of $\vec{M}$  \\
$[\vec{M}]_{ij}$ & Scalar, alternative notation for $m_{ij}$  \\
$\vec{M}[\alpha, \beta]$ & Submatrix of $\vec{M}$ of index sets $\alpha, \beta$ \\
$\vec{v}$ & Column vecor   \\
$v_{i}$ & Scalar represents $i$-th element of vecor $\vec{v}$  \\
$[\vec{v}]_{i}$ & Scalar, alternative notation for $v_{i}$   \\
\midrule
\multicolumn{2}{c}{Scalars} \\
\midrule
$n$ & Number of samples (nodes) $(=n_0)$ \\
$n_k$ & Dimension of $k$ co-chain $(=|\Sigma_k|)$ \\
$d$ & Intrinsic dimension \\
$\delta$ & Bandwidth parameter for $\vec L$ \\
$\varepsilon$ & Bandwidth parameter for $\vec W_2$ \\
\midrule
\multicolumn{2}{c}{Vectors \& Matrices} \\
\midrule
$\vec{X}$ & Data matrix \\
$\vec{x}_i$ & Point $i$ in ambient space \\
$\vec{B}_k$ & Boundary operator of $k$ chain \\
$\vec{L}$ & Graph Laplacian \\
$\vec{L}_k$ & $k$ unnormalized Hodge Laplacian \\
$\vec{w}_k$ & $\in\mathbb R^{n_k}$, weights of $k$ (co)chain \\
$\vec{W}_k$ & $ = \diag(\vec{w}_k)$ \\
$\vec{\mathcal L}_k$ & $k$ random walk Hodge Laplacian \\
$\vec{\mathcal L}_k^s$ & $k$ symmetrized Hodge Laplacian \\
$\vec{I}_n$ & Identity matrix $\in\mathbb R^{n\times n}$ \\
$\vec{1}_n$ & All one vecor $\in\mathbb R^n$ \\
$\vec{1}_S$ & $[\vec{1}_S]_i = 1$ if $i\in S$ 0 otherwise\\
$\vec{0}_n$ & All zero vecor $\in\mathbb R^n$ \\
\midrule
\multicolumn{2}{c}{Miscellaneous} \\
\midrule
$\Sigma_k$ & Set of $k$-simplices \\
$V = \Sigma_0$ & Set of nodes  \\
$E = \Sigma_1$ & Set of edges \\
$T = \Sigma_2$ & Set of triangles \\
$\scx_\ell$ & $ = (\Sigma_k)_{k=0}^\ell = (\Sigma_0, \Sigma_1, \cdots, \Sigma_\ell)$ \\
& simplicial complex up to dimension $\ell$ \\
$G(V, E)$ & Graph with vertex set $V$ and edge set $E$ \\
& $ = \scx_1 = (\Sigma_0, \Sigma_1)$ \\
$\mathcal M$ & Data manifold \\
$[s]$ & Set $\{1, \cdots, s\}$ \\
$\epsilon_\pi$ & Levi-Civita symbol for permutation $\pi$ \\
$\mathsf{d}$ & Exterior derivative \\
$\updelta$ & Co-differential operator \\
\bottomrule
\end{tabular}
\label{tb:notation-summaries}
\end{adjustwidth}
\end{table}

\newpage
\section{Pseudocodes}
\label{sec:pseudocodes}
\begin{algorithm}[!htb]
    \setstretch{1.15}
    \SetKwInOut{Input}{Input}
    \SetKwInOut{Output}{Return}
    \SetKwComment{Comment}{$\triangleright $\ }{}
    \Input{Set of $k-1$ and $k$ simplices $\Sigma_{k-1}$, $\Sigma_k$}
    $\vec{B}_k\gets \vec{0}_{n_{k-1}}\vec{0}_{n_{k}}^\top\in\mathbb R^{n_{k-1}\times n_k}$ \\
    \For{every $\sigma_{k-1}\in \Sigma_{k-1}$}{
        \For{every $\sigma_k \in \Sigma_k$}{
            \uIf{$\sigma_{k-1}$ is a face of $\sigma_k$}{
                $[\vec{B}_k]_{\sigma_{k-1},\sigma_k} \gets \epsilon_{i_j,\iksetremovej}$
                \Comment{See \eqref{eq:boundary-map-rigour-def}.}
            }
            \Else{$[\vec{B}_k]_{\sigma_{k-1}, \sigma_k} \gets 0$}
        }
    }
    
    \Output{Boundary map for $k$-chain $\vec{B}_k$}
    \caption{{\boundarymap}}
    \label{alg:boundary-map}
\end{algorithm}
\begin{algorithm}[!htb]
    \SetKwInOut{Input}{Input}
    \SetKwInOut{Output}{Return}
    \SetKwComment{Comment}{$\triangleright $\ }{}
	\Input{Data matrix $\vX\in\rrr^{n\times D}$, radius \texttt{max\_dist} $\delta$, \texttt{max\_dim}
	$k$}
	Build graph $G(V, E)$ with $V = [n]$ and $E = \{(i, j)\in V^2: \|\vx_i - \|\vx_j\| < \delta\}$ \\
	\For{$\ell = 2\to k$}{
		$\Sigma_\ell = \{(i_0,\cdots,i_\ell)\in V^\ell: e \in E \,\forall\, e \in \binom{(i_0,\cdots,i_\ell)}{2}\}$
		\Comment{Clique complex of $G$}
	}
	\Output{VR complex $\scx_k = (V, E,\cdots, \Sigma_k)$}
    \caption{\vrcomplex}
    \label{alg:vr-complex}
\end{algorithm}
\begin{algorithm}[!htb]
    \SetKwInOut{Input}{Input}
    \SetKwInOut{Output}{Return}
    \SetKwComment{Comment}{$\triangleright $\ }{}
    \Input{Data matrix $\vX\in\rrr^{n\times D}$, \texttt{max\_dim} $k$, \texttt{num\_bootstrap\_samples} $B$,
    significance level $\alpha$}
    Compute persistent diagram $\mathcal P \gets \text{\sc PD}(\vX, k)$ \\
    \For{$i=1\to B$}{
    	Sample $[\tilde\vx_i]_{i=1}^n$ with replacement from $\vX$ \\
    	Bootstrapped PD $\tilde{\mathcal P}_i \gets \text{\sc PD}(\tilde\vX, k)$ \\
    	\For{$\ell=0\to k$}{
    		$\mathfrak D^\ell_i \gets \text{\sc BottleneckDist}(\mathcal P, \tilde{\mathcal P}_i, \ell)$
    	}
    }
    \For{$\ell=0\to k$}{
	    $b_\alpha^\ell \gets \text{\sc Percentile}(\mathfrak D^\ell_i, 1-\alpha)$
    }
    \Output{Confidence band $\{b_\alpha^\ell\}_{\ell=0}^k$}
    \caption{\pdbootstrap}
    \label{alg:bootstrap-pd}
\end{algorithm}
\newpage
\onecolumn \section{Rigorous definitions}
\label{sec:background-ext-calculus}

First we define the Levi-Civita notation and permutation parity. This is useful for the definition of
boundary operator $\vB_k$.
\begin{definition}[Permutation parity]
Given a finite set $\{j_0,j_1,\cdots,j_k\}$ with $k\geq 1$ and $j_\ell < j_m$ if $\ell < m$, the parity of a permutation
$\varsigma(\{j_0,\cdots,j_k\}) = \{i_0, i_1,\cdots,i_k\}$ is defined to be
\begin{equation}
	\epsilon_{i_0,\cdots,i_k} = -1^{N(\varsigma)}.
\end{equation}

Here $N(\varsigma)$ is the {\em inversion number} of $\varsigma$. The inversion number is the cardinality of the inversion
set, i.e.,  $N(\varsigma) = \#\{(\ell, m): i_\ell > i_m \text{ if } \ell < m \}$. We say $\varsigma$ is an even permutation
if $\epsilon_{i_0,\cdots,i_k} = 1$ and an odd permutation otherwise.
\end{definition}

\begin{remark}
When $k=1$, the Levi-Civita symbol is
\begin{equation*}
\epsilon_{ij} = \begin{cases}
	+1 \,&\text{if }\, (i,j) = (1, 2), \\
	-1 \,&\text{if }\, (i,j) = (2, 1).
\end{cases}
\end{equation*}

For $k = 2$, the Levi-Civita symbol is
\begin{equation*}
\epsilon_{ijk} = \begin{cases}
	+1 \,&\text{if }\, (i,j,k) \in \{(1, 2, 3), (2, 3, 1), (3, 1, 2)\}, \\
	-1 \,&\text{if }\, (i,j,k) \in \{(3, 2, 1), (1, 3, 2), (2, 1, 3)\}.
\end{cases}
\end{equation*}
\end{remark}

With this in hand, one can define the boundary map as follows.
\begin{definition}[Boundary map \& boundary matrix]
Let $\iksetremovej \coloneqq i_0,\cdots, i_{j-1}, i_{j+1}, \cdots, i_{k}$, and 
$\iksetinsertj$ denote $i_j$ insert into $i_0,\cdots, i_k$ with proper order, 
we define a {\em boundary map (operator)} $\mathcal B_k: \mathcal C_k \to\mathcal C_{k-1}$,
which maps a simplex to its face, by
\begin{equation}
	\mathcal{B}_k([i_0, \cdots, i_k]) = \sum_{j=0}^k(-1)^j [\iksetremovej] = \sum_{j=0}^k\epsilon_{i_j,\iksetremovej}[\iksetremovej].
\end{equation}
Here $i_j,\iksetremovej\coloneqq i_j,i_0,\cdots, i_{j-1},i_{j+1}, \cdots, i_k$. 

The corresponding {\em boundary matrix}
$\vec{B}_k \in\{0,\pm 1\}^{n_{k-1}\times n_k}$ can be defined as follows.
\begin{equation}
	(\vec{B}_k)_{\sigma_{k-1},\sigma_k} \begin{cases}
		\epsilon_{i_j,\iksetremovej} \,&\text{if }\, \sigma_k = [i_0,\cdots,i_k],\,\,\sigma_{k-1} = [\iksetremovej], \\
		0 \,&\text{otherwise.}
	\end{cases}
\label{eq:boundary-map-rigour-def}
\end{equation}

$(\vec{B}_k)_{\sigma_{k-1},\sigma_k}$ represents the orientation of $\sigma_{k-1}$ as a face of $\sigma_k$,
or equals 0 when the two are not adjacent.
\end{definition}

\begin{example}
For the simplicial complex $\scx_2$ in Figure \ref{fig:example-of-a-scx2}, the corresponding $\vB_1$ is in Table
\ref{tab:b1-example-scx2} while $\vB_2$ is in Table \ref{tab:b2-example-scx2}.

\usetikzlibrary{patterns}
\begin{minipage}{0.75\textwidth}
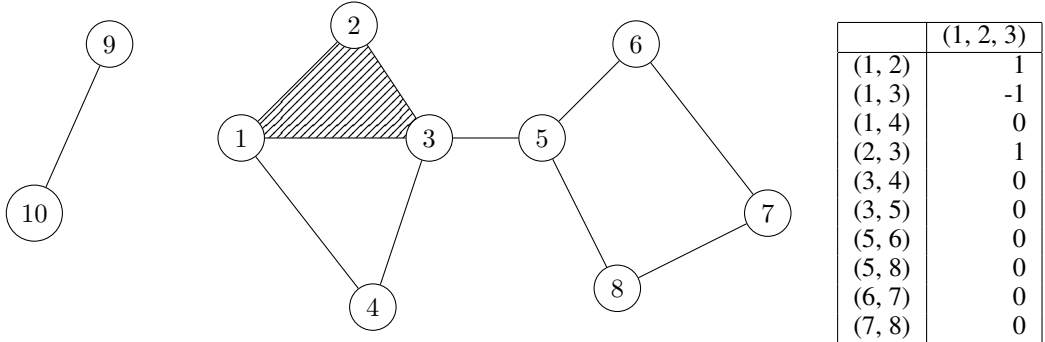
\begin{figure}[H]
\pgfdeclarelayer{nodelayer}
\pgfdeclarelayer{edgelayer}
\pgfsetlayers{nodelayer,edgelayer}   \centering
\begin{tikzpicture}
	\begin{pgfonlayer}{nodelayer}
		\fill[pattern=north east lines, pattern color=black]  (0, 0) -- (1.5, 1.5) -- (2.5, 0) -- (0, 0);
		\node [circle,draw=black, fill=white] (0) at (0, 0) {$1$};
		\node [circle,draw=black, fill=white] (1) at (1.5, 1.5) {$2$};
		\node [circle,draw=black, fill=white] (2) at (2.5, 0) {$3$};
		\node [circle,draw=black, fill=white] (3) at (1.75, -2.25) {$4$};
		\node [circle,draw=black, fill=white] (4) at (4, 0) {$5$};
		\node [circle,draw=black, fill=white] (5) at (5.25, 1.25) {$6$};
		\node [circle,draw=black, fill=white] (6) at (7, -1) {$7$};
		\node [circle,draw=black, fill=white] (7) at (5, -2) {$8$};
		\node [circle,draw=black, fill=white] (8) at (-1.75, 1.25) {$9$};
		\node [circle,draw=black, fill=white] (9) at (-2.75, -1) {$10$};
	\end{pgfonlayer}
	\begin{pgfonlayer}{edgelayer}
		\draw (0) to (1);
		\draw (1) to (2);
		\draw (2) to (3);
		\draw (3) to (0);
		\draw (2) to (4);
		\draw (4) to (5);
		\draw (5) to (6);
		\draw (6) to (7);
		\draw (7) to (4);
		\draw (8) to (9);
		\draw (0) to (2);
	\end{pgfonlayer}
\end{tikzpicture}
\caption{Illustration of a $\scx_2 = (\Sigma_0, \Sigma_1, \Sigma_2)$, shaded region denotes that the triangle $t \in \Sigma_2$.}
\label{fig:example-of-a-scx2}
\end{figure}
\end{minipage}
\hfill
\begin{minipage}{0.22\textwidth}
\vspace{3ex}
\centering
\begin{tabular}{|l|r|}
\hline
        & (1, 2, 3) \\ \hline
(1, 2)  & 1         \\
(1, 3)  & -1        \\
(1, 4)  & 0         \\
(2, 3)  & 1         \\
(3, 4)  & 0         \\
(3, 5)  & 0         \\
(5, 6)  & 0         \\
(5, 8)  & 0         \\
(6, 7)  & 0         \\
(7, 8)  & 0         \\
\hline
\end{tabular}
\captionof{table}{$\vB_2$ of $\scx_2$}
\label{tab:b2-example-scx2}
\end{minipage}

\begin{table}[!htb]
\centering
\begin{tabular}{|l|rrrrrrrrrrr|}
\hline
   & (1, 2) & (1, 3) & (1, 4) & (2, 3) & (3, 4) & (3, 5) & (5, 6) & (5, 8) & (6, 7) & (7, 8) & (9, 10) \\ \hline
1  & 1      & 1      & 1      & 0      & 0      & 0      & 0      & 0      & 0      & 0      & 0       \\
2  & -1     & 0      & 0      & 1      & 0      & 0      & 0      & 0      & 0      & 0      & 0       \\
3  & 0      & -1     & 0      & -1     & 1      & 1      & 0      & 0      & 0      & 0      & 0       \\
4  & 0      & 0      & -1     & 0      & -1     & 0      & 0      & 0      & 0      & 0      & 0       \\
5  & 0      & 0      & 0      & 0      & 0      & -1     & 1      & 1      & 0      & 0      & 0       \\
6  & 0      & 0      & 0      & 0      & 0      & 0      & -1     & 0      & 1      & 0      & 0       \\
7  & 0      & 0      & 0      & 0      & 0      & 0      & 0      & 0      & -1     & 1      & 0       \\
8  & 0      & 0      & 0      & 0      & 0      & 0      & 0      & -1     & 0      & -1     & 0       \\
9  & 0      & 0      & 0      & 0      & 0      & 0      & 0      & 0      & 0      & 0      & 1       \\
10 & 0      & 0      & 0      & 0      & 0      & 0      & 0      & 0      & 0      & 0      & -1      \\
\hline
\end{tabular}
\caption{Boundary matrix $\vB_1$ (incident matrix) of $\scx_2$ in Figure \ref{fig:example-of-a-scx2}.}
\label{tab:b1-example-scx2}
\end{table}

In this example, triangle is not always filled, e.g., $[1, 3, 4] \notin \Sigma_2$. However, for the construction
of VR complex described in Algorithm \ref{alg:manifold-helmholtzian-learning},
every triangle in $\scx_2$ is filled.
\end{example}

Below we provide the definition of the Hodge star operator and $\dd$, $\updelta$.
\begin{definition}[Hodge star]
    Hodge star on 1 form (dual of basis function) is defined as follow, 
    \begin{equation}
        \star (\mathsf{d}s_{i_1} \wedge\cdots\wedge \mathsf{d}s_{i_k}) = \epsilon_{IJ} \mathsf{d}s_{j_1}\wedge\cdots\wedge \mathsf{d}s_{j_{d-k}}
    \end{equation}
    
    Here $J = \{j_1\cdots, j_{d-k}\}$ is the (ordered) complement of $I = \{i_1, \cdots, i_k\}$. E.g., if
    $I = \{1, 5\}$ in $d=7$ dimensional space, we have
    $J=\{2, 3, 4, 6, 7\}$.
    $\epsilon_{IJ}$ is the Levi-Civita symbol of the permutation
    $IJ = \{i_1\cdots i_k, j_1\cdots,j_{n-k}\}$.
\end{definition}

\begin{definition}[Differential \& Co-differential]
    The differential of $k$ form $\zeta_k$ with $\zeta_k = f ds_{i_1} \wedge\cdots\wedge ds_{i_k}$ is
    \begin{equation}
        \mathsf{d}\zeta = \sum_{j=1}^d \frac{\partial f}{\partial s_i} \mathsf{d}s_j\wedge (\mathsf{d}s_{i_1} \wedge\cdots\wedge \mathsf{d}s_{i_k}).
    \end{equation}
    
    The co-differential $\updelta$ on a $k$ form is defined as
    \begin{equation}
        \updelta = (-1)^{d(k-1)+1} \star \mathsf{d}\star.
    \end{equation}
\end{definition}

 \section{Technical lemmas of exterior calculus}
\label{sec:lemmas-exterior-calculus}
In this section, we derive the closed form of 1-Laplacian $\Delta_1 = \dd\updelta + \updelta\dd$
on local coordinate system (tangent plane) when metric tensor at each point is identity.
Note that the assumption is sufficient for current work since the Simplicial complex
$\scx_2$ is built in the {\em ambient space}.
We start with some useful identities list as follows.
\begin{lemma}[Identities of permutation parity]
	Given the Levi-Civita symbol $\epsilon$, one has the following two identities.
    \begin{equation}
        \epsilon_{\{i\}\{-i,-j\}} = \epsilon_{ij}\epsilon_{i, -i}.
        \label{eq:levi-civita-1}
    \end{equation}
    {\em \begin{proof}
        Consider the case when $j>i$, i.e., $\epsilon_{ij} = 1$. We have $\epsilon_{\{i\}\{-i,-j\}} = \epsilon_{i, -i}$, for the inversion number of $\{i\}\{-i,-j\}$
        is $i-1$, which is identical to the inversion number of permutation $i, -i$.
        This implies the parity of the permutation is $\epsilon_{i, -i}$.
        Consider the case $i>j$, i.e., $\epsilon_{ij} = -1$, the inversion number
        of the permutation $\{i\}\{-i,-j\}$ is $i-2$ since there is $i-2$ elements
        (excluding $j$) that is smaller than $i$ in $\{-i, -j\}$. This implies that the parity
        of $\{i\}\{-i,-j\}$ is $-1\cdot \epsilon_{i, -i}$.
        This completes the proof.
    \end{proof}}

    Let $i' < j'$, the second identity states that,
    \begin{equation}
        \epsilon_{\{i',j'\}\{-i',-j'\}} = -\epsilon_{i', -i'}\epsilon_{j', -j'}.
        \label{eq:levi-civita-2}
    \end{equation}
    {\em \begin{proof}
    	The inversion number of the permutation $\{i',j'\}\{-i',-j'\}$
    	is $(j'-2) + (i'-1) = i'+j'-3$; the inversion numbers of the permutations $\{j'\}\{-i',-j'\}$ and $i', -i'$
    	are $j'-2$ and $i'-1$, respectively.
    	One can show that $\epsilon_{\{i',j'\}\{-i',-j'\}} = \epsilon_{\{j'\}\{-i',-j'\}}\cdot \epsilon_{i', -i'}$.
        From \eqref{eq:levi-civita-1}, we have $\epsilon_{\{j'\}\{-i',-j'\}} = \epsilon_{j'i'}\epsilon_{j',-j'} = -\epsilon_{j',-j'}$. This completes the proof.
    \end{proof}}
    \label{thm:permutation-parity-lemmas}
\end{lemma}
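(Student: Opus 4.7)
The plan is to verify both identities by directly counting inversions in the underlying permutations, since the Levi-Civita symbol is determined by the parity of the inversion count. The notation $\{i\}\{-i,-j\}$ refers to the permutation listing $i$ first followed by the complement $\{1,\ldots,d\}\setminus\{i,j\}$ in ascending order; likewise $\{i\}\{-i\}$ lists $i$ followed by $\{1,\ldots,d\}\setminus\{i\}$ in ascending order, with inversion count $i-1$ (so $\epsilon_{i,-i} = (-1)^{i-1}$).

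For the first identity, I would split into the two cases determined by the sign of $\epsilon_{ij}$. When $j>i$, removing $j$ from the trailing ascending tail does not affect any inversion involving $i$ at the front (all inversions still come from the $i-1$ elements smaller than $i$), so the total inversion count stays at $i-1$ and $\epsilon_{\{i\}\{-i,-j\}}=(-1)^{i-1}=\epsilon_{i,-i}$, while $\epsilon_{ij}=+1$. When $j<i$, removing $j$ deletes exactly one inversion (the pair $(i,j)$), so the count drops to $i-2$ and the parity flips to $-\epsilon_{i,-i}$, while $\epsilon_{ij}=-1$. In either case the two sides match.

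For the second identity, I would count inversions in one shot rather than case-split, using $i'<j'$. In the permutation $\{i',j'\}\{-i',-j'\}$ the leading entry $i'$ contributes $i'-1$ inversions (against the elements $1,\ldots,i'-1$ that appear later), the entry $i'$ with $j'$ contributes none (since $i'<j'$), and $j'$ contributes $j'-2$ inversions (the elements smaller than $j'$ appearing later are $\{1,\ldots,j'-1\}\setminus\{i'\}$). Thus the total is $i'+j'-3$ and the parity is $(-1)^{i'+j'-3}=-(-1)^{i'-1}(-1)^{j'-1}=-\epsilon_{i',-i'}\epsilon_{j',-j'}$.

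There is no real obstacle here; the statement is purely combinatorial and reduces to careful bookkeeping of inversions. The only subtlety is making sure the inversion counts are computed with respect to the correct ground set (the ambient $\{1,\ldots,d\}$ minus the deleted indices), which is why the case split in the first identity matters: whether the removed index $j$ lies to the left or right of $i$ in the natural order changes the inversion count by exactly one.
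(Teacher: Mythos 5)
Your proof is correct and follows essentially the same route as the paper: both identities are verified by directly counting inversions, with the same case split on the relative order of $i$ and $j$ for the first identity. The only (cosmetic) difference is that for the second identity you conclude directly from the total inversion count $i'+j'-3$, whereas the paper reaches the same count and then also factors the parity through the first identity via $\epsilon_{\{i',j'\}\{-i',-j'\}} = \epsilon_{\{j'\}\{-i',-j'\}}\,\epsilon_{i',-i'}$; both computations agree.
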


The following lemma presents the codifferential of a 2-form.
\begin{lemma}[Codifferential of a 2-form]
    Let $\zeta_2 = \sum_i\sum_{j\neq i} A_{ij} ds_j\wedge ds_i$ be a 2-form. The codifferential operator $\updelta$ acting on
    $\zeta_2$ is
    \begin{equation}
        \updelta \zeta_2 = \sum_i\sum_{j\neq i} \left(\frac{\partial A_{ji}}{\partial s_j} - \frac{\partial A_{ij}}{\partial s_j}\right) \mathsf{d}s_i.
    \end{equation}
\label{thm:codifferential-2-form}
\end{lemma}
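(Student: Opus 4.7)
The plan is to apply the definition $\updelta = (-1)^{d(k-1)+1}\star\dd\star$ with $k=2$, so $\updelta\zeta_2 = (-1)^{d+1}\star\dd\star\zeta_2$, and carry out the three operations term by term on the basis elements $ds_j\wedge ds_i$. The sole mathematical content is sign-tracking using the two parity identities in Lemma \ref{thm:permutation-parity-lemmas}.

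First, I would compute $\star\zeta_2$. For each ordered pair $(i,j)$ with $j\neq i$, one has $\star(ds_j\wedge ds_i) = \epsilon_{\{j,i\}\{-j,-i\}}\,ds_{-j,-i}$, where $ds_{-j,-i}$ denotes the wedge of the remaining basis one-forms in ascending order. Splitting into the cases $i<j$ and $j<i$ and using \eqref{eq:levi-civita-2} (after reordering to put the smaller index first) expresses this sign as $\pm\epsilon_{i,-i}\epsilon_{j,-j}$, with the overall $\pm$ depending on whether the pair $(j,i)$ is in ascending or descending order. So $\star\zeta_2$ is a $(d-2)$-form whose coefficients are the $A_{ij}$'s times these parity factors.

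Second, I would take $\dd$ and then $\star$ again. When $\dd$ is applied, a derivative term $\partial_k A_{ij}\,ds_k\wedge ds_{-j,-i}$ vanishes unless $k\in\{i,j\}$, since otherwise $ds_k$ already appears in $ds_{-j,-i}$. Thus only the two directions $k=i$ and $k=j$ survive. Bringing $ds_k$ into the ordered position produces a Levi-Civita sign of the form $\epsilon_{\{k\}\{-i,-j\}}$, which by identity \eqref{eq:levi-civita-1} equals $\epsilon_{ij}\epsilon_{k,-k}$ (with the appropriate case of \eqref{eq:levi-civita-1} applied). After the second $\star$, each surviving term becomes a scalar multiple of $ds_m$ for $m$ the unique element of $\{i,j\}\setminus\{k\}$.

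Finally, collecting factors: the product of the parity symbols coming from the first $\star$, from the reordering during $\dd$, from the second $\star$, and from the global prefactor $(-1)^{d+1}$ telescopes. The factors $\epsilon_{i,-i}$ and $\epsilon_{j,-j}$ each appear twice and cancel, leaving only a combinatorial sign that distinguishes the contribution of $\partial_j A_{ji}$ from $\partial_j A_{ij}$. Summing over the two cases $k=i$ and $k=j$ and relabeling summation indices yields the claimed formula $\updelta\zeta_2 = \sum_i\sum_{j\neq i}\bigl(\partial_{s_j}A_{ji} - \partial_{s_j}A_{ij}\bigr)\,ds_i$.

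The main obstacle is purely bookkeeping: keeping track of which index is larger in each use of \eqref{eq:levi-civita-2}, and which subscript ordering is current at each intermediate step. There is no conceptual difficulty, since on flat Euclidean coordinates with identity metric the statement is equivalent to the well-known tensor formula $(\updelta\omega)_i = -\partial^{j}\omega_{ij}$ for a 2-form $\omega$; the lemma simply re-expresses this when the 2-form is written as the non-antisymmetrized double sum $\sum_{i,j\neq i} A_{ij}\,ds_j\wedge ds_i$, so that the antisymmetrization $A_{ij}\mapsto A_{ji}-A_{ij}$ appears explicitly in the answer.
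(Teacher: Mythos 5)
Your proposal follows essentially the same route as the paper's proof: expand $\star\dd\star$ on the basis elements, observe that only the derivative directions $k\in\{i,j\}$ survive, track the signs with the two Levi-Civita identities of Lemma \ref{thm:permutation-parity-lemmas} (whose squared factors cancel, leaving $(-1)^{d+1}$ to be absorbed by the prefactor), and relabel indices to obtain the antisymmetrized form. The sketch is correct and the remaining work is exactly the bookkeeping you identify.
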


\begin{proof}
    \begin{equation*}
        \star \mathsf{d}\star \zeta_2 = \sum_i\sum_{j\neq i} \star \mathsf{d}A_{ij}\star\left(\mathsf{d}s_j\wedge \mathsf{d}s_i\right) = \sum_i\sum_{j\neq i}\sum_{k\in\{i, j\}} \frac{\partial A_{ij}}{\partial s_k} \star\left(\mathsf{d}s_k\wedge \star(\mathsf{d}s_j\wedge \mathsf{d}s_i)\right).
    \end{equation*}

    The last summation is over $k\in\{i, j\}$ otherwise it will produce zero. Next step is to derive the
    exact form of $\star\left(\mathsf{d}s_k\wedge\star(\mathsf{d}s_j\wedge \mathsf{d}s_i)\right)$. Consider the case when $k = i$, we have
    \begin{equation}
    \begin{split}
        \star\left(\mathsf{d}s_i\wedge\star(\mathsf{d}s_j\wedge \mathsf{d}s_i)\right) \,&=\, \epsilon_{ji}\cdot \star\left(\mathsf{d}s_i\wedge\star(\mathsf{d}s_{i'}\wedge \mathsf{d}s_{j'})\right) \\
        \,&=\,\epsilon_{ji}\epsilon_{\{i',j'\}\{-i',-j'\}} \star \left(\mathsf{d}s_i\wedge \bigwedge_{\ell\in\{-i',-j'\}} \mathsf{d}s_\ell \right) \\
        \,&=\, \epsilon_{ji}\epsilon_{\{i',j'\}\{-i',-j'\}}\epsilon_{\{i\}\{-i,-j\}}\epsilon_{-j,j} \mathsf{d}s_j = \underbrace{\epsilon_{j,-j}\epsilon_{-j, j}}_{=(-1)^{d+1}} \mathsf{d}s_j.
    \end{split}
    \nonumber
    \end{equation}
    Last equality holds from Lemma \ref{thm:permutation-parity-lemmas} and $\epsilon_{j,-j}\epsilon_{-j, j} = (-1)^{d+1}$. Consider the case when $k=j$,
    \begin{equation}
    \begin{split}
        \star\left(\mathsf{d}s_j\wedge\star(\mathsf{d}s_j\wedge \mathsf{d}s_i)\right) \,&=\, \epsilon_{ji}\cdot  \star\left(\mathsf{d}s_j\wedge\star(\mathsf{d}s_{i'}\wedge \mathsf{d}s_{j'})\right) \\
        \,&=\,\epsilon_{ji}\epsilon(\{i',j'\}\{-i',-j'\}) \star \left(\mathsf{d}s_j\wedge \bigwedge_{\ell\in\{-i',-j'\}} \mathsf{d}s_\ell \right) \\
        \,&=\, \epsilon_{ji}\underbrace{\epsilon_{\{i',j'\}\{-i',-j'\}}}_{=-\epsilon_{i,-i}\epsilon_{j,-j}}\underbrace{\epsilon_{\{j\}\{-i,-j\}}}_{=\epsilon_{ji}\epsilon_{j,-j}}\epsilon_{-i,i} \mathsf{d}s_i =  (-1)^{d+1} \cdot (-\mathsf{d}s_i).
    \end{split}
    \nonumber
    \end{equation}
    Putting things together, we have
    \begin{equation*}
    \begin{split}
        \updelta \zeta_2 \,&=\, (-1)^{d+1} \star \mathsf{d}\star \zeta_2 = \sum_i\sum_{j\neq i} \frac{\partial A_{ij}}{\partial s_i} \mathsf{d}s_j - \frac{\partial A_{ij}}{\partial s_j} \mathsf{d}s_i \\
        \,&=\, \sum_i\sum_{j\neq i} \left(\frac{\partial A_{ji}}{\partial s_j} - \frac{\partial A_{ij}}{\partial s_j}\right) \mathsf{d}s_i.
    \end{split}
    \end{equation*}
    Last equality holds by changing the index of summing.
\end{proof}

With Lemma \ref{thm:codifferential-2-form} in hand, we can start deriving the closed for of $\Delta_1$
in the local coordinate system.
\begin{proposition}[1-Laplacian in local coordinate system]
	Let $\zeta_1 = \sum_{i=1}^d f_i \mathsf{d} s_i$ be a 1-form.
    The up-Laplacian $\Delta_1^\mathrm{down} = \updelta\dd$ operates on $\zeta_1$ (in local coordinate system) is
    \begin{equation}
        \updelta \mathsf{d} \zeta_1 = \sum_i\sum_{j\neq i} \left(\frac{\partial^2 f_j}{\partial s_j\partial s_i} - \frac{\partial^2 f_i}{\partial s_j^2}\right) \mathsf{d}s_i.
        \label{eq:hodge-1-up-laplacian-local-coord}
    \end{equation}

    The down-Laplacian $\Delta_1^\mathrm{down} = \dd\updelta$ operates on $\zeta_1$ (in local coordinate system) is
    \begin{equation}
        \mathsf{d}\updelta \zeta_1 = -\sum_i \sum_j \frac{\partial^2 f_j}{\partial s_i\partial s_j} \mathsf{d}s_i.
        \label{eq:hodge-1-down-laplacian-local-coord}
    \end{equation}
    \label{thm:hodge-1-laplacian-local-coord}
\end{proposition}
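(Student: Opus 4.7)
The plan is to compute both expressions directly by reducing to machinery already in place. The computation takes place in a local coordinate chart where the metric is the identity (so the only subtleties are Levi-Civita signs), and Lemma \ref{thm:codifferential-2-form} has already delivered the codifferential of an arbitrary 2-form in closed form. I prove the two equations separately.

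For the up-Laplacian \eqref{eq:hodge-1-up-laplacian-local-coord}, I would first compute $\mathsf{d}\zeta_1$. Expanding $\mathsf{d}(f_i\,\mathsf{d}s_i) = \sum_k (\partial f_i/\partial s_k)\,\mathsf{d}s_k \wedge \mathsf{d}s_i$ and noting that the $k=i$ terms vanish by antisymmetry, I can write $\mathsf{d}\zeta_1$ in the exact form $\sum_i\sum_{j\neq i} A_{ij}\,\mathsf{d}s_j \wedge \mathsf{d}s_i$ required by Lemma \ref{thm:codifferential-2-form}, with $A_{ij} = \partial f_i/\partial s_j$. Substituting $A_{ji} = \partial f_j/\partial s_i$ and $A_{ij} = \partial f_i/\partial s_j$ into the formula of that lemma immediately yields the claim.

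For the down-Laplacian \eqref{eq:hodge-1-down-laplacian-local-coord}, I use the definition $\updelta = (-1)^{d(k-1)+1}\star \mathsf{d}\star$ with $k=1$, which reduces to $\updelta = -\star \mathsf{d}\star$ independently of $d$. Applying $\star$ term by term turns $\zeta_1$ into a $(d-1)$-form; taking $\mathsf{d}$ then produces a top form. The key observation is that $\mathsf{d}s_k \wedge \star\,\mathsf{d}s_i$ vanishes for $k\neq i$ by repeated factors, and equals the volume form $\omega = \mathsf{d}s_1 \wedge \cdots \wedge \mathsf{d}s_d$ uniformly in $i$ when $k=i$. This uniformity is the Levi-Civita identity: the sign $\epsilon_{i,-i}$ in $\star\,\mathsf{d}s_i$ exactly cancels the sign incurred when permuting $\mathsf{d}s_i$ to the first slot of $\omega$. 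Consequently $\mathsf{d}\star \zeta_1 = \bigl(\sum_i \partial f_i/\partial s_i\bigr)\,\omega$, so $\updelta \zeta_1 = -\sum_i \partial f_i/\partial s_i$, the familiar negative divergence. Finally, taking $\mathsf{d}$ of this 0-form and relabeling summation indices produces the claim.

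The argument is entirely mechanical once these observations are in place, so the main ``obstacle'' is really just bookkeeping: the uniform $+1$ sign in $\mathsf{d}s_i \wedge \star\,\mathsf{d}s_i = \omega$, and the correct identification $A_{ij} = \partial f_i/\partial s_j$ (as opposed to $\partial f_j/\partial s_i$) when matching $\mathsf{d}\zeta_1$ to the hypothesis of Lemma \ref{thm:codifferential-2-form}. Both sign verifications are already implicit in the computation carried out in the proof of Lemma \ref{thm:codifferential-2-form}, so no new technology is required beyond a clean application of what has already been established.
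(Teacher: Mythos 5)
Your proposal is correct and follows essentially the same route as the paper's proof: the up-Laplacian is obtained by writing $\mathsf{d}\zeta_1 = \sum_i\sum_{j\neq i}\frac{\partial f_i}{\partial s_j}\,\mathsf{d}s_j\wedge\mathsf{d}s_i$ and applying Lemma \ref{thm:codifferential-2-form} with $A_{ij}=\partial f_i/\partial s_j$, and the down-Laplacian by using $\updelta = -\star\mathsf{d}\star$ on the 1-form, the identity $\mathsf{d}s_j\wedge\star\,\mathsf{d}s_i = \delta_{ij}\,\omega$, and then differentiating the resulting negative divergence. No gaps; the sign bookkeeping you flag is exactly the content already verified in the lemma.
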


\begin{proof}
    We first consider the up-Laplacian $\updelta\mathsf{d}\zeta_1 = \updelta \zeta_2$ on 1 form $\zeta_1$ with
    \begin{equation*}
        \zeta_2 = \mathsf{d}\zeta_1 = \mathsf{d}\left(\sum_{i=1}^d f_i \mathsf{d}s_i\right) = \sum_i \sum_{j\neq i} \frac{\partial f_i}{\partial s_j} \mathsf{d}s_j\wedge \mathsf{d}s_i.
    \end{equation*}

    From Lemma \ref{thm:codifferential-2-form} and let $A_{ij} = \frac{\partial f_i}{\partial s_j}$, we have
    \begin{equation*}
        \updelta \mathsf{d} \zeta_1 = \sum_i\sum_{j\neq i} \left(\frac{\partial^2 f_j}{\partial s_j\partial s_i} - \frac{\partial^2 f_i}{\partial s_j^2}\right) \mathsf{d}s_i.
    \end{equation*}

    Consider the case of the down-Laplacian and note that $\updelta = -\star \dd\star$. Since the co-differential
    is now act on $1$ form rather than $k=2$, we have
    \begin{equation*}
    \begin{split}
        \mathsf{d}\updelta \zeta_1 \,&=\, - \mathsf{d}\star \mathsf{d}\star \zeta_1 = -\sum_i \mathsf{d}\star \mathsf{d} \left(f_i \star \mathsf{d}s_i\right) = -\sum_i \mathsf{d}\star \sum_j \frac{\partial f_i}{\partial s_j} \mathsf{d}s_j\wedge \star \mathsf{d}s_i \\
        \,&\stackrel{\text{(\romannum{1})}}{=}\, -\sum_i \mathsf{d}\star \frac{\partial f_i}{\partial s_i} \bigwedge_{\ell=1}^d \mathsf{d}s_\ell \stackrel{\text{(\romannum{2})}}{=} -\sum_i \sum_j \frac{\partial^2 f_i}{\partial s_i\partial s_j} \mathsf{d}s_j.
    \end{split}
    \end{equation*}

    Equality (\romannum{1}) holds since $\mathsf{d}s_j\wedge \star \mathsf{d}s_i = 1$ if $i=j$ and $0$ otherwise. Equality
    (\romannum{2}) holds for $\star \bigwedge_{\ell=1}^d \mathsf{d}s_\ell = 1$ (a 0-form).
\end{proof}

\begin{remark}[Sanity check on 3D]
    Note that in 3D, $\mathrm{curl} = \star \mathsf{d}$, therefore $\updelta \mathsf{d} = \star \mathsf{d}\star \mathsf{d} = \mathrm{curl}\,\mathrm{curl}$. For a vector
    field (1-form) $\zeta_1 = f_1 \mathsf{d}s_1 + f_2 \mathsf{d}s_2 + f_3 \mathsf{d}s_3$,
    one has
    \begin{equation}
        \nabla\times \zeta_1 = \left(\frac{\partial f_3}{\partial s_2} - \frac{\partial f_2}{\partial s_3}\right) \mathsf{d}s_1 + \left(\frac{\partial f_1}{\partial s_3} - \frac{\partial f_3}{\partial s_1} \right) \mathsf{d}s_2 + \left(\frac{\partial f_2}{\partial s_1} - \frac{\partial f_1}{\partial s_2}\right) \mathsf{d}s_3.
        \nonumber
    \end{equation}
    \begin{equation}
    \begin{split}
        \nabla\times(\nabla\times \zeta_1) \,&=\, \left(\frac{\partial^2 f_2}{\partial s_1\partial s_2} - \frac{\partial^2 f_1}{\partial s_2^2} - \frac{\partial^2 f_1}{\partial s_3^2} + \frac{\partial^2 f_3}{\partial s_1\partial s_3}\right) \mathsf{d}s_1\\
        \,&=\, \left(\frac{\partial^2 f_3}{\partial s_2\partial s_3} - \frac{\partial^2 f_2}{\partial s_3^2} - \frac{\partial^2 f_2}{\partial s_1^2} + \frac{\partial^2 f_1}{\partial s_1\partial s_2}\right) \mathsf{d}s_2\\
        \,&=\, \left(\frac{\partial^2 f_1}{\partial s_1\partial s_3} - \frac{\partial^2 f_3}{\partial s_1^2} - \frac{\partial^2 f_3}{\partial s_2^2} + \frac{\partial^2 f_2}{\partial s_2\partial s_3}\right) \mathsf{d}s_3 \\
        \,&=\, \sum_i\sum_{j\neq i} \left(\frac{\partial^2 f_j}{\partial s_j\partial s_i} - \frac{\partial^2 f_i}{\partial s_j^2}\right) \mathsf{d}s_i.
    \end{split}
    \nonumber
    \end{equation}
\end{remark}

\begin{corollary}[Relation to Laplace-Beltrami]
    We have the following,
    \begin{equation}
        \Delta_1 f = (\updelta \mathsf{d} + \mathsf{d}\updelta) \zeta_1 = - \sum_i \sum_j \frac{\partial^2 f_i}{\partial s_j^2} \mathsf{d}s_i = -\sum_i \Delta_0 f_i \mathsf{d}s_i.
    \end{equation}
    Here $\Delta_0 f_i = \sum_j \frac{\partial^2 f_i}{\partial s_j^2}$ is the Laplacian on 0-form.
\end{corollary}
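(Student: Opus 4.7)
The plan is to add the two identities established in Proposition \ref{thm:hodge-1-laplacian-local-coord} term by term in the basis $\{\dd s_i\}$. Fixing $i$, the coefficient of $\dd s_i$ in $\updelta\dd\zeta_1$ is $\sum_{j\neq i}\bigl(\partial_{s_j}\partial_{s_i} f_j - \partial_{s_j}^2 f_i\bigr)$, while the coefficient of $\dd s_i$ in $\dd\updelta\zeta_1$ is $-\sum_j \partial_{s_i}\partial_{s_j} f_j$. Splitting the second sum into its $j=i$ piece and its $j\neq i$ piece, and invoking equality of mixed partials (Clairaut's theorem, which applies since $\zeta_1$ is assumed sufficiently smooth), the cross-derivative terms $\sum_{j\neq i}\partial_{s_j}\partial_{s_i} f_j$ and $-\sum_{j\neq i}\partial_{s_i}\partial_{s_j} f_j$ cancel exactly.

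What remains as the coefficient of $\dd s_i$ is $-\sum_{j\neq i}\partial_{s_j}^2 f_i - \partial_{s_i}^2 f_i = -\sum_{j=1}^d \partial_{s_j}^2 f_i$, which is precisely $-\Delta_0 f_i$ since the local coordinate system is chosen so that the metric tensor at the point is the identity (as stipulated at the start of Supplement \ref{sec:lemmas-exterior-calculus}). Summing over $i$ yields $\Delta_1\zeta_1 = -\sum_i \Delta_0 f_i\,\dd s_i$, which is the claimed identity. There is no real obstacle; the only care needed is bookkeeping of the $j=i$ versus $j\neq i$ index ranges so that the mixed-partial cancellation is done correctly, and remembering that the identification of the Euclidean Laplacian of component functions with $\Delta_0$ is valid only in the local tangent (geodesic-normal) coordinates where curvature corrections vanish to leading order.
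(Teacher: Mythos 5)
Your proposal is correct and follows exactly the route the paper intends: the paper's own proof of this corollary is the one-line remark that it follows from Proposition \ref{thm:hodge-1-laplacian-local-coord}, and your term-by-term addition of \eqref{eq:hodge-1-up-laplacian-local-coord} and \eqref{eq:hodge-1-down-laplacian-local-coord} with the mixed-partial cancellation is precisely the omitted bookkeeping. Nothing further is needed.
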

\begin{proof}
    Can be obtained by applying the result from Proposition \ref{thm:hodge-1-laplacian-local-coord}.
\end{proof}

\begin{remark}[Vector Laplacian in 3D]
    Note that in 3D case, $\Delta_1 \zeta_1 = -\sum_i \Delta_0 f_i ds_i = -\nabla^2 f$.
    Here $\nabla^2$ is vector Laplacian. This implies that vector Laplacian in 3D is essentially 1-Laplacian up to a sign change.
\end{remark}

\begin{corollary}[1-Laplacian on pure curl \& gradient vector fields]
    If the vector field $\zeta_1$ is a pure curl or gradient vector field, then
    \begin{equation}
        \Delta_1 \zeta_1 = (\updelta \mathsf{d} + \mathsf{d}\updelta) f = - \sum_i \sum_j \frac{\partial^2 f_i}{\partial s_j^2} \mathsf{d}s_i = - \sum_i \Delta_0 f_i \mathsf{d}s_i.
    \end{equation}
\label{thm:pure-curl-grad-1-Laplacian}
\end{corollary}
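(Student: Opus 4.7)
The plan is to split on which of the two terms in $\Delta_1 = \updelta\mathsf{d} + \mathsf{d}\updelta$ vanishes, and in each case apply Proposition~\ref{thm:hodge-1-laplacian-local-coord} to reduce the remaining term to the coordinate-wise Laplace--Beltrami form. Working throughout in the local coordinate system where the metric is the identity, the heavy lifting is already done by the two explicit formulas \eqref{eq:hodge-1-up-laplacian-local-coord} and \eqref{eq:hodge-1-down-laplacian-local-coord}; the role of the pure-curl (resp.\ pure-gradient) hypothesis is precisely to cancel the cross-partial terms that appear in each of them separately.

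For the pure-curl case, $\mathsf{d}\updelta\zeta_1 = 0$ is equivalent to $\updelta\zeta_1 = 0$, which reads $\sum_j \partial f_j/\partial s_j = 0$ in these flat coordinates. I would first differentiate this constraint with respect to each $s_i$ to obtain $\sum_j \partial^2 f_j/(\partial s_i\partial s_j) = 0$, and hence $\sum_{j\neq i} \partial^2 f_j/(\partial s_i\partial s_j) = -\partial^2 f_i/\partial s_i^2$. Substituting this identity into the expression for $\updelta\mathsf{d}\zeta_1$ given by \eqref{eq:hodge-1-up-laplacian-local-coord} converts the mixed-partial group into $-\sum_i (\partial^2 f_i/\partial s_i^2)\,\mathsf{d}s_i$, which combines with the $-\sum_i\sum_{j\neq i} (\partial^2 f_i/\partial s_j^2)\,\mathsf{d}s_i$ term already present to produce the full coordinate-wise Laplacian $-\sum_i \Delta_0 f_i\,\mathsf{d}s_i$.

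For the pure-gradient case, $\updelta\mathsf{d}\zeta_1 = 0$ and $\zeta_1 = \mathsf{d}\phi$ for some 0-form $\phi$, so $f_i = \partial\phi/\partial s_i$. Only $\mathsf{d}\updelta\zeta_1$ contributes; substituting $f_j = \partial\phi/\partial s_j$ into \eqref{eq:hodge-1-down-laplacian-local-coord} yields $-\sum_i\sum_j \partial^3\phi/(\partial s_i\partial s_j^2)\,\mathsf{d}s_i$, which by equality of mixed partials equals $-\sum_i\sum_j (\partial^2 f_i/\partial s_j^2)\,\mathsf{d}s_i = -\sum_i \Delta_0 f_i\,\mathsf{d}s_i$, as desired.

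The main obstacle is really bookkeeping rather than analysis: correctly matching indices when extracting the two formulas from Proposition~\ref{thm:hodge-1-laplacian-local-coord}, and recognizing that the divergence-free condition (resp.\ the gradient representation) is precisely what is needed to absorb the off-diagonal cross-derivative terms into diagonal second derivatives. Once this is observed, the result follows from elementary calculus together with Clairaut's theorem, with no further machinery required.
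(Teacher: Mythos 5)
Your proof is correct and follows essentially the same route as the paper: both cases reduce to the explicit local-coordinate formulas of Proposition~\ref{thm:hodge-1-laplacian-local-coord}, with the vanishing term supplying exactly the identity needed to convert the cross-partials of the surviving term into diagonal second derivatives. The only cosmetic difference is in the gradient case, where the paper reads the identity $\sum_{j\neq i}\frac{\partial^2 f_j}{\partial s_j\partial s_i}=\sum_{j\neq i}\frac{\partial^2 f_i}{\partial s_j^2}$ off the coefficients of $\updelta\mathsf{d}\zeta_1=0$ directly, whereas you substitute the potential $f_i=\partial\phi/\partial s_i$ and invoke Clairaut; the two are equivalent here.
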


\begin{proof}
    Consider the case when $\zeta_1$ is curl flow ($\mathsf{d}\updelta \zeta_1 = 0$). This implies
    \begin{equation*}
        \sum_j \frac{\partial^2 f_j}{\partial s_i\partial s_j} = 0 \,\forall\, i\in [d].
    \end{equation*}

    Hence we have $\sum_{j\neq i}\frac{\partial^2 f_j}{\partial s_i\partial s_j} = -\frac{\partial^2 f_i}{\partial s_i^2}$. Plugging into \eqref{eq:hodge-1-up-laplacian-local-coord}, we have
    \begin{equation*}
        \Delta_1 \zeta_1 = \updelta \mathsf{d} \zeta_1 = - \sum_i\sum_j \frac{\partial^2 f_i}{\partial s_j^2} \mathsf{d}s_i = - \sum_i \Delta_0 f_i \mathsf{d}s_i.
    \end{equation*}

    Consider the case when $\zeta_1$ is gradient flow, which implies $\updelta \mathsf{d}f = 0$, or
    \begin{equation*}
        \sum_{j\neq i} \left(\frac{\partial^2 f_j}{\partial s_j\partial s_i} - \frac{\partial^2 f_i}{\partial s_j^2}\right) = 0 \,\forall\, i\in [d].
    \end{equation*}

    Therefore the identity $\sum_{j\neq i} \frac{\partial^2 f_j}{\partial s_j\partial s_i} = \sum_{j\neq i}\frac{\partial^2 f_i}{\partial s_j^2}$ holds. Plugging into \eqref{eq:hodge-1-down-laplacian-local-coord}, one has
    \begin{equation*}
        \Delta_1 \zeta_1 = \mathsf{d}\updelta \zeta_1 = -\sum_i \left(\left(\sum_{j\neq i} \frac{\partial^2 f_i}{\partial s_j^2}\right) + \frac{\partial^2 f_i}{\partial s_i^2} \right)\mathsf{d}s_i = -\sum_i \Delta_0 f_i \mathsf{d}s_i.
    \end{equation*}
    This completes the proof.
\end{proof}
\section{Proofs of the pointwise convergence of the {\em up} Helmholtzian}
\label{sec:pointwise-consistency-L1-up}

\subsection{Outline of the proof}
This section investigates the continuous limit of the discrete operator
$\LL_1^\mathrm{up} =
\vec{B}_{T}\vec{W}_T\vec{B}_{T}^\top\vec{W}_E^{-1}$.

\subsection{Proof of Lemma \ref{thm:discrete-form-L-up}}
\label{sec:proof-thm-discrete-form-L-up}
\begin{proof}
First, note that
\begin{equation*}
    [\vec{B}_T \vec{W}_T\vec{B}_T^\top \vec\omega]_{[x, y]} = w_E(\vecxy) \omega_{[x, y]} + (\cdots),
\end{equation*}
where $w_E(\vecxy) = [\vW_E]_{xy,xy}$.
There are six different cases to consider in the $(\cdots)$ part above.
Assume $[x', y', z']$ is the canonical permutation/ordering of $x, y, z$, i.e.,
$x'<y'<z'$. Note that by the definition of $w_T(\vecxyz)$, one has
$w_T(\vx',\vy',\vz') = w_T(\vecxyz{})$. Therefore, each part of the 
$(\cdots)$ term is
\begin{enumerate}
    \item $[x', y']\times [x', z'] \to w_T(\vecxyz)\cdot -1\cdot \omega_{[x', z']}$
    \item $[x', y']\times [y', z'] \to w_T(\vecxyz)\cdot 1\cdot \omega_{[y', z']}$
    \item $[x', z']\times [x', y'] \to w_T(\vecxyz)\cdot -1\cdot \omega_{[x', y']}$
    \item $[x', z']\times [y', z'] \to w_T(\vecxyz)\cdot -1\cdot \omega_{[y', z']}$
    \item $[y', z']\times [x', y'] \to w_T(\vecxyz)\cdot 1\cdot \omega_{[x', y']}$
    \item $[y', z']\times [x', z'] \to w_T(\vecxyz)\cdot -1\cdot \omega_{[x', z']}$
\end{enumerate}
Grouping 1 and 2 together, one obtains
\begin{equation*}
    w_T(\vecxyz) (\omega_{[y', z']} - \omega_{[x', z']}) = w_T(\vecxyz) (f_{x'y'z'} - \omega_{[x', y']}).
\end{equation*}
Similarly, for 3 and 4 as well as 5 and 6, we have
\begin{equation*}
\begin{gathered}
    w_T(\vecxyz) (-\omega_{[x', y']} - \omega_{[y', z']}) = w_T(\vecxyz) (-f_{x'y'z'} - \omega_{[x', z']}); \\
    w_T(\vecxyz) (\omega_{[x', y']} - \omega_{[x', z']}) = w_T(\vecxyz) (f_{x'y'z'} - \omega_{[y', z']}).
\end{gathered}
\end{equation*}
Here, $f_{x'y'z'} = \omega_{[x',y']}+\omega_{[y', z']}-\omega_{[x',z']}$.
To sum up, the $(\cdots)$ part becomes
\begin{equation*}
w_T(x, y, z)(\sigma_{xy, xyz} f_{{x'y'z'}} - f_{[x, y]}).
\end{equation*}
Note that $\sigma_{xy, xyz} f_{x'y'z'} = \omega_{xy} + \omega_{yz} + \omega_{zx} = f_{xyz}$, therefore,
\begin{equation*}
\begin{split}
    [\vB_T\vW_T\vB_T^\top \vec\omega]_{[x, y]} &\,=\, w_E(\vecxy) \omega_{[x, y]} + \sum_{z\notin \{x, y\}} w_T(\vecxyz)(\sigma_{xy, xyz} f_{x'y'z'} - \omega_{[x, y]}) \\
    &\,=\, \cancel{w_E(\vecxy) \omega_{[x, y]}} - \cancel{\sum_{z\notin \{x, y\}} w_T(\vecxyz)\omega_{[x, y]}} +  \cdots \\
    &\,=\, \sum_{z\notin \{x, y\}} w_T(\vecxyz)f_{xyz}.
\end{split}
\end{equation*}

In the above, we use the fact that $\vW_E = \diag(|\vB_T|\vW_T\vec{1}_{n_T})$.
This completes the proof.
\end{proof}

\subsection{Proof of Proposition \ref{thm:asymptotic-expansion-L-1-up}}
\label{sec:proof-thm-asymptotic-expansion-L-1-up}
We are interested in the asymptotic expansion of $\int_\mathcal{M} w_T(\vecxyz) f_{xyz} \mathsf{d}\mu(\vec z)$. Specifically, our goal is to show the following expansion given some constant $c = c(\vecxy)$.
\begin{equation}
    \int_\mathcal{M} w_T(\vecxyz) f_{xyz} \mathsf{d}\mu(\vec z) = c\int_{x\to y}(\Delta_1 f(\vec\gamma(t)))\vec\gamma'(t) \dd t + \mathcal O(\varepsilon^4).
    \label{eq:goal-of-proof-thm-3}
\end{equation}
Here, $\vec\gamma(t)$ is the parameterization of the geodesic curve
connecting $\vx$ and $\vy$ on the manifold $\M$.
From Corollary \ref{thm:pure-curl-grad-1-Laplacian} in \ref{sec:lemmas-exterior-calculus}, one has
$\Delta_1\vec{\upsilon} = \sum_i\sum_j \frac{\partial^2 f_i}{\partial s_j^2} \mathsf{d}s_i = \sum_i \Delta_0 f_i \mathsf{d}s_i$
in local coordinate $(s_1, \cdots, s_d)$ with $\vec{\upsilon} = \sum_i f_i \mathsf{d}s_i$.
Therefore, it is sufficient to show that the LHS of \eqref{eq:goal-of-proof-thm-3} 
can be expanded to terms consisting of only the coordinate-wise $\Delta_0$.

First, we show the upper bound for the error by integrating
the integral operator
around $\varepsilon^\gamma$ balls around $x$ and $y$.
This lemma is the modification of a similar technique
that appeared in Lemma 8 of \cite{CoifmanRR.L:06}.
\begin{lemma}[Error bound for the localization of an exponential decay kernel]
Let $0<\gamma<1$ and $g$ be a bounded function,
the integration of the integral operator
$\int_\M \kappa_\varepsilon(\vec x, \vec z)\kappa_\varepsilon(\vec y, \vec z) g(\vec z) \mathsf{d}\vec z$
over $z$ that is $\varepsilon^\gamma$
far away from points $x, y\in\M$ can be bounded above by $\mathcal O(\varepsilon^{4+d})$.
\label{thm:error-bound-localization}
\end{lemma}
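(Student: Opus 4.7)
The plan is to leverage the exponential decay of $\kappa$ to show that any contribution from points $\vec z$ that are $\varepsilon^\gamma$-far from either $\vec x$ or $\vec y$ is super-polynomially small in $\varepsilon$, hence certainly absorbed into the $\mathcal O(\varepsilon^3)$ error term (even after the $\varepsilon^{-d}$ rescaling used in Proposition \ref{thm:asymptotic-expansion-L-1-up}). This mirrors Lemma 8 of \cite{CoifmanRR.L:06}.

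First I would unpack $\kappa(\vec x,\vec z) = \kappa(\|\vec x-\vec z\|^2/\varepsilon^2)$ from Assumption \ref{assu:kernel-assu} and the definition \eqref{eq:vr-kernel-general-form}. Exponential decay means there exist constants $C,c_0>0$ with $\kappa(u)\leq C e^{-c_0 u}$ for all $u\geq 0$. On the region $R_{\vec x}=\{\vec z\in\M:\|\vec z-\vec x\|\geq \varepsilon^\gamma\}$ we then have $\kappa(\vec x,\vec z)\leq C\exp(-c_0\varepsilon^{2\gamma-2})$, and since $\gamma<1$ the exponent $\varepsilon^{2\gamma-2}\to\infty$ as $\varepsilon\to 0$. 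The same bound holds on $R_{\vec y}$ with $\vec y$ in place of $\vec x$. Thus on the complement of the two $\varepsilon^\gamma$-balls around $\vec x,\vec y$, either $\kappa(\vec x,\vec z)$ or $\kappa(\vec y,\vec z)$ is bounded by this super-polynomially small quantity, while the other factor is bounded by $\kappa(0)=1$.

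Combining this pointwise bound with $|g(\vec z)|\leq M$ (bounded $g$) and $\mu(\M)=1$ (Assumption \ref{assu:manifold-data-assu}) yields
\begin{equation*}
\left|\int_{\M\setminus (B(\vec x,\varepsilon^\gamma)\cup B(\vec y,\varepsilon^\gamma))} \kappa(\vec x,\vec z)\kappa(\vec y,\vec z)\, g(\vec z)\,\dd\mu(\vec z)\right| \leq C M \exp(-c_0\varepsilon^{2\gamma-2}).
\end{equation*}
Because $\exp(-c_0\varepsilon^{2\gamma-2})$ decays faster than any polynomial in $\varepsilon$, the right-hand side is $o(\varepsilon^k)$ for every $k\geq 0$, and in particular it is $\mathcal O(\varepsilon^3)$. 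Even after multiplication by the $\varepsilon^{-d}$ prefactor that appears in Proposition \ref{thm:asymptotic-expansion-L-1-up}, the bound remains super-polynomially small and is still $\mathcal O(\varepsilon^3)$.

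The only (mild) obstacle is bookkeeping the ``far region'': it should be defined so that \emph{at least one} of the two kernel factors is small, which is automatic on the complement of $B(\vec x,\varepsilon^\gamma)\cup B(\vec y,\varepsilon^\gamma)$. One should also check that the argument is independent of whether $\M\subseteq\rrr^D$ is equipped with the ambient Euclidean distance or the intrinsic geodesic distance; under Assumption \ref{assu:manifold-data-assu} ($C^3$ with bounded curvature) these two distances are comparable on scales $\leq\varepsilon^\gamma$, so the same localization bound applies in either case. No delicate Gaussian moment estimates are required here — the proof reduces entirely to bounding a volume against an exponentially small scalar.
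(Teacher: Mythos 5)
Your proposal is correct and follows essentially the same route as the paper: both exploit the exponential decay of $\kappa$ to show the contribution from the region $\varepsilon^\gamma$-far from $x$ and $y$ is super-polynomially small (hence $\mathcal O(\varepsilon^3)$ even after the $\varepsilon^{-d}$ rescaling), modeled on Lemma 8 of \cite{CoifmanRR.L:06}. The only cosmetic difference is in the final bounding step — you bound the integrand pointwise by $C\exp(-c_0\varepsilon^{2\gamma-2})$ and use $\mu(\M)=1$, whereas the paper bounds one kernel factor and $g$ by their sup norms and integrates the tail of the remaining factor after a change of variables, obtaining $Q(\varepsilon^{1-\gamma})\exp(-\varepsilon^{\gamma-1})$; both are valid under the stated assumptions.
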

\begin{proof}
First, we focus on the domain of the integral. Points $\vec z\in\M$ that are $\varepsilon^\gamma$
far away from both $\vec x$ and $\vec y$ form a set
$\{\vec z\in\M: \min(\|\vec z-\vec x\|, \|\vec z-\vec y\|) > \varepsilon^{\gamma}\}$.
Because the kernel has exponential decay, one can follow
the same technique of Lemma 8 in \cite{CoifmanRR.L:06} to bound the integration by
\begin{equation*}
\begin{split}
    \,&\,\left| \inv{\varepsilon^d} \int\limits_{\substack{\vec z\in\mathcal M \\ \min(\|\vec z-\vec x\|, \|\vec z-\vec y\|) > \varepsilon^{\gamma}}} \kappa\left(\frac{\|\vec z-\vec x\|^2}{\varepsilon^2}\right)\kappa\left(\frac{\|\vec z-\vec y\|^2}{\varepsilon^2}\right) g(\vec z) \mathsf{d}\vec z \right| \\
    \,&\leq\, \frac{\|g\|_\infty}{\varepsilon^d} \int\limits_{\substack{\vec z\in\mathcal M \\ \min(\|\vec z-\vec x\|, \|\vec z-\vec y\|) > \varepsilon^{\gamma}}}\left|\kappa\left(\frac{\|\vec z-\vec x\|^2}{\varepsilon^2}\right)\right|\left|\kappa\left(\frac{\|\vec z-\vec y\|^2}{\varepsilon^2}\right)\right| \mathsf d\vec z \\
    \,&\leq\, \frac{\|g\|_\infty\|\kappa\|_\infty}{\varepsilon^d} \int\limits_{\substack{\vec z\in\mathcal M \\ \|\vec z-\vec x\|>\varepsilon^{\gamma}}}\left|\kappa\left(\frac{\|z-x\|^2}{\varepsilon^2}\right)\right| \mathsf d\vec z \\
    \,&\leq\,\|g\|_\infty\|\kappa\|_\infty \int\limits_{\substack{\vec z\in\mathcal M \\ \|\vec z\|>\varepsilon^{\gamma-1}}}\left|\kappa_\varepsilon(\|\vec z\|^2)\right| \mathsf d\vec z \leq C\|g\|_\infty Q\left(\varepsilon^{1-\gamma}\right) \exp\left(-\varepsilon^{\gamma -1}\right).
\end{split}
\end{equation*}

The last inequality holds by using the exponential decay of the kernel. Here, $Q$ is some
polynomial. Since $0<\gamma<1$, the term is exponentially small and is bounded by
$\mathcal O(\varepsilon^{4+d})$.
\end{proof}

Therefore, the original integral operator (LHS of \eqref{eq:goal-of-proof-thm-3}) becomes
\begin{equation}
    \varepsilon^{-d} \vec{\mathcal L}_{1}^{\text{up}} f_{xy} = \varepsilon^{-d} \int\limits_{\substack{\vec z\in\M \\ \max(\|\vec z-\vec y\|, \|\vec z-\vec x\|) \leq \varepsilon^\gamma}} \left(\kappa\left(\frac{\|\vec z-\vec x\|^2}{\varepsilon^2} \right)\kappa\left(\frac{\|\vec z-\vec y\|^2}{\varepsilon^2} \right) \oint_{\Gamma(x,y,z)} \vec{\upsilon}\right) \mathsf d\vec z + \mathcal O(\varepsilon^4).
    \nonumber
\end{equation}

We introduce the last lemma that is useful in removing the bias term
in the line integral before proving Proposition \ref{thm:asymptotic-expansion-L-1-up}.
\begin{lemma}[Line integral approximation]
Assume $f\in\mathcal C^2$ and 
let $\vu(t) = \vx + (\vy - \vx)t$ be a parameterization of the straight line
connecting nodes $x$ and $y$. We have the following error bound
\begin{equation}
	f(\vec u(t)) = f(\vec x) + ((f(\vec y) - f(\vec x))t + \mathcal O(\|\vec y-\vec x\|^2).
\end{equation}
\label{thm:linear-approx-of-line-int}
\end{lemma}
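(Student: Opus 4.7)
The plan is to apply a second-order Taylor expansion twice: once along the parametrized segment, and once at the endpoint $\vec{y}$. Concretely, first I would expand $f$ around $\vec{x}$ evaluated at $\vec{u}(t) = \vec{x} + t(\vec{y}-\vec{x})$. Since $f\in C^2$ and $\vec{u}(t) - \vec{x} = t(\vec{y}-\vec{x})$, Taylor's theorem with remainder in Lagrange or integral form yields
\begin{equation*}
f(\vec{u}(t)) = f(\vec{x}) + t\,\nabla f(\vec{x})^\top(\vec{y}-\vec{x}) + \mathcal{O}\!\left(t^2\|\vec{y}-\vec{x}\|^2\right),
\end{equation*}
where the implicit constant depends on $\sup_{\M}\|\nabla^2 f\|$, which is finite by the $C^2$ assumption on a bounded domain (or locally near $\vec{x}$, which is all we need for the asymptotic analysis of Proposition \ref{thm:asymptotic-expansion-L-1-up}).

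Second, I would apply the same Taylor expansion at $t=1$ to obtain
\begin{equation*}
f(\vec{y}) - f(\vec{x}) = \nabla f(\vec{x})^\top(\vec{y}-\vec{x}) + \mathcal{O}\!\left(\|\vec{y}-\vec{x}\|^2\right),
\end{equation*}
so that $\nabla f(\vec{x})^\top(\vec{y}-\vec{x}) = (f(\vec{y})-f(\vec{x})) + \mathcal{O}(\|\vec{y}-\vec{x}\|^2)$. Substituting this identity into the first expansion replaces the gradient term by $t(f(\vec{y})-f(\vec{x}))$ at the cost of an extra $t\cdot\mathcal{O}(\|\vec{y}-\vec{x}\|^2)$ error. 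Since $t\in[0,1]$, we have $t^2\leq 1$ and $t\leq 1$, so both error contributions are absorbed into a single $\mathcal{O}(\|\vec{y}-\vec{x}\|^2)$ term, producing the claimed bound.

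There is essentially no deep obstacle here — the statement is a textbook consequence of $C^2$ regularity, and the only mild subtlety is being careful that the Taylor remainders at $\vec{u}(t)$ and at $\vec{y}$ use the same Hessian-bound constant (which holds uniformly over the segment since $\vec{u}([0,1])$ is compact and contained in the domain where $f$ is $C^2$). The payoff elsewhere in the paper is that this uniform-in-$t$ error allows one to replace $f(\vec{u}(t))$ by the affine interpolant $f(\vec{x}) + t(f(\vec{y})-f(\vec{x}))$ inside line integrals over edges of length $\mathcal{O}(\delta)$, incurring only an $\mathcal{O}(\delta^2)$ error; this is exactly the ingredient needed for the $\mathcal{O}(\delta^2)$ remainder in Proposition \ref{thm:asymptotic-expansion-L-1-up} and for the cochain/vector-field reconstruction step used later in the supplement.
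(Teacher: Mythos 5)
Your proposal is correct and follows exactly the paper's argument: a first Taylor expansion of $f$ at $\vx$ along the segment to get $f(\vx) + t\,\nabla f(\vx)^\top(\vy-\vx) + \mathcal O(\|\vy-\vx\|^2)$, followed by replacing the directional derivative $\nabla f(\vx)^\top(\vy-\vx)$ with the finite difference $f(\vy)-f(\vx)$ up to another $\mathcal O(\|\vy-\vx\|^2)$ term. Your added remark on the uniformity in $t$ of the Hessian-bound constant is a minor but welcome refinement that the paper leaves implicit.
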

\begin{proof}
	Note that $\vec u(t) = \vec x + (\vec y-\vec x)t$. By Taylor expansion on $f$, one has
	\begin{equation*}
		f(\vec x+(\vec y-\vec x)t) = f(\vec x) + (\vec y-\vec x)^\top \nabla f(\vec x)\cdot t + \mathcal O(\|\vec y-\vec x\|^2).
	\end{equation*}
	Additionally, $(\vec y-\vec x)^\top \nabla f(\vec x)$ is the directional derivative, which can be
	approximate by $f(\vec y) - f(\vec x) + \mathcal O(\|\vec y-\vec x\|^2)$ by Taylor expansion. Therefore
	\begin{equation*}
		f(\vec u(t)) = f(\vec x+(\vec y-\vec x)t) = f(\vec x) + (f(\vec y) - f(\vec x))t + \mathcal O(\|\vec y-\vec x\|^2).
	\end{equation*}
	This completes the proof.
\end{proof}

The outline of the proof is as follows. We first prove the asymptotic expansion
of the integral operator
$\int w_T(\vecxyz) f_{xyz} \dd\mu(\vz)$.
Later on, we bound the error of approximating the manifold by tangent bundles.
Lastly, the asymptotic expansion of the integral operator
is obtained by incorporating the error terms of tangent plane approximation
and the expansion in $\rrr^d$.
The following lemma is the first step, i.e., the asymptotic expansion
in $\mathbb R^d$.

\begin{lemma}[Asymptotic expansion of $\vec{\mathcal L}_1^\mathrm{up}$ in $\rrr^d$]
Under Assumption \ref{assu:manifold-data-assu}--\ref{assu:kernel-assu},
and further assume With the choice of exponential kernel $\kappa_\varepsilon(u) = \exp(-u)$ and $\vec{\upsilon} = (f_1, \cdots, f_d) \in\mathcal C^4(\M)$.
Let $\vu(t) = \vx + (\vy - \vx)t$ for $t$ in $[0, 1]$ be a parameterization of the straight line between nodes
$x, y$ and $\vu'(t) = \dd\vu(t) / \dd t$.
With the choice of exponential kernel $\kappa(u)=\exp(-u)$, one has the following asymptotic expansion
\begin{equation}
\begin{split}
	 &\int_{\vz\in\rrr^d} w_T(\vecxyz) f_{xyz} \mathsf{d}\mu(\vec z) \\
	 &=\, \frac{2}{3}\varepsilon^{2+d}C_2\int_0^1 \sum_i\sum_j(\frac{\partial^2 f_j(\vu(t))}{\partial S_i\partial S_j}-\frac{\partial^2 f_i(\vu(t))}{\partial S_j^2})(\vy-\vx)_i\dd t + \mathcal O(\varepsilon^{d}\delta^3)+\mathcal O(\varepsilon^{4+d})+\mathcal O(\varepsilon^{2+d}\delta^2)
	 \\&= \frac{2}{3}\varepsilon^{2+d}C_2\int_0^1[(\updelta \dd\Vec{\upsilon(\vu(t))})]^\top \vec u'(t)\dd t+ \mathcal O(\varepsilon^{d}\delta^3)+\mathcal O(\varepsilon^{4+d})+\mathcal O(\varepsilon^{2+d}\delta^2).
\end{split}
\label{eq:asymptotic-expansion-L-1-up-result-rd}
\end{equation}
Here, $C_2=\varepsilon^{-(4+d)}\int_{\vz\in \rrr^d}\kappa^2(\frac{\|\vz\|^2}{\varepsilon^2})\vz_1^2\vz_2^2\dd\vz$ and $z_i$ is the i-th coordinate of $\vz$.
\label{thm:asymptotic-expansion-L-1-up-Rd}
\end{lemma}

\begin{proof}
First, we define $\vu(t), \vv(t), \vw(t)$ as in the diagram below
with $\vu(0) = \vx,  \vu(1) =  \vy$, $\vv(0) = \vy, \vv(1) = \vz$, and
$\vw(0) = \vz, \vw(1) = \vx$.

\begin{figure}[H]
    \begin{center}
	\pgfdeclarelayer{nodelayer}
	\pgfdeclarelayer{edgelayer}
	\pgfsetlayers{nodelayer,edgelayer}   
    \begin{tikzpicture}[scale=1, transform shape]
        \begin{pgfonlayer}{nodelayer}
            \node  (0) at (-0.75, 0) {};
            \node  (1) at (3, 0) {};
            \node  (2) at (0, 1.5) {};
            \node  (3) at (0.5, 0) {};
            \node  (4) at (1, 1) {};
            \node  (5) at (-0.5, 0.5) {};
            \node  (6) at (-0.75, -0.15) {$~~x\phantom{'}$};
            \node  (7) at (0.5, -0.15) {$~~u\phantom{'}$};
            \node  (8) at (3, -0.15) {$~~y\phantom{'}$};
            \node  (9) at (1.05, 1.25) {$v$};
            \node  (10) at (0, 1.75) {$z$};
            \node  (11) at (-0.7, 0.65) {$w\phantom{'}$};
            \node  (12) at (1.5, 0) {};
            \node  (13) at (1.8, 0.6) {};
            \node  (14) at (-0.3, 0.9) {};
            \node  (15) at (1.5, -0.15) {$~~u'$};
            \node  (16) at (2, 0.9) {$v'$};
            \node  (17) at (-0.5, 1.05) {$w'$};
        \end{pgfonlayer}
        \begin{pgfonlayer}{edgelayer}
            \draw (2.center) to (1.center);
            \draw (1.center) to (0.center);
            \draw (0.center) to (2.center);
            \draw [densely dashed] (5.center) to (3.center);
            \draw [densely dashed] (3.center) to (4.center);
            \draw [densely dashed] (13.center) to (12.center);
            \draw [densely dashed] (12.center) to (14.center);
        \end{pgfonlayer}
    \end{tikzpicture}
    \end{center}
\end{figure}
For notational simplicity, we do not use the conventional ({\em unit speed}) parametrization of a curve in our analysis. 
One can always use the convention without
changing any conclusions. Further define $\Gamma(x,y,z)$ be the loop connecting nodes $x, y, z$,
i.e., $\Gamma(x,y,z) = \{\vu(t), \vv(t), \vw(t)\}$.
The loop integral $f_{xyz}$ becomes
\begin{equation*}
    \oint_{\Gamma(x,y,z)} \vec{\upsilon} = \int_0^1 \vf(\vu(t))^\top (\vy-\vx) \dd t + \int_0^1 \vf(\vv(t))^\top (\vz-\vy) \dd t + \int_0^1 \vf(\vw(t))^\top (\vx-\vz) \dd t.
\end{equation*}
One can do the following coordinate-wise expansion up to second-order terms.
With slight abuse of notation, let
$\vv = \vv(t)$, $\vu = \vu(1-t)$, and $s_1,\cdots,s_d$ represent the coordinate system. Expanding $f_i(\vv)$ and $f_i(\vw)$ from $f_i(\vu)$ results in 
\begin{equation}
\label{taylyor expansion}
\begin{gathered}
    f_i(\vv(t)) = f_i(\vu(1-t)) + \sum_j \frac{\partial f_i(\vu)}{\partial s_j} (\vv-\vu)_j + \sum_j\sum_k \frac{\partial^2 f_i(\vu)}{\partial s_j\partial s_k} (\vv-\vu)_j(\vv-\vu)_k + \mathcal O(\|\vz-\vy\|^3); \\
    f_i(\vw(t)) = f_i(\vu(1-t)) + \sum_j \frac{\partial f_i(\vu)}{\partial s_j} (\vw-\vu)_j + \sum_j\sum_k \frac{\partial^2 f_i(\vu)}{\partial s_j\partial s_k} (\vw-\vu)_j(\vw-\vu)_k + \mathcal O(\|\vz-\vx\|^3).
\end{gathered}
\end{equation}
With the above expansion, we denote the \nth{0}-, \nth{1}-, and \nth{2}-order terms to be the following
\begin{equation}
	\int_{\vz\in \rrr^d}w^{(2)}(\vx,\vy,\vz)f_{xyz}dz =  \int_{\vz\in \rrr^d}w^{(2)}(\vx,\vy,\vz)(f_{xyz}^{(0)}+f_{xyz}^{(1)}+f_{xyz}^{(2)})dz  + \mathcal O(\varepsilon^{4+d}).
	\label{eq:fxyz-taylor-expansion}
\end{equation}

\begin{claim}
	The loop integral of the constant (\nth{0}-order) term $f^{(0)}_{xyz}$ is zero.
\end{claim}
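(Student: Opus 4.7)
The plan is to simply identify the three 0th-order contributions to the loop integral and show that they cancel telescopically, without invoking divergence-freeness or any regularity beyond what is already assumed.

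First I would write down the 0th order piece of each of the three line integrals in \eqref{eq:fxyz-taylor-expansion}. The integral along $\vec u(t)$ (i.e.\ $x\to y$) is left as is, giving the contribution $\int_0^1 \vf(\vec u(t))^\top (\vy-\vx)\dd t$. For the integrals along $\vec v(t)$ and $\vec w(t)$, the 0th order terms in the Taylor expansions are $f_i(\vec u(1-t))$, so their contributions are $\int_0^1 \vf(\vec u(1-t))^\top (\vz-\vy)\dd t$ and $\int_0^1 \vf(\vec u(1-t))^\top (\vx-\vz)\dd t$, respectively.

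Next I would use the identity $(\vz-\vy)+(\vx-\vz) = -(\vy-\vx)$ to combine the last two integrals into a single one: $-\int_0^1 \vf(\vec u(1-t))^\top (\vy-\vx)\dd t$. A change of variables $s = 1-t$ turns this into $-\int_0^1 \vf(\vec u(s))^\top (\vy-\vx)\dd s$, which is exactly the negative of the first contribution. Summing the three terms gives $f^{(0)}_{xyz} = 0$.

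There is essentially no obstacle here: the cancellation is purely algebraic, coming from the fact that the displacement vectors of the three sides of a triangle sum to zero. The only thing to be careful about is that the parametrizations $\vec v$ and $\vec w$ in Figure \ref{fig:expansion-coord} are expanded around $\vec u(1-t)$ rather than $\vec u(t)$, which is why the change of variable $s=1-t$ is needed. No use of $\zeta$ being divergence-free, no cancellation by odd symmetry (Lemma \ref{thm:odd-function-preservation}), and no curvature estimate is required at this stage — these enter only when bounding $f^{(1)}_{xyz}$ and $f^{(2)}_{xyz}$.
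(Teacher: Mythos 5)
Your proof is correct and follows essentially the same route as the paper's: both arguments extract the zeroth-order pieces $\vf(\vu(t))^\top(\vy-\vx)$ and $\vf(\vu(1-t))^\top(\vz-\vy)$, $\vf(\vu(1-t))^\top(\vx-\vz)$, apply the substitution $t\mapsto 1-t$, and conclude from the telescoping identity $(\vy-\vx)+(\vz-\vy)+(\vx-\vz)=\vec 0$. The only difference is the order in which you telescope and change variables, which is immaterial; your observation that divergence-freeness and odd-symmetry play no role here is also accurate.
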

The above claim is true because

\begin{equation*}
\begin{split}
    f_{xyz}^{(0)} \,&=\, \int_0^1 \sum_i f_i(\vu(t)) (\vy-\vx)_i \dd t + \int_0^1 \sum_i f_i(\vu(1-t)) (\vz-\vy)_i \dd t + \int_0^1 \sum_i f_i(\vu(1-t)) (\vx-\vz)_i \dd t \\
    \,&=\, \int_0^1 \sum_i f_i(\vu(t)) (\vy-\vx)_i \dd t - \int_1^0 \sum_i f_i(\vu(t)) (\vz-\vy)_i \dd t - \int_1^0 \sum_i f_i(\vu(t)) (\vx-\vz)_i \dd t \\
    \,&=\, \int_0^1 \sum_i f_i(u) \cdot (\vy-\vx + \vz-\vy + \vx-\vz)_i \dd t = 0.
\end{split}
\end{equation*}
The second equation holds by changing the variable $1-t \to t$ for the second and third terms.

\begin{claim}
	The integral of the first-order term $\int_{\vz\in \rrr^d}w^{(2)}(\vx,\vy,\vz)f^{(1)}_{xyz}\dd\vz$ is $\mathcal O(\varepsilon^{2+d}\delta)$.
\end{claim}

The integral of the first order term is 
\begin{equation*}
    \begin{split}
        \int_{\vz\in \rrr^d}w^{(2)}(\vx,\vy,\vz)f^{(1)}_{xyz}\dd\vz=\int_{\vz\in \rrr^d}w^{(2)}(\vx,\vy,\vz)\int_0^1 \sum_i\sum_j \frac{\partial f_i(\vu(1-t))}{\partial S_j}(\vv-\vu(1-t))_j(\vz-\vy)_i\dd t     \\
+\int_{z\in R^d}w^{(2)}(\vx,\vy,\vz)\int_0^1 \sum_i\sum_j \frac{\partial f_i(\vu(1-t))}{\partial S_j}(\vw-\vu(1-t))_j(\vx-\vz)_i \dd t \\
=\int_{\vz\in \rrr^d}w^{(2)}(\vx,\vy,\vz)\int_0^1 \sum_i\sum_j \frac{\partial f_i(\vu(t))}{\partial S_j}(\vz-\vx)_j(\vz-\vy)_i(1-t) \dd t\\
-\int_{\vz\in \rrr^d}w^{(2)}(\vx,\vy,\vz)\int_0^1 \sum_i\sum_j \frac{\partial f_i(\vu(1-t))}{\partial S_j}(\vz-\vy)_j(\vz-\vx)_i(1-t)\dd t.
    \end{split}
\end{equation*}
 We can introduce a mirror node $z'$ of $z$ as follows.
\begin{figure}[H]
\pgfdeclarelayer{nodelayer}
\pgfdeclarelayer{edgelayer}
\pgfsetlayers{nodelayer,edgelayer}   
\begin{center}
\begin{tikzpicture}[scale=1, transform shape]
    \begin{pgfonlayer}{nodelayer}
        \node (0) at (-0.75, 0) {};
        \node (1) at (3, 0) {};
        \node (2) at (0, 1.5) {};
        \node (6) at (-0.8, -0.2) {$x$};
        \node (8) at (3.25, -0.2) {$y$};
        \node (10) at (0, 1.75) {$z$};
        \node (11) at (2.25, -1.5) {};
        \node (12) at (2.25, -1.75) {$z'$};
    \end{pgfonlayer}
    \begin{pgfonlayer}{edgelayer}
        \draw (2.center) to (1.center);
        \draw (1.center) to (0.center);
        \draw (0.center) to (2.center);
        \draw (0.center) to (11.center);
        \draw (1.center) to (11.center);
    \end{pgfonlayer}
\end{tikzpicture}
\end{center}
\end{figure}
The mirroring node has a property that $\vz' - \vx = -(\vz-\vy)$ and
$\vz' - \vy = -(\vz-\vx)$. Using mirror node, the integral of the first-order term is 
\begin{equation*}
    \begin{split}
        \int_{\vz\in \rrr^d}w^{(2)}(\vx,\vy,\vz)f^{(1)}_{xyz}\dd\vz=
        \int_{\vz\in \rrr^d}w^{(2)}(\vx,\vy,\vz)\int_0^1 \sum_i\sum_j \frac{\partial f_i(\vu(t))}{\partial S_j}(\vz-\vx)_j(\vz-\vy)_i(1-t) \dd t \\
-\int_{\vz\in \rrr^d}w^{(2)}(\vx,\vy,\vz)\int_0^1 \sum_i\sum_j \frac{\partial f_i(\vu(1-t))}{\partial S_j}(\vz-\vx)_j(\vz-\vy)_i(1-t)\dd t\\
=
\int_{\vz\in \rrr^d}w^{(2)}(\vx,\vy,\vz)\int_0^1 \sum_i\sum_j (\frac{\partial f_i(\vu(t))}{\partial S_j}-\frac{\partial f_i(\vu(1-t))}{\partial S_j})(\vz-\vx)_j(\vz-\vy)_i(1-t) dt.
    \end{split}
\end{equation*}
Since $\frac{\partial f_i(\vu(t))}{\partial S_j}-\frac{\partial f_i(\vu(1-t))}{\partial S_j}=-\sum_k \frac{\partial^2 f_i(\vu(t))}{\partial S_j\partial S_k}(\vy-\vx)_k(1-2t)+\mathcal O(\delta^2)$, then the integral becomes
\begin{equation*}
    \begin{split}
        -\int_{\vz\in \rrr^d}w^{(2)}(\vx,\vy,\vz)\int_0^1 \sum_i\sum_j \sum_k\frac{\partial^2 f_i(\vu(t))}{\partial S_j\partial S_k}(\vz-\vx)_j(\vz-\vy)_i(\vy-\vx)_k(1-t)(1-2t)\dd t +\mathcal O(\varepsilon^{2+d}\delta^2).
    \end{split}
\end{equation*}
We can decompose 
\begin{equation*}
    \begin{split}
    \sum_i \sum_{j}\sum_k(\vz-\vx)_j(\vz-\vy)_i(\vy-\vx)_k=\sum_i \sum_{j\neq i}\sum_k(\vz-\vx)_j(\vz-\vx)_i(\vy-\vx)_k\\+\sum_i \sum_k(\vz-\vx)_{i}^2(\vy-\vx)_k +\sum_i \sum_{j}\sum_k(\vz-\vx)_j(\vx-\vy)_i(\vy-\vx)_k.
 \end{split}
\end{equation*}
\textbf{(1)}we firstly consider $\sum_i \sum_{j\neq i}\sum_k(\vz-\vx)_j(\vz-\vx)_i(\vy-\vx)_k$ term.\\

When $\delta$ a relative small term for $\varepsilon$. The Taylor expansion of $k(\|\vz-\vy\|^2/\varepsilon^2)$ is
$$k(\|\vz-\vy\|^2/\varepsilon^2))=k(\|\vz-\vx\|^2/\varepsilon^2))+2\varepsilon^{-2}k'(\|\vz-\vx\|^2/\varepsilon^2))(\vz-\vx)^T(\vx-\vy)+\mathcal O\left(\frac{\delta^2\|\vz-\vx\|^2}{\varepsilon^4}\right).$$

Since $\int_{\vz\in \rrr^d}k^2(\|\vz-\vx\|^2/\varepsilon^2))(\vz-\vx)_i(\vz-\vx)_j(\vy-\vx)_k=0$ when $i\neq j$.

Additionally, $\int_{\vz\in \rrr^d}k(\|\vz-\vx\|^2/\varepsilon^2))k'(\|\vz-\vx\|^2/\varepsilon^2))(\vz-\vx)_i(\vz-\vx)_j\sum_k(\vz-\vx)_k(\vx-\vy)_k=0$ since it is odd function.

The last term 
$\int_{\vz\in \rrr^d}k(\|\vz-\vx\|^2/\varepsilon^2))(\vz-\vx)_i(\vz-\vx)_j(\vy-\vx)_k\mathcal O\left(\frac{\delta^2\|\vz-\vx\|^2}{\varepsilon^4}\right)=\mathcal O(\varepsilon^d\delta^3)$.

\bigskip
\textbf{(2)}
Then consider $\sum_i \sum_{j}\sum_k(\vz-\vx)_j(\vx-\vy)_i(\vy-\vx)_k$ term, which is $\mathcal O(\varepsilon^d\delta^3)$, the second term in the expanison of $k(\|\vz-\vy\|^2/\varepsilon^2))$ will result in $\mathcal O(\varepsilon^d\delta^3)$.
\begin{equation*}
    \begin{split}
        \int_{\vz\in \rrr^d}k(\|\vz-\vx\|^2/\varepsilon^2))\frac{2}{\varepsilon^2}\sum_{l,i,j,k}(\vz-\vx)_l(\vx-\vy)_l(\vz-\vx)_j(\vx-\vy)_i(\vy-\vx)_k\dd t\dd\vz=\mathcal O(\varepsilon^d\delta^3).
    \end{split}
\end{equation*}
And the first term of the expansion will result in 0 and the second term will also result in $\mathcal O(\varepsilon^d\delta^3)$.

\textbf{(3)}
Next, we consider 
\begin{equation*}
    \begin{split}
        \int_{\vz\in \rrr^d}w^{(2)}(\vx,\vy,\vz)\int_0^1 \sum_i \sum_k\frac{\partial^2 f_i(\vu(t))}{\partial S_j\partial S_k}(\vz-\vx)_i^2(\vy-\vx)_k(1-t)(1-2t)\dd t \dd\vz.
    \end{split}
\end{equation*}

We can define
\begin{equation*}
    \begin{split}
        C_1 = \frac{1}{\varepsilon^{2+d}}\int_{\vz\in R^d}k^2(\|\vz-\vx\|^2/\varepsilon^2)(\vz-\vx)_i^2\dd\vz.
    \end{split}
\end{equation*}

And the second term of expansion of $k(||z-y||^2/\varepsilon^2)$ would result in 0 and third term will result in $\mathcal O(\varepsilon^d\delta^3)$. Hence,
\begin{equation*}
    \begin{split}
        \int_{z\in R^d}w^{(2)}(x,y,z)\int_0^1 \sum_i \sum_k\frac{\partial^2 f_i(u(t))}{\partial S_j\partial S_k}(z-x)_i^2(y-x)_k(1-t)(1-2t)\dd t \dd z\\
=\varepsilon^{2+d}C_1\int_0^1 \sum_i \sum_j\frac{\partial^2 f_j(u(t))}{\partial S_i\partial S_j}(y-x)_i(1-t)(1-2t)\dd t+\mathcal O(\varepsilon^d\delta^3).
    \end{split}
\end{equation*}

In short, the first order term 
\begin{equation*}
    \begin{split}
        \int_{\vz\in \rrr^d}w^{(2)}(\vx,\vy,\vz)f^{1}_{xyz}dz=-\varepsilon^{2+d}C_1\int_0^1 \sum_i \sum_j\frac{\partial^2 f_j(\vu(t))}{\partial S_i\partial S_j}(\vy-\vx)_i(1-t)(1-2t)\dd t+\mathcal O(\varepsilon^d\delta^3).
    \end{split}
\end{equation*}

\begin{claim}
	The integral of the second-order term $\int_{\vz\in \rrr^d}w^{(2)}(\vx,\vy,\vz)f^{(2)}_{xyz}\dd\vz$ is $\mathcal O(\varepsilon^{2+d}\delta)$.
\end{claim}

The integral of the second-order term is 
\begin{equation*}
    \begin{split}
        &\int_{\vz\in\rrr^d}w^{(2)}(\vx,\vy,\vz)f_{xyz}^{(2)}\dd\vz \\
        &=\int_{\vz\in\rrr^d}w^{(2)}(\vx,\vy,\vz)\int_0^1\sum_i\sum_j \sum_k\frac{\partial^2f_i(\vu(1-t))}{\partial S_j\partial S_k}(\vv-\vu(1-t))_j(\vv-\vu(1-t))_k(\vz-\vy)_i\dd t\dd\vz \\
        &+\int_{\vz\in\rrr^d}w^{(2)}(\vx,\vy,\vz)\int_0^1\sum_i\sum_j \sum_k\frac{\partial^2 f_i(\vu(1-t))}{\partial S_j\partial S_k}(\vw-\vu(1-t))_j(\vw-\vu(1-t))_k(\vx-\vz)_i\dd t\dd\vz \\
         &=\int_{\vz\in\rrr^d}w^{(2)}(\vx,\vy,\vz)\int_0^1\sum_i\sum_j \sum_k\frac{\partial^2 f_i(\vu(t))}{\partial S_j\partial S_k}(\vz-\vx)_j(\vz-\vx)_k(\vz-\vy)_i(1-t)^2\dd t\dd\vz\\
         &-\int_{\vz\in\rrr^d}w^{(2)}(\vx,\vy,\vz)\int_0^1\sum_i\sum_j \sum_k\frac{\partial^2 f_i(\vu(t))}{\partial S_j\partial S_k}(\vz-\vy)_j(\vz-\vy)_k(\vz-\vx)_it^2\dd t\dd\vz.
  \end{split}
\end{equation*}
We can decompose the second-order term into the following terms
\begin{equation}
    \begin{split}
        &=\int_{\vz\in\rrr^d}w^{(2)}(\vx,\vy,\vz)\int_0^1\sum_i\sum_j \sum_k\frac{\partial^2 f_i(\vu(t))}{\partial S_j\partial S_k}(\vz-\vx)_j(\vz-\vx)_k(\vz-\vx)_i(1-t)^2\dd t\dd\vz  \label{eq:1}\\ 
        \end{split}
        \end{equation}
        \begin{equation}
        \begin{split}
         &-\int_{\vz\in \rrr^d}w^{(2)}(\vx,\vy,\vz)\int_0^1\sum_i\sum_j \sum_k\frac{\partial^2 f_i(\vu(t))}{\partial S_j\partial S_k}(\vz-\vx)_j(\vz-\vx)_k(\vy-\vx)_i(1-t)^2\dd t\dd\vz\label{eq:2}\\
         \end{split}
        \end{equation}
        \begin{equation}
        \begin{split}
        &-\int_{\vz\in \rrr^d}w^{(2)}(\vx,\vy,\vz)\int_0^1\sum_i\sum_j \sum_k\frac{\partial^2f_i(\vu(t))}{\partial S_j\partial S_k}(\vz-\vy)_j(\vz-\vy)_k(\vz-\vy)_it^2\dd t\dd\vz\label{eq:3}\\
        \end{split}
        \end{equation}
        \begin{equation}
        \begin{split}
        &-\int_{\vz\in \rrr^d}w^{(2)}(\vx,\vy,\vz)\int_0^1\sum_i\sum_j \sum_k\frac{\partial^2 f_i(\vu(t))}{\partial S_j\partial S_k}(\vz-\vy)_j(\vz-\vy)_k(\vy-\vx)_it^2\dd t\dd\vz\label{eq:4}.
    \end{split}
\end{equation}

Using property of mirror nodes $\vz-\vy=-(\vz'-\vx)$, we can combine \eqref{eq:1} and \eqref{eq:3} to get
\begin{equation*}
    \begin{split}
        \int_{\vz\in \rrr^d}w^{(2)}(\vx,\vy,\vz)\int_0^1\sum_i\sum_j \sum_k\frac{\partial^2 f_i(\vu(t))}{\partial S_j\partial S_k}(\vz-\vx)_j(\vz-\vx)_k(\vz-\vx)_i(t^2+(1-t)^2)\dd t\dd\vz.
    \end{split}
\end{equation*}

Recall that the Taylor expansion of $\kappa(\|\vz-\vy\|^2/\varepsilon^2)$, the first term will be 0  and the third term will result in $\mathcal O(\delta^2\varepsilon^{d+1})$, we only need to consider the second term, the sum of \eqref{eq:1} and \eqref{eq:3} becomes 
\begin{equation*}
    \begin{split}
        2\varepsilon^{-2}\int_{\vz\in \rrr^d}\kappa\kappa'\int_0^1\sum_{i,j,k,l}\frac{\partial^2 f_i(\vu(t))}{\partial S_j\partial S_k}(\vz-\vx)_i(\vz-\vx)_j(\vz-\vx)_k(\vz-\vx)_l(\vx-\vy)_l(t^2+(1-t)^2)\dd t\dd\vz.
    \end{split}
\end{equation*}

The following four conditions will make the above integral non-zero:
\romanenu{1} $i=j\neq k=\ell$, \romanenu{2} $i=k\neq j=\ell$, \romanenu{3} $i=\ell\neq j=k$, and \romanenu{4} $i=j=k=\ell$.

Firstly, if $\kappa_\varepsilon(\cdot)$ is the exponential kernel, we define 
\begin{equation*}
    \begin{split}
        &C_2 = \varepsilon^{-(4+d)}\int_{\vz\in \rrr^d}\kappa^2\left(\frac{\|\vz\|^2}{\varepsilon^2}\right)\vz_1^2\vz_2^2\\
        &C_3=\varepsilon^{-(4+d)}\int_{\vz\in \rrr^d}\kappa^2(\frac{\|\vz\|^2}{\varepsilon^2})\vz_1^4.
    \end{split}
\end{equation*}
Since we have
\begin{equation*}
\begin{gathered}
	\iint \exp(-2x^2 - 2y^2) x^4 \dd x\dd y = \frac{3\pi}{32}; \\
	\iint \exp(-2x^2 - 2y^2) x^2 y^2 \dd x\dd y = \frac{\pi}{32}; \\
     \iint \exp(-2x^2 - 2y^2) x^2  \dd x\dd y = \frac{\pi}{8}.
\end{gathered}
\end{equation*}
It implies that $C_3 - 3C_2$ is zero when $d=2$. For general dimensions ($d>2$), one can use the
identity
\begin{equation*}
	\int_{\vx\in\rrr^d} \exp(-2\|\vx\|^2) g(x_1, x_2) \dd\vx = \left(\frac{\sqrt{2\pi}}{4} \right)^{d-2} \iint \exp(-2x_1^2-2x_2^2) g(x_1,x_2) \dd x_1 \dd x_2.
\end{equation*}
Therefore, $C_3 - 3C_2 = \varepsilon^{-(4+d)}\int_{\vz\in\rrr^d} \exp(-2\|\vz\|^2) (z_1^4 - 3z_1^2z_2^2) \dd\vz = 0$ with the exponential kernel. 

Similarly , we have 
\begin{equation}
\label{eq:c1-c2}
         C_1-4C_2=\varepsilon^{-(d+2)}\int_{\vz\in \rrr^d}\exp(-2\|\vz\|^2)z_1^2-4\varepsilon^{-(d+4)}\int_{\vz\in \rrr^d}\exp(-2\|\vz\|^2)z_1^2z_2^2=0.
\end{equation}

Then we inspect condition \romanenu{1}. The integration is
$$2\varepsilon^{2+d}C_2\int_0^1\sum_i\sum_{j\neq i}\frac{\partial^2 f_j(\vu(t))}{\partial S_i\partial S_j}(\vy-\vx)_i(t^2+(1-t)^2)\dd t\dd\vz.$$

In \romanenu{2}, the integral is the same as the integral of \romanenu{1}, so the sum of \romanenu{1} and \romanenu{2} is 
$$4\varepsilon^{2+d}C_2\int_0^1\sum_i\sum_{j\neq i}\frac{\partial^2 f_j(\vu(t))}{\partial S_i\partial S_j}(\vy-\vx)_i(t^2+(1-t)^2)\dd t\dd\vz.$$

For condition \romanenu{3}, the integral is 
\begin{equation*}
    2\varepsilon^{2+d}C_2\int_0^1\sum_i\sum_{j\neq i}\frac{\partial^2 f_i(\vu(t))}{\partial S_j^2}(\vy-\vx)_i(t^2+(1-t)^2)\dd t\dd\vz.
\end{equation*}

For condition \romanenu{4}, the integral is
\begin{equation*}
    2\varepsilon^{2+d}C_3\int_0^1\sum_i\frac{\partial^2 f_i(\vu(t))}{\partial S_i^2}(\vy-\vx)_i(t^2+(1-t)^2)\dd t\dd\vz.
\end{equation*}

From the fact $C_3=3C_2$, the sum of \eqref{eq:1} and \eqref{eq:3} is 
\begin{equation*}
    \begin{split}
        &4\varepsilon^{2+d}C_2\int_0^1\sum_i\sum_{j}\frac{\partial^2 f_j(\vu(t))}{\partial S_i\partial S_j}(\vy-\vx)_i(t^2+(1-t)^2)\dd t\dd\vz\\
        &+2\varepsilon^{2+d}C_2\int_0^1\sum_i\sum_{j}\frac{\partial^2 f_i(\vu(t))}{\partial S_j^2}(\vy-\vx)_i(t^2+(1-t)^2)\dd t\dd\vz.
    \end{split}
\end{equation*}

The sum of \eqref{eq:2} and \eqref{eq:4} is 
\begin{equation*}
    \begin{split}
        -\int_{\vz\in \rrr^d}w^{(2)}(\vx,\vy,\vz)\int_0^1\sum_i\sum_j \sum_k\frac{\partial^2 f_i(\vu(t))}{\partial S_j\partial S_k}(\vz-\vx)_j(\vz-\vx)_k(\vy-\vx)_i((1-t)^2+t^2)\dd t\dd\vz.
    \end{split}
\end{equation*}

Similar to what we do in the first order term, we can get the sum of \eqref{eq:2} and \eqref{eq:4} is 
$$-\varepsilon^{2+d}C_1\int_0^1\sum_i\sum_{j}\frac{\partial^2 f_i(\vu(t))}{\partial S_j^2}(\vy-\vx)_i(t^2+(1-t)^2)\dd t\dd\vz.$$

Therefore, sum the first order and second order term up, we obtain
\begin{equation*}
    \begin{split}
    &2\varepsilon^{2+d}C_2\int_0^1\sum_i\sum_{j}\frac{\partial^2 f_j(\vu(t))}{\partial S_i\partial S_j}(\vy-\vx)_i(t^2+(1-t)^2)\dd t\dd\vz\\&+\varepsilon^{2+d}C_2\int_0^1\sum_i\sum_{j}\frac{\partial^2 f_i(\vu(t))}{\partial S_j^2}(y-x)_i(t^2+(1-t)^2)\dd t\dd\vz\\&-\frac{1}{2}\varepsilon^{2+d}C_1\int_0^1\sum_i\sum_{j}\frac{\partial^2 f_i(\vu(t))}{\partial S_j^2}(\vy-\vx)_i(t^2+(1-t)^2)\dd t\dd\vz\\&-\varepsilon^{2+d}C_1\int_0^1 \sum_i \sum_j\frac{\partial^2 f_j(\vu(t))}{\partial S_i\partial S_j}(\vy-\vx)_i(1-t)(1-2t)\dd t.
    \end{split}
\end{equation*}

Note that 
\begin{equation*}
    \begin{split}
        C_1-4C_2=\varepsilon^{-(d+2)}\int_{\vz\in \rrr^d}\exp(-2\|\vz\|^2)z_1^2-4\varepsilon^{-(d+4)}\int_{\vz\in \rrr^d}\exp(-2\|\vz\|^2)z_1^2z_2^2=0.
    \end{split}
\end{equation*}

The $t^2 + (1-t)^2 = 1 - 2t(1-t)$ term and $(1-t)(1-2t)$ term in the line integral can be removed with the
following technique.
Let $g_{ij}(\vu(t)) = \frac{\partial^2 f_i(\vu(t))}{\partial s_j^2}$ and
$\tau = t - 1/2$. Considering the integral term contains $t(1-t)$, we have
\begin{equation*}
	\int_0^1 g_{ij}(\vu(t)) t(1-t) \dd t = \int_0^1 g_{ij}\left(\frac{\vx+\vy}{2} + (\vy-\vx)\tau\right) \left(\inv{4} - \tau^2\right) \dd\tau.
\end{equation*}
The $1/4$ terms become an unbiased line integral, i.e., $\inv{4}\int_0^1 g_{ij}(\vu(t)) \dd t$.
Therefore, we can focus only on the $\tau^2$ part. Taylor expanding the $g_{ij}$ yields
\begin{equation*}
	\int_{-1/2}^{1/2} \tau^2 \left[g_{ij}\left(\frac{\vx+\vy}{2}\right) + \cancel{\tau(\vy-\vx)^\top\nabla g_{ij}} + \mathcal O(\|\vy-\vx\|^2) \right] \dd\tau = \inv{12}\cdot g_{ij}\left(\frac{\vx+\vy}{2}\right).
\end{equation*}
The first-order term becomes zero using the odd function symmetry, and
the term $1/12$
comes from $\int_{-1/2}^{1/2} \tau^2 \dd\tau$. By Jensen's inequality,
we have
$g_{ij}((\vx+\vy)/2) = \inv{2} (g_{ij}(\vx) + g_{ij}(\vy)) + \mathcal O(\|\vy-\vx\|^2)$. Therefore,
\begin{equation*}
\begin{split}
	\sum_i \int_{-\inv{2}}^{\inv{2}} g_{ij}(\vu(\tau))(\vy-\vx)_i \tau^2 \dd\tau \,&=\,\inv{12}\cdot \inv{2}\left(g_{ij}(\vx) + g_{ij}(\vy)\right)(\vy-\vx)_i + \mathcal O(\delta^2) \\
	\,&=\, \inv{12} \sum_i \int_0^1 [g_{ij}(\vx) + (g_{ij}(\vy) - g_{ij}(\vx)) t](\vy-\vx)_i \dd t +\mathcal O(\delta^2) \\
	\,&=\, \inv{12}\sum_i\int_0^1 g_{ij}(\vu(t)) (\vy-\vx)_i \dd t + \mathcal O(\delta^2).
\end{split}
\end{equation*}
The 
last equality holds from Lemma \ref{thm:linear-approx-of-line-int}. Now we have,
\begin{equation*}
	\int_0^1 \sum_i g_{ij}(\vu(t)) (\vy-\vx)_i t(1-t) \dd t = \left(\inv{4} - \inv{12}\right) \int_0^1 \sum_i g_{ij}(\vu(t)) (\vy-\vx)_i \dd t + \mathcal O(\delta^2).
\end{equation*}
Finally, using the fact that $t^2 + (1-t)^2 = 1 - 2t(1-t)$, we get 
\begin{equation*}
\begin{split}
	\,&\pequal\,\varepsilon^{2+d} C_2 \int_0^1 \sum_i\sum_j \frac{\partial^2 f_i}{\partial S_j^2}(\vu(t)) (\vy-\vx)_i (t^2 + (1-t)^2) \dd t \\
	\,&=\,  \frac{2}{3} \varepsilon^{2+d} C_2 \int_0^1 \sum_i\sum_j \frac{\partial^2 f_i}{\partial S_j^2}(\vu(t)) (\vy-\vx)_i \dd t + \mathcal O(\varepsilon^{2+d}\delta^2).
\end{split}
\end{equation*}
Similarly, we can remove $(1-t)(1-2t)$ term by
\begin{equation*}
    \begin{split}
        \,&\pequal\,\varepsilon^{2+d} C_1 \int_0^1 \sum_i\sum_j \frac{\partial^2 f_j}{\partial S_i\partial S_j}(\vu(t)) (\vy-\vx)_i (1-t)(1-2t) \dd t \\
	\,&=\,  \frac{1}{6} \varepsilon^{2+d} C_2 \int_0^1 \sum_i\sum_j \frac{\partial^2 f_j}{\partial S_i\partial S_j}(\vu(t)) (\vy-\vx)_i \dd t + \mathcal O(\varepsilon^{2+d}\delta^2).
    \end{split}
\end{equation*}
\begin{remark}
Note that the $2/3$ term corresponds to $\int_0^1 t^2 + (1-t)^2 dt$. The result can be
interpreted as follows. When $\vx$ is sufficiently close to $\vy$, the
$g_{ij}(\vu(t))$ term is roughly
a constant. Therefore, the integral can be approximately done separately, i.e.,
$\int_0^1 g_{ij}(\vu(t)) (t^2 + (1-t)^2) \dd t \approx \int_0^1 (t^2 + (1-t)^2) \dd t \int_0^1 g_{ij}(\vu(t)) \dd t = \frac{2}{3} \int_0^1 g_{ij}(\vu(t)) \dd t$.
\end{remark}

The proof is thus completed by putting things together.
\begin{equation*}
\begin{split}
	 \int_{\vz\in\rrr^d} w_T(\vecxyz) f_{xyz} \mathsf{d}\mu(\vec z)&=\frac{2}{3}\varepsilon^{2+d}C_2\int_0^1 \sum_i\sum_j(\frac{\partial^2 f_j(\vu(t))}{\partial S_i\partial S_j}-\frac{\partial^2 f_i(\vu(t))}{\partial S_j^2})(\vy-\vx)_i\dd t \\&+ \mathcal O(\varepsilon^{d}\delta^3)+\mathcal O(\varepsilon^{4+d})+\mathcal O(\varepsilon^{2+d}\delta^2).
\end{split}
\end{equation*}
Using Proposition S7 $\updelta\dd \zeta_1=\sum_i\sum_{j\neq i}(\frac{\partial^2 f_j}{\partial S_i\partial S_j}-\frac{\partial^2 f_i}{\partial S_j^2})d s_i$, one can get following asymptotic expansion
\begin{equation*}
\begin{split}
    \int_{\vz\in\rrr^d} w_T(\vecxyz) f_{xyz} \mathsf{d}\mu(\vec z)&=\frac{2}{3}\varepsilon^{2+d}C_2\int_0^1[(\updelta \dd\Vec{\upsilon(\vu(t))})]^\top \vec u'(t)\dd t\\
    &+ \mathcal O(\varepsilon^{d}\delta^3)+\mathcal O(\varepsilon^{4+d})+\mathcal O(\varepsilon^{2+d}\delta^2).
    \end{split}
\end{equation*}

This completes the proof.
\end{proof}

The second step is to provide the error terms induced by the change of variables from
the ambient space $\M \subseteq \rrr^D$ to the local tangent coordinate
$\T_\vx\M\in\rrr^d$ defined by $\vx$.
In the following, there are two coordinate systems that we mainly focus on: the
{\em normal coordinate} and {\em tangent plane coordinate} at $\vx$.
First, we define
$\vx, \vy, \vz \in\M\subseteq\rrr^D$ to be the points in the {\em ambient space}.
We then let $\vx_s, \vy_s, \vz_s \in\rrr^d$ be the same set of points in
the {\em normal coordinate}
in the neighborhood of point $\vx$.
Points in the {\em tangent plane coordinate} defined by tangent plane
$\mathcal T_\vx\M$ of $\vx$
are denoted $\vx_p, \vy_p, \vz_p\in\rrr^d$.
Note that by definition, the origin of these two coordinate systems is $\vx$, implying that
$\vx_s$, $\vx_p$ are zero vectors.
The following lemma generalizes the
result in \cite{CoifmanRR.L:06} for triangular relations $\vx, \vy, \vz$.

\begin{lemma}[Error terms induced from the change of coordinates]
Define $Q_{\vx, m}(\cdot)$ be a homogenous polynomial of order $m$ with
coefficient defined by $\vx$.
Further, let $\vec\gamma$ be a geodesic curve in $\M$ connecting two points $\va \in \{\vx, \vy, \vz\}$ and $\vb \in\{\vecxyz\}$.
If $\vy, \vz\in\M$ are in a Euclidean ball of
radius $\varepsilon$ around $\vx$, then for $\varepsilon$ sufficiently small,
we have the following four approximations.
\begin{subequations}
\begin{gather}
	[\vz_s]_i = [\vz_p]_i + \mathcal O(\varepsilon^3); \label{eq:lemma-coordinate-change-points}\\
	\mathrm{det}\left(\frac{\dd \vz}{\dd \vz_p}\right) = 1 + Q_{\vx, 2}(\vz_p) + \mathcal O(\varepsilon^3); \label{eq:lemma-coordinate-change-volumes} \\
	\|\va - \vb\|^2 = \|\va_p - \vb_p\| + Q_{\vx,4}(\va_p, \vb_p) + \mathcal O(\varepsilon^5); \label{eq:lemma-coordinate-change-metrics} \\
	\vec\gamma(t) = \va_p + (\vb_p-\va_p)t + \mathcal O(\varepsilon^3). \label{eq:lemma-coordinate-change-geodesics}
\end{gather}
\end{subequations}
\label{thm:coordinate-change-errors}
\end{lemma}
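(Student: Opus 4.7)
The plan is to reduce all four estimates to a single local Taylor expansion of the embedding $\M\hookrightarrow\rrr^D$ around $\vx$ and then track orders carefully. Work in a chart given by the orthogonal projection $\pi\colon \M\cap B(\vx,\varepsilon)\to \T_\vx\M$; by Assumption~\ref{assu:manifold-data-assu} ($\M$ is $C^3$ with bounded curvature), a neighborhood of $\vx$ on $\M$ is the graph of a smooth map $N\colon \T_\vx\M\to(\T_\vx\M)^\perp$ with $N(0)=0$, $DN(0)=0$, so that a point $\vz\in\M$ has representation $\vz=\vz_p+N(\vz_p)$ with
\[
 N(\vz_p)=\tfrac12\mathrm{II}_\vx(\vz_p,\vz_p)+\mathcal O(\|\vz_p\|^3),
\]
where $\mathrm{II}_\vx$ is the second fundamental form at $\vx$. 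For $\vy,\vz\in B(\vx,\varepsilon)$ we have $\|\vz_p\|=\mathcal O(\varepsilon)$, so $N(\vz_p)=\mathcal O(\varepsilon^2)$.

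For \eqref{eq:lemma-coordinate-change-points}, compare tangent-plane coordinates $\vz_p$ with normal (exponential) coordinates $\vz_s=\exp_\vx^{-1}(\vz)$. Since Christoffel symbols vanish at the origin in normal coordinates, the exponential map differs from the identity only at third order: $\exp_\vx(\vz_s)=\vz_s+\mathcal O(\|\vz_s\|^3)$ in the ambient embedding. Projecting onto $\T_\vx\M$ then gives $\vz_p=\vz_s+\mathcal O(\varepsilon^3)$, which is \eqref{eq:lemma-coordinate-change-points}. For \eqref{eq:lemma-coordinate-change-volumes}, the volume element on $\M$ pulled back through the projection chart is $\sqrt{\det(I+DN^\top DN)}$; expanding, $DN(\vz_p)=\mathrm{II}_\vx(\vz_p,\cdot)+\mathcal O(\varepsilon^2)$, so $DN^\top DN=\mathcal O(\varepsilon^2)$ is itself a quadratic form in $\vz_p$, and $\det(I+DN^\top DN)=1+\operatorname{tr}(DN^\top DN)+\mathcal O(\varepsilon^4)$; taking the square root yields $1+Q_{\vx,2}(\vz_p)+\mathcal O(\varepsilon^3)$.

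For \eqref{eq:lemma-coordinate-change-metrics}, expand directly:
\[
 \|\va-\vb\|^2=\|\va_p-\vb_p\|^2+\|N(\va_p)-N(\vb_p)\|^2+2\langle \va_p-\vb_p,\,N(\va_p)-N(\vb_p)\rangle.
\]
The cross term vanishes because $\va_p-\vb_p\in\T_\vx\M$ while $N(\cdot)\in(\T_\vx\M)^\perp$, and the middle term is $\mathcal O(\varepsilon^4)$, matching a quartic homogeneous polynomial $Q_{\vx,4}(\va_p,\vb_p)$ up to an $\mathcal O(\varepsilon^5)$ remainder coming from the cubic tail of $N$. Finally, for \eqref{eq:lemma-coordinate-change-geodesics}, the geodesic $\vec\gamma$ between $\va,\vb\in\M$, pushed down to the tangent-plane chart, satisfies the ODE $\ddot{\vec\gamma}_p^{\,k}=-\Gamma^k_{ij}(\vec\gamma_p)\dot{\vec\gamma}_p^{\,i}\dot{\vec\gamma}_p^{\,j}$; the Christoffel symbols vanish at $\vx$ and are $\mathcal O(\|\vec\gamma_p\|)=\mathcal O(\varepsilon)$ along the curve, so after integrating twice the deviation from the straight segment $\va_p+(\vb_p-\va_p)t$ is $\mathcal O(\varepsilon^3)$.

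The main obstacle I anticipate is \eqref{eq:lemma-coordinate-change-geodesics}: I must be careful that the geodesic remains inside the normal neighborhood where the chart is valid, and that the constants absorbed into $\mathcal O(\varepsilon^3)$ really depend only on the curvature bound from Assumption~\ref{assu:manifold-data-assu} and not on the particular triangle $(x,y,z)$. The other three follow almost mechanically once the expansion $\vz=\vz_p+\tfrac12\mathrm{II}_\vx(\vz_p,\vz_p)+\mathcal O(\varepsilon^3)$ is in hand; the key bookkeeping is that tangent-plane and normal coordinates agree up to cubic order, which lets Lemma~\ref{thm:asymptotic-expansion-L-1-up-Rd} be applied after transferring the integrand from $\M$ to $\rrr^d$ with only an $\mathcal O(\varepsilon^3)$ penalty.
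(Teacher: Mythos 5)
Your proposal is correct and shares the paper's overall framework: write $\M$ locally as a graph $\vz=\vz_p+N(\vz_p)$ over $\T_\vx\M$ with $N(\vz_p)=\tfrac12\mathrm{II}_\vx(\vz_p,\vz_p)+\mathcal O(\varepsilon^3)$ and count orders. For \eqref{eq:lemma-coordinate-change-points}, \eqref{eq:lemma-coordinate-change-volumes}, and \eqref{eq:lemma-coordinate-change-metrics} the paper simply invokes Lemmas 6--7 of \cite{CoifmanRR.L:06} (extending the metric estimate to the case $\va,\vb\neq\vx$ by exactly your orthogonal decomposition, with the cross term killed by $N(\cdot)\perp\T_\vx\M$), so there you are reproving the cited results rather than diverging from them. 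The genuine difference is in \eqref{eq:lemma-coordinate-change-geodesics}: the paper argues extrinsically, Taylor-expanding the projected geodesic at $t=0$ and bounding $\vec\gamma_\T''(0)=\mathcal O(\varepsilon)$ by noting that the ambient geodesic acceleration lies in $\T_\va^\perp\M$ and that, by a moving-frames argument, $\T_\va^\perp\M$ is within $\mathcal O(\varepsilon)$ of $\T_\vx^\perp\M$; you instead argue intrinsically via the geodesic ODE in the projection chart, where $g_{ij}=\delta_{ij}+\mathcal O(\|\vz_p\|^2)$ forces $\Gamma^k_{ij}=\mathcal O(\varepsilon)$ and hence $\ddot{\vec\gamma}_p=\mathcal O(\varepsilon^3)$ for the $[0,1]$-parametrization, after which the two-point boundary conditions give the $\mathcal O(\varepsilon^3)$ deviation from the chord. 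Your route is arguably cleaner and makes the uniformity in the triangle $(x,y,z)$ more transparent, since the constant visibly depends only on the curvature bound entering the Christoffel symbols; the paper's route avoids computing the induced metric but needs the frame-transport step. One small imprecision to fix: your intermediate claim that $\exp_\vx(\vz_s)=\vz_s+\mathcal O(\|\vz_s\|^3)$ \emph{in the ambient embedding} is false --- the normal deviation is $\tfrac12\mathrm{II}_\vx(\vz_s,\vz_s)=\mathcal O(\varepsilon^2)$; only the \emph{tangential} component agrees with $\vz_s$ to third order, which is all that \eqref{eq:lemma-coordinate-change-points} asserts and all your subsequent projection step uses, so the conclusion stands.
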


\begin{proof}
\eqref{eq:lemma-coordinate-change-points} and \eqref{eq:lemma-coordinate-change-volumes}
follow naturally from Lemma 6 and 7 of \cite{CoifmanRR.L:06} since there is no triplet-wise
($\vx, \vy, \vz$) relationship.

For \eqref{eq:lemma-coordinate-change-metrics}, if either $\va=\vx$ or
$\vb = \vx$ the result follows from Lemma 7 of \cite{CoifmanRR.L:06}.
Now we consider the case that neither $\va$ nor $\vb$ are equal to $\vx$.
Without loss of generality, let $\va = \vz$ and $\vb = \vy$.
Note that the submanifold in the ambient space is locally
parameterized by $(\vv_p, g(\vv_p))\in\rrr^D$ for $g: \rrr^d \to \rrr^{D-d}$.
Since $\vz$ is not the original of the
normal coordinate system,
one can do a Taylor expansion of $g$ from $\vx_p$.
Denote by $\vs = (s_1,\cdots, s_d)$ the local coordinate of $\T_\vx\M$ at point $\vx$.
By definition, we have
$g_i(\vx_p) = 0$ and $\frac{\partial g_i(\vx_p)}{\partial s_j} = 0$
for $i, j \in [d]$.
We write $g_i(\vz_p) = H_{i,\vx}(\vz_p) + \mathcal O(\varepsilon^3)$,
and $g_i(\vy) = H_{i,\vx}(\vy_p) + \mathcal O(\varepsilon^3)$ by Taylor expansion.
Here, $H$ is the Hessian of $g_i$ at the origin.
Hence, we have
$\|\vz-\vy\|^2  = \|\vz_p-\vy_p\|^2 + \sum_{i=d+1}^D (g_i(\vz_p) - g_i(\vy_p))^2 = \|\vz-\vy\|^2 + Q_{\vx, 4}(\vy_p, \vz_p) + \mathcal O(\varepsilon^5)$.

To prove \eqref{eq:lemma-coordinate-change-geodesics}, first, note that
one can project the geodesic onto $\T_\vx\M$ with $\mathcal O(\varepsilon^3)$
by \eqref{eq:lemma-coordinate-change-points}, i.e.,
$\vec\gamma = \vec\gamma_{\T_\vx\M}(t) + \mathcal O(\varepsilon^3)$.
Therefore, we only need to consider the error term caused by approximating
the projected geodesic
$\vec\gamma_{\T_\vx\M}(t)$ by a straight line $\va_p + (\vb_p-\va_p)t$.
Denote $\vec\gamma_{\T_\vx\M} = \vec\gamma_\T$ for simplicity,
further let  $\mathrm{dist}_\T(\vy, \vz)$ be the arc length of the
projected geodesic $\vec\gamma_\T$ connecting $\va_p, \vb_p$,
from Taylor expansion, one has
\begin{equation}
	\vec\gamma_\T(t) = \vec\gamma_\T(0) + t\vec\gamma_\T'(0)\mathrm{dist}_\T(\va_p, \vb_p) + \inv{2}t^2\mathrm{dist}_\T^2(\va_p, \vb_p) \vec\gamma_\T''(0) + \mathcal O(\varepsilon^3).
	\label{eq:taylor-expansion-proj-geodesic}
\end{equation}
The first step is to show that $\gamma''_\T(0) = \mathcal O(\varepsilon)$.
Since $\mathrm{dist}_\T(\va_p, \vb_p) = \mathcal O(\varepsilon)$,
if $\gamma''_\T(0) = \mathcal O(\varepsilon)$, one can bound the
second-order term by $\mathcal O(\varepsilon^3)$.
Note that if $\va = \vx$, by the definition of geodesic (covariant derivative is
normal to $\M$), we have $\gamma''_\T(0) = 0$. Hence, the error term is bounded
by $\mathcal O(\varepsilon^3)$. If $\vb = \vx$, one can switch $\vb$ with $\va$,
and the same proof can go through. We now deal with
the situation when $\va, \vb\neq \vx$.
One can show that the {\em Levi-Civita connection} of the local (orthonormal) basis vector
can be written as a linear combination of the basis vectors
by the {\em method of moving frame} \cite{ClellandJN:17}, 
with coefficients determined by the local curvature/torsion at that point.
The first-order approximation of the local basis vector $[\ve_\va]_i \in\rrr^D$
for $i \in [d]$ at $\va$ can be approximated
by $[\ve_\va]_i = [\ve_\vx]_i + \mathcal O(\varepsilon)$ using the {\em method of moving frame} along
the geodesic connecting $\vx \to \va$.
By the Gram-Schmidt process, the basis of the normal space $[\vn_\va]_j \in \rrr^D$
at $\va$ for $j = d+1,\cdots, D$ can also be written as a first-order approximation of the normal basis at $\vx$, i.e., $[\vn_\va]_j = [\vn_\vx]_j + \mathcal O(\varepsilon)$.
Since the second derivate of geodesic $\vec\gamma_\M''(0)$ at $\va$ is in the normal space $\T^\perp_\va\M$,
the covariant derivative of the projected geodesic
$\vec\gamma''(0)$ at $\va$ onto $\T_\vx\M$ can be bounded by
$\mathcal O(\varepsilon)$ if the manifold $\M$ has bounded curvature.
The term including $\vec\gamma''(0)$ can thus be bounded by $\mathcal O(\varepsilon^3)$, given that $\va$ and $\vb$ are
sufficiently close to $\vx$. Hence, \eqref{eq:taylor-expansion-proj-geodesic} becomes
\begin{equation*}
	\vec\gamma_\T(t) = \vec\gamma_\T(0) + t\vec\gamma_\T'(0)\mathrm{dist}_\T(\va_p, \vb_p) + \mathcal O(\varepsilon^3).
\end{equation*}
Plugging in $t = 1$ gives us $\vec\gamma'(0)\mathrm{dist}_\T(\va_p, \vb_p) = \vb_p-\va_p + \mathcal O(\varepsilon^3)$.
The following approximation of $\vec\gamma(t)$ thus holds.
\begin{equation*}
	\vec\gamma(t) = \va_p + (\vb_p-\va_p)t + \mathcal O(\varepsilon^3).
\end{equation*}
This completes the proof.
\end{proof}

With the above two lemmas in hand, we can finally start to prove Proposition
\ref{thm:asymptotic-expansion-L-1-up}.

\begin{proofof}{Proposition \ref{thm:asymptotic-expansion-L-1-up}}
Note that from \eqref{eq:lemma-coordinate-change-geodesics}, the geodesic
curve can be approximated by a straight line in the local tangent plane $\T_\vx\M$
with error $\mathcal O(\varepsilon^3)$. Therefore,
$f_{xyz} = \oint_\M \vec{\upsilon} = f_{x_py_pz_p} + \mathcal O(\varepsilon^4)$.
A similar expansion as in Lemma \ref{thm:asymptotic-expansion-L-1-up-Rd} thus holds.
Let $\kappa_\varepsilon(\vz, \vx) = \kappa\left(\frac{\|\vz-\vx\|^2}{\varepsilon^2}\right)$,
from a similar analysis as Lemma \ref{thm:asymptotic-expansion-L-1-up-Rd}, we have
\begin{align*}
	\,&\pequal\,\int_{\vz\in\M} w_T(\vecxyz) f_{xyz} \dd\vz \stackrel{\romanenu{1}}{=} \int_{\substack{\vz\in\M \\ \max(\|\vz-\vy\|, \|\vz-\vx\|) < \varepsilon^\gamma}} w_T(\vecxyz) f_{xyz} \dd\vz +\mathcal O(\varepsilon^{4+d}) \\
	\,&\stackrel{\romanenu{2}}{=}\,\int_{\vz\in\rrr^d}  \Bigg[\left(\kappa_\varepsilon(\vz, \vx) \kappa_\varepsilon(\vz, \vy) + \left(\frac{Q_{\vx, 4}(\vz, \vy)}{\varepsilon^2}\right)\left(\kappa'_\varepsilon(\vz, \vx) \kappa_\varepsilon(\vz, \vy)+\kappa_\varepsilon(\vz, \vx) \kappa'_\varepsilon(\vz,\vy)\right) \right) \\
	\,&\pequal\, \phantom{\int_{z\in\rrr^d}\Bigg[} \cdot f_{xyz} \cdot \left((1 + Q_{\vx, 2}(\vz)\right)\Bigg] \dd\vz + \mathcal O(\varepsilon^{4+d}) \\
	\,&=\frac{2}{3}\varepsilon^{2+d}C_2\int_0^1[(\updelta \dd\Vec{\upsilon(\vu(t))})]^\top \vec u'(t)\dd t+ \mathcal O(\varepsilon^{d}\delta^3)+\mathcal O(\varepsilon^{4+d})+\mathcal O(\varepsilon^{2+d}\delta^2).
\end{align*}
Equality $\romanenu{1}$ holds by Lemma \ref{thm:error-bound-localization}.
Equality $\romanenu{2}$ is valid by first projecting $\vecxyz$ to
$\T_\vx\M$, changing the variables $\vx_p,\vy_p,\vz_p \to \vecxyz$,
and using Lemma \eqref{thm:error-bound-localization} again.
Terms consisting of $Q_{p,4}(\vx, \vy)$ or $Q_{p, 2}(\vx,\vy)$ are in the high order , hence, they can be merged into the last error term $\mathcal O(\varepsilon^{4+d})$.
In Lemma \ref{thm:asymptotic-expansion-L-1-up-Rd},
the differentiation is in the
local coordinate residing on $\T_\vx\M$.
One can change the differentiation to the partial derivative on the normal coordinate system
by \eqref{eq:lemma-coordinate-change-volumes}.
Additionally, one can again approximate the line integral
$\int_0^1 (\updelta\dd\vec{\upsilon})(\vu(t))^\top \vu'(t) \dd t = \int_0^1 (\updelta\dd\vec{\upsilon})(\vec\gamma(t))^\top \vec\gamma'(t) \dd t + \mathcal O(\varepsilon^{3+d}\delta)$
by \eqref{eq:lemma-coordinate-change-geodesics}, where $\vec\gamma$ is the
geodesic connecting $\vx, \vy$. Since $\delta$ decays faster than $\varepsilon$, we can merge  $\mathcal O(\varepsilon^{3+d}\delta)$ to $\mathcal O(\varepsilon^{4+d})$. It implies
\begin{equation*}
\begin{split}
	\int_\M w_T(\vecxyz)f_{xyz}\mathsf{d}\mu(\vec z) =& 
\frac{2}{3}\varepsilon^{2+d}C_2\int_0^1 [\updelta\dd\vec{\upsilon}(\vec\gamma_\M(t))]^\top \vec\gamma_\M'(t) \dd t \\&+\mathcal O(\varepsilon^{4+d}) + \mathcal O(\varepsilon^{d+2}\delta^2)+\mathcal O(\varepsilon^{d}\delta^3).
 \end{split}
\end{equation*}
 This completes the proof.
\end{proofof}

\subsection{Proof of Theorem \ref{thm:pointwise-bias-L1-up}}
Firstly, we need the following lemma to measure $ w_E(\vecxy)$
\begin{lemma}[Kernel weight for edge]
\label{lemma:weight for edge}
\begin{equation*}
    \begin{split}
        w_E(\vy,\vz)&=\int_{\vz\in \rrr^d}w_T(\vy,\vz,\vv)d\mu(\vv)=\int_{\vz\in \rrr^d}\exp\left(-\frac{\|\vv-\vy\|^2}{\varepsilon^2}\right)\exp\left(-\frac{|\|\vv-\vz\|^2}{\varepsilon^2}\right)\dd\vz\\ &=\int_{\vz\in \rrr^d}\exp\left(-\frac{\sum_i(y_i-v_i)^2+\sum_i(z_i-v_i)^2}{\varepsilon^2}\right)\dd\vz\\ &=\int_{\vz\in \rrr^d}\exp\left(-\frac{2\sum_i(v_i-\frac{y_i+z_i}{2})^2+\frac{1}{2}\sum_i(y_i-z_i)^2}{\varepsilon^2}\right)\dd\vz\\ &=C_0\exp\left(-\frac{\sum_i(y_i-z_i)^2}{2\varepsilon^2}\right)=\varepsilon^dC_0\exp\left(-\frac{\|\vy-\vz\|^2}{2\varepsilon^2}\right),
    \end{split}
\end{equation*}
where we define $C_0 = \frac{1}{\varepsilon^d}\int_{\vz\in \rrr^d}\exp\left(-\frac{2|\vz\|^2}{\varepsilon^2}\right)\dd\vz$.
\end{lemma}
\label{sec:proof-of-pointwise-bias-L1-up}
\begin{proofof}{Theorem \ref{thm:pointwise-bias-L1-up}}
Using $\LL_1^\mathrm{up} = \vW_E^{-1}\cdot \vB_T\vW_T\vB_T^\top$ and Lemma
\ref{thm:discrete-form-L-up}, the expected value becomes
\begin{equation}
\begin{split}
	\,&\pequal\,  \mathbb E\left[(\LL_1^\mathrm{up})_{[x,y]}\right] = \mathbb E_{z,v} \left[\frac{\frac{1}{n}\sum_{z\notin\{x,y\}} w_T(\vecxyz) f_{xyz}}{\frac{1}{n}\sum_{v\notin\{x,y\}} w_T(\vecxy,\vv)}\right] \\
	\,&=\, \mathbb E_z\left[\frac{1}{n}\sum_{z\notin\{x,y\}} w_T(\vecxyz) f_{xyz} \right] \cdot \mathbb E_v \left[\frac{w_E(\vecxy)}{\frac{\varepsilon^{-d}}{n}\sum_{v\notin\{x,y\}} w_T(\vecxy,\vv)}\right] \cdot \frac{1}{w_E(\vx,\vy)}.
\end{split}
\label{eq:pointwise-bias-proof-exp-decomp}
\end{equation}
The last equality holds by the independence of random variables $z, v$.
By {\em Monte-Carlo} approximation,
\begin{equation*}
\begin{split}
	\mathbb E\left[\inv{n} \sum_z w_T(\vec x, \vec y, \vec z) f_{xyz}\right] \,&=\frac{2}{3}\varepsilon^{2+d}C_2\int_0^1 [\updelta\dd\vec{\upsilon}(\vec\gamma_\M(t))]^\top \vec\gamma_\M'(t) \dd t \\&+\mathcal O(\varepsilon^{4+d}) + \mathcal O(\varepsilon^{d+2}\delta^2)+\mathcal O(\varepsilon^{d}\delta^3).
\end{split}
\end{equation*}
The last equality holds by Proposition \ref{thm:asymptotic-expansion-L-1-up}.
It is not difficult to show that
$\mathbb E\left[\inv{n} \sum_z w_T(\vec x, \vec y, \vec z) \right] = w_E(\vecxy)$ following the proof of
Proposition \ref{thm:asymptotic-expansion-L-1-up}, implying that the $\mathbb E_v$ term
in \eqref{eq:pointwise-bias-proof-exp-decomp}
is $1 + \mathcal O(n^{-1})$ using the standard ratio estimator. Therefore, with $w_E(\vx,\vy)=\varepsilon^dC_0\exp(-\frac{\|\vy-\vx\|^2}{2\varepsilon^2})$,
one has
\begin{equation*}
\begin{split}
     \mathbb E\left[(\LL_1^\mathrm{up})_{[x,y]}\right]& = \frac{2}{3}\varepsilon^{2}\frac{C_2}{C_0}\int_0^1 [\updelta\dd\vec{\upsilon}(\vec\gamma_\M(t))]^\top \vec\gamma_\M'(t) \dd t \\&+\mathcal O(\varepsilon^{4}) + \mathcal O(\varepsilon^{2}\delta^2)+\mathcal O(\delta^3)+\mathcal O(n^{-1}\varepsilon^{2}\delta).
  \end{split}
\end{equation*}
Let $\delta=\mathcal O(\varepsilon^{\frac{3}{2}})$, one can get
\begin{equation}
    \begin{split}
        \mathbb E\left[(\LL_1^\mathrm{up})_{[x,y]}\right]& = \frac{2}{3}\varepsilon^{2}\frac{C_2}{C_0}\int_0^1 [\updelta\dd\vec{\upsilon}(\vec\gamma_\M(t))]^\top \vec\gamma_\M'(t) \dd t +\mathcal O(\varepsilon^{4})+\mathcal O(\varepsilon^{\frac{7}{2}}n^{-1}).
    \end{split}
\end{equation}
This completes the proof.
\end{proofof}

\section{Proofs of the pointwise convergence of the {\em down} Helmholtzian}
\label{sec:pointwise-consistency-L1-down}
    \subsection{Proof of Proposition \ref{thm:asymptotic-expansion-L-1-down}}
    Similar to the proof of Proposition  \ref{thm:asymptotic-expansion-L-1-up}, we are interested in showing asymptotic expansion 
    \begin{equation*}
        \begin{split}
             \int_{\vz\in \rrr^d}w_E(\vy,\vz)f_{zy}d\mu(z)-\int_{\vz\in \rrr^d}w_E(\vx,\vz)f_{zx}d\mu(\vz)\to c\int_0^1 [\dd\updelta \vec{\upsilon}(\vec\gamma_\M(t))]^\top \vec\gamma_\M'(t) \dd t.
        \end{split}
    \end{equation*}
From Corollary \ref{thm:pure-curl-grad-1-Laplacian} in  \ref{sec:lemmas-exterior-calculus}, one has
$\dd\updelta\vec{\upsilon} = -\sum_i\sum_j \frac{\partial^2 f_j}{\partial S_i\partial S_j} \mathsf{d}s_i$
in local coordinate $(s_1, \cdots, s_d)$ with $\vec{\upsilon} = \sum_i f_i \mathsf{d}s_i$.
\begin{lemma}[Asymptotic expansion of $\vec{\mathcal L}_1^\mathrm{down}$ in $\rrr^d$]
\label{thm:asymptotic-expansion-L-1-down-Rd}
Under Assumption \ref{assu:manifold-data-assu}--\ref{assu:kernel-assu},
and further assume With the choice of exponential kernel $\kappa_\varepsilon(u) = \exp(-u)$ and $\vec{\upsilon} = (f_1, \cdots, f_d) \in\mathcal C^4(\M)$.
Let $\vu(t) = \vx + (\vy - \vx)t$ for $t$ in $[0, 1]$ be a parameterization of the straight line between nodes
$x, y$ and $\vu'(t) = \dd\vu(t) / \dd t$.
With the choice of exponential kernel $\kappa(u)=\exp(-u)$, one has the following asymptotic expansion
\begin{equation}
\label{eq:asymptotic expansion of down}
    \begin{split}
        &\int_{\vz\in \rrr^d}\bigg[w^{(1)}(\vy,\vz)f_{yz}-w^{(1)}(\vx,\vz)f_{xz}\bigg]dz\\=&\frac{2}{3}C_0C_1\varepsilon^{2d+2}\int_0^1[(\dd\updelta \Vec{\upsilon(\vu(t))})]^\top \vec u'(t)\dd t+\mathcal O(\varepsilon^{2d+4})+\mathcal O(\varepsilon^{2d+1}\delta^2).
    \end{split}
\end{equation}
\end{lemma}
\begin{proof} 
Similar to what we have done in Lemma \ref{thm:asymptotic-expansion-L-1-up-Rd}, from expansion of \ref{taylyor expansion} of Lemma \ref{thm:asymptotic-expansion-L-1-up-Rd},With the above expansion, we denote the \nth{0}-, \nth{1}-, and \nth{2}-order terms to be the following

\begin{equation*}
    f_{yz}=f_{yz}^{(0)}+f_{yz}^{(1)}+f_{yz}^{(2)}+\mathcal O(\varepsilon^3).
\end{equation*}

And 
\begin{equation}
\begin{split}
	&\int_{\vz\in \rrr^d}w^{(1)}(\vy,\vz)f_{zy}\dd\vz-\int_{\vz\in \rrr^d}w^{(1)}(\vx,\vz)f_{zx}\dd\vz \\&=  \int_{\vz\in \rrr^d}[w^{(1)}(\vy,\vz)(f_{yz}^{(0)}+f_{yz}^{(1)}+f_{yz}^{(2)})-w^{(1)}(\vx,\vz)(f_{xz}^{(0)}+f_{xz}^{(1)}+f_{xz}^{(2)})]dz + \mathcal O(\varepsilon^{4+2d}).
	\label{eq:fxyz-taylor-expansion}
 \end{split}
\end{equation}
\begin{claim}[constant term] The integral of constant term is 0.\\
The above claim is true since
\begin{equation}
    \begin{split}
    &\int_{\vz\in \rrr^d}[w^{(1)}(\vy,\vz)f_{yz}^{(0)}-w^{(1)}(\vx,\vz)f_{xz}^{(0)}]\dd\vz\\=&\int_{\vz\in \rrr^d}w^{(1)}(\vy,\vz)\int_0^1\sum_if_i(\vu(1-t))(\vy-\vz)_i\dd t\dd\vz\\&-\int_{\vz\in \rrr^d}w^{(1)}(\vx,\vz)\int_0^1\sum_if_i(\vu(1-t))(\vx-\vz)_i\dd t\dd\vz=0.
    \end{split}
\end{equation}
\end{claim}
\begin{claim}[First order term]
The integral of the first order term is 0. Since  
\begin{equation*}
    \begin{split}
        \int_{\vz\in \rrr^d}w^{(1)}(\vy,\vz)\int_0^1\sum_{i,j}\frac{\partial f_i(\vx+t(\vy-\vx))}{\partial S_j}(\vz-\vy+\vy-\vx)_j(\vy-\vz)_i(1-t)\dd t\dd\vz\\
        -\int_{\vz\in \rrr^d}w^{(1)}(\vx,\vz)\int_0^1\sum_{i,j}\frac{\partial f_i(\vx+t(\vy-\vx))}{\partial S_j}(\vz-\vx+\vx-\vy)_j(\vx-\vz)_it\dd t\dd\vz=0.
    \end{split}     
\end{equation*}

This claim holds from $\int_{\vz\in\rrr^d}w^{(1)}(\vy,\vz)(\vy-\vx)_j(\vy-\vz)_i(1-t)\dd\vz=0$ and 
$\int_{\vz\in\rrr^d}w^{(1)}(\vy,\vz)(\vz-\vy)_j(\vy-\vz)_i(1-t)\dd\vz=\int_{\vz\in\rrr^d}w^{(1)}(\vx,\vz)(\vz-\vx)_j(\vx-\vz)_i(1-t)\dd\vz$
\end{claim}

\smallskip
\begin{claim}[Second order term]
Similar tricks applied to the second order term. The integral is 
\begin{equation*}
    \begin{split}
        &\int_{\vz\in \rrr^d}w^{(1)}(\vy,\vz)\int_0^1\sum_{i,j,k}\frac{\partial^2f_i(\vx+t(\vy-\vx))}{\partial S_j \partial S_k}(\vz-\vx)_j(\vz-\vx)_k(\vy-\vz)_i(1-t)^2\dd t\dd\vz\\
        &-\int_{\vz\in \rrr^d}w^{(1)}(\vx,\vz)\int_0^1\sum_{i,j,k}\frac{\partial^2f_i(\vx+t(\vy-\vx))}{\partial S_j \partial S_k}(\vz-\vy)_j(\vz-\vy)_k(\vx-\vz)_it^2\dd t\dd\vz\\
        &=\int_{\vz\in \rrr^d}w^{(1)}(\vy,\vz)\int_0^1\sum_{i,j,k}\frac{\partial^2f_i(\vx+t(\vy-\vx))}{\partial S_j \partial S_k}(\vz-\vy+\vy-\vx)_j(\vz-\vy+\vy-\vx)_k(\vy-\vz)_i(1-t)^2\dd t\dd\vz\\
        &-\int_{\vz\in \rrr^d}w^{(1)}(\vx,\vz)\int_0^1\sum_{i,j,k}\frac{\partial^2f_i(\vx+t(\vy-\vx))}{\partial S_j \partial S_k}(\vz-\vx+\vx-\vy)_j(\vz-\vx+\vx-\vy)_k(\vx-\vz)_it^2\dd t\dd\vz.
    \end{split}
\end{equation*}

We can drop odd function terms since 
$\int_{\vz\in \rrr^d}w^{(1)}(\vy,\vz)(\vz-\vy)_j(\vz-\vy)_k(\vy-\vz)_i\dd\vz=0$. We also have $\int_{\vz\in \rrr^d}w^{(1)}(\vy,\vz)(\vy-\vx)_j(\vy-\vx)_k(\vy-\vz)_i\dd\vz=\varepsilon^dC_0\cdot \mathcal O(\varepsilon^{d+1}\delta^2)=\mathcal O(\varepsilon^{2d+1}\delta^2)$. So the integral of second order term is 
\begin{equation}
\label{eq:5}
    =\int_{\vz\in \rrr^d}w^{(1)}(y,z)\int_0^1\sum_{i,j,k}\frac{\partial^2f_i(x+t(y-x))}{\partial S_j \partial S_k}(z-y)_j(y-x)_k(y-z)_i(1-t)^2dtdz
\end{equation}
        \begin{equation}
        \label{eq:6}
+\int_{\vz\in \rrr^d}w^{(1)}(\vy,\vz)\int_0^1\sum_{i,j,k}\frac{\partial^2f_i(\vx+t(\vy-\vx))}{\partial S_j \partial S_k}(\vy-\vx)_j(\vz-\vy)_k(\vy-\vz)_i(1-t)^2\dd t\dd\vz
\end{equation}
        \begin{equation}
        \label{eq:7}
-\int_{\vz\in \rrr^d}w^{(1)}(\vx,\vz)\int_0^1\sum_{i,j,k}\frac{\partial^2f_i(\vx+t(\vy-\vx))}{\partial S_j \partial S_k}(\vz-\vx)_j(\vx-\vy)_k(\vx-\vz)_it^2\dd t\dd\vz
\end{equation}
        \begin{equation}
        \label{eq:8}
-\int_{\vz\in \rrr^d}w^{(1)}(\vx,\vz)\int_0^1\sum_{i,j,k}\frac{\partial^2f_i(\vx+t(\vy-\vx))}{\partial S_j \partial S_k}(\vx-\vy)_j(\vz-\vx)_k(\vx-\vz)_it^2\dd t\dd\vz
\end{equation}
\begin{equation*}
    +\mathcal O(\varepsilon^{2d+1}\delta^2).
\end{equation*}
\end{claim}

In \eqref{eq:5} and \eqref{eq:7}, the integrals do not equal to 0 when $i=j$, in \eqref{eq:6} and \eqref{eq:8}, integrals do not equal to 0 when $i=k$.
 Then the whole second order term  becomes
 \begin{equation}
     \begin{split}
         -\int_{\vz\in \rrr^d}w^{(1)}(\vx,\vz)\int_0^1\sum_{i,j}\frac{\partial^2f_j(\vx+t(\vy-\vx))}{\partial S_i \partial S_j}(\vy-\vx)_i(t^2+(1-t)^2)(\vz-\vx)_j^2\dd t\dd\vz.
     \end{split}
 \end{equation}
 
From Lemma \ref{lemma:weight for edge}, we can use following integral
\begin{equation}
    \begin{split}
        \int_{\vz\in \rrr^d}w^{(1)}(\vz,\vx)(\vz-\vx)_j^2\dd\vz=\int_{\vz\in \rrr^d}\varepsilon^dC_0\exp(-\frac{\|\vz-\vx\|^2}{2\varepsilon^2})(\vz-\vx)_j^2\dd\vz=\varepsilon^dC_0\cdot4\varepsilon^{d+2}C_1.
    \end{split}
\end{equation}

Recall that $\dd\updelta \zeta_1=-\sum_{i,j}\frac{\partial^2f_j}{\partial S_i \partial S_j}ds_i$ from Proposition S7, and we can remove $(t^2+(1-t)^2)$ by using the same method in \ref{thm:asymptotic-expansion-L-1-up-Rd}.
Therefore, putting things together we can get 
\begin{equation}
    \begin{split}
        &\int_{\vz\in \rrr^d}[w^{(1)}(\vy,\vz)f_{yz}^{(0)}-w^{(1)}(\vx,\vz)f_{xz}^{(0)}]dz\\=&\frac{8}{3}C_0C_1\varepsilon^{2d+2}\int_0^1[(\dd\updelta \Vec{\upsilon(\vu(t))})]^\top \vec u'(t)\dd t+\mathcal O(\varepsilon^{2d+4})+\mathcal O(\varepsilon^{2d+1}\delta^2).
    \end{split}
\end{equation}
\end{proof}

The second step is to provide the error terms induced by the change of variables with lemma \ref{thm:coordinate-change-errors}. Then we can prove Proposition \ref{thm:asymptotic-expansion-L-1-down}
\begin{proofof}{Propostion \ref{thm:asymptotic-expansion-L-1-down}}
The geodesic curve can be approximated by a straight line in the local tangent plane $\mathcal{T}_x\mathcal{M}$ with error $\mathcal O(\varepsilon^3)$. Therefore $f_{yz}=f_{y_pz_p}+\mathcal O(\varepsilon^4)$. With choice of exponential kernel, we have 
\begin{align*}
    \,&\pequal\,\int_{\vz\in\M} (w_E(\vy,\vz) f_{yz}-w_E(\vx,\vz) f_{xz}) \dd\vz \\ &\stackrel{\romanenu{1}}{=} \int_{\substack{\vz\in\M \\ \max(\|\vz-\vy\|, \|\vz-\vx\|) < \varepsilon^\gamma}} (w_E(\vy,\vz) f_{yz}-w_E(\vx,\vz) f_{xz}) \dd\vz +\mathcal O(\varepsilon^{4+2d}) \\
	\,&\stackrel{\romanenu{2}}{=}\,\int_{\vz\in\rrr^d}  \Bigg[\left(\varepsilon^dC_0(\exp(-\frac{\|\vy-\vz\|^2}{2\varepsilon^2})-\frac{Q_{x,4}(\vz,\vy)}{\varepsilon^2}\exp(-\frac{\|\vy-\vz\|^2}{2\varepsilon^2})) \right)\cdot f_{yz}\\
    &-\left(\varepsilon^dC_0(\exp(-\frac{\|\vx-\vz\|^2}{2\varepsilon^2})-\frac{Q_{x,4}(\vz,\vx)}{\varepsilon^2}\exp(-\frac{\|\vx-\vz\|^2}{2\varepsilon^2})) \right)\cdot f_{xz}\bigg] \cdot \left((1 + Q_{\vx, 2}(\vz)\right) \dd\vz + \mathcal O(\varepsilon^{4+2d}) \\
	\,&=\frac{8}{3}\varepsilon^{2+2d}C_0C_1\int_0^1[(\dd\updelta \Vec{\upsilon(\vu(t))})]^\top \vec u'(t)\dd t+\mathcal O(\varepsilon^{4+2d})+\mathcal O(\varepsilon^{1+2d}\delta^2).
\end{align*}

Equality $\romanenu{1}$ holds by Lemma \ref{thm:error-bound-localization}.
Equality $\romanenu{2}$ is valid by first projecting $\vecxyz$ to
$\T_\vx\M$, changing the variables $\vx_p,\vy_p,\vz_p \to \vecxyz$,
and using Lemma \eqref{thm:error-bound-localization} again.
Terms consisting of $Q_{p,4}(\vx, \vy)$ or $Q_{p, 2}(\vx,\vy)$ are in the high order , hence, they can be merged into the last error term $\mathcal O(\varepsilon^{4+2d})$. Then we can again approximate the line integral
$\int_0^1 (\dd\updelta \vec{\upsilon})(\vu(t))^\top \vu'(t) \dd t = \int_0^1 (\dd\updelta \vec{\upsilon})(\vec\gamma(t))^\top \vec\gamma'(t) \dd t + \mathcal O(\varepsilon^3\delta)$
by \eqref{eq:lemma-coordinate-change-geodesics}, where $\vec\gamma$ is the
geodesic connecting $\vx, \vy$. Therefore, it implies that 
\begin{equation}
    \begin{split}
        \,&\pequal\,\int_{\vz\in\M} (w_E(\vy,\vz) f_{yz}-w_E(\vx,\vz) f_{xz}) \dd\vz \\
        &=\frac{8}{3}\varepsilon^{2d+2}C_0C_1\int_0^1 (\dd\updelta \vec{\upsilon})(\vec\gamma_\M'(t))^\top \vec\gamma_\M'(t) \dd t +\mathcal O(\varepsilon^{4+2d})+\mathcal O(\varepsilon^{1+2d}\delta^2).
    \end{split}
\end{equation}
\end{proofof}
\subsection{proof of theorem \ref{thm:L_1-down}}
Using $\LL_1^\mathrm{down}=W_1\cdot B_1W_0^{-1}B_1^T$, the expectation becomes
\begin{equation*}
    \begin{split}
    &\mathbb E\left[(\LL_1^\mathrm{down})_{[x,y]}\right]=W_1\cdot B_1W_0^{-1}B_1^T=E_{z,v}\bigg[\frac{\frac{1}{n}\sum_{\vz\neq\vy}w^{(1)}(\vz,\vy)f_{yz}}{\frac{1}{n}\sum_{\vv\neq\vx}w^{(1)}(\vv,\vy)}\bigg]-E_{z,v}\bigg[\frac{\frac{1}{n}\sum_{\vz\neq\vx}w^{(1)}(\vz,\vx)f_{xz}}{\frac{1}{n}\sum_{\vv\neq\vx}w^{(1)}(\vv,\vx)}\bigg]\\
    &=E_{z}\bigg[\frac{1}{n}\sum_{\vz\neq\vy}w^{(1)}(\vz,\vy)f_{yz}\bigg]\cdot E_v\Bigg[\frac{w^{(0)}(\vy)}{\frac{1}{n}\sum_{\vv\neq\vy}w^{(1)}(\vv,\vy)}\bigg]\cdot \frac{1}{w^{(0)}(\vy)}
    \\
    &-E_{z}\bigg[\frac{1}{n}\sum_{\vz\neq\vx}w^{(1)}(\vz,\vx)f_{xz}\bigg]\cdot E_v\Bigg[\frac{w^{(0)}(\vx)}{\frac{1}{n}\sum_{\vv\neq\vx}w^{(1)}(\vv,\vx)}\bigg]\cdot \frac{1}{w^{(0)}(\vx)}.
    \end{split}
\end{equation*}
The last equality holds by the independence of random variables $\vz,\vv$. By $E[\frac{1}{n}\sum_{\vv\neq\vy}w^{(1)}(\vv,\vy)]=w^{(0)}(\vy)=4\varepsilon^{2d}C_0^2$ and $w^{(0)}(\vy)=w^{(0)}(\vx)$, we can have similar analysis to Theorem \ref{thm:asymptotic-expansion-L-1-up}.With Monte-Carlo approximation and Lemma \ref{thm:asymptotic-expansion-L-1-down-Rd}, one has
\begin{equation}
    \begin{split}
        &\mathbb E\left[(\LL_1^\mathrm{down})_{[x,y]}\right]= \frac{2}{3}\varepsilon^{2}\frac{C_1}{C_0}\int_0^1 [\dd\updelta \vec{\upsilon}(\vec\gamma_\M(t))]^\top \vec\gamma_\M'(t) \dd t + \mathcal O(\varepsilon^4)+\mathcal O(\varepsilon\delta^2)+\mathcal O(n^{-1}\varepsilon^2\delta).
    \end{split}
\end{equation}

\section{Proofs of the spectral consistency of down Laplacian}
\label{sec:spectral-consistency-down-lapla}
\subsection{Outline of the proof}

The proof for spectral consistency of 1 down-Lapalcian is outlined in Figure
\ref{fig:proof-flow-down-laplacian}.
Instead of directly showing the consistency of 1 down-Laplacian (dashed line),
one can use the existing spectral consistency of the Laplace-Beltrami operator
$\Delta_0$ to show the spectral consistency of $\Delta_1$.
The first step of the proof is to show the {\em spectral dependency} of down Helmholtzian
$\LL_1^\mathrm{down} = \vec B_1^\top \vec W_0^{-1}\vec B_1 \vec W_1$
to the corresponding graph Laplacian, i.e.,
$\LL = \vec W_0^{-1}\vec B_1 \vec W_1 \vec B_1^\top$;
or more generally, the spectral dependency between $k-1$ up and $k$ down Laplacians.
Proposition 1.2 of \cite{PostO:09} first showed the spectra of $\vec L_1^\mathrm{down}$ and $\vec L_0$
away from 0 agree including multiplicity.
\cite{HorakD.J:13} later pointed out the aforementioned agreement between non-zero spectra could be extended to
$k$ unweighted Laplacian ($\vec L_k^\mathrm{down}$ and $\vec L_{k-1}^\mathrm{up}$).
Here we provide an extension of the results to the weighted $k$ Laplacians.

\begin{lemma}[Spectral dependency of $\LL_k$]
	Let $\mathcal S(\vec A)$ be the non-zero spectrum of matrix $\vec A$. The non-zero spectra of
	$\LL_k^\mathrm{down}$ and $\LL_{k-1}^\mathrm{up}$ agree including multiplicity, i.e.,
	$\mathcal S(\LL_k^\mathrm{down}) = \mathcal S(\LL_{k-1}^\mathrm{up})$.
\label{thm:spectral-dependency-agree-spectrum}
\end{lemma}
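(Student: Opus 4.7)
My plan is to reduce the claim to the standard linear-algebra fact that for any matrices $P,Q$ of compatible dimensions, $PQ$ and $QP$ have the same nonzero spectrum including algebraic multiplicity. The main obstacle is that $\LL_k^{\mathrm{down}}$ and $\LL_{k-1}^{\mathrm{up}}$ are not themselves of this cyclic-product form --- they carry extra weight factors --- so I first need to conjugate each of them into a symmetric form built from a single rectangular matrix.

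To that end, I would introduce the matrix
\[
M \;=\; \vW_{k-1}^{-1/2}\,\vB_k\,\vW_k^{1/2},
\]
which is a rescaled boundary map. Then I would verify by direct substitution the two similarity relations
\[
\vW_k^{-1/2}\,\LL_k^{\mathrm{down}}\,\vW_k^{1/2} \;=\; M^{\!\top} M,
\qquad
\vW_{k-1}^{1/2}\,\LL_{k-1}^{\mathrm{up}}\,\vW_{k-1}^{-1/2} \;=\; M M^{\!\top},
\]
each of which is a one-line computation using $\LL_k^{\mathrm{down}} = \vB_k^\top \vW_{k-1}^{-1}\vB_k \vW_k$ and $\LL_{k-1}^{\mathrm{up}} = \vW_{k-1}^{-1}\vB_k\vW_k\vB_k^\top$. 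Since $\vW_{k-1}$ and $\vW_k$ are positive diagonal by construction, these conjugations are well-defined and preserve the full spectrum with multiplicities.

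Finally, I would invoke the standard fact (seen most cleanly from the SVD of $M$, or from the Sylvester determinant identity $\det(\lambda I - M^\top M) = \lambda^{n_k - n_{k-1}}\det(\lambda I - MM^\top)$ up to a power of $\lambda$) that $M^\top M$ and $MM^\top$ share the same nonzero eigenvalues with the same algebraic (and geometric) multiplicities. Combining this with the two similarity relations above yields
\[
\mathcal S(\LL_k^{\mathrm{down}}) \;=\; \mathcal S(M^\top M) \;=\; \mathcal S(MM^\top) \;=\; \mathcal S(\LL_{k-1}^{\mathrm{up}}),
\]
which is exactly the claim. As a bonus, the eigenvectors of $M^\top M$ and $MM^\top$ for a common nonzero eigenvalue $\lambda$ are related by $\vv \mapsto M\vv / \sqrt{\lambda}$, which after undoing the conjugations gives the explicit correspondence between eigenforms of $\LL_k^{\mathrm{down}}$ and $\LL_{k-1}^{\mathrm{up}}$ that is used in the rest of the consistency argument of Figure \ref{fig:proof-flow-down-laplacian}.
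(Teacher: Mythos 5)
Your proposal follows essentially the same route as the paper: the paper's proof likewise introduces $\tilde{\vB}_k = \vW_{k-1}^{-1/2}\vB_k\vW_k^{1/2}$ (your $M$), writes the symmetrized down- and up-Laplacians as $\tilde{\vB}_k^\top\tilde{\vB}_k$ and $\tilde{\vB}_k\tilde{\vB}_k^\top$, and concludes that the nonzero spectra agree (phrased there via intertwining relations and isomorphisms between the images, which is the same content as your $M^\top M$ versus $MM^\top$ fact). One small correction: with $\LL_k^{\mathrm{down}} = \vB_k^\top\vW_{k-1}^{-1}\vB_k\vW_k$, the conjugation that produces $M^\top M$ is $\vW_k^{1/2}\LL_k^{\mathrm{down}}\vW_k^{-1/2}$, not $\vW_k^{-1/2}\LL_k^{\mathrm{down}}\vW_k^{1/2}$ as you wrote --- the exponents in your first displayed relation are swapped (compare the paper's convention $\LL_1^s = \vW_1^{1/2}\LL_1\vW_1^{-1/2}$); your second relation is correct as written, and the fix does not affect the rest of the argument.
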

Proof of the lemma can be found in Supplement \ref{sec:full-thm-spectral-dependency} as well as
corollaries on finding the eigenvector of $\vL_{k-1}^\mathrm{up}$ from $\vL_{k}^\mathrm{down}$ (or vice-versa).
This lemma points out that the non-zero spectrum of $\LL_k^\mathrm{down}$ is identical to the non-zero spectrum
of $\LL_{k-1}^\mathrm{up}$. It indicates that the down graph Helmholtzian $\LL_1^\mathrm{down}$ is
consistent if the corresponding {\em random walk} Laplacian
$\LL = \vec{I}_n - \vec{D}^{-1}\vec{K} = \vec W_0^{-1}\vec B_1\vec W_1 \vec{B}_1^\top$ is consistent,
with $\vec W_0$, $\vec W_1$ constructed from $\vec W_2$ as discussed in Section \ref{sec:algorithms}.
The next Proposition  investigates the consistency of $\LL$.

\begin{proposition}[Consistency of the random walk Laplacian $\LL$ with kernel \eqref{eq:vr-kernel-general-form}]
Under Assumption \ref{assu:manifold-data-assu}--\ref{assu:kernel-assu},
same spectral consistency result as in Theorem 5 of \cite{BerryT.S:19} can be obtained for
the corresponding
$\LL = \vec W_0^{-1}\vec B_1\vec W_1 \vec{B}_1^\top$,
with weights calculated by $\vec W_k = \diag(|\vec B_{k+1}| \vec W_{k+1}\vec{1}_{n_{k+1}})$ for $k = 0, 1$.
\label{thm:consistency-L0-with-w1}
\end{proposition}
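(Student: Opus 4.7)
The plan is to reduce the statement to the known spectral consistency of the random walk graph Laplacian with a standard scalar kernel, by showing that the triangle-derived edge weights $[\vW_1]_{ee}$ behave asymptotically like evaluations of a new exponentially decaying kernel on the pair of endpoints of $e$. Concretely, for an edge $e=[x,y]$,
\begin{equation*}
[\vW_1]_{[x,y],[x,y]} \;=\; \sum_{z:\,[x,y,z]\in T} w^{(2)}(\vec x,\vec y,\vec z) \;=\; \kappa(\vec x,\vec y)\sum_{z:\,[x,y,z]\in T}\kappa(\vec x,\vec z)\,\kappa(\vec y,\vec z),
\end{equation*}
so after the inner sum is processed by a Monte Carlo / ratio estimator argument, the effective symmetric edge kernel is, up to $n$- and $\varepsilon$-scaling,
\begin{equation*}
\tilde\kappa_\varepsilon(\vec x,\vec y) \;\coloneqq\; \kappa(\vec x,\vec y)\cdot w^{(1)}(\|\vec x-\vec y\|),\qquad w^{(1)}(\|\vec x-\vec y\|)=\int_{\rrr^d}\kappa(\vec z,\vec x)\kappa(\vec z,\vec y)\,\dz.
\end{equation*}
The first step is thus to make this identification rigorous.

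The execution would go in four steps. First I would apply the same localization / tangent-plane change-of-variables machinery used in Lemma \ref{thm:error-bound-localization} and Lemma \ref{thm:coordinate-change-errors} to the unnormalized sum $n^{-1}\sum_z w^{(2)}(\vec x,\vec y,\vec z)$; this gives an asymptotic expansion $n^{-1}[\vW_1]_{[x,y]}=\varepsilon^{d}\tilde\kappa_\varepsilon(\vec x,\vec y)+\mathcal O(\varepsilon^{d+2})+\mathcal O(\delta^2)$, pointwise in $\vec x,\vec y$. Second, I would verify that $\tilde\kappa_\varepsilon$ satisfies Assumption \ref{assu:kernel-assu}: the product of two exponentially decaying $C^3$ kernels is an exponentially decaying $C^3$ kernel (the convolution $w^{(1)}$ inherits smoothness and decay from $\kappa$ by dominated convergence). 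Third, I would substitute this into the definition of $\LL=\vec W_0^{-1}\vec B_1\vec W_1\vec B_1^\top$ and recognize that
\begin{equation*}
[\LL f]_x \;=\; \frac{\sum_{y\sim x}[\vW_1]_{[x,y]}(f_x-f_y)}{\sum_{y\sim x}[\vW_1]_{[x,y]}},
\end{equation*}
which is precisely the random-walk graph Laplacian built from the kernel matrix $K_{xy}=[\vW_1]_{[x,y]}$. Finally, I would invoke Theorem~5 of \cite{BerryT.S:16} (or the analogous statement in \cite{CoifmanRR.L:06}) applied to the effective kernel $\tilde\kappa_\varepsilon$ to conclude spectral convergence of $\LL$ to (a constant multiple of) $\Delta_0$.

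The routine part is the pointwise expansion of the edge weights, which is a direct specialization of the triangle-kernel calculation already performed for $\LL_1^{\mathrm{up}}$. The main obstacle, and the reason the reduction is not immediate, is twofold. First, the random variables $[\vW_1]_{[x,y]}$ and $[\vW_0]_{xx}$ are not independent: both are built from the \emph{same} sample $\vX$, and each $[\vW_1]_{[x,y]}$ is itself an $n$-sample statistic. So one needs a two-level concentration argument, uniform in $\vec x,\vec y$, to transfer the existing large-$n$ spectral result, which assumes i.i.d.\ pair kernel evaluations, to our case. I would handle this by a standard Bernstein / Hoeffding bound applied first to the inner sum (over the triangle vertex $z$) conditioned on $\vec x,\vec y$, then to the outer structure, absorbing the extra $\mathcal O(n^{-1/2})$ error into the bias terms already present in Berry--Sauer. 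Second, one must check that the VR-complex indicator $\mathds 1_{[x,y,z]\in T}$ does not distort the effective kernel; but since $\kappa$ already has exponential decay beyond scale $\varepsilon$ and $\delta=\mathcal O(\varepsilon^{2/3})\gg\varepsilon$, the truncation introduces only exponentially small terms, absorbed into $\mathcal O(\varepsilon^3)$ as in Lemma \ref{thm:error-bound-localization}. Once these two points are settled, the proposition follows directly from the cited spectral consistency theorem applied to $\tilde\kappa_\varepsilon$.
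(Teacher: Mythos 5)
Your proposal follows essentially the same route as the paper: both reduce the claim to Theorem 5 of \cite{BerryT.S:16} by observing that $\LL$ is the random-walk graph Laplacian whose edge weights are asymptotically the pairwise kernel $\kappa(\vx,\vy)\,w^{(1)}(\|\vx-\vy\|)$, and then verifying that this effective kernel retains the exponential decay required by the cited consistency result. The only notable difference is in execution: the paper establishes the decay by upper-bounding the convolution integral with the indicator-kernel (spherical-cap) expression of Lemma \ref{thm:integral-form-constant} and absorbing the resulting $\mathcal O(\varepsilon^2)$ remainder into the $\delta^2$ term of the graph-Laplacian expansion, whereas you verify the smoothness and decay of $\tilde\kappa_\varepsilon$ directly and, in addition, explicitly flag the two-level sampling dependence of $[\vW_1]_{[x,y]}$ --- a concentration issue the paper's proof leaves implicit.
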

\begin{proofsk}
We first investigate the scenario when $w^{(2)}(\vecxyz)$ is constant, i.e.,
$w^{(2)}(\vecxyz) = \mathds{1}\left(\|\vec x-\vec y\|<\varepsilon\right)\mathds{1}\left(\|\vec x-\vec y\|<\varepsilon\right)\mathds{1}\left(\|\vec x-\vec y\|<\varepsilon\right)$.
If $\kappa(\cdot)$ has exponential decay, one can show that the corresponding
$w^{(1)}(\vecxy)$ has exponential decay, implying the consistency of
\cite{BerryT.S:19}. The above analysis can be naturally extended to general kernel $\kappa(\cdot)$
for $\kappa(\cdot)$ is upper bounded by the indicator kernel.
\end{proofsk}

The last part of the proof is to show the agreement in the spectra of the continuous
operators $\Delta_0$ and $\Delta_1$.
It was shown by using {\em Courant-Fischer-Weyl min-max principle} on $\Delta_1$
that the non-zero spectrum of $\Delta_1^\mathrm{down}$
(also known as the spectrum of the {\em co-exact}/curl 1-form) is a copy of non-zero
eigenvalues of $\Delta_0$ \cite{DodziukJ.M:95,ColboisB:06}.
With the right arrow completed in Figure \ref{fig:proof-flow-down-laplacian},
the down Helmholtzian $\LL_1^\mathrm{down}$ is hence
shown to converge {\em spectrally} to the spectrum of
$\Delta_1^\mathrm{down} = \mathsf{d}_0\updelta_1$.

\subsection{Spectral dependency and related corollaries}
\label{sec:full-thm-spectral-dependency}
\begin{proofof}{Lemma \ref{thm:spectral-dependency-agree-spectrum}}
From \cite{SchaubMT.B.H+:20}, the spectra of the {\em random walk}
$k$-Laplacian and of the symmetrized $k$-Laplacian are identical,
implying that one can study the spectrum of $\LL_k^s$ (symmetric)
instead of $\LL_k$. Following the proof of \cite{PostO:09}, one has
$\vW_{k-1}^{-1/2}\vB_k\vW_k^{1/2} \LL_k^{s,\mathrm{down}} =
\LL_{k-1}^{s,\mathrm{up}}\vW_{k-1}^{-1/2}\vB_k\vW_k^{1/2}$ and
$\vW_k^{1/2}\vB_k^\top \vW_{k-1}^{-1/2}\LL_{k-1}^{s,\mathrm{up}} =
\LL_k^{s,\mathrm{down}}\vW_k^{1/2}\vB_k^\top \vW_{k-1}^{-1/2}$.  This
implies that the mapping between the images of
$\LL_k^{s,\mathrm{down}}$ and $\LL_{k-1}^{s,\mathrm{up}}$ are
isomorphisms. In mathematical terms, if ${\tilde \vB}_k =
\vW_{k-1}^{-1/2}\vB_k\vW_k^{1/2}: \img(\LL_k^{s,\mathrm{down}}) \to
\img(\LL_{k-1}^{s,\mathrm{up}})$ and ${\tilde \vB}^*_k =
\vW_k^{1/2}\vB_k^\top \vW_{k-1}^{-1/2}:
\img(\LL_{k-1}^{s,\mathrm{up}}) \to \img(\LL_{k}^{s,\mathrm{down}})$,
we have ${\tilde \vB}_k$ and ${\tilde \vB}^*_{k}$ are isomorphisms.
Since $\LL_k^{s,\mathrm{down}} = {\tilde \vB}^*_k{\tilde \vB}_k$ and
$\LL_{k-1}^{s,\mathrm{up}} = {\tilde \vB}_k{\tilde \vB}^*_k$, the
isomorphisms of two operator implies
$\dim(\img(\LL_k^{s,\mathrm{down}})) =
\dim(\img(\LL_{k-1}^{s,\mathrm{up}}))$ and $\mathcal
S(\LL_k^\mathrm{down}) = \mathcal S(\LL_{k-1}^\mathrm{up})$. This
completes the proof.
\end{proofof}

Below are some corollaries of Lemma
\ref{thm:spectral-dependency-agree-spectrum} which are not used in our analysis but are useful in practice.
These lemmas connect the eigenvectors of $\vL_k$ (or $\LL_k$) with the eigenvectors of $\vL_{k-1}$ (or $\LL_{k-1}$).

\begin{corollary}[Eigenvectors of $\vec{L}_k$ and $\vL_{k-1}$]
    \begin{enumerate}[label=\underline{C.\Roman*}]
        \item []
        \item \label{enu:thm-spec-depen-unnorm-lifting} Let $\phi_{k-1}\in\mathbb R^{n_{k-1}}$ be the eigenvector of $\vec{L}_{k-1}$ with
        eigenvalues $\lambda$, then $\vec{B}_k^\top\phi_{k-1}$ is the eigenvector of $\vec{L}^\text{down}_{k}$ with same eigenvalue.
        \item \label{enu:thm-spec-depen-unnorm-droping} Let $\phi_k\in\mathbb R^{n_k}$ be the eigenvector of $\vec{L}_k$ with
        eigenvalues $\lambda$, then $\vec{B}_k\phi_k$ is the eigenvector of $\vec{L}^\text{up}_{k-1}$ with same eigenvalue.
    \end{enumerate}
    \label{thm:spectral-dependency-unnorm}
\end{corollary}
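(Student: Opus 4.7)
\begin{proofsk}
The plan is to reduce both statements to simple intertwining relations between the boundary maps and the Laplacians, using the fundamental identity $\vec{B}_{k-1}\vec{B}_k = 0$ (boundary of a boundary is zero), which was already noted in Section \ref{sec:application-laplacian}.

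For part \ref{enu:thm-spec-depen-unnorm-lifting}, I would establish the identity $\vec{L}_k^{\mathrm{down}}\vec{B}_k^\top = \vec{B}_k^\top \vec{L}_{k-1}$. Expanding the right-hand side using $\vec{L}_{k-1} = \vec{B}_{k-1}^\top \vec{B}_{k-1} + \vec{B}_k \vec{B}_k^\top$ gives
\begin{equation*}
\vec{B}_k^\top \vec{L}_{k-1} = \vec{B}_k^\top \vec{B}_{k-1}^\top \vec{B}_{k-1} + \vec{B}_k^\top \vec{B}_k \vec{B}_k^\top.
\end{equation*}
The first term vanishes because $\vec{B}_k^\top \vec{B}_{k-1}^\top = (\vec{B}_{k-1}\vec{B}_k)^\top = \vec{0}$. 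What remains is exactly $\vec{L}_k^{\mathrm{down}}\vec{B}_k^\top = \vec{B}_k^\top \vec{B}_k \vec{B}_k^\top$. Applying this identity to $\phi_{k-1}$ and using $\vec{L}_{k-1}\phi_{k-1} = \lambda\phi_{k-1}$ yields $\vec{L}_k^{\mathrm{down}}(\vec{B}_k^\top \phi_{k-1}) = \lambda(\vec{B}_k^\top \phi_{k-1})$.

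Part \ref{enu:thm-spec-depen-unnorm-droping} is symmetric. I would show $\vec{L}_{k-1}^{\mathrm{up}}\vec{B}_k = \vec{B}_k \vec{L}_k$ by expanding $\vec{L}_k = \vec{B}_k^\top \vec{B}_k + \vec{B}_{k+1}\vec{B}_{k+1}^\top$, so that
\begin{equation*}
\vec{B}_k \vec{L}_k = \vec{B}_k \vec{B}_k^\top \vec{B}_k + \vec{B}_k \vec{B}_{k+1}\vec{B}_{k+1}^\top = \vec{B}_k \vec{B}_k^\top \vec{B}_k = \vec{L}_{k-1}^{\mathrm{up}}\vec{B}_k,
\end{equation*}
where the middle equality again uses $\vec{B}_k \vec{B}_{k+1} = \vec{0}$. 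Applying to $\phi_k$ gives the result.

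The only subtle point — and the main thing to mention — is non-triviality: the statement is vacuous unless $\vec{B}_k^\top \phi_{k-1} \neq 0$ (resp.\ $\vec{B}_k \phi_k \neq 0$). For $\lambda > 0$, this follows from the Hodge decomposition: if $\vec{B}_k^\top \phi_{k-1} = 0$, then $\vec{L}_{k-1}^{\mathrm{up}}\phi_{k-1} = 0$, so $\lambda\phi_{k-1} = \vec{L}_{k-1}^{\mathrm{down}}\phi_{k-1}$, which places $\phi_{k-1} \in \img(\vec{B}_{k-1}^\top)$ and is consistent only with an eigenvalue carried by $\vec{L}_{k-1}^{\mathrm{down}}$; the corresponding branch of the spectrum is matched with $\vec{L}_{k-2}^{\mathrm{up}}$ via Lemma \ref{thm:spectral-dependency-agree-spectrum} rather than with $\vec{L}_k^{\mathrm{down}}$. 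Thus the map $\phi_{k-1}\mapsto \vec{B}_k^\top \phi_{k-1}$ is a genuine bijection between the $\lambda$-eigenspace of $\vec{L}_{k-1}^{\mathrm{up}}$ and that of $\vec{L}_k^{\mathrm{down}}$, recovering the multiplicity agreement in Lemma \ref{thm:spectral-dependency-agree-spectrum}. No step involves heavy machinery, so no real obstacle is anticipated beyond bookkeeping the non-triviality argument.
\end{proofsk}
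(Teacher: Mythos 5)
Your proposal is correct and follows essentially the same route as the paper: both parts are proved by the intertwining identities $\vec{L}_k^{\mathrm{down}}\vec{B}_k^\top = \vec{B}_k^\top\vec{L}_{k-1}$ and $\vec{L}_{k-1}^{\mathrm{up}}\vec{B}_k = \vec{B}_k\vec{L}_k$, each obtained by killing one cross term with $\vec{B}_{k-1}\vec{B}_k = 0$ (resp.\ $\vec{B}_k\vec{B}_{k+1}=0$). Your added remark on non-triviality of $\vec{B}_k^\top\phi_{k-1}$ is a worthwhile observation the paper leaves implicit, but it does not change the argument.
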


\begin{proof}
For \ref{enu:thm-spec-depen-unnorm-lifting},
Let $\phi_{k-1}$ be the non-trivial eigenfunction of $\vec{L}_{k-1}$ with eigenvalue $\lambda$, this implies $\vec{L}_{k-1}\phi_{k-1} = \lambda\phi_{k-1}$, therefore
\begin{equation}
    \lambda \vec{B}_k^\top \phi_{k-1} = \vec{B}_k^\top \vec{L}_{k-1} \phi_{k-1} = \left(\cancel{\vec{B}_k^\top\vec{B}_{k-1}^\top}\vec{B}_{k-1} + \vec{B}_k^\top\vec{B}_k\vec{B}_k^\top\right) \phi_{k-1} = \vec{L}_k^\text{down}\vec{B}_k^\top \phi_{k-1}.
    \nonumber
\end{equation}

Therefore, $\vec{B}_k^\top\phi_{k-1}$ will be the eigenfunction of $\vec{L}_k^\text{down}$.
Similarly, for \ref{enu:thm-spec-depen-unnorm-droping},
\begin{equation}
    \lambda\vec{B}_k\phi_k = \vec{B}_k\vec{L}_k\phi_k = \left(\vec{B}_k\vec{B}_k^\top\vec{B}_k + \cancel{\vec{B}_k \vec{B}_{k+1}}\vec{B}_{k+1}^\top\right)\phi_k = \vec{L}_{k-1}^\text{up}\vec{B}_k\phi_k.
    \nonumber
\end{equation}
This completes the proof.
\end{proof}

\begin{corollary}[Eigenvectors of $\vec{\mathcal L}_k$ and $\vec{\mathcal L}_{k-1}$]
    \begin{enumerate}[label=\underline{C.\Roman*}]
        \item []
        \item \label{enu:thm-spec-depen-norm-lifting} Let $\phi_{k-1}\in\mathbb R^{n_{k-1}}$ be the eigenvector of $\vec{\mathcal L}_{k-1}$ with
        eigenvalues $\lambda$, then $\vec{B}_k^\top\phi_{k-1}$ is the eigenvector of $\vec{\mathcal L}^\text{down}_{k}$ with same eigenvalue.
        \item \label{enu:thm-spec-depen-norm-droping} Let $\phi_k\in\mathbb R^{n_k}$ be the eigenvector of $\vec{\mathcal L}_k$ with
        eigenvalues $\lambda$, then $\vec{W}_{k-1}^{-1}\vec{B}_k \vec{W}_k\phi_k$ is the eigenvector of $\vec{\mathcal L}^\text{up}_{k-1}$ with same eigenvalue.
    \end{enumerate}
    \label{thm:spectral-dependency-norm}
\end{corollary}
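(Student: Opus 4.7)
The plan is to mimic the two-line argument used for Corollary \ref{thm:spectral-dependency-unnorm}, but with careful bookkeeping of the weight matrices that appear in the random walk Laplacian $\vec{\mathcal L}_k = \vec{B}_k^\top\vec{W}_{k-1}^{-1}\vec{B}_k\vec{W}_k + \vec{W}_k^{-1}\vec{B}_{k+1}\vec{W}_{k+1}\vec{B}_{k+1}^\top$. The two cornerstone identities I will use repeatedly are the chain-complex relation $\vec{B}_k\vec{B}_{k+1}=0$ (equivalently $\vec{B}_{k+1}^\top\vec{B}_k^\top=0$) and the definition of the down/up components of $\vec{\mathcal L}_k$.

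For part \ref{enu:thm-spec-depen-norm-lifting}, I would start from $\vec{\mathcal L}_{k-1}\phi_{k-1}=\lambda\phi_{k-1}$ and left-multiply by $\vec{B}_k^\top$. On the left, the down-term of $\vec{\mathcal L}_{k-1}$, namely $\vec{B}_{k-1}^\top\vec{W}_{k-2}^{-1}\vec{B}_{k-1}\vec{W}_{k-1}$, is annihilated because $\vec{B}_k^\top\vec{B}_{k-1}^\top=0$; the up-term $\vec{W}_{k-1}^{-1}\vec{B}_k\vec{W}_k\vec{B}_k^\top$ survives and yields $\vec{B}_k^\top\vec{W}_{k-1}^{-1}\vec{B}_k\vec{W}_k\vec{B}_k^\top\phi_{k-1}=\vec{\mathcal L}_k^{\mathrm{down}}(\vec{B}_k^\top\phi_{k-1})$. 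Combined with the right-hand side this gives $\vec{\mathcal L}_k^{\mathrm{down}}(\vec{B}_k^\top\phi_{k-1})=\lambda\,\vec{B}_k^\top\phi_{k-1}$.

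For part \ref{enu:thm-spec-depen-norm-droping}, I would start from $\vec{\mathcal L}_k\phi_k=\lambda\phi_k$ and left-multiply by $\vec{W}_{k-1}^{-1}\vec{B}_k\vec{W}_k$. Now the up-term of $\vec{\mathcal L}_k$ vanishes because the product telescopes to $\vec{W}_{k-1}^{-1}\vec{B}_k(\vec{W}_k\vec{W}_k^{-1})\vec{B}_{k+1}\vec{W}_{k+1}\vec{B}_{k+1}^\top=\vec{W}_{k-1}^{-1}(\vec{B}_k\vec{B}_{k+1})\vec{W}_{k+1}\vec{B}_{k+1}^\top=0$, while the down-term $\vec{B}_k^\top\vec{W}_{k-1}^{-1}\vec{B}_k\vec{W}_k$ combines with the prefactor to give $\vec{\mathcal L}_{k-1}^{\mathrm{up}}$ acting on $\vec{W}_{k-1}^{-1}\vec{B}_k\vec{W}_k\phi_k$. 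The right-hand side produces $\lambda\,\vec{W}_{k-1}^{-1}\vec{B}_k\vec{W}_k\phi_k$, as required.

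Neither step involves any real obstacle beyond index-juggling; the main care-point will be confirming that the asserted lifted/dropped vectors are \emph{nonzero} when $\lambda\neq 0$, so that they genuinely are eigenvectors rather than trivial zeros. This non-degeneracy is exactly the content of the isomorphism of $\tilde{\vec{B}}_k$ established in the proof of Lemma \ref{thm:spectral-dependency-agree-spectrum}: on the non-zero spectrum the maps $\vec{B}_k^\top$ and $\vec{W}_{k-1}^{-1}\vec{B}_k\vec{W}_k$ are invertible between the relevant image subspaces (using $\vec{W}_k^{1/2}$-rescaled versions and the symmetrized Laplacian $\vec{\mathcal L}_k^s$). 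Thus invoking Lemma \ref{thm:spectral-dependency-agree-spectrum} closes both parts without additional work.
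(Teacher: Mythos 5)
Your proposal is correct and follows essentially the same route as the paper's own proof: left-multiply the eigenvalue equation by $\vec{B}_k^\top$ (respectively $\vec{W}_{k-1}^{-1}\vec{B}_k\vec{W}_k$), kill the complementary term via $\vec{B}_k\vec{B}_{k+1}=0$, and recognize the surviving factor as $\vec{\mathcal L}_k^{\mathrm{down}}$ (respectively $\vec{\mathcal L}_{k-1}^{\mathrm{up}}$) acting on the transported vector. Your added remark that the lifted/dropped vectors must be checked to be nonzero for $\lambda\neq 0$ is a point the paper's proof leaves implicit, and invoking the isomorphism from Lemma \ref{thm:spectral-dependency-agree-spectrum} is a reasonable way to close it.
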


\begin{proof}
For \ref{enu:thm-spec-depen-norm-lifting},
since $\phi_{k-1}$ is the non-trivial eigenfunction of $\vec{\mathcal L}_{k-1}$ with eigenvalue $\lambda$, we have
\begin{equation}
    \lambda\vec{B}^\top_k\phi_{k-1} = \vec{B}_k^\top \vec{\mathcal L}_{k-1}\phi = \left(\cancel{\vec{B}_k^\top \vec{B}_{k-1}^\top} \vec{W}_{k-2}^{-1}\vec{B}_{k-1}\vec{W}_{k-1}  + \vec{B}_k^\top \vec{W}_{k-1}^{-1} \vec{B}_{k} \vec{W}_{k} \vec{B}_{k}^\top\right) \phi = \vec{\mathcal L}_k^\text{down} \vec{B}_k^\top \phi_k.
    \nonumber
\end{equation}

For the case in \ref{enu:thm-spec-depen-norm-droping}, we need some little algebraic
tricks to get rid of the weights $\vec{W}$ before boundary operator.
\begin{equation}
\begin{split}
    \lambda \vec{W}_{k-1}^{-1}\vec{B}_k \vec{W}_k\phi_k \,&\,= \vec{W}_{k-1}^{-1}\vec{B}_k \vec{W}_k\vec{\mathcal L}_k\phi_k \\
    \,&\,= \Big(\underbrace{\vec{W}_{k-1}^{-1}\vec{B}_k \vec{W}_k \vec{B}_k^\top}_{\vec{\mathcal L}_{k-1}^\text{up}} \vec{W}_{k-1}^{-1} \vec{B}_k \vec{W}_{k} + {\vec{W}_{k-1}^{-1}\underbrace{\vec{B}_k \cancel{\vec{W}_k \vec{W}_{k}^{-1}} \vec{B}_{k+1}}_{=0} \vec{W}_{k+1} \vec{B}_{k+1}^\top}\Big)\phi_k \\
    \,&\,= \vec{\mathcal L}_{k-1}^\text{up}\vec{W}_{k-1}^{-1}\vec{B}_k \vec{W}_k\phi_k.
\end{split}
\nonumber
\end{equation}
This completes the proof.
\end{proof}

\subsection{Proof of Proposition \ref{thm:consistency-L0-with-w1}}
\label{sec:proof-consistency-L0-with-w1}
We first start with the following lemma that derives the closed form of $w^{(1)}(\vecxy)$
when the weight on the triangles is an indicator function.
Note that in the construction below, we ignore the $\kappa(\vecxy)$ factor,
i.e., we assume that $w^{(2)}(\vecxyz) = \kappa(\vec x, \vec z)\kappa(\vec y, \vec z)$,
for a more concise notation; the $\kappa(\vecxy)$ factor will be added back later.

\begin{lemma}[The integral form of constant triangular weight]
Let $\varepsilon$ be a bandwidth parameter.
Further assume a constant triangular weight, i.e.,
$w^{(2)}(\vecxyz) = \mathds{1}(\|\vz-\vx\| < \varepsilon)\mathds{1}(\|\vz-\vy\| < \varepsilon)$,
then
\begin{equation}
\begin{split}
	w^{(1)}(\vecxy) \,&=\, \mathds{1}(\|\vx-\vy\|<\delta) \int_{\vz\in\M} w^{(2)}(\vecxyz) \dd\vz \\
	\,&=\, \mathds{1}(\|\vx-\vy\|<\delta) C\cdot I_{1 - \frac{\|\vec x-\vec y\|^2}{4(\gamma\varepsilon)^2}}\left(\frac{d+1}{2}, \inv{2}\right) + \mathcal O(\varepsilon^2).
\end{split}
\label{eq:spherical-cap}
\end{equation}
Where $C = \varepsilon^d \cdot p\cdot C_d$,
$C_d$ is the volume of unit $d$-ball, i.e., $C_d = \pi^{d/2} / \Gamma(d/2 + 1)$, and $I_{\vec{x}}(a, b)$
is the regularized incomplete beta function.
\label{thm:integral-form-constant}
\end{lemma}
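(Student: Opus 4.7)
The quantity $w^{(1)}(\vecxy)$ (up to the edge indicator $\mathds{1}(\|\vx-\vy\|<\delta)$) equals the uniform density $p$ times the $d$-volume of the lens-shaped region $B_\varepsilon(\vx)\cap B_\varepsilon(\vy)\cap \M$. My plan is to reduce this to a pure Euclidean calculation on the tangent plane $\T_\vx\M$ and then invoke the classical spherical-cap volume formula in $\rrr^d$.

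For the reduction to the tangent plane, I invoke Lemma \ref{thm:coordinate-change-errors}. Under the change of variables $\vz\mapsto \vz_p\in\T_\vx\M$, the Euclidean distances satisfy $\|\vz-\vx\|^2=\|\vz_p\|^2+\mathcal O(\varepsilon^4)$ and $\|\vz-\vy\|^2=\|\vz_p-\vy_p\|^2+\mathcal O(\varepsilon^4)$ by \eqref{eq:lemma-coordinate-change-metrics}, and the volume element picks up a factor $1+Q_{\vx,2}(\vz_p)+\mathcal O(\varepsilon^3)$ by \eqref{eq:lemma-coordinate-change-volumes}. Consequently the two indicators shift their boundaries by $\mathcal O(\varepsilon^3)$, which perturbs the lens volume by an amount of order lower than $\varepsilon^d$, and the Jacobian correction contributes a relative $\mathcal O(\varepsilon^2)$ term. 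Up to this $\mathcal O(\varepsilon^2)$ relative error, the problem becomes computing the Euclidean volume
\[
V=\int_{\rrr^d}\mathds{1}(\|\vz_p\|<\varepsilon)\,\mathds{1}(\|\vz_p-\vy_p\|<\varepsilon)\,\dd\vz_p.
\]

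For the Euclidean calculation, the reflection symmetry across the perpendicular bisector of $\{0,\vy_p\}$ gives $V=2V_{\text{cap}}$, where $V_{\text{cap}}$ is the cap of a ball of radius $\varepsilon$ cut off at distance $a=\|\vy_p\|/2$. Slicing by hyperplanes orthogonal to the cap axis,
\[
V_{\text{cap}}=C_{d-1}\int_a^\varepsilon(\varepsilon^2-t^2)^{(d-1)/2}\,\dd t
=C_{d-1}\varepsilon^d\int_0^{\arccos(a/\varepsilon)}\sin^d\!\phi\,\dd\phi
\]
after $t=\varepsilon\cos\phi$. The substitution $u=\sin^2\phi$ turns the trigonometric integral into a multiple of the incomplete beta integral $B\!\bigl(\sin^2\alpha;\tfrac{d+1}{2},\tfrac12\bigr)$ with $\sin^2\alpha=1-(a/\varepsilon)^2$, and the $\Gamma$-identity $C_{d-1}\,B\!\bigl(\tfrac{d+1}{2},\tfrac12\bigr)=C_d$ yields
\[
V=C_d\,\varepsilon^d\,I_{1-\|\vy_p\|^2/(4\varepsilon^2)}\!\left(\tfrac{d+1}{2},\tfrac12\right).
\]

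Since $\|\vy_p\|=\|\vx-\vy\|\bigl(1+\mathcal O(\varepsilon^2)\bigr)$ again by Lemma \ref{thm:coordinate-change-errors}, the argument $1-\|\vy_p\|^2/(4\varepsilon^2)$ may be replaced by $1-\|\vx-\vy\|^2/(4\varepsilon^2)$ at the same $\mathcal O(\varepsilon^2)$ error level; multiplying by the density $p$ and restoring the edge indicator gives the claimed formula. The main obstacle in the argument is not the manifold-to-tangent reduction, which is essentially boilerplate given Lemma \ref{thm:coordinate-change-errors}, but the explicit cap-volume identity: the careful trigonometric-to-beta substitution and the verification that the resulting normalization reproduces the unit-ball volume $C_d$ are where all of the non-routine bookkeeping lives.
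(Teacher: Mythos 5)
Your proposal is correct and follows essentially the same route as the paper's proof: reduce to the tangent plane via Lemma \ref{thm:coordinate-change-errors} with an $\mathcal O(\varepsilon^2)$ error, write the lens $B_\varepsilon(\vx)\cap B_\varepsilon(\vy)$ as twice a spherical cap of height $\varepsilon-\|\vx-\vy\|/2$, and evaluate the cap volume by slicing, the substitution $t=\varepsilon\cos\phi$, $u=\sin^2\phi$, and the identity $C_{d-1}B\left(\frac{d+1}{2},\frac12\right)=C_d$. Your bookkeeping of the factor of $2$ against the $\frac12$ from the $u$-substitution is in fact slightly cleaner than the paper's.
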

\begin{proof}
In the continuous limit, with constant sampling density $p$, we have
\begin{equation*}
\begin{split}
	\,&\pequal\, \varepsilon^{-d}\int_{\vz\in\M} w^{(2)} \dd\vz = \varepsilon^{-d}\int_{\vz\in\rrr^d} w^{(2)} \dd\vz + \mathcal O(\varepsilon^2) \\
    \,&=\, p\cdot 2\cdot \mathrm{Vol}_\mathrm{cap}\left(\varepsilon - \frac{\|\vx-\vy\|_2}{2}; \varepsilon, d\right) = p\cdot C_d I_{1 - \frac{\|\vx-\vy\|^2}{4\varepsilon^2}}\left(\frac{d+1}{2}, \inv{2}\right).
\end{split}
\end{equation*}

The first equality holds from projecting $\vecxyz$ onto $\T_\vx\M$
and using Lemma \ref{thm:coordinate-change-errors}, $\mathcal O(\varepsilon^2)$
is from $Q_{\vx, 4}(\vz,\vy)$ and $Q_{\vx, 2}(\vz)$ of
\eqref{eq:lemma-coordinate-change-metrics} and
\eqref{eq:lemma-coordinate-change-volumes}, respectively.
Last equality holds because
\begin{equation*}
\begin{split}
    \mathrm{Vol}_{\mathrm{cap}} (h; r, d) \,&=\, \int_0^\phi C_{d-1}r^{d-1}\sin^{d-1} \theta r\sin\theta d\theta = C_{d-1} r^d \int_0^t \nu^{\frac{d-1}{2}}(1-\nu)^{-\inv{2}} d\nu \\
    \,&=\,C_{d-1}r^d B\left(\frac{d+1}{2},\inv{2}\right)I_{(2rh - h^2) /r^2}\left(\frac{d+1}{2},\inv{2}\right) \\
    \,&=\, C_d r^d I_{(2rh - h^2) /r^2}\left(\frac{d+1}{2},\inv{2}\right).
\end{split}
\end{equation*}

And,
\begin{equation}
C_{d-1}\cdot B\left(\frac{d+1}{2},\inv{2}\right) = \frac{\pi^{(d-1)/2}}{\Gamma\left(\frac{d-1}{2}+1 \right)} \frac{\Gamma\left(\frac{d+1}{2}\right)\Gamma(1/2)}{\Gamma\left(\frac{d}{2}+1 \right)} = C_d.
\nonumber
\end{equation}

This completes the proof.
\end{proof}

\begin{proofof}{Proposition \ref{thm:consistency-L0-with-w1}}
It suffices to prove that the corresponding $w^{(1)}(\vecxy)$ has
exponential decay, and the $\mathcal O(\varepsilon^2)$ error term can be
ignored in the asymptotic expansion of graph Laplacian operator.
Let $w^{(1)}_{\mathds{1}}\left(\|x-y\|; r \right)$
be \eqref{eq:spherical-cap}. The integral operator of general
kernel $\kappa$
can be decomposed into two parts, i.e.,
\begin{equation*}
\begin{split}
	w^{(1)}(\vecxy) = \int_\M \kappa(\vx, \vz)\kappa(\vy, \vz) \dd \vz \,&=\, \int\limits_{\substack{z\in\M \\ \max(\|\vz-\vy\|, \|\vz-\vx\|) \leq \varepsilon^\gamma}} \kappa(\vx, \vz)\kappa(\vy, \vz) \dd\vz \\
	\,&+\, \int\limits_{\substack{z\in\M \\ \min(\|\vz-\vy\|, \|\vz-\vx\|) > \varepsilon^\gamma}} \kappa(\vx, \vz)\kappa(\vy, \vz) \dd\vz.
\end{split}
\end{equation*}

With $\min(\|\vz-\vx\|, \|\vz-\vy\|) < \varepsilon^\gamma$,
one has $\kappa(\vx, \vy) \leq \mathds{1}(\|\vx-\vy\| < \varepsilon^\gamma)$. Therefore,
the first term can be bounded by $w^{(1)}_{\mathds{1}}(\vx, \vy; \varepsilon^\gamma)$.
The second term can be bounded by $\mathcal O(\varepsilon^3)$ using Lemma
\ref{thm:error-bound-localization} with $g(\cdot) = 1$. Note that
the result can be generalized to manifold with boundaries by the following.
For the points that is
$\varepsilon^\gamma$ within the boundary $\partial \M$, the above inequality is
still valid, for one can use a modified kernel $\kappa'(\vx, \vy)$ with
$\kappa'(\vx, \vy) = \kappa(\vx, \vy)$ if $\vy\in \M$ and $0$ otherwise.
Putting in $\kappa(\vx, \vy)$, one has,

\begin{equation*}
	w^{(1)}(\vx, \vy) \leq \kappa(\vx, \vy) w^{(1)}_\mathds{1}(\vx, \vy; \varepsilon^\gamma) + \kappa(\vx, \vy)\cdot \mathcal O(\varepsilon^2) \leq C\kappa(\vx, \vy) + \kappa(\vx, \vy)\cdot \mathcal O(\varepsilon^2).
\end{equation*}
Last inequality holds since $w^{(1)}_\mathds{1}(\vx, \vy; \varepsilon^\gamma) \leq C$ from
Lemma \ref{thm:integral-form-constant}.
The above inequality shows that $w^{(1)}(\vecxy)$ can be decomposed into a term that has fast
enough decay and another term which is bounded by $\mathcal O(\varepsilon^2)$.
Note that the graph is built with radius $\delta$, the second order expansion
of the graph Laplacian integral operator \cite{BerryT.S:19} has a $\delta^2$
term, implying that $\kappa(\vecxy)\cdot \mathcal O(\varepsilon^2)$
term can be bounded by $\mathcal O(\varepsilon^2\delta^2) = \mathcal O(\delta^{10/3})$.
Hence, spectral consistency as in Theorem 5 of \cite{BerryT.S:19}
with bias \& variance determined by
$\mathcal O(\delta^{4/3}) = \mathcal O(\varepsilon^2)$ can be achieved.
More specifically, since points are sampled with constant density from
the manifold $\M$, the spectrum of $\LL = \vec W_0^{-1}\vec B_1\vec W_1\vec B_1^\top$
with weight $\vec w_1 = |\vec B_2|\vec w_2$ and $\vec w_0 = |\vec B_1|\vec w_1$ converges
to the spectrum of Laplace-Beltrami operator $\Delta_0$ with bias \& variance in the order of $\mathcal O(\varepsilon^2)$.
\end{proofof}
 \section{Effects of the weights $a,b$ between up/down Laplacians}
\label{sec:discussion-choice-of-a-b}

Even though in Section \ref{sec:consistency-analysis} it was established that for the consistency of $\LL_1$, $a=\inv{4}$ and $b=1$, here we consider the possibility of chosing other non-negative values for $a,b$. This can be useful from a machine learning perspective, allowing one to emphasize either the gradient or the curl subspaces of $\LL_1$. 

From HHD, the space of cochains $\rrr^{n_1}$ can be decomposed into three different
orthogonal subspaces: the image of $\LL_1^\mathrm{down}$
(gradient), the image of $\LL_1^\mathrm{up}$ (curl), and the kernel of both
$\LL_1^\mathrm{down}$ and $\LL_1^\mathrm{up}$ (harmonic). Since these subspaces are
orthogonal to each other, rescaling $\LL_1^\mathrm{up}$ and $\LL_1^\mathrm{down}$ with some
constants $a, b$ will only scale the spectra accordingly without altering the
eigenvectors. We first
investigate the spectrum of the rescaled $\LL_1$ w.r.t. $a, b$ with the following
Corollary.

\begin{corollary}[Spectrum of new $\LL_1$]
The range of the spectra of $\LL_1$ is $\lambda(\LL_1) \in [0, \max(2a, 3b)]$.
\label{thm:spectrum-rescaled-norm-L1}
\end{corollary}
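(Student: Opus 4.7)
The plan is to reduce the spectral bound to separate bounds on the up and down summands via the Hodge decomposition, and then to obtain those two bounds via Lemma~\ref{thm:spectral-dependency-agree-spectrum} and a weighted Cauchy--Schwarz estimate. I work with the symmetrized operator $\LL_1^s = a\,\LL_1^{s,\mathrm{down}} + b\,\LL_1^{s,\mathrm{up}}$, which has the same spectrum as $\LL_1$. The key observation is that, because $\vB_1\vB_2 = 0$, on $\vec{v} \in \img(\vW_1^{1/2}\vB_1^\top)$ one has $\LL_1^{s,\mathrm{up}}\vec{v} = \vW_1^{-1/2}\vB_2\vW_2 (\vB_2^\top\vB_1^\top)\cdot(\cdots) = 0$, and symmetrically $\LL_1^{s,\mathrm{down}}$ annihilates $\img(\vW_1^{-1/2}\vB_2)$. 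Hence the three orthogonal HHD subspaces in \eqref{eq:hhd-equation} are invariant under $\LL_1^s$ and only one summand contributes on each: the spectrum of $\LL_1$ is exactly $\{0\}\cup a\,\mathcal{S}(\LL_1^{\mathrm{down}})\cup b\,\mathcal{S}(\LL_1^{\mathrm{up}})$, with nonnegativity immediate from the PSD form of each summand.

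Next, I use Lemma~\ref{thm:spectral-dependency-agree-spectrum} to transfer each bound to a simpler object. Applied with $k=1$, it gives $\mathcal{S}(\LL_1^{\mathrm{down}}) = \mathcal{S}(\LL_0^{\mathrm{up}}) = \mathcal{S}(\LL_0)$, the random walk graph Laplacian, for which the standard bound is $\lambda_{\max}\le 2$. Applied with $k=2$, it gives $\mathcal{S}(\LL_1^{\mathrm{up}}) = \mathcal{S}(\LL_2^{\mathrm{down}})$, reducing the task to bounding $\lambda_{\max}(\LL_2^{\mathrm{down}}) \le 3$.

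Both Laplacian bounds come from a single Cauchy--Schwarz estimate on the Rayleigh quotient of the symmetrized operator, which is the only technical step. For $\LL_2^{s,\mathrm{down}} = \vW_2^{1/2}\vB_2^\top\vW_1^{-1}\vB_2\vW_2^{1/2}$, substituting $\vec{u}=\vW_2^{1/2}\vec{v}$ turns the Rayleigh quotient into
\[
\frac{\sum_e w_e^{-1}\bigl(\sum_{t\ni e}[\vB_2]_{e,t}\,u_t\bigr)^2}{\sum_t w_t^{-1}u_t^2}.
\]
The construction in Algorithm~\ref{alg:manifold-helmholtzian-learning} enforces $w_e = \sum_{t\ni e} w_t$, so writing $[\vB_2]_{e,t}u_t = \sqrt{w_t}\cdot [\vB_2]_{e,t}u_t/\sqrt{w_t}$ and applying Cauchy--Schwarz yields $(\sum_{t\ni e}[\vB_2]_{e,t}u_t)^2 \le w_e\sum_{t\ni e}u_t^2/w_t$; summing over $e$ and swapping order gives a factor equal to the number of edges per triangle, which is $3$. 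The identical argument for $\LL_0^s$, with $w_v = \sum_{e\ni v} w_e$ and $2$ vertices per edge, produces the bound $2$. Combining everything gives $\lambda(\LL_1)\in[0,\max(2a,3b)]$. I expect no serious obstacle; the only subtlety is keeping track of the weight recurrences so that the Cauchy--Schwarz factor $w_e$ cancels cleanly.
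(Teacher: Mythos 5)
Your proposal is correct and follows essentially the same route as the paper: use the orthogonality of the HHD subspaces so that the spectrum of $\LL_1$ is the union of $a\,\mathcal{S}(\LL_1^{\mathrm{down}})$ and $b\,\mathcal{S}(\LL_1^{\mathrm{up}})$ (plus zero), then bound the two pieces by $2$ and $3$ via Lemma~\ref{thm:spectral-dependency-agree-spectrum}. The only difference is that where the paper simply cites \cite{HorakD.J:13} for $\lambda(\vW_k^{-1}\vB_{k+1}\vW_{k+1}\vB_{k+1}^\top)\in[0,k+2]$, you re-derive the bounds $2$ and $3$ by Cauchy--Schwarz on the Rayleigh quotients using the weight recurrences $w_e=\sum_{t\supset e}w_t$ and $w_v=\sum_{e\ni v}w_e$ --- a correct, self-contained inlining of the cited result.
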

\begin{proof}
    From \cite{HorakD.J:13}, $\lambda(\vW_k^{-1}\vB_{k+1}\vW_{k+1}\vB_{k+1}^\top) \in [0, k+2]$.
    From Lemma \ref{thm:spectral-dependency-agree-spectrum}, one has
    $\mathcal S(\LL_k^\mathrm{down}) = \mathcal S(\LL_{k-1}^\mathrm{up})$.
    Thus we have $\lambda(\vB_1^\top\vW_0^{-1}\vB_1\vW_1)\in[0, 2]$ and
    $\lambda (\vW_1^{-1}\vB_2\vW_2\vB_2^\top) \in [0, 3]$.
    From HHD, an eigenvector can only be either curl, gradient, or harmonic flow.
    Thus the non-zero spectrum of $\LL_1$ will simply be the union of two disjoint
    eigenvalue set.
    Since rescaling rescaling the matrix by a constant will only change the scales of
    the eigenvalues, the union of the (rescaled) down and up Laplacian will therefore
    be in the range of $[0, \max(2a, 3b)]$.
    This completes the proof.
\end{proof}

Note that by choosing $a = \inv{2}$ and $b = \inv{3}$, the spectra of
$\LL_1$, $\LL_1^\mathrm{down}$, and $\LL_1^\mathrm{up}$ are all
upper bounded by 1.

\begin{figure}[!htb]
    \subfloat[][$a = \inv{4}$, $b = 1$]
    {\includegraphics[width=0.3\linewidth]{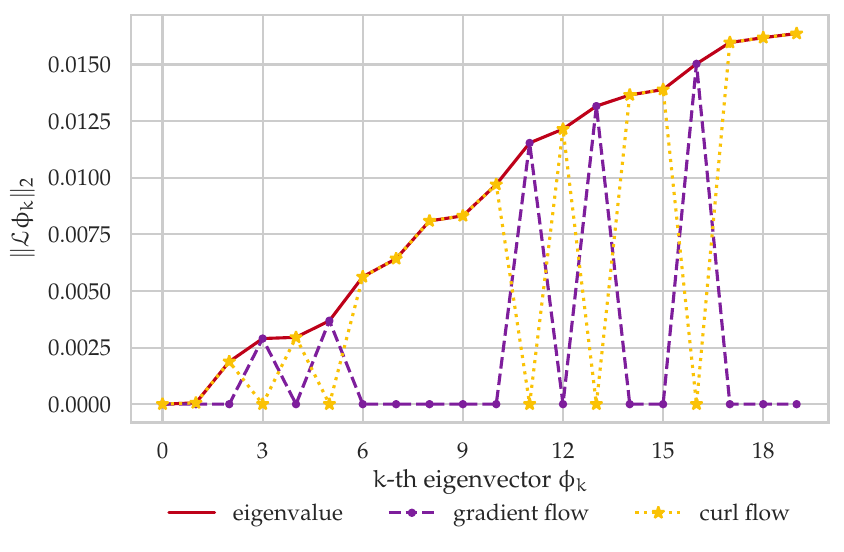}
    \label{fig:hhd-shift-ethanol-1-4}}\hfill
\subfloat[][$a = \inv{2}$, $b = \inv{3}$]
    {\includegraphics[width=0.3\linewidth]{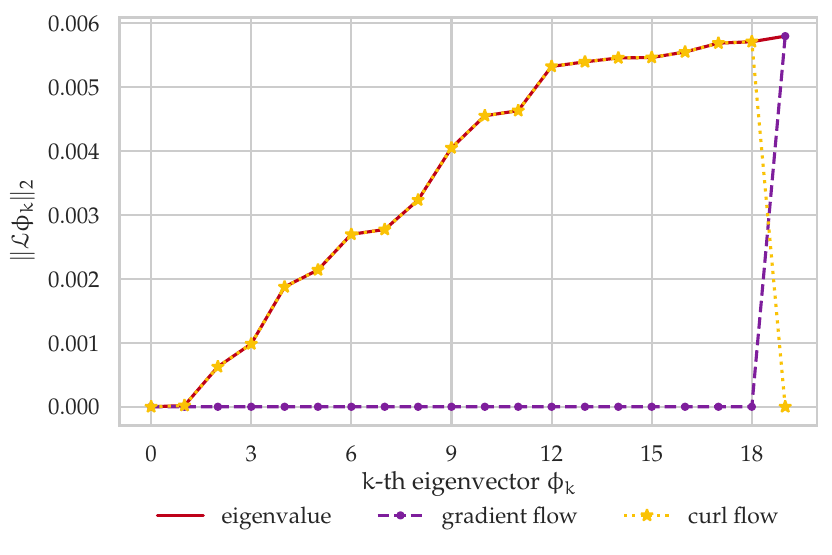}
    \label{fig:hhd-shift-ethanol-3-2}}\hfill
\subfloat[][$a = b =1$]
    {\includegraphics[width=0.3\linewidth]{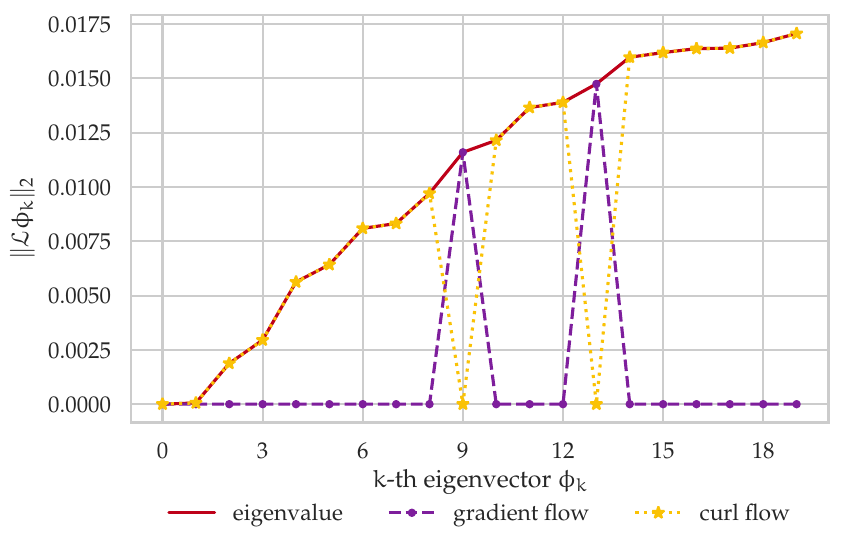}
    \label{fig:hhd-shift-ethanol-1-1}}\hfill
\caption{Shift in the rankings of the $\LL_1^\mathrm{up}$, $\LL_1^\mathrm{down}$ spectrum with different choices of $a, b$ values for ethanol dataset. The eigenvector corresponds to the third eigenvalue in \protect\subref{fig:hhd-shift-ethanol-1-4} is identical to that corresponds to the 18th in \protect\subref{fig:hhd-shift-ethanol-3-2} and the ninth in \protect\subref{fig:hhd-shift-ethanol-1-1}. Note that the rankings within gradient or curl flows will not change by different choices of $a, b$, which can be shown by comparing the curl flows (yellow) between \protect\subref{fig:hhd-shift-ethanol-1-4}--\protect\subref{fig:hhd-shift-ethanol-1-1}.}
    \label{fig:hhd-shift}
\end{figure}

Based on the discussion above, different choices of $a, b$ constants will shift
the rankings between the curl flows ($\LL_1^\mathrm{up}$) and gradients flows
($\LL_1^\mathrm{down}$). This effect can be seen in Figure
\ref{fig:hhd-shift}, with the first two gradient flows (in purple) in Figure
\ref{fig:hhd-shift-ethanol-1-4} corresponds to the ninth and the thirteenth
eigenvalues in Figure \ref{fig:hhd-shift-ethanol-1-1}.
Considering the case $a = \inv{2}; b=\inv{3}$ when the spectra of
$\LL_1^\mathrm{down}$ and $\LL_1^\mathrm{up}$ are both upper bounded by 1. Since
there are only $n_0 - \beta_0$ non-zero eigenvalues in $\LL_1^\mathrm{down}$
(\# of edges needed to form a spanning tree) compared to
$n_1 - (n_0 - \beta_0) - \beta_1$ non-zero eigenvalues in $\LL_1^\mathrm{up}$
(\# of independent triangles),
the density of the gradient flows
will be $\mathcal O(n_1 / n_0)$ less than those of curl flow.
That is to say, we will observe more curl
flows than gradients flow for a fixed number of eigenvalues as shown in
Figure \ref{fig:hhd-shift-ethanol-3-2}. Choosing a smaller $a$ value increases
the density of the gradient flow in the low frequency region (see
a smaller choice of $a$ in \ref{fig:hhd-shift-ethanol-1-1} and an even smaller $a$
in \ref{fig:hhd-shift-ethanol-1-4})
It creates a more balanced distribution of flows in the low frequency regime.
Not that the choice $a = \inv{4}; b = 1$ creates the most balanced spectrum
within the first 20 eigenvalues, as shown in Figure \ref{fig:hhd-shift}.

One can also analyze the random walk in the finite simplicial complex
as in \cite{SchaubMT.B.H+:20}. By letting $a = \inv{2}; b = \inv{3}$ with
$\vW_2 = \vI_{n_2}$, they showed the constructed Helmholtzian
corresponds to a finite random walk with equal probability ($p =
\inv{2}$) of performing up (diffuse to upper adjacent edges by common
triangle) and down (diffuse to lower adjacent edges by common nodes)
random walk. One can easily extend their analysis to non-constant
weights $\vW_2$ and different $a, b$ values. This results in a random
walk with probability $\frac{2a}{2a + 3b}$ in performing lower random
walk, while performing upper random walk with probability
$\frac{3a}{2a+3b}$. Similarly, $a = \inv{2}; b = \inv{3}$ will result
in an equal probability of upper/lower random walks, as suggested in
\cite{SchaubMT.B.H+:20}.  However, it might not be optimal for the
transition probability when performing lower random walk (depending on
$w_1$), which is much larger than the transition probability of the upper adjacent walk (depending on $w_2$).  Hence, one might
need to choose a smaller $a$ value to ensure a more balanced random
walk across all neighboring edges.

\section{Velocity field and cochain processing}
\label{sec:vector-field-cochain-map}
\subsection{Obtaining a 1-cochain}
From the discussion in Section \ref{sec:background}, the $k$-cochain is obtained by
$\vec{\omega}^k(\sigma_i) = \int_{\sigma_i} \zeta_k$. Given only the vector field
$\zeta(\vec x_i) = \vec f(\vec x_i) \in\mathbb R^D \,\forall\, i\in [n]$, the 1-cochain
$\vec \omega$ on edge $e = (i, j)$ can be computed by $\omega_e = \int_0^1 \vec f(\vec\gamma(t)) \vec\gamma'(t) \dd t$.
With $\vec\gamma(t) \approx \vec x_i + (\vec x_j - \vec x_i) t$, and
$\vec\gamma'(t) = \dd \vec u(t) / \dd t \approx (\vec x_j - \vec x_i)$, one can approximate
$\vec f(\vec u(t)) \approx \vec f(\vec x_i) + (\vec f(\vec x_j) - \vec f(\vec x_i)) t$
by Lemma \ref{thm:linear-approx-of-line-int},
\begin{equation}
\begin{split}
	\omega_e \,&=\, \int_0^1 \vec f^\top(\vec\gamma(t)) \vec\gamma'(t) \dd t \approx \int_0^1 \left[\vec f(\vec x_i) + (\vec f(\vec x_j) - \vec f(\vec x_i)) t\right]^\top (\vec x_j-\vec x_i) \dd t \\
	\,&=\, \inv{2}(\vec f(\vec x_i) + \vec f(\vec x_j))^\top(\vec x_j - \vec x_i).
\end{split}
\label{eq:linear-interpolation-edge-flow}
\end{equation}

Note that \eqref{eq:linear-interpolation-edge-flow} can be written in a more concise
form using boundary operator $\vec B_1$. Let $\vec F\in\mathbb R^{n\times D}$ with
$\vec f_i = \vec F_{i, :} = \vec f(\vec x_i)$, we have
$[|\vec B_1^\top|\vec F]_{[i,j]} = \vec f(\vec x_i) + \vec f(\vec x_j)$.
Additionally,
we have $[-\vec B_1^\top\vec X]_{[i,j]} = \vec x_j - \vec x_i$.
Therefore,
\begin{equation}
	\vec \omega = -\inv{2}\diag(\vec B_1^\top \vec X\vec F^\top|\vec B_1|).
\end{equation}

One can follow the procedure stated below to obtain the point-wise vector field from 1-cochain.
Define $\vec X_E = -\vec B_1^\top\vec X$, and
$\vec \chi_E = (\vec X_E)^{\circ 2} \vec{1}_D \in\rrr^{n_1}$,
where $\vec M^{\circ p}$ is the {\em Hadamard power} of matrix $\vec M$,
i.e., $[\vec M^{\circ p}]_{ij} = M_{ij}^p$. Further let $[\vec\chi_E]_{[i,j]}$ represent
the norm of $\vec x_j - \vec x_i$, i.e.,
$[\vec\chi_E]_{[i,j]} = \|\vec x_j - \vec x_i\|^2_2$. Given the 1-cochain $\vec\omega$,
one can solve the following $D$ least squares problem to estimate the vector field
$\vec F$ on each point $\vec x_i$.

\begin{equation}
	\hat{\vec v}_\ell = \argmin_{\vec v_\ell \in \mathbb R^{n}}\left\{ \left\||\vec B_1^\top|\vec v_\ell - \left([\vec X_E]_{:, \ell} \oslash \vec\chi_E\right)\circ\vec\omega\right\|^2_2\right\} \,\forall\, \ell=1,\cdots,D.
\label{eq:linear-reverse-interpolation-edge-flow}
\end{equation}

Where $\circ$, $\oslash$ correspond to {\em Hadamard product} and {\em Hadamard division},
respectively.
The solution to the $\ell$-th least squares problem corresponds to the estimate of
$f_\ell(\vec x_i)$ from $\inv{2}(f_\ell(\vec x_i) + f_\ell(\vec x_j))$
as in \eqref{eq:linear-interpolation-edge-flow}.
More specifically,
\begin{equation*}
	\inv{2}(f_\ell^\parallel (\vec x_i) + f_\ell^\parallel(\vec x_j)) = [\left([\vec X_E]_{:, \ell} \oslash \vec\chi_E\right)\circ\vec\omega]_{[i,j]} = \frac{(x_{j,\ell} - x_{i,\ell})\omega_{ij}}{\|\vx_j - \vx_i\|^2}.
\end{equation*}

The estimated vector field is
thus the concatenation of the $D$ least squares solutions,
\begin{equation}
	\hat{\vec F} =
\left[
  \begin{array}{cccc}
    \vrule & \vrule &        & \vrule \\
    \hat{\vec v}_1    & \hat{\vec v}_2    & \ldots & \hat{\vec v}_D    \\
    \vrule & \vrule &        & \vrule
  \end{array}
\right] \in \rrr^{n\times D}.
\end{equation}

\subsection{Smoother vector field from the 1-cochain by a damped least square}
\label{sec:reverse-interpolation-damped-lsqrt}
Since the linear system in \eqref{eq:linear-reverse-interpolation-edge-flow} is
overdetermined ($n_1$ is oftentimes greater than $n_0$), one can obtain a smoother
estimated vector field from the 1-cochain using a damped least squares. 
That is to say, one can change the aforementioned loss function to the following,

\begin{equation}
	\hat{\vF} = \argmin_{\vF \in \mathbb R^{n\times D}}\left\{ \left\||\vec B_1^\top|\vF - \left(\vec X_E \oslash \vec\chi_E\right)\circ\vec\omega\right\|^2_F+ \lambda \|\vF\|_F\right\}.
\label{eq:linear-reverse-interpolation-edge-flow-damped}
\end{equation}

Here $\|\cdot\|_F$ represents the Frobenius norm. 
\eqref{eq:linear-reverse-interpolation-edge-flow-damped} is
essentially a multi-output Ridge regression problem.
Figure \ref{fig:rev-int-comp} shows the estimated field from the 1-cochain constructed
by the simulated field (shown in Figure \ref{fig:rev-int-comp-ground-truth-1-2})
with different damping parameter $\lambda$'s. 
A larger $\lambda$ yields a smoother (estimated) field in the original space, which can be seen by comparing 
Figures \ref{fig:rev-int-comp-damp-0} ($\lambda=0$), \ref{fig:rev-int-comp-damp-1e3}
($\lambda = 1000$), and \ref{fig:rev-int-comp-damp-1e5} ($\lambda = 10^5$). 
This is because a larger $\lambda$ results in
narrower ``band'' after proper scaling,
as presented in the parity plot in Figure \ref{fig:rev-int-comp-parity-damp-0}--\ref{fig:rev-int-comp-parity-damp-1e5}.
Cross validation (CV) can be used to choose the damping constant $\lambda$.
Different scoring criteria used in the validation set will result in different chosen
$\lambda$ values. 
The scoring function we used throughout this paper is the
Fisher z-transformed Pearson correlation value,
for we care more about the relative relations of the vector field
rather than the absolute scales.
The selected regularization parameter (denoted $\lambda^*_\rho$) using this criteria tends to
be larger than that chosen by the mean squared error (denoted $\lambda^*_\text{MSE}$),
thus resulting in a smoother vector field
(see e.g., Figure \ref{fig:rev-int-comp-damp-cv-corr} v.s. Figure \ref{fig:rev-int-comp-damp-cv-mse}).
Figures \ref{fig:rev-int-comp-exps-1} and \ref{fig:rev-int-comp-exps-2}
show all the vector fields reported in this paper estimated from the same
1-cochains with different regularization parameter. More specifically, estimated velocity fields with $\lambda = 0$,
$\lambda^*_{\rho}$, and $\lambda^*_{\text{MSE}}$.
As clearly shown in these Figures, we gain interpretability by having a smoother
vector field without having too much structural changes using $\lambda^*_{\rho}$.

\begin{figure}[!htb]
    \subfloat[][Ground truth view x, y]
    {\includegraphics[width=0.3\linewidth]{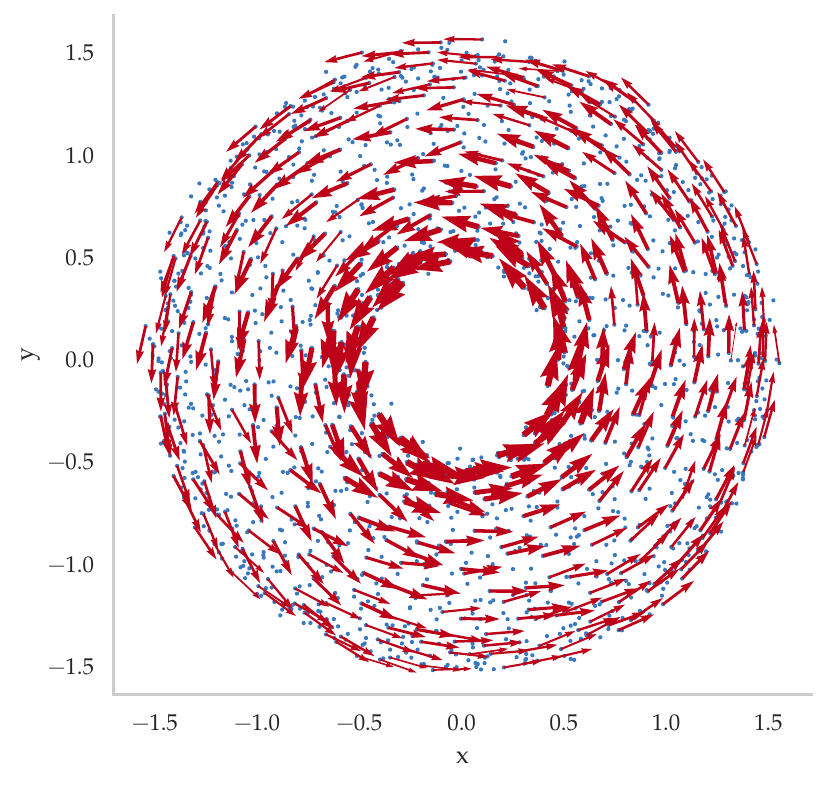}
    \label{fig:rev-int-comp-ground-truth-1-2}}\hfill
\subfloat[][$\lambda = 0$]
    {\includegraphics[width=0.3\linewidth]{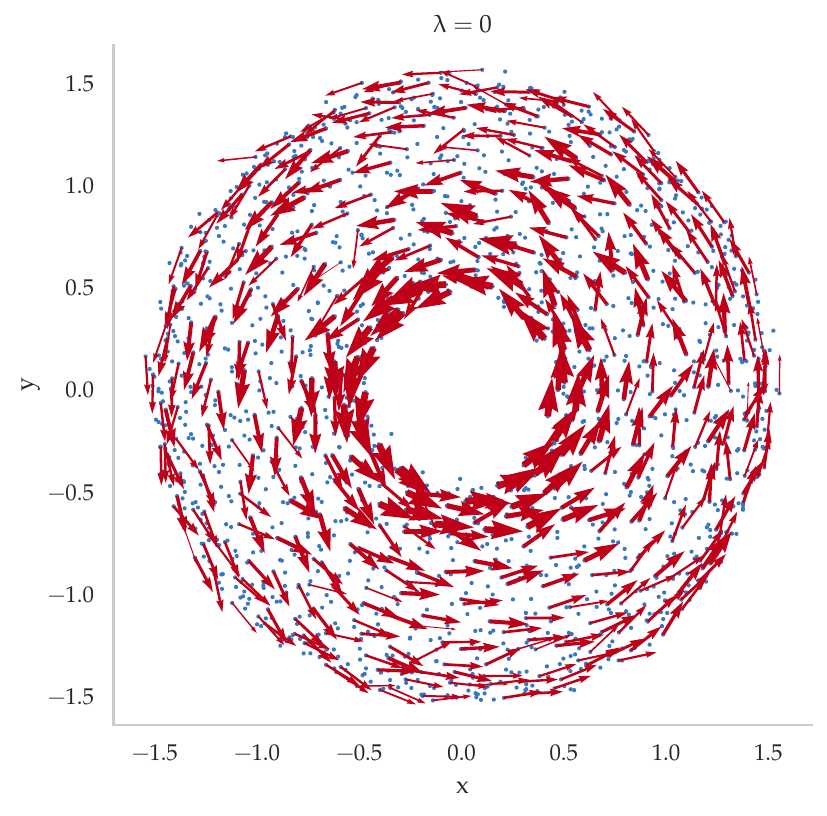}
    \label{fig:rev-int-comp-damp-0}}\hfill
\subfloat[][$\lambda = 100$]
    {\includegraphics[width=0.3\linewidth]{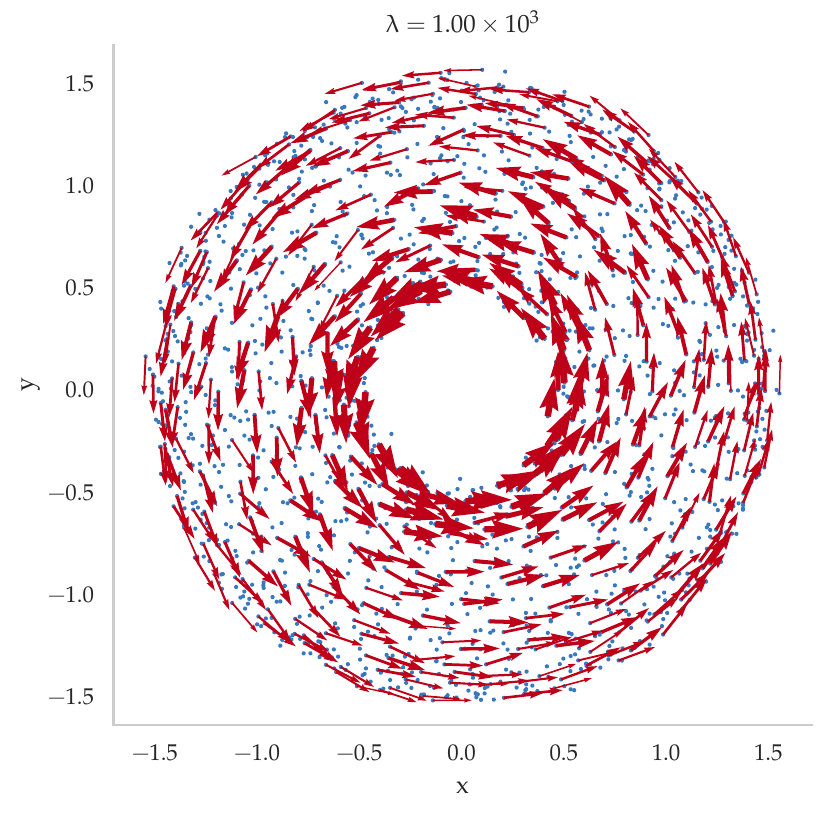}
    \label{fig:rev-int-comp-damp-1e3}}\hfill

    \subfloat[][Ground truth view x, z]
    {\includegraphics[width=0.3\linewidth]{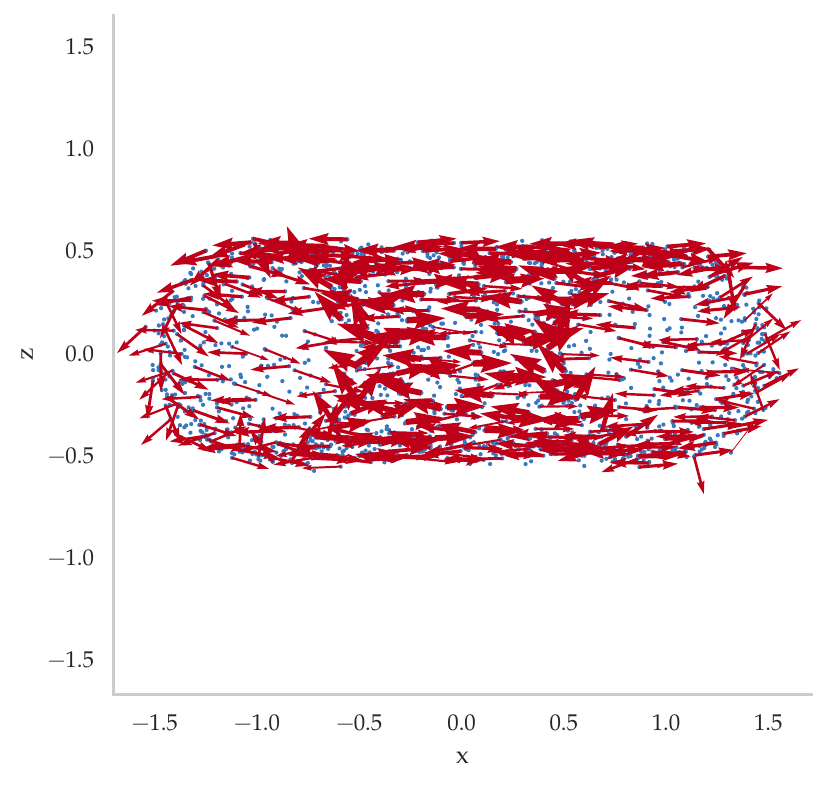}
    \label{fig:rev-int-comp-ground-truth1-3}}\hfill
\subfloat[][Parity plot with $\lambda = 0$]
    {\includegraphics[width=0.3\linewidth]{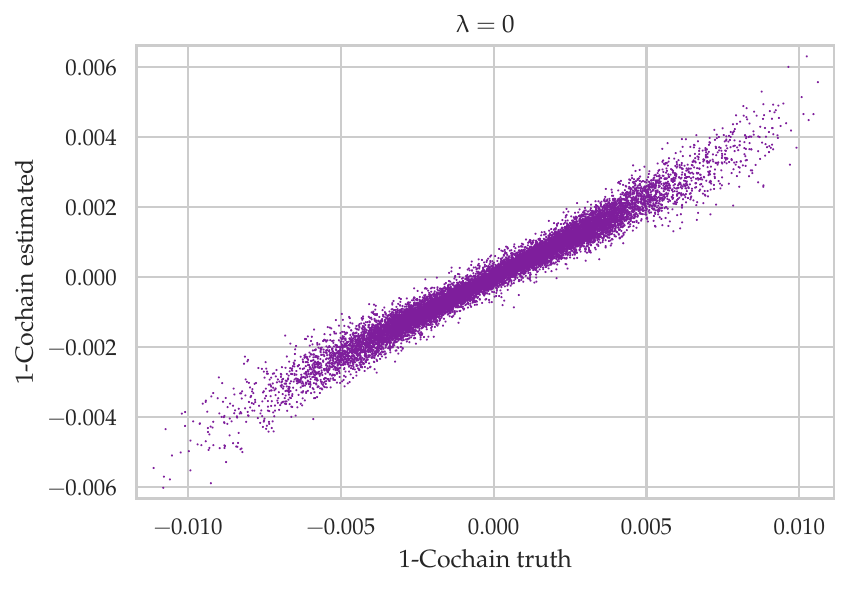}
    \label{fig:rev-int-comp-parity-damp-0}}\hfill
\subfloat[][Parity plot with $\lambda = 100$]
    {\includegraphics[width=0.3\linewidth]{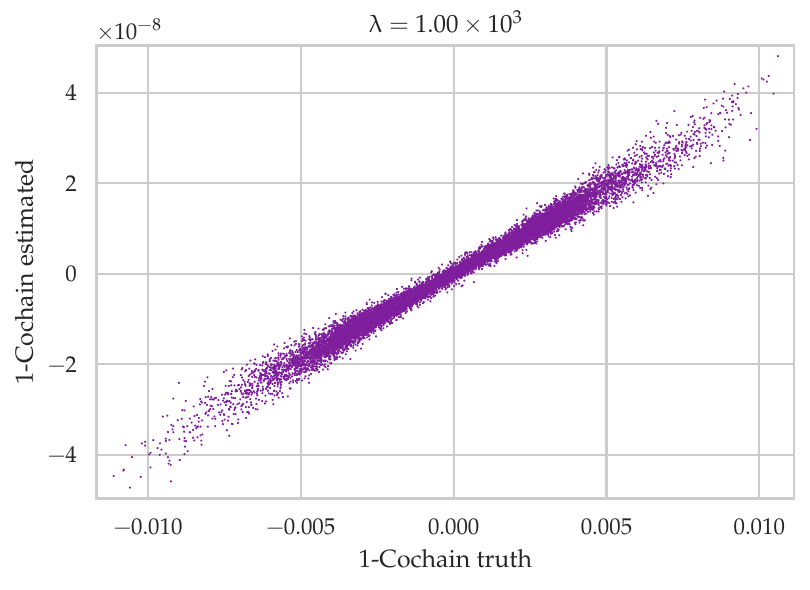}
    \label{fig:rev-int-comp-parity-damp-1e3}}\hfill

    \subfloat[][$\lambda = 10^5$]
    {\includegraphics[width=0.3\linewidth]{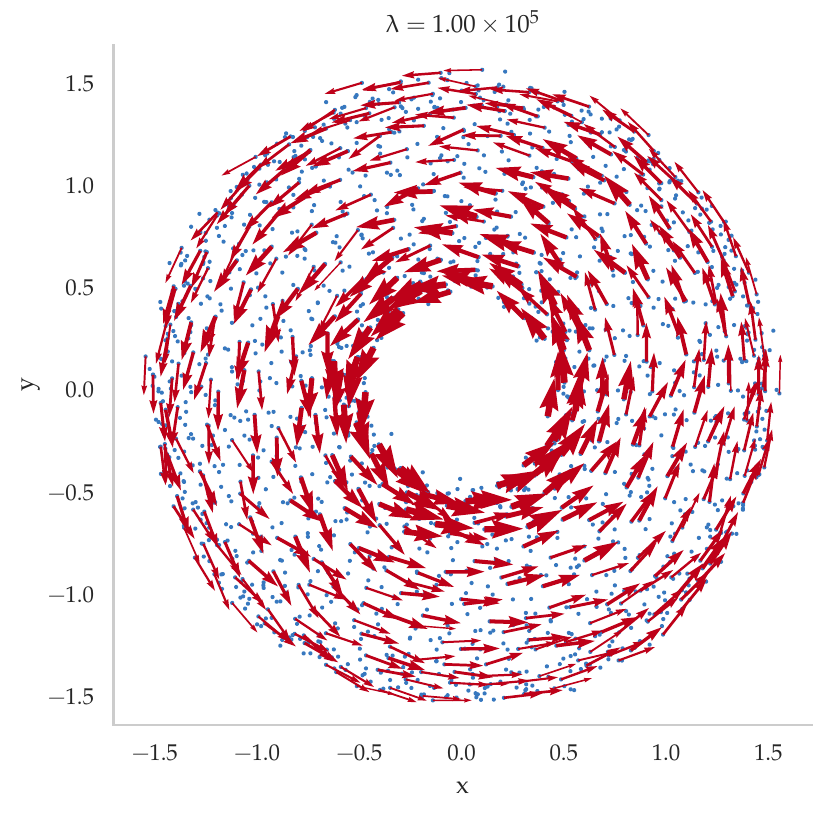}
    \label{fig:rev-int-comp-damp-1e5}}\hfill
\subfloat[][$\lambda^*_\rho$]
    {\includegraphics[width=0.3\linewidth]{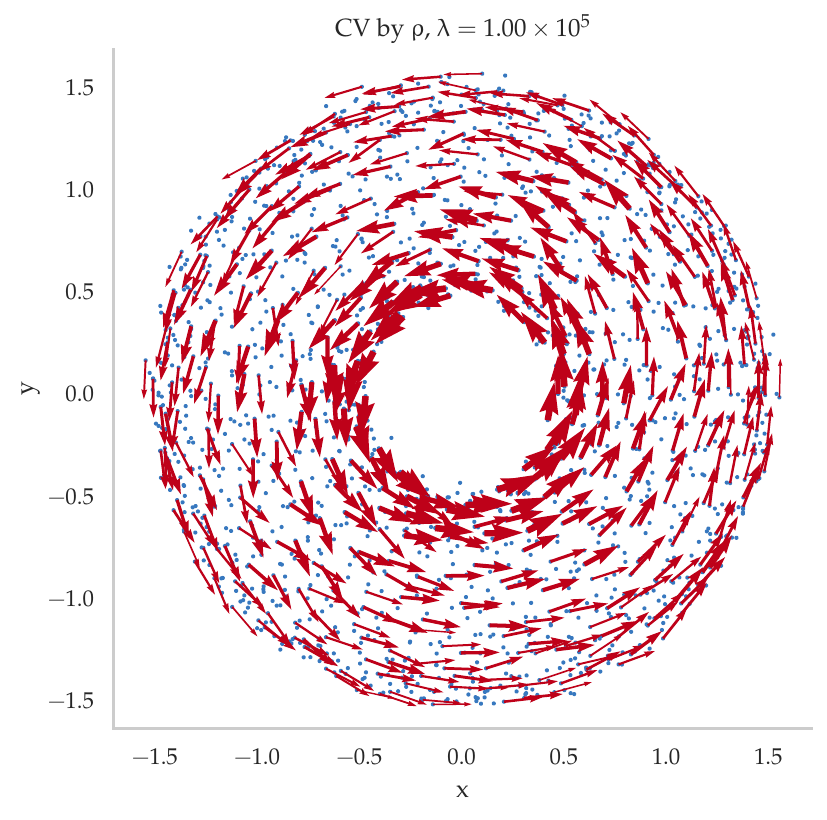}
    \label{fig:rev-int-comp-damp-cv-corr}}\hfill
\subfloat[][$\lambda^*_\text{MSE}$]
    {\includegraphics[width=0.3\linewidth]{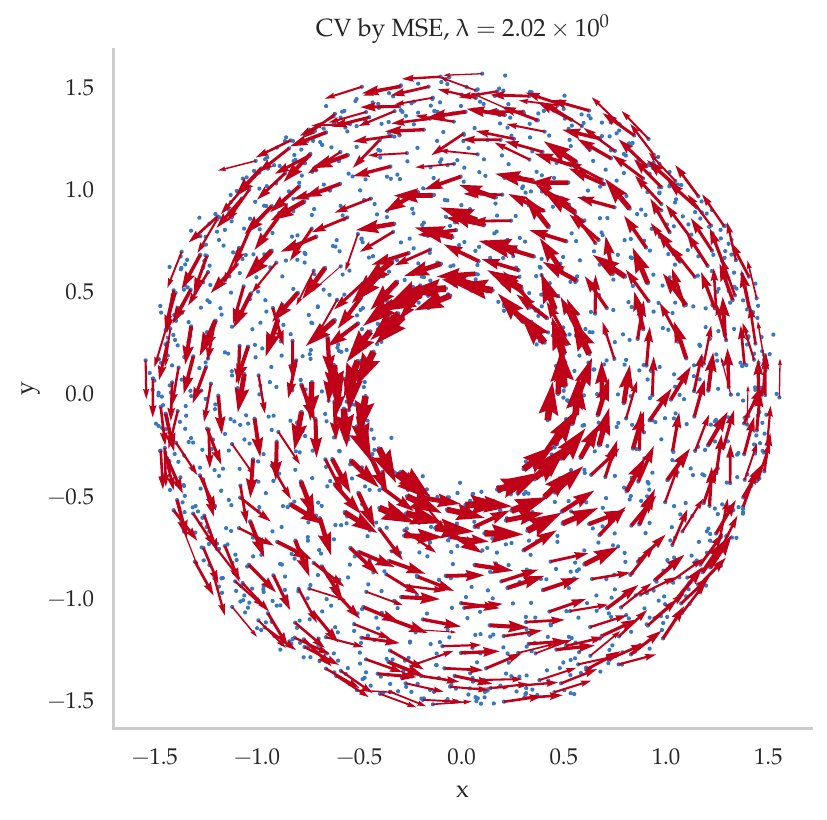}
    \label{fig:rev-int-comp-damp-cv-mse}}\hfill

    \subfloat[][Parity plot $\lambda = 10^5$]
    {\includegraphics[width=0.3\linewidth]{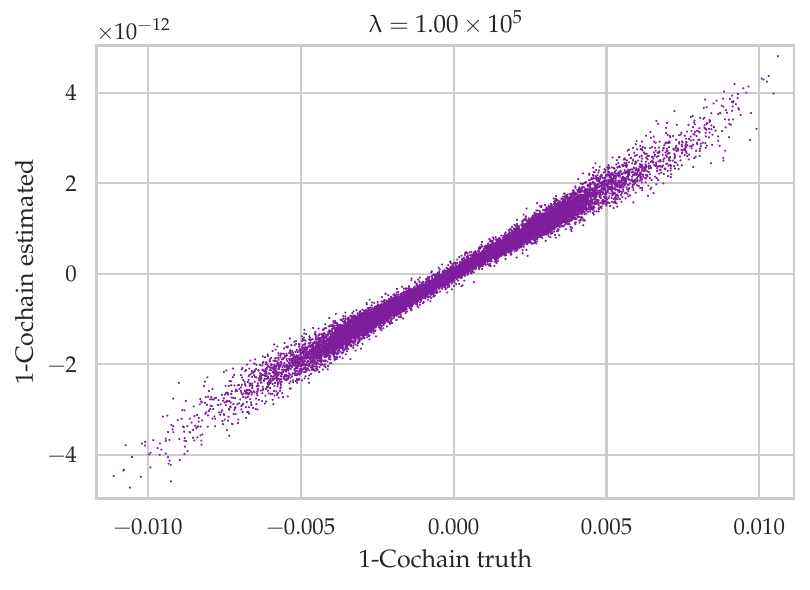}
    \label{fig:rev-int-comp-parity-damp-1e5}}\hfill
\subfloat[][Parity plot with $\lambda^*_\rho$]
    {\includegraphics[width=0.3\linewidth]{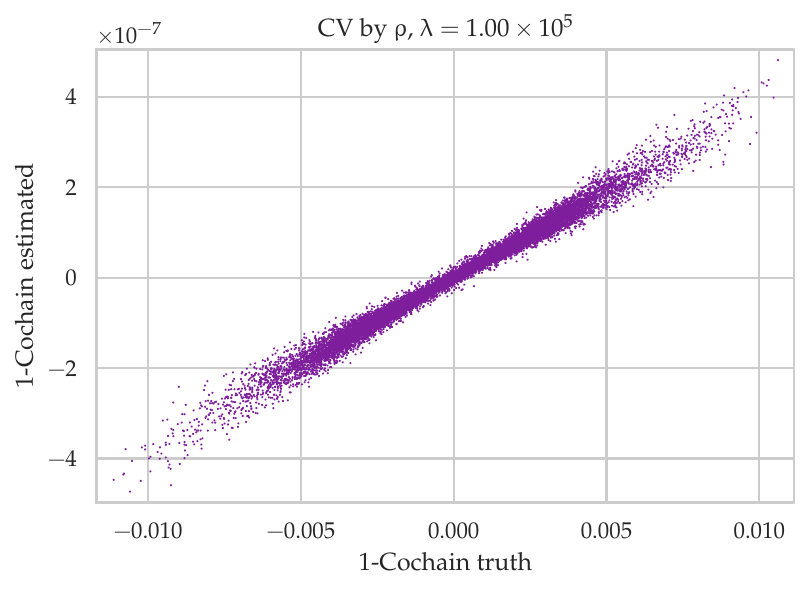}
    \label{fig:rev-int-comp-parity-damp-cv-corr}}\hfill
\subfloat[][Parity plot with $\lambda^*_\text{MSE}$]
    {\includegraphics[width=0.3\linewidth]{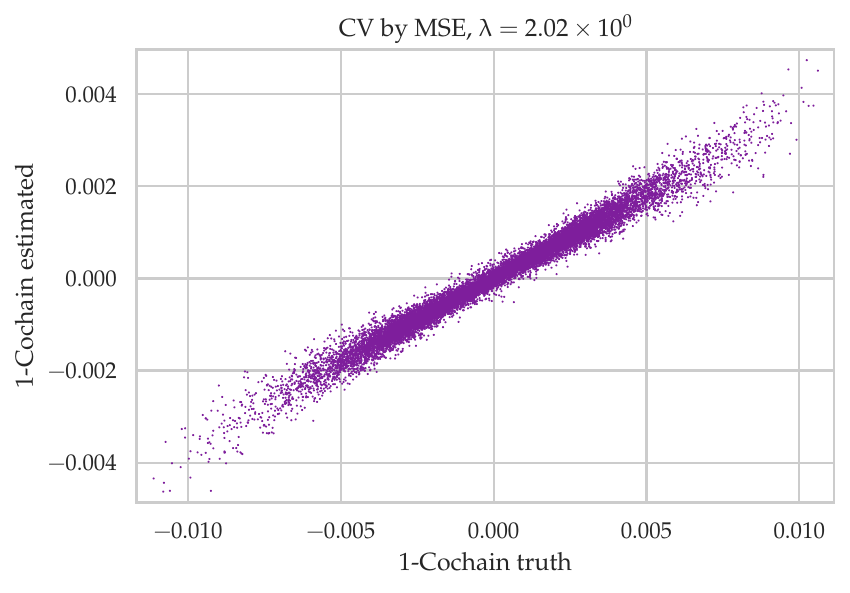}
    \label{fig:rev-int-comp-parity-damp-cv-mse}}\hfill

    \caption{Estimated velocity field from the 1-cochain with different choices of damping parameter $\lambda$'s. \protect\subref{fig:rev-int-comp-ground-truth-1-2}, and \protect\subref{fig:rev-int-comp-ground-truth1-3} shows the synthetic field which cycles around the outer loop of a torus. $\lambda^*_\rho$ and $\lambda^*_\text{MSE}$ represent the damping constant chosen by cross validation (CV) with scoring function be fisher z-transformed Pearson correlation value and mean squared error, respectively.}
    \label{fig:rev-int-comp}
\end{figure}

\clearpage
\subsection{Velocity field mapping between representations}
\label{sec:vec-field-mapping}
Given a set of points $[\vx_i]_{i=1}^n$ in $\rrr^D$ sampled from a
Manifold $\M$, vectors $\vv_i$ in the tangent subspace of $\M$ at each
data point, and a mapping $\varphi: \rrr^D \to \rrr^d$ from the
ambient space to another representation; we are interested in obtaining
the vector field $\vu_i\in\rrr^d$ of each point in the new
representation space $\vec\phi_i = \varphi(\vx_i)$. This problem can
be solved by writing out explicitly the definition of the velocity
field in the new representation space, i.e.,
\begin{equation}
    u_{ij} = \lim_{t\to 0} \frac{\varphi_j(\vx_i + \vv_i t) - \varphi_j(\vx_i)}{t}
    = \left(\nabla_\vx \varphi_i (\vx_i)\right)^\top \vv_i.
    \label{eq:vec-field-mapping-def-of-velocity}
\end{equation}

The $j$-th component of the vector $\vu_i$ is essentially the directional
derivative of the mapping $\varphi$ along  $\vv_i$ in the original space. 
Let $\mathfrak J_\vx\varphi \in \rrr^{d\times D}$ be the Jacobian matrix,
one can turn \eqref{eq:vec-field-mapping-def-of-velocity} to,
\begin{equation}
    \vu_i = \mathfrak J_\vx\varphi (\vx_i) \vv_i \, \text{ with }\, \mathfrak J_\vx \varphi(\vx_i) = \begin{bmatrix}
        \rotvert (\nabla_\vx\varphi_1(\vx_i))^\top \rotvert \\
        \rotvert (\nabla_\vx\varphi_2(\vx_i))^\top \rotvert \\
        \rowsvdots \\
        \rotvert (\nabla_\vx\varphi_d(\vx_i))^\top \rotvert \\
    \end{bmatrix} \vv_i.
    \label{eq:vec-field-mapping-jacobian}
\end{equation}

The velocity field mapping problem mapping now becomes a gradient estimation problem,
which can be solved using any gradient estimation methods, e.g.,
\cite{LuoC.S.W+:09,MukherjeeS.W:06}. In this work, we use the gradient estimation
method by \cite{MukherjeeS.S:16} which aims to solve the 
(local) weighted linear regression on the local tangent plane. More specifically,
the gradient of $f:\rrr^D\to\rrr$ at point $\vx_i$, denoted as $\nabla_{\vx} f(\vx_i)$,
is the minimizer of the following least squares problem,

\begin{equation*}
	\nabla_\vx f(\vx_i) = \argmin_{\vg \in\rrr^D} \sum_{j\sim i} w_{ij} \left\|\left(f(\vx_j) - f(\vx_i)\right) - \vg^\top (\vx_j - \vx_i) \right\|_2.
\end{equation*}

The $w_{ij}$ can be estimated by the weights used in the Local PCA \cite{Chen2013}.
Note that if the target embedding is not in Euclidean space, e.g., mapping the small
molecule dataset from the ambient space $\vX$ to the torsion space as in Figure
\ref{fig:ethanol-first-two-eigenform-tau} and \ref{fig:mda-first-two-eigenform-tau},
one has to use the proper boundary condition when calculating $f(\vx_j) - f(\vx_i)$.
That is to say, the angular distance in the torsion space should be used
(distance between $\pi/2$ and $2\pi$ is $-\pi/2$ rather than $3\pi/2$) to get smooth
estimation of the gradients.

 \section{Datasets}
\label{sec:datasets}
\subsection{Synthetic datasets}
\label{sec:exp-detail-synthetic}
\paragraph{Torus data}
Let the parameterization of a torus be
\begin{equation*}
\begin{gathered}
	x = (a+b\cos\alpha)\cos\beta; \\
	y = (a+b\cos\alpha)\sin\beta; \\
	z = a + b\sin\alpha.
\end{gathered}
\end{equation*}
The torus dataset is generated by sampling $n = 2,000$
points from a grid in $(\alpha,\beta)\in[0, 2\pi)^2$ space and mapping them
it into $\rrr^3$ by the above torus parametrization; the outer radius of torus is $a = 1$
and the inner radius is $b = 0.5$.
Random gaussian noise is added in the first three coordinates.
Additional 10 dimensional gaussian noise is added to each data point.
The first two eigenforms (point-wise velocity field in Figure \ref{fig:torus-est-vec-field-evect}.) are obtained by solving the
linear system as in \eqref{eq:linear-reverse-interpolation-edge-flow}
using the first two eigenvectors $\vec\phi_0$ and $\vec\phi_1$ of $\LL_1^s$.

\begin{figure}[!htb]
    \hfill
    \subfloat[][The first pointwise eigen-form from $\vec{\phi}_0$]
    {\includegraphics[width=0.45\linewidth]{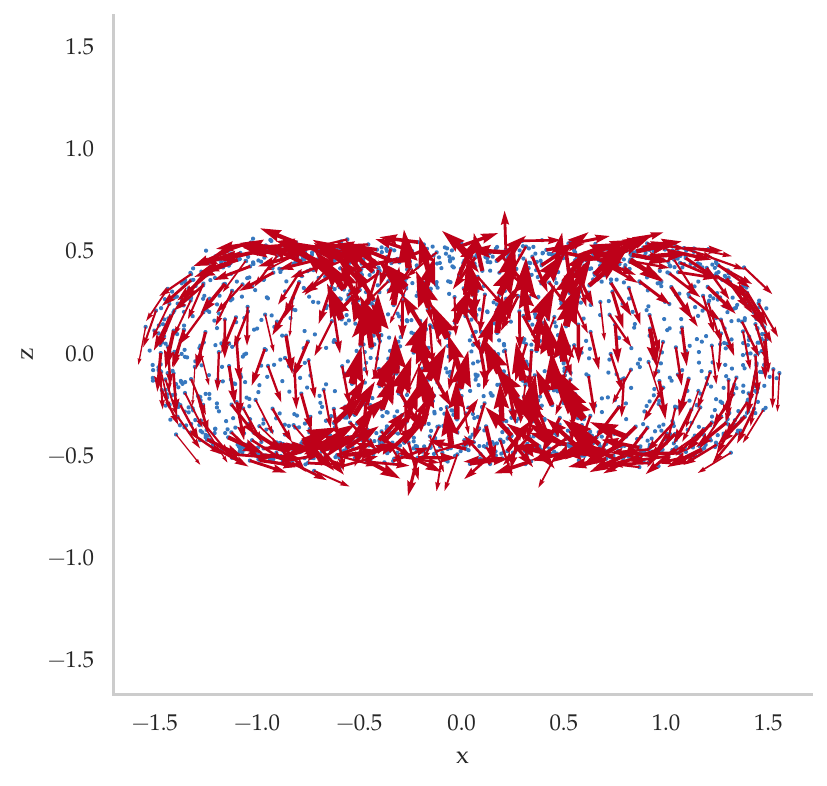}
    \label{fig:torus-est-vec-field-evect-0}}\hfill
\subfloat[][The second pointwise eigen-form from $\vec{\phi}_1$]
    {\includegraphics[width=0.45\linewidth]{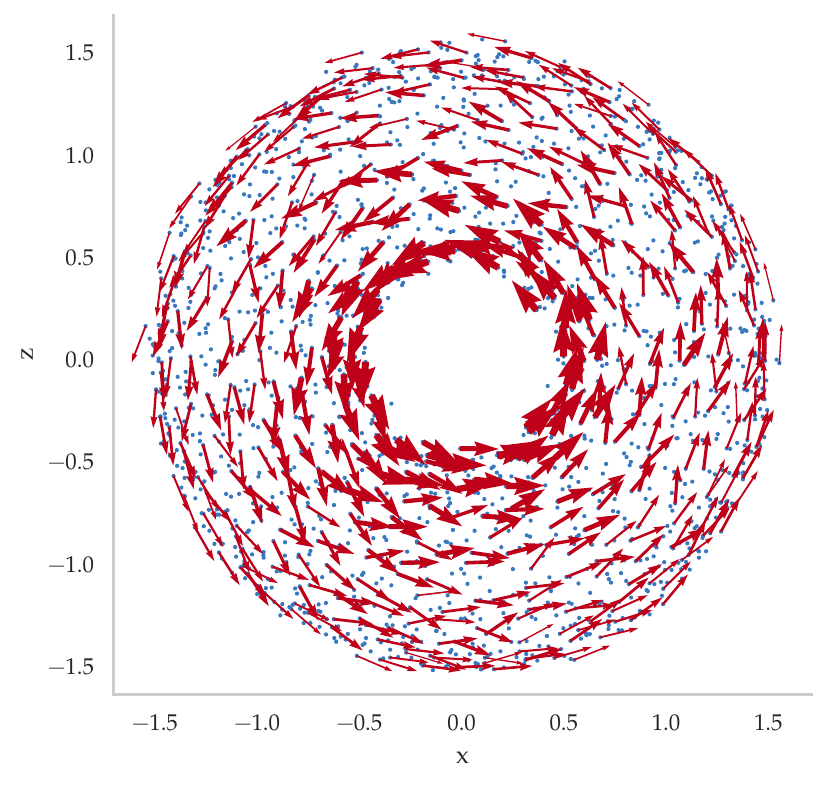}
    \label{fig:torus-est-vec-field-evect-1}}\hfill
\caption{The first two {\em interpolated} eigenforms from the $1$-cochain eigenvectors ($\vec\phi_0$ and $\vec\phi_1$) of $\LL_1^s$.}
    \label{fig:torus-est-vec-field-evect}
\end{figure}

\paragraph{2D strip and synthetic vector field  for SSL}
We sampled $n = 5,000$ points from the grid in $[-2, 2]^2$.
We then generate the vector field with
$\zeta = 0.3\zeta_\text{grad} + 0.7\zeta_\text{curl}$, where the
analytical form of $\zeta_\text{grad}$ and $\zeta_\text{curl}$ are
\begin{equation*}
\begin{gathered}
	\zeta_\text{curl}(x, y) = [x^2y, -xy^2]; \\
	\zeta_\text{grad}(x, y) = [-x, -y].
\end{gathered}
\end{equation*}

Note that because we have the analytical form of the synthetic vector field,
we do not need to use linear approximation of integration as in
\eqref{eq:linear-interpolation-edge-flow} to generate the 1-cochain
$[\vec\omega]_{xy} = \int_{x\to y} \zeta(\vec\gamma(t))\vec\gamma'(t)\dd t$.

\subsection{Small molecule datasets}
\label{sec:exp-detail-ethanol}
\label{sec:exp-detail-mda}

\begin{figure}[!htb]
    \subfloat[][$0^\text{th}$ eigenform on $\vX$]
    {\includegraphics[width=0.26\linewidth]{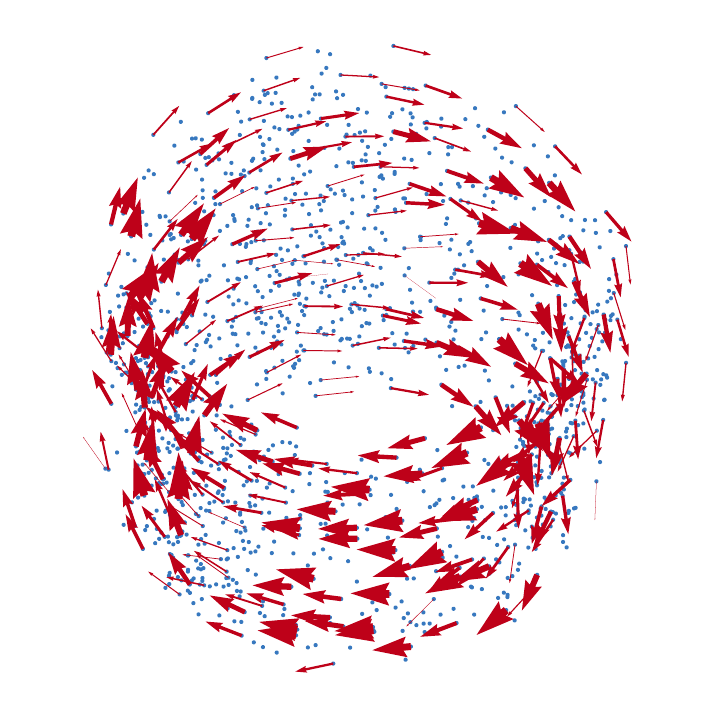}
    \label{fig:eth-pca-0}}\hfill
\subfloat[][$1^\text{st}$ eigenform on $\vX$]
    {\includegraphics[width=0.26\linewidth]{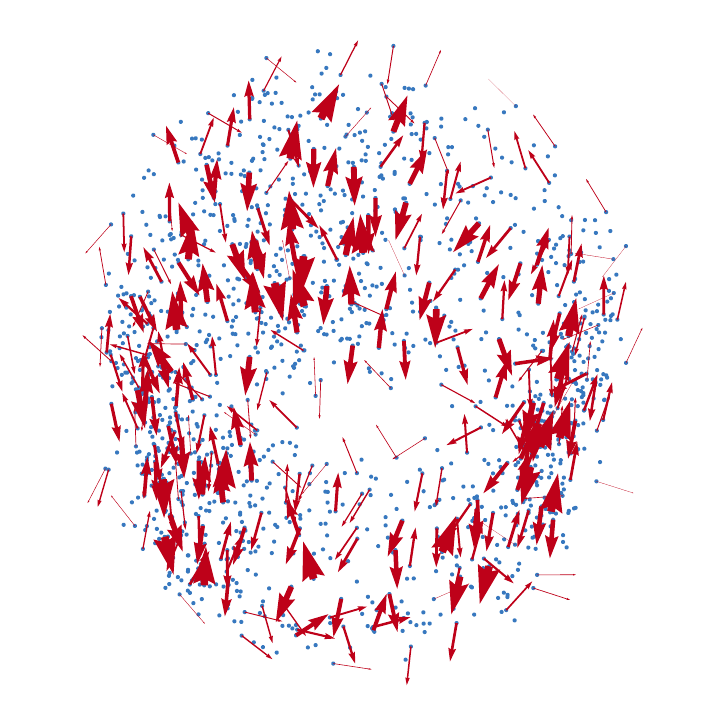}
    \label{fig:eth-pca-1}}\hfill
\subfloat[][HHD result]
    {\includegraphics[width=0.35\linewidth]{figs/ethanol/grad_curl_harmonic_ness_scale_1_4}
    \label{fig:eth-hhd}}\hfill

    \subfloat[][$2^\text{nd}$ eigenform on $\vX$]
    {\includegraphics[width=0.24\linewidth]{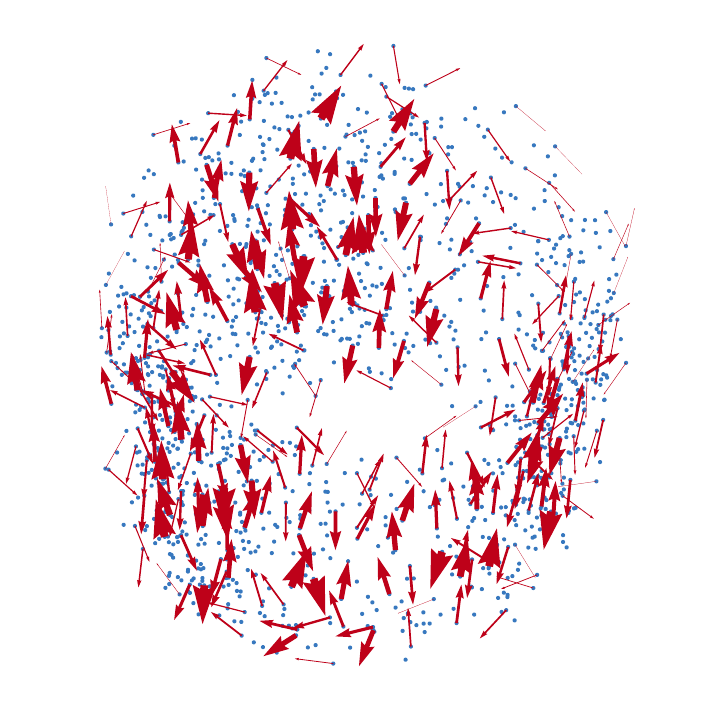}
    \label{fig:eth-pca-2}}\hfill
\subfloat[][$3^\text{rd}$ eigenform on $\vX$]
    {\includegraphics[width=0.24\linewidth]{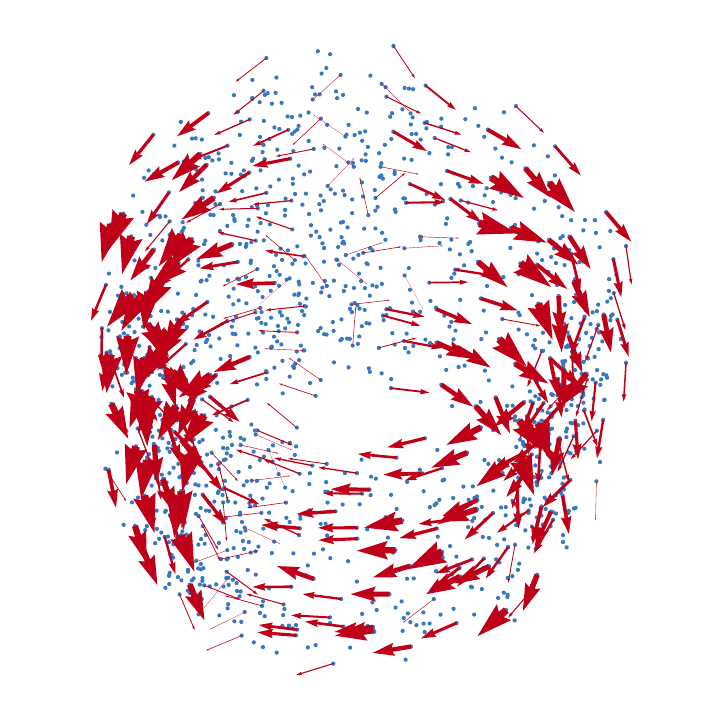}
    \label{fig:eth-pca-3}}\hfill
\subfloat[][$4^\text{th}$ eigenform on $\vX$]
    {\includegraphics[width=0.24\linewidth]{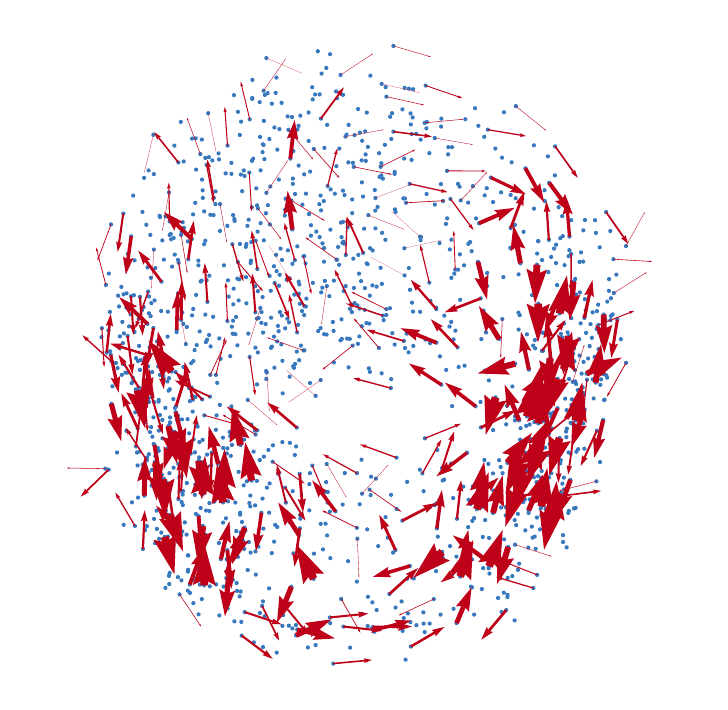}
    \label{fig:eth-pca-4}}\hfill
\subfloat[][$5^\text{th}$ eigenform on $\vX$]
    {\includegraphics[width=0.24\linewidth]{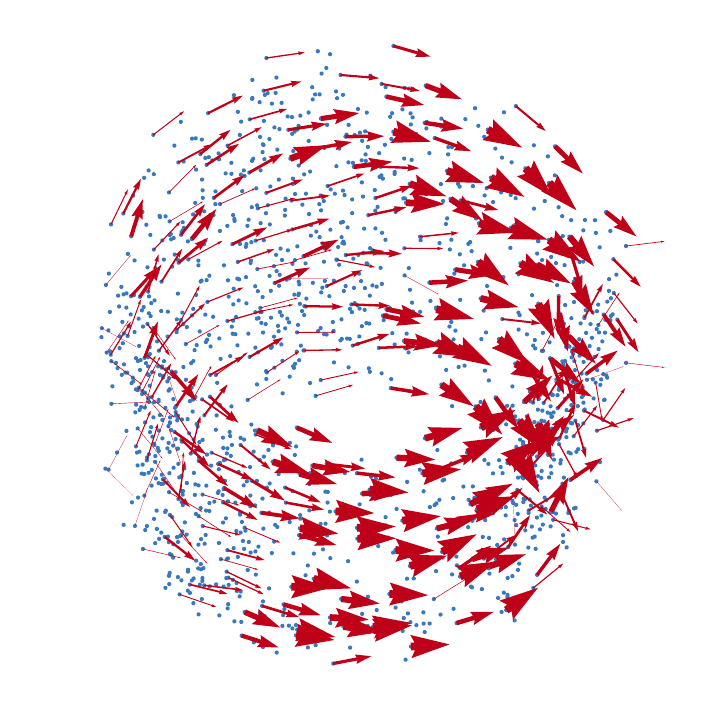}
    \label{fig:eth-pca-5}}\hfill

    \subfloat[][$6^\text{th}$ eigenform on $\vX$]
    {\includegraphics[width=0.24\linewidth]{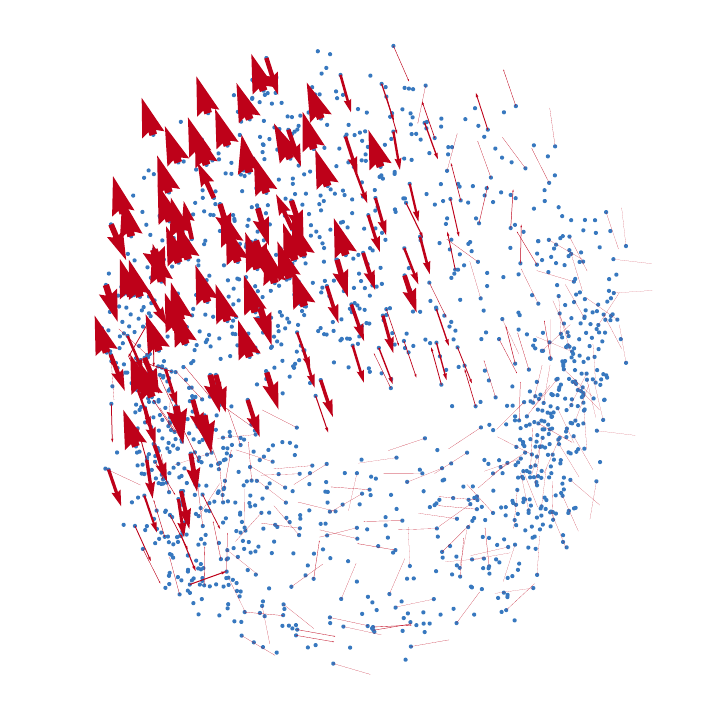}
    \label{fig:eth-pca-6}}\hfill
\subfloat[][$7^\text{th}$ eigenform on $\vX$]
    {\includegraphics[width=0.24\linewidth]{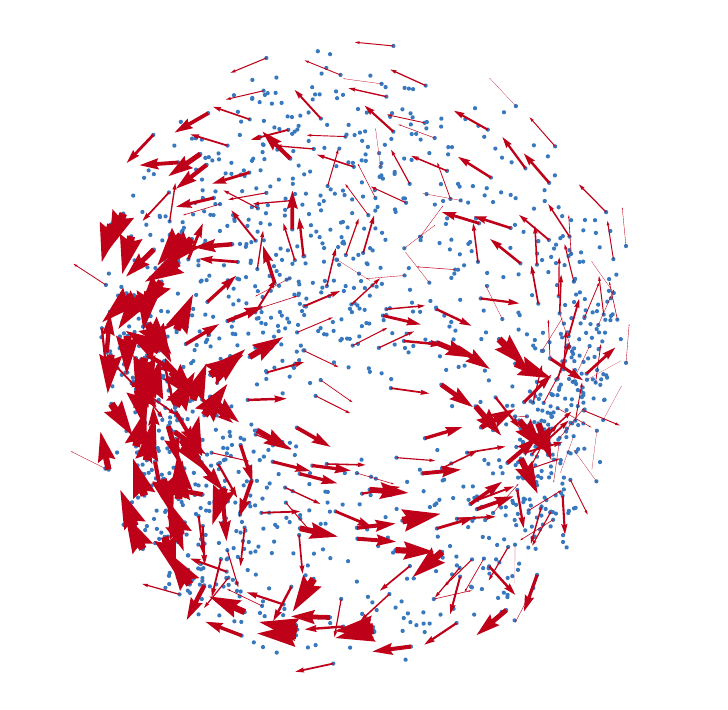}
    \label{fig:eth-pca-7}}\hfill
\subfloat[][$8^\text{th}$ eigenform on $\vX$]
    {\includegraphics[width=0.24\linewidth]{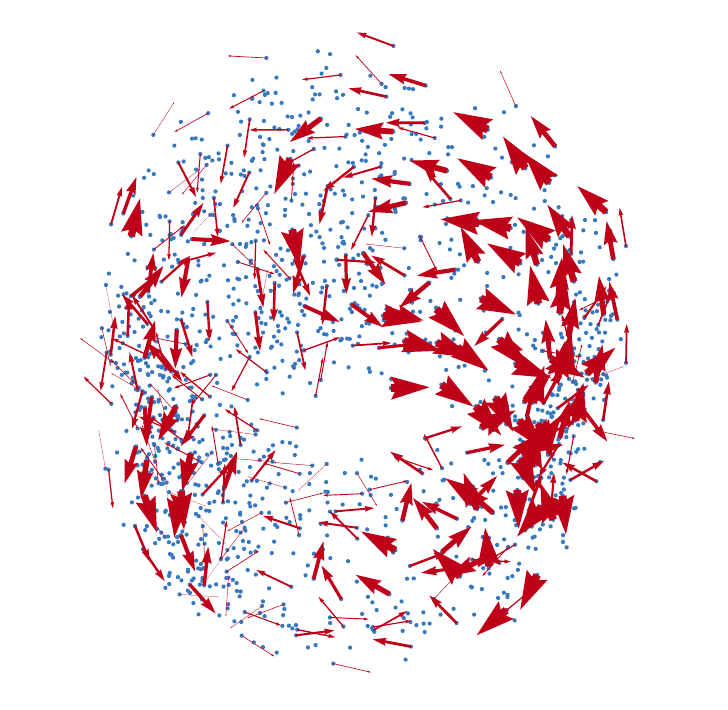}
    \label{fig:eth-pca-8}}\hfill
\subfloat[][$9^\text{th}$ eigenform on $\vX$]
    {\includegraphics[width=0.24\linewidth]{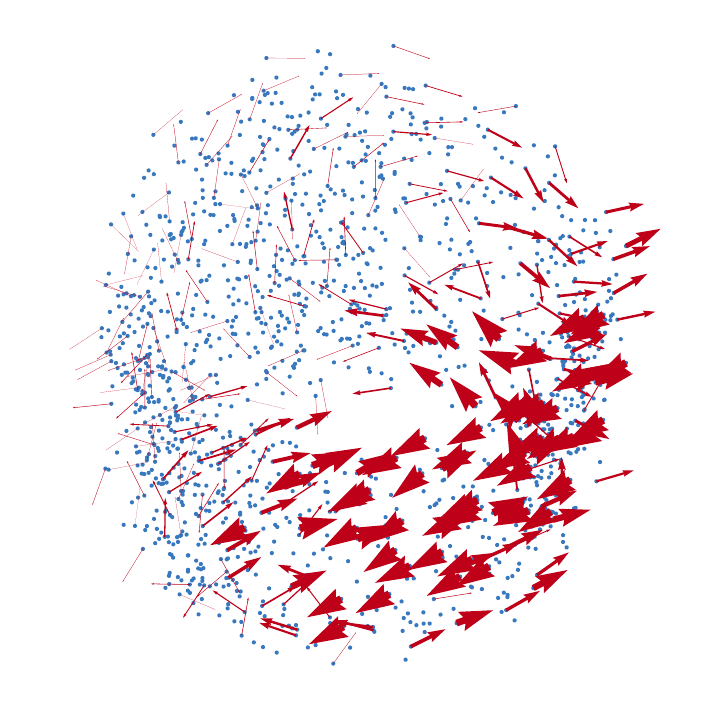}
    \label{fig:eth-pca-9}}\hfill

    \caption{The first 10 estimated vertex-wise eigenforms on the original dataset $\vX$ by solving the linear system \eqref{eq:linear-reverse-interpolation-edge-flow-damped}. Figure \protect\subref{fig:eth-hhd} is the HHD on the first 10 eigenforms, showing that the first two eigenflows are harmonic; the third, fifth, eleventh, thirteenth, and the sixteenth eigenflows are gradient flow, while the rest are curl flows.}
    \label{fig:eth-extra-experiment}
\end{figure}

The ethanol and malondialdehyde (MDA) datasets \cite{ChmielaS.T.S+:17} consist of
molecular dynamics (MD) trajectories
with different amounts of conformational degrees of freedom. A point in the dataset
corresponds to a molecular configuration, which is recorded in the xyz format. More specifically,
if the molecule has $N$ atoms, then a configuration can be specified by a
$N\times 3$ matrix. To remove the translational and rotational symmetry in the original
configuration space, we preprocess the data by considering two of the angles of every
triplet of atoms in a molecule (under the observation that
two angles are sufficient to determine a triangle up to a constant). The linear relation is removed
by applying {\em Principal component analysis} (PCA) with the unexplained variance ratio
less than $10^{-4}$.
This generates the original data $\vX$ with {\em ambient dimension} upper bounded by
$D \leq 2\cdot \binom{N}{3}$.
In Figure \ref{fig:eth-extra-experiment}, we estimate the first 10 eigenforms by
solving the linear system \eqref{eq:linear-reverse-interpolation-edge-flow-damped}.
The 0-th
eigenform clearly represents the bigger loop parameterized by Hydroxyl rotor as
shown in the inset scatter plot of Figure \ref{fig:lambda-baselines-ethanol}.
In Figure \ref{fig:eth-pca-1}, it is difficult
to tell whether the first eigenflow corresponds to Methyl rotor or not. One can overcome this
issue by mapping the first eigenform to the torsion space with prior knowledge
as illustrated in Figure \ref{fig:ethanol-first-two-eigenform-tau}.
Harmonic flows often represent a global structure; by contrast, flows in Figure
\ref{fig:eth-pca-2}--\ref{fig:eth-pca-4} are more localized, implying that the eigenflows
are not harmonic.
The Helmholtz-Hodge decomposition of the eigenvectors of $\LL_1$ in Figure
\ref{fig:eth-hhd} confirms this.

Figure \ref{fig:mda-eigenform-plots} shows the scatter plot of the first three
principal components of the MDA dataset. As clearly shown in the figures, it is
difficult to
make sense of the topological structure for the manifold of such dataset is a torus
embedded in a 4
dimensional space. With the aid of the first two harmonic eigenforms, one can
infer that the first loop travels in the direction of northwest to southeast, while
the second loop goes diagonally from northeast to southwest. With proper prior knowledge,
one can map the harmonic eigenforms to the torsion space to get a better visualization,
as shown in Figure \ref{fig:mda-first-two-eigenform-tau}.

\begin{figure}[!htb]
    \subfloat[][]
    {\includegraphics[width=0.24\linewidth]{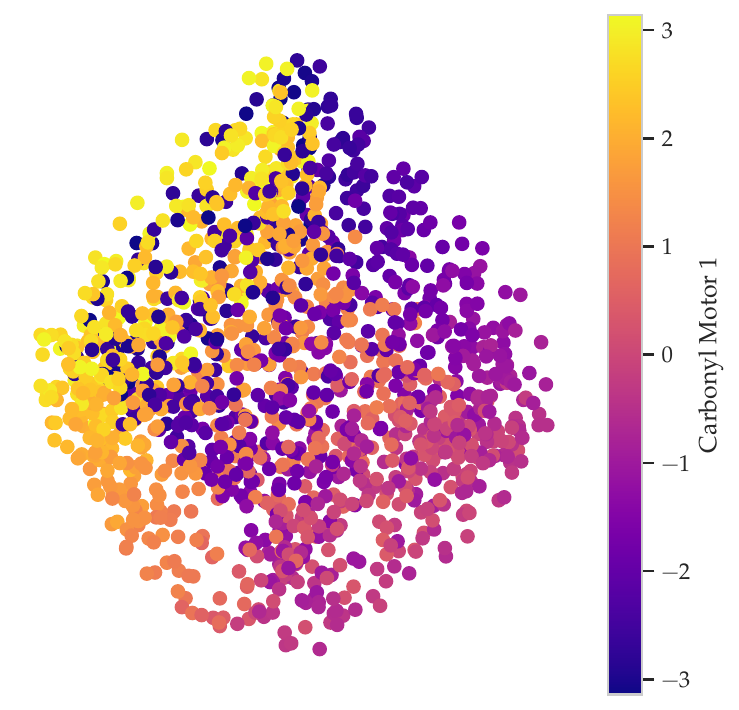}
    \label{fig:mda-eigenform-plots-tau-1}}\hfill
\subfloat[][]
    {\includegraphics[width=0.24\linewidth]{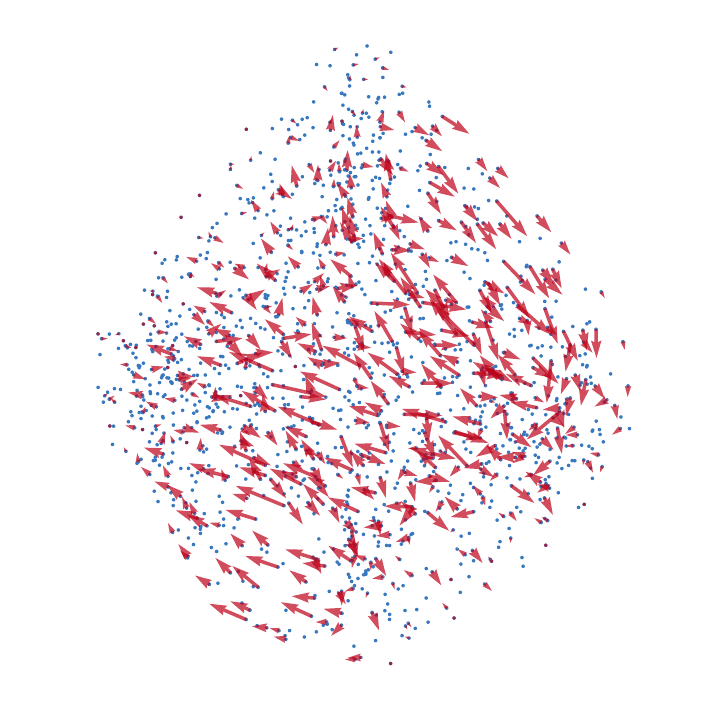}
    \label{fig:mda-eigenform-plots-1}}\hfill
\subfloat[][]
    {\includegraphics[width=0.24\linewidth]{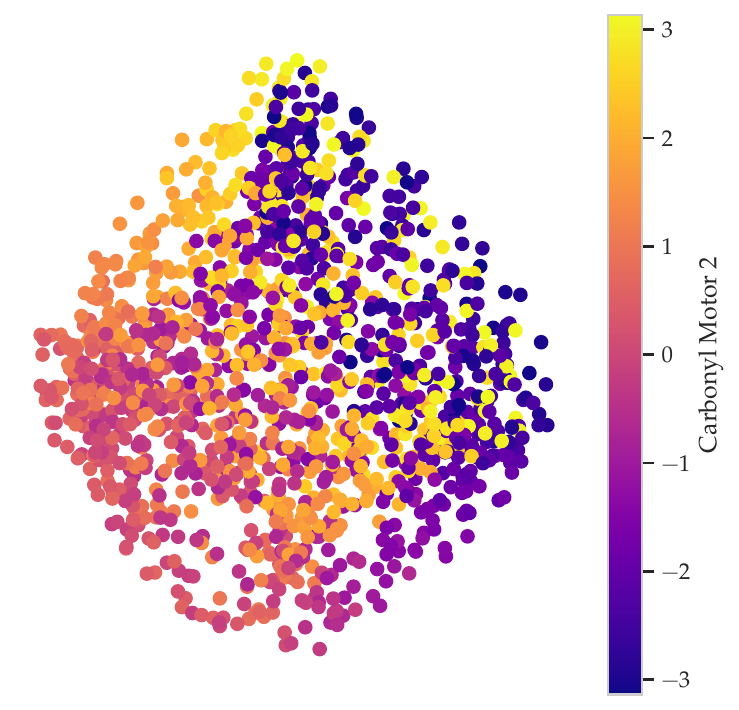}
    \label{fig:mda-eigenform-plots-tau-2}}\hfill
\subfloat[][]
    {\includegraphics[width=0.24\linewidth]{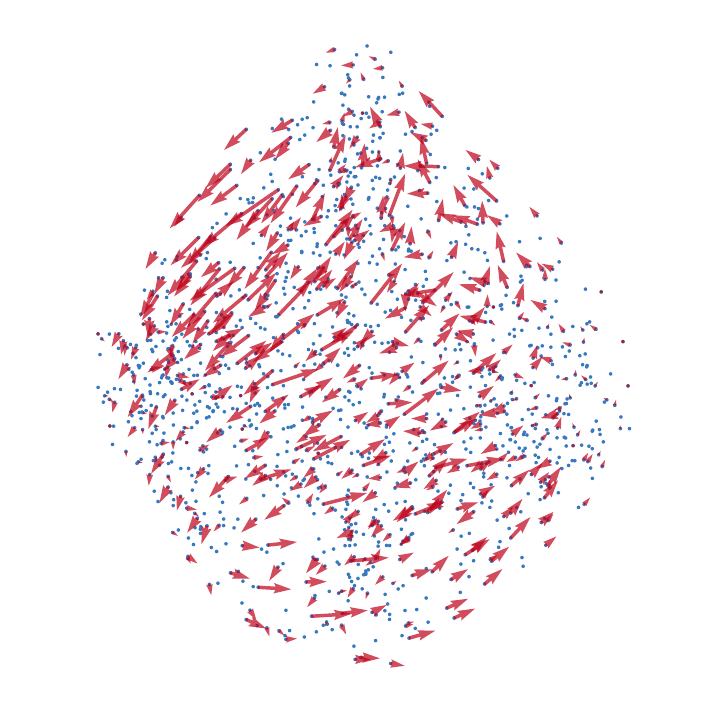}
    \label{fig:mda-eigenform-plots-2}}\hfill
\caption{The harmonic eigenflows of the MDA dataset. \protect\subref{fig:mda-eigenform-plots-tau-1} and \protect\subref{fig:mda-eigenform-plots-tau-2} are the scatter plot of the first three PCs colored by the first and second Carbonyl rotors (purple and yellow in the inset of \ref{fig:lambda-baselines-mda}, see also \ref{fig:mda-first-two-eigenform-tau}). \protect\subref{fig:mda-eigenform-plots-1} and \protect\subref{fig:mda-eigenform-plots-2} represent the first two harmonic eigenforms estimated from the eigenvectors of $\LL_1^s$. The zeroth eigenform in \protect\subref{fig:mda-eigenform-plots-1} parameterizes the first carbonyl rotor in \protect\subref{fig:mda-eigenform-plots-tau-1}, while the first eigenform in \protect\subref{fig:mda-eigenform-plots-2} represents the second carbonyl rotor as in \protect\subref{fig:mda-eigenform-plots-tau-2}. See also Figure \ref{fig:mda-first-two-eigenform-tau} for a better visualization.}
    \label{fig:mda-eigenform-plots}
\end{figure}

\subsection{Ocean drifter dataset}
\label{sec:exp-detail-ocean}
The ocean drifter data, also known as {\em Global Lagrangian Drifter Data},
were collected by NOAA's Atlantic Oceanographic and Meteorological Laboratory.
The dataset was used in \cite{FroylandG.P:15} to analyze the Lagrangian
coherent structures in the ocean current, showing that certain
flow structures stay coherent over time.
Each point in the dataset is a buoy at a certain time, with buoy ID,
location (in latitude \& longitude), date/time, velocity,
and water temperature available to the practitioner.
We extract the buoys that were in the North Pacific ocean dated between
2010 to 2019. The original sample size is around $3$ million,
we sampled $1,500$ furthest buoy that meet the above criteria.

The velocity field in the original data depends a lot on the events in the shorter
time scale,
e.g., wind or faster changing ocean current.
In comparison, ocean motions at longer time scale
oftentimes are more interesting to the scientists.
Therefore, we discard the short-term velocity field in the original data and
calculate the the velocity field as follows.
We first compute the finite difference of the current location and the next location
of the same buoy.
The velocity of the buoy at current point is obtained by dividing this quantity
by the time difference.
The 1-cochain can be constructed by linear approximate of integral as in
\eqref{eq:linear-interpolation-edge-flow} after
obtaining the velocity field of each point.

\subsection{Single cell RNA velocity data}
All of the RNA velocity data (Chromaffin, Mouse hippocampus, and human
glutamatergic neuron cell differentiation dataset)
 as well as methods/codes to preprocess them can be found in \cite{LaMannoG.S.Z+:18}.
The RNA velocity is a point-wise vector field in the ambient space
predicting the future evolution of the cell. To generate the 1-cochain,
we apply a linear interpolation as in \eqref{eq:linear-interpolation-edge-flow}.
The number of cells in the Chromaffin, mouse hippocampus, and human glutamatergic
neuron cell differentiation datasets are $n'=384$, $n'=18,213$, and $n'=1,720$,
respectively. PCA is applied on the RNA expression, resulting in the ambient dimensions
of the aforementioned three datasets being $D = 5$, $10$, and $2$, respectively.
We choose the furthest $n = 800$
and $n = 600$ cells for the mouse hippocampus and the human
glutamatergic neuron data
when building the simplicial complex. Since the sample size of the Chromaffin dataset
is small, we used all the cells in our analysis.

\begin{figure}[!htb]
    \subfloat[][$k = 1$]
    {\includegraphics[width=0.32\linewidth]{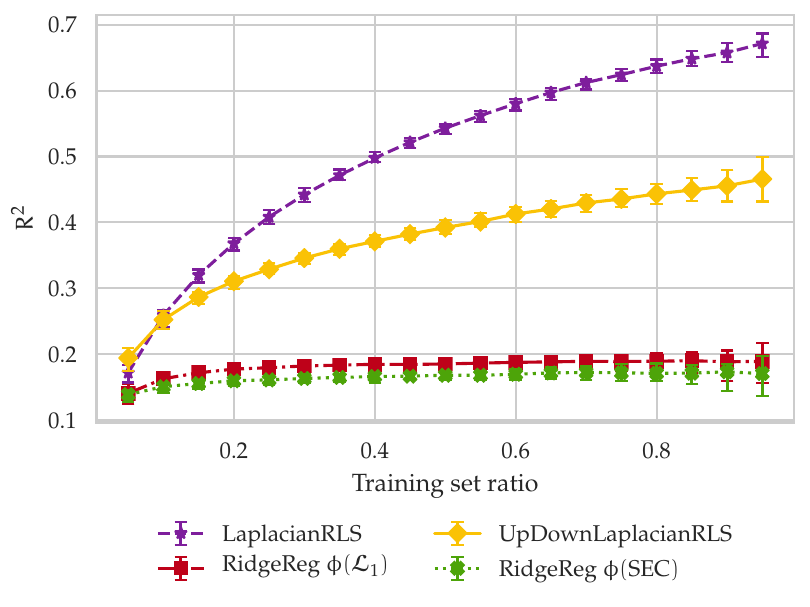}
    \label{fig:hippocampus-knn-1}}\hfill
\subfloat[][$k = 50$]
    {\includegraphics[width=0.32\linewidth]{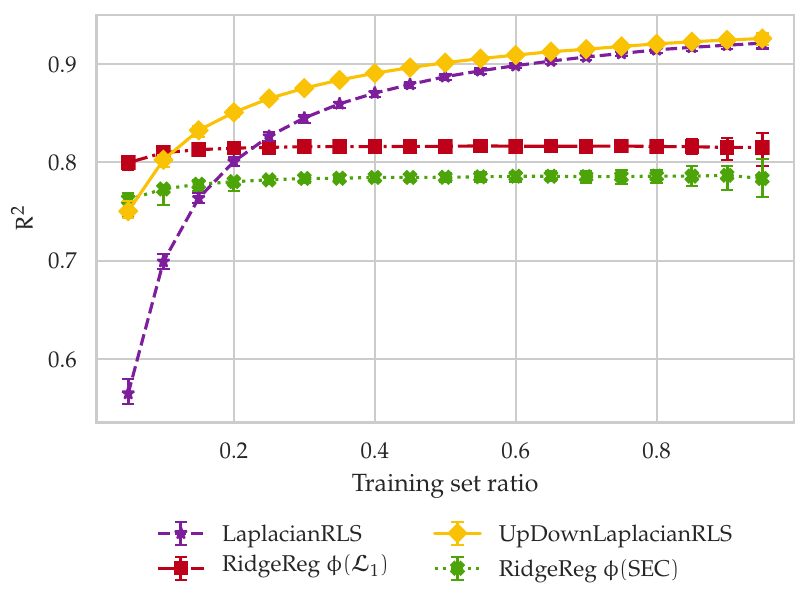}
    \label{fig:hippocampus-knn-50}}\hfill
\subfloat[][$k = 500$]
    {\includegraphics[width=0.32\linewidth]{figs/rna_hippocampus/k_500/r2_score}
    \label{fig:hippocampus-knn-500}}\hfill
\caption{SSL results of the velocity fields of mouse hippocampus cell differentiation dataset with different smoothing parameter $k$.}
    \label{fig:hippocampus-knn}
\end{figure}

In the RNA velocity framework, one can control how smooth the generated RNA velocity
is by specifying the number of nearest neighbors $k$. The larger this
parameter is, the smoother the vector field will become.
We show in Figure \ref{fig:hippocampus-knn} that the proposed
algorithm out-performed other algorithms for several choices of smoothness values.
Note that in theory, the UpDownLaplacianRLS should be at least as good as
LaplacianRLS algorithm, for the first one is the extension of the second method.
However, since we only choose the hyperparameter when train ratio is $0.2$,
it is possible that the under-performance of UpDownLaplacian seen in Figure \ref{fig:hippocampus-knn-1} is due to suboptimal choices of parameters.

\section{SSL experiments on the divergence free flows}
\label{sec:urban-traffic-and-comp-jiaj}

\begin{figure}[!htb]
	\subfloat[][]
    {\includegraphics[width=0.24\linewidth]{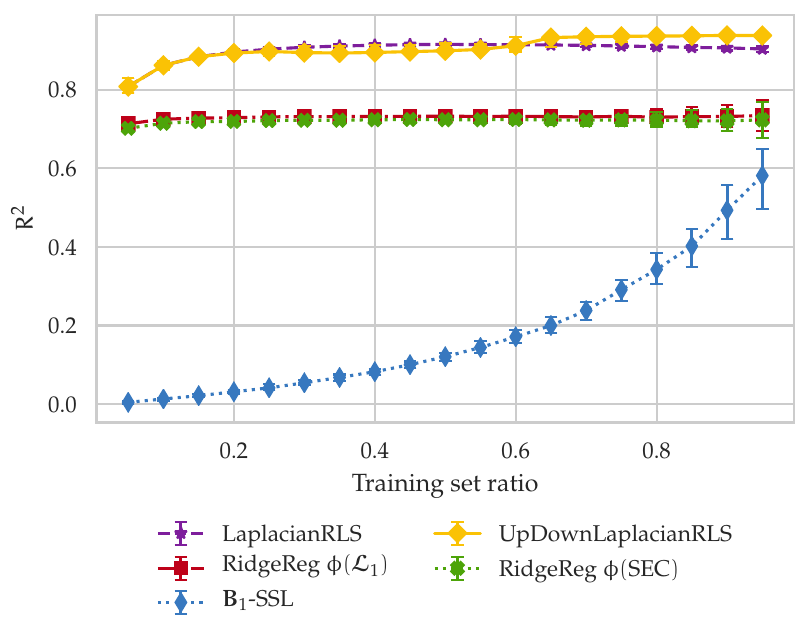}
    \label{fig:comp-with-jssb19-2d-r2-with-b1}}\hfill
\subfloat[][]
    {\includegraphics[width=0.24\linewidth]{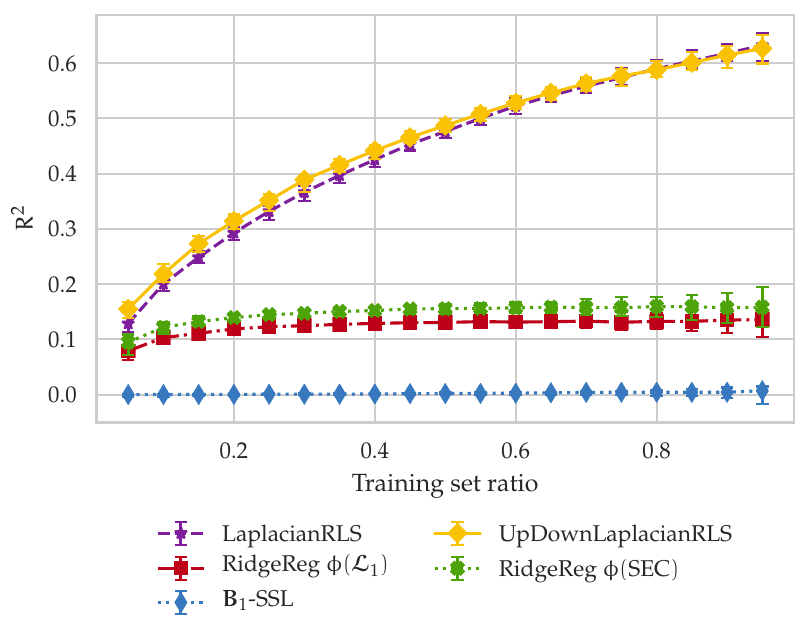}
    \label{fig:comp-with-jssb19-ocean-farest-r2-with-b1}}\hfill
\subfloat[][]
    {\includegraphics[width=0.24\linewidth]{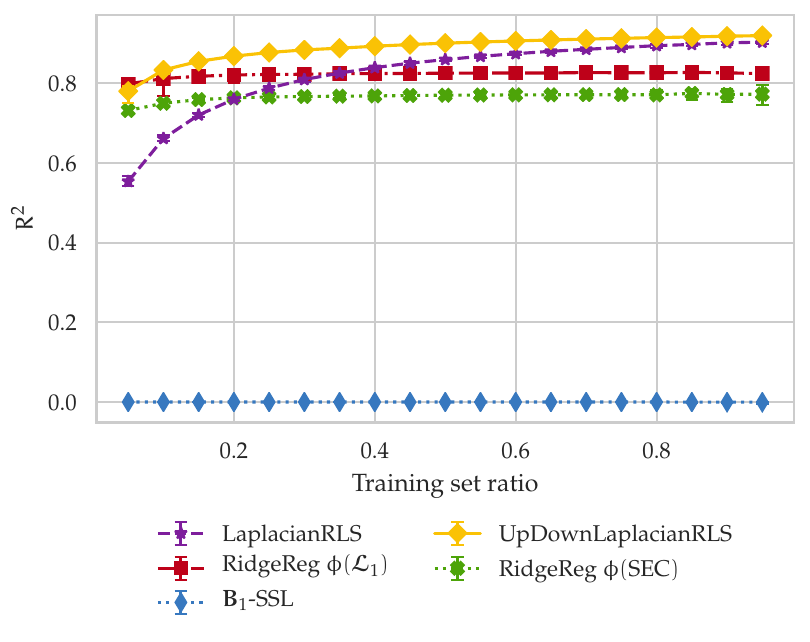}
    \label{fig:comp-with-jssb19-chromaffin-r2-with-b1}}\hfill
\subfloat[][]
    {\includegraphics[width=0.24\linewidth]{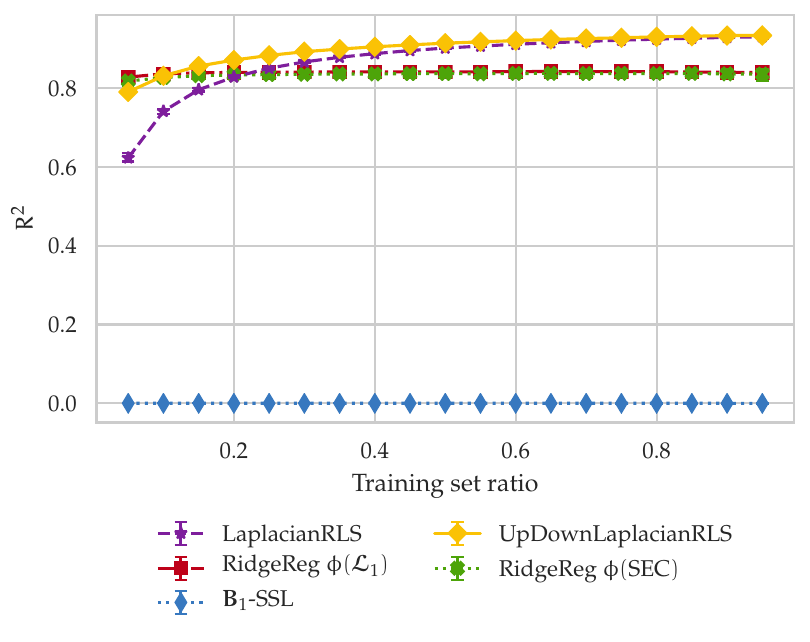}
    \label{fig:comp-with-jssb19-hippocampus-r2-with-b1}}\hfill

    \subfloat[][]
    {\includegraphics[height=1.95cm]{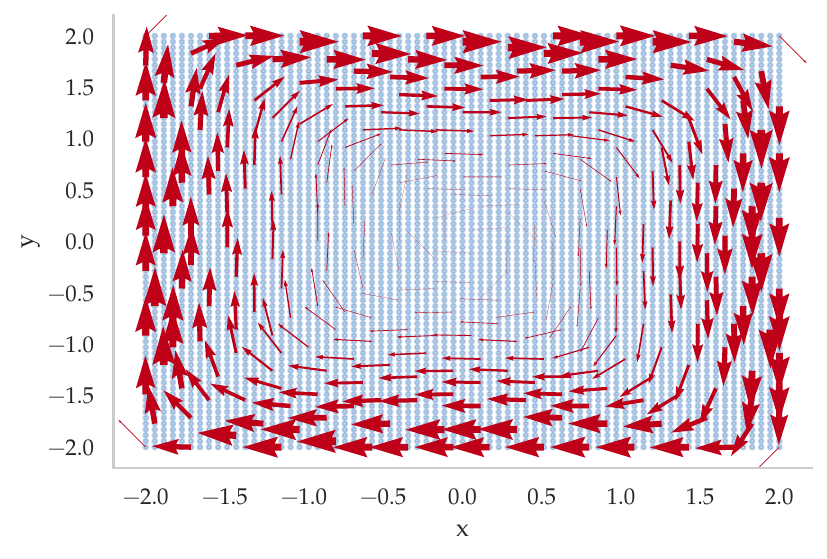}
    \label{fig:comp-with-jssb19-2d}}\hfill
\subfloat[][]
    {\includegraphics[height=1.95cm]{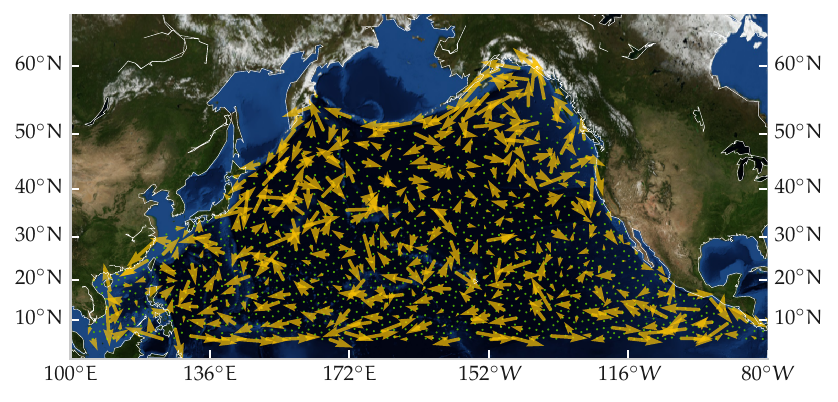}
    \label{fig:comp-with-jssb19-ocean-farest}}\hfill
\subfloat[][]
    {\includegraphics[height=1.95cm]{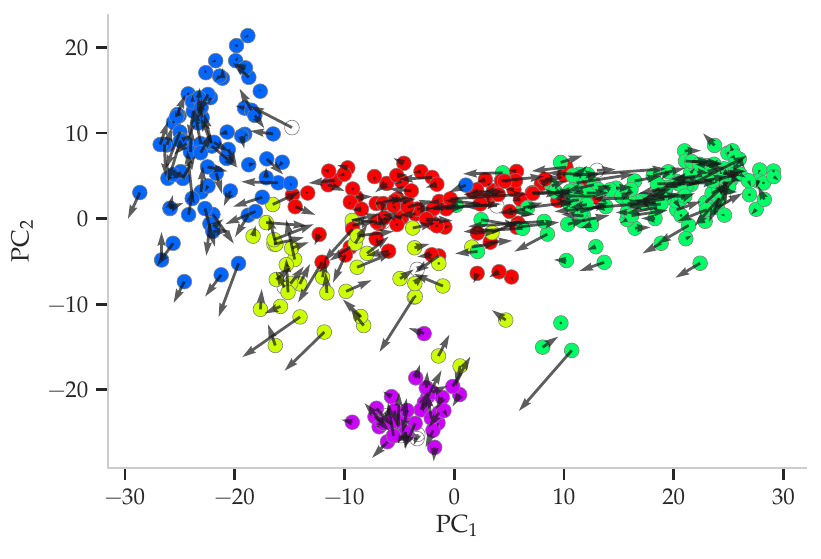}
    \label{fig:comp-with-jssb19-chromaffin}}\hfill
\subfloat[][]
    {\includegraphics[height=1.95cm]{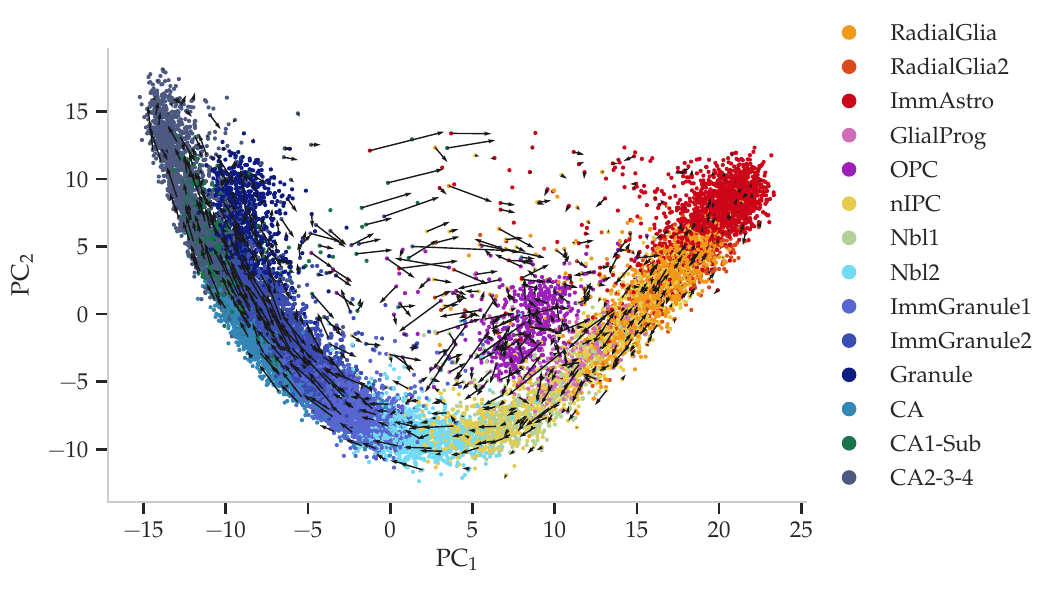}
    \label{fig:comp-with-jssb19-hippocampus}}\hfill

	\subfloat[][]
    {\includegraphics[width=0.24\linewidth]{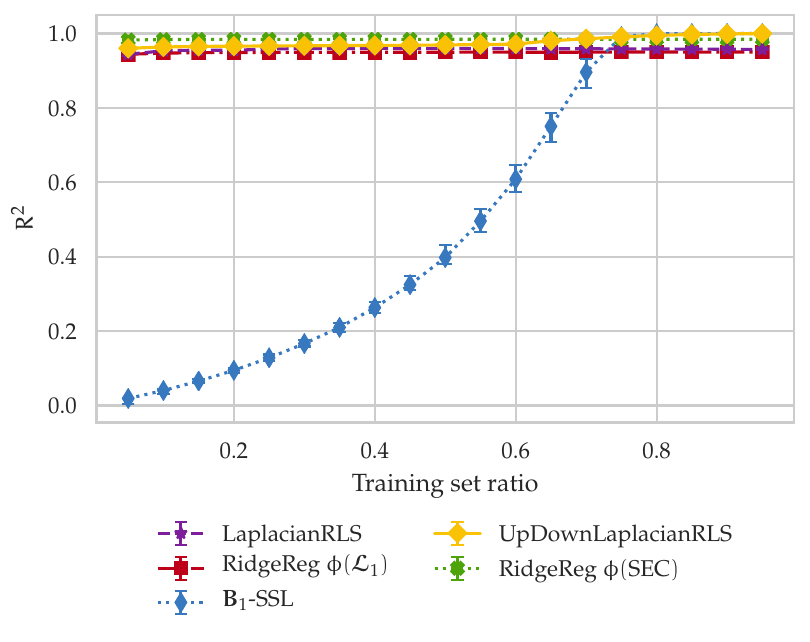}
    \label{fig:comp-with-jssb19-2d-r2-curl}}\hfill
\subfloat[][]
    {\includegraphics[width=0.24\linewidth]{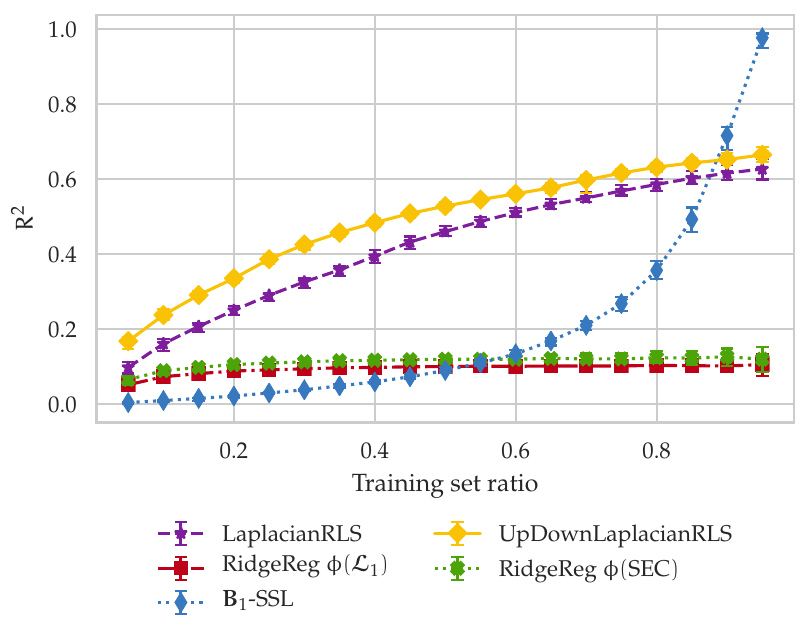}
    \label{fig:comp-with-jssb19-ocean-farest-r2-curl}}\hfill
\subfloat[][]
    {\includegraphics[width=0.24\linewidth]{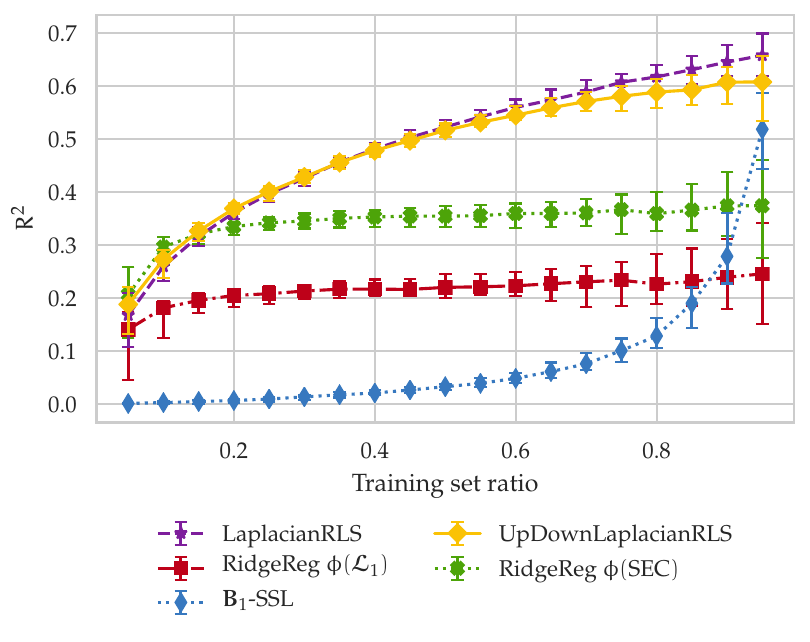}
    \label{fig:comp-with-jssb19-chromaffin-r2-curl}}\hfill
\subfloat[][]
    {\includegraphics[width=0.24\linewidth]{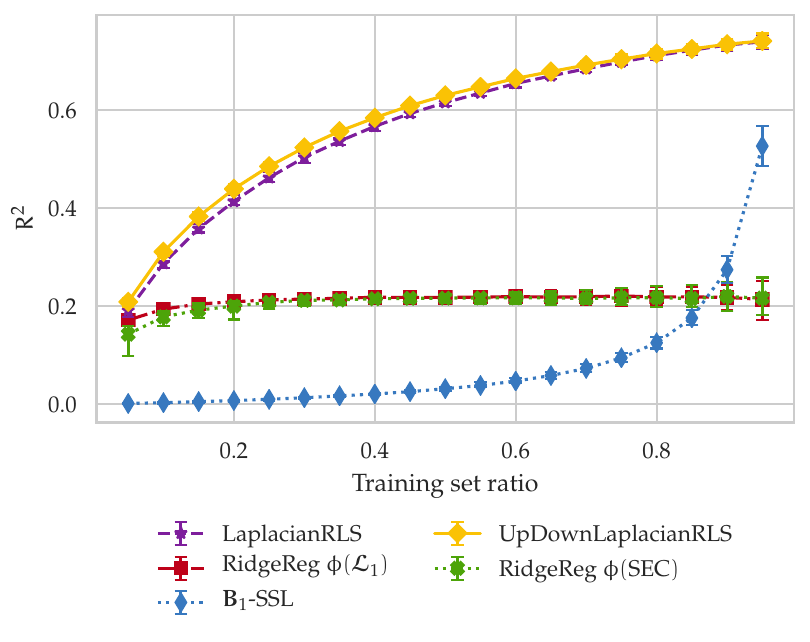}
    \label{fig:comp-with-jssb19-hippocampus-r2-curl}}\hfill
    \caption{SSL results on various datasets with the $\vB_1$-SSL proposed by \cite{JiaJ.S.S+:19}. Columns from left to right correspond to the results of 2D strip, ocean buoy, Chromaffin cell differentiation, and mouse hippocampus cell differentiation dataset. The top row shows the SSL results on the original velocity field with the result of $\vB_1$-SSL (blue curve) added. The second row represents the curl component of the flows in Figure \ref{fig:ssl-result} using HHD. The third row are the SSL results of the data with the curl flow shown in the second row.}
    \label{fig:comp-with-jssb19}
\end{figure}

The $\vB_1$-SSL algorithm proposed by \cite{JiaJ.S.S+:19} works on
(approximately) divergence-free flow. However, the assumption is not
always valid for the flows observed in the many real datasets are
often times a mixture of gradient, curl, and harmonic flows.  Applying
the $\vB_1$-SSL algorithm by \cite{JiaJ.S.S+:19} do not results in a
good result, as shown in Figures
\ref{fig:comp-with-jssb19-2d-r2-with-b1}--\ref{fig:comp-with-jssb19-hippocampus-r2-with-b1}.
Note that these
figures are identical to Figures \ref{fig:ssl-r2-2D-strip}--\ref{fig:ssl-r2-hippocampus-k500}, but with the SSL results from
\cite{JiaJ.S.S+:19} (blue curves) added.  Except for the synthetic
flow, the performances of $\vB_1$-SSL are as bad as random guess.

To further evaluate $\vB_1$-SSL of \cite{JiaJ.S.S+:19}, in comparison with
our $\LL_1$ based SSL algorithms, we artificially create data that
satisfies the $\vB_1$-SSL assumptions. Namely, we extract the curl
component from the computed 1-cochain using HHD.  In mathematical
terms, we first solve the following linear system to get the vector
potential $\hat{\vv} = \argmin_{\vv\in\rrr^{n_2}} \|\vB_2 \vv -
\vec\omega\|_2$.  The curl component is obtained by projecting
1-cochain $\vec\omega$ onto the image of $\vB_2$, i.e.,
$\vec\omega_{\mathrm{curl}} = \vB_2\hat{\vv}$.
The estimated velocity field from the curl cochains for each datasets
can be found in Figures \ref{fig:comp-with-jssb19-2d}--\ref{fig:comp-with-jssb19-hippocampus}. 
As shown in Figure \ref{fig:comp-with-jssb19-2d-r2-curl}--\ref{fig:comp-with-jssb19-hippocampus-r2-curl},
the proposed
algorithms based on both the up and down Laplacian out-perform
\cite{JiaJ.S.S+:19} for small train/test ratio. In fact, the
performance of $\vB_1$-SSL is always weak until the proportion of labeled
examples exceeds about $0.7$.  For manifolds with simple structure,
i.e., 2D plane and ocean dataset, \cite{JiaJ.S.S+:19} can achieve
almost perfect predictions ($R^2 \approx 1$) when train-test ratio
$\geq 0.9$.  However, this is not the case for manifolds with complex
structures, e.g., RNA velocity datasets.
 \section{Choice of regularization parameter $\lambda$ for estimating point-wise velocity field from 1-cochain---vector field from all experiments}
\begin{figure}[!htb]
    \subfloat[][]
    {\includegraphics[width=0.3\linewidth]{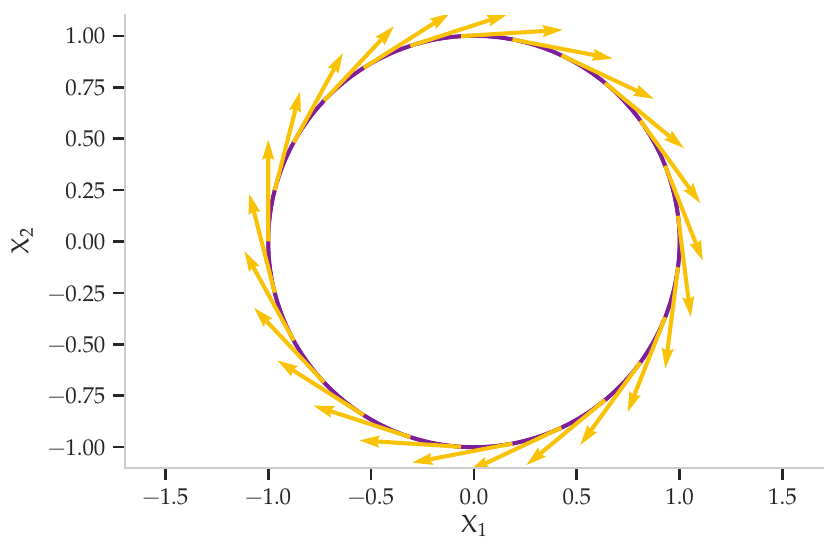}
    \label{fig:rev-int-comp-exps-circle-damp0}}\hfill
\subfloat[][]
    {\includegraphics[width=0.3\linewidth]{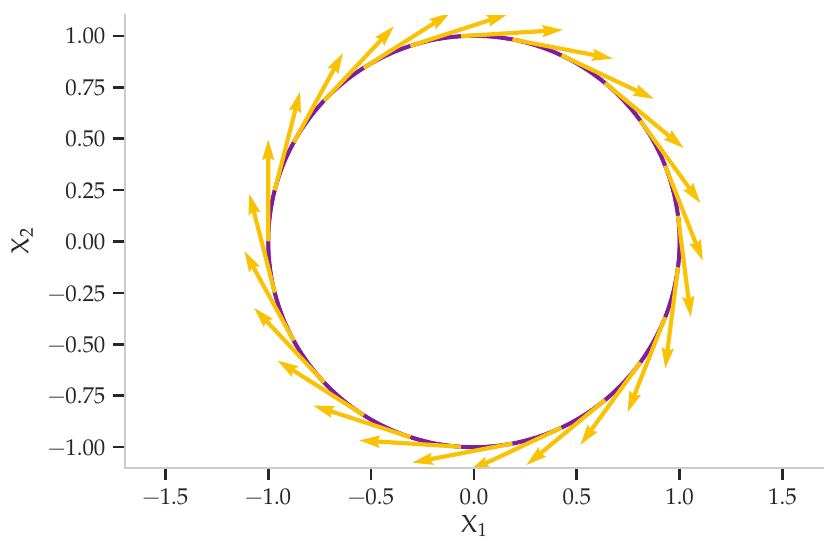}
    \label{fig:rev-int-comp-exps-circle-corr}}\hfill
\subfloat[][]
    {\includegraphics[width=0.3\linewidth]{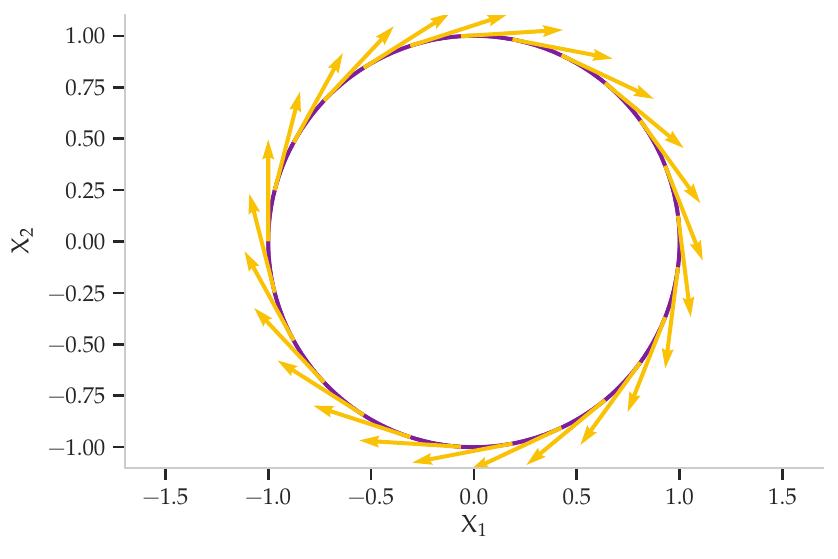}
    \label{fig:rev-int-comp-exps-circle-mse}}\hfill

    \subfloat[][]
    {\includegraphics[width=0.3\linewidth]{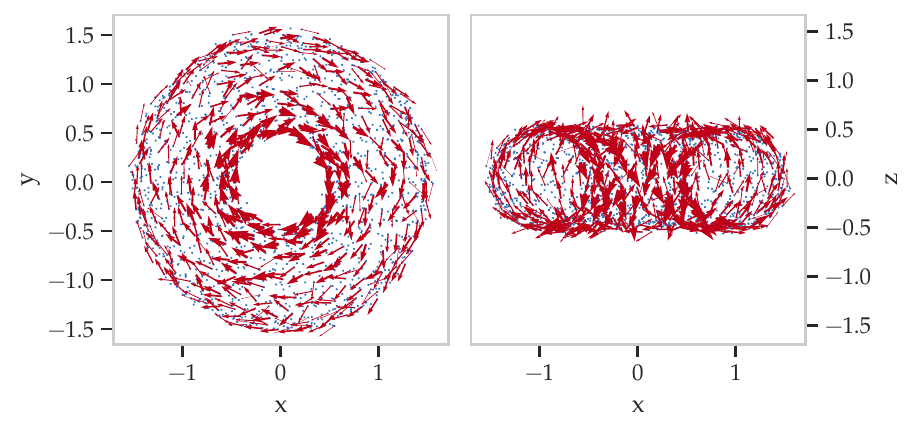}
    \label{fig:rev-int-comp-exps-torus-damp0}}\hfill
\subfloat[][]
    {\includegraphics[width=0.3\linewidth]{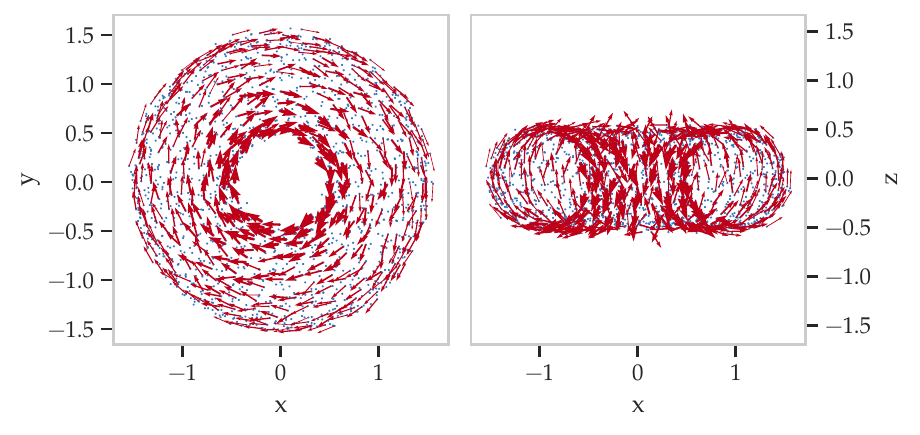}
    \label{fig:rev-int-comp-exps-torus-corr}}\hfill
\subfloat[][]
    {\includegraphics[width=0.3\linewidth]{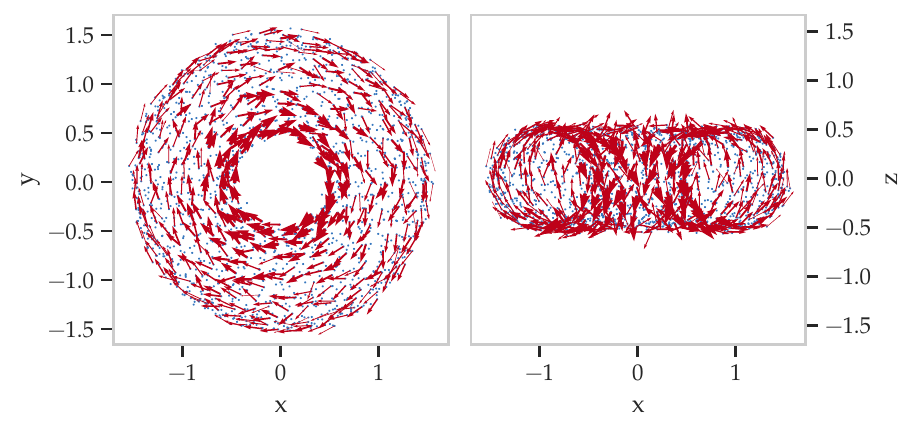}
    \label{fig:rev-int-comp-exps-torus-mse}}\hfill

    \subfloat[][]
    {\includegraphics[width=0.3\linewidth]{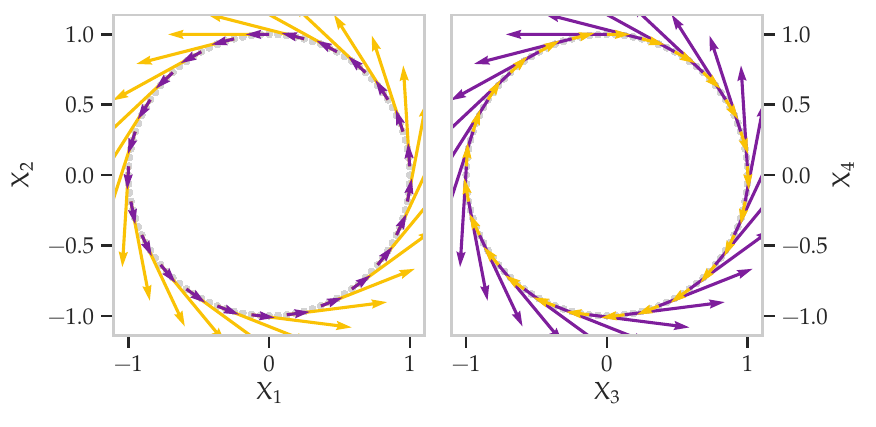}
    \label{fig:rev-int-comp-exps-flat-torus-damp0}}\hfill
\subfloat[][]
    {\includegraphics[width=0.3\linewidth]{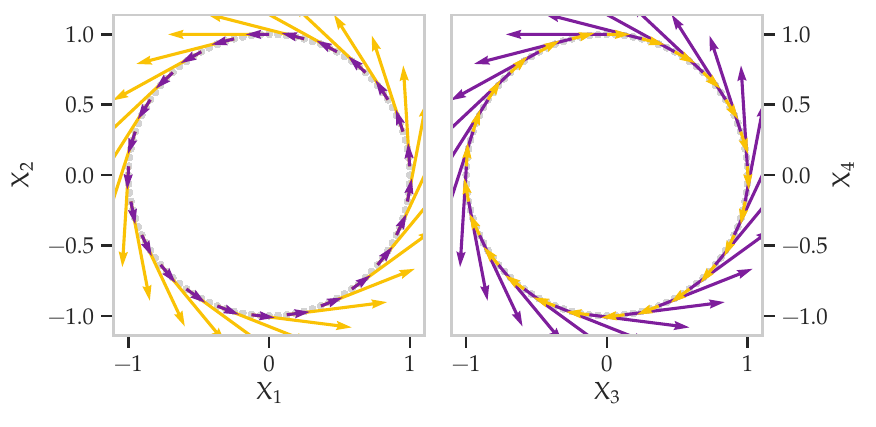}
    \label{fig:rev-int-comp-exps-flat-torus-corr}}\hfill
\subfloat[][]
    {\includegraphics[width=0.3\linewidth]{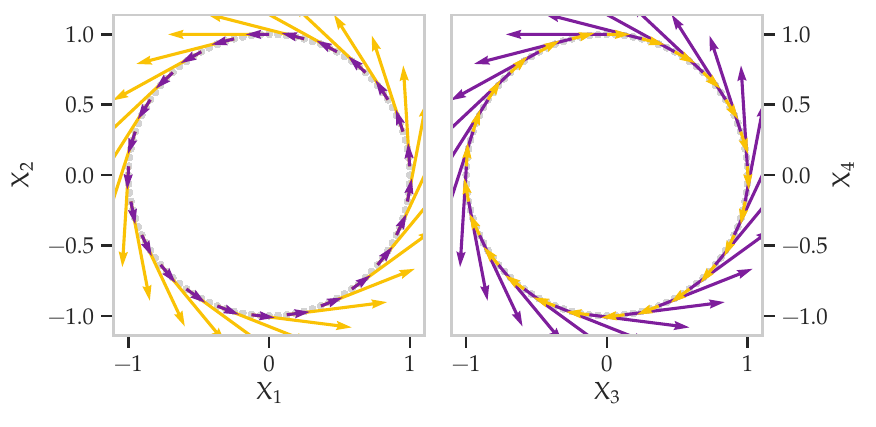}
    \label{fig:rev-int-comp-exps-flat-torus-mse}}\hfill

    \subfloat[][]
    {\includegraphics[width=0.3\linewidth]{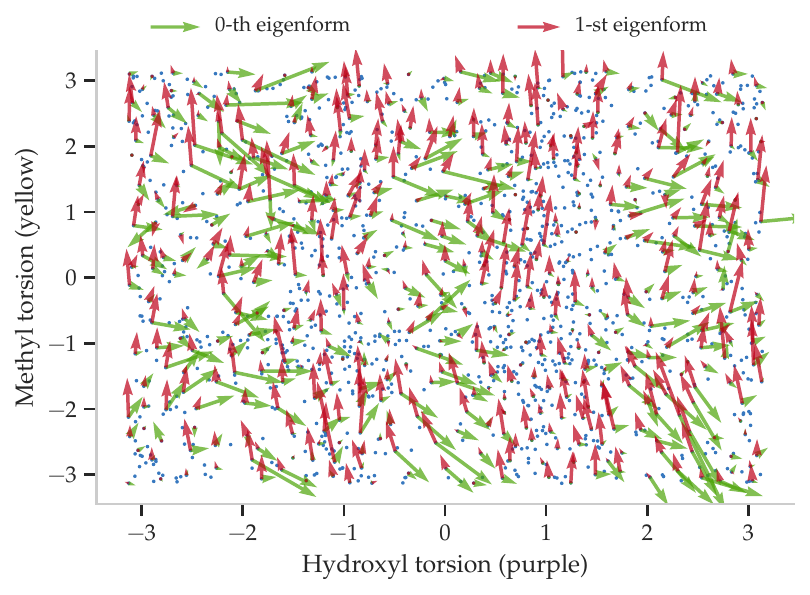}
    \label{fig:rev-int-comp-exps-ethanol-damp0}}\hfill
\subfloat[][]
    {\includegraphics[width=0.3\linewidth]{figs/ethanol/rev_int_exp/eigenform_tau_space_label_description}
    \label{fig:rev-int-comp-exps-ethanol-corr}}\hfill
\subfloat[][]
    {\includegraphics[width=0.3\linewidth]{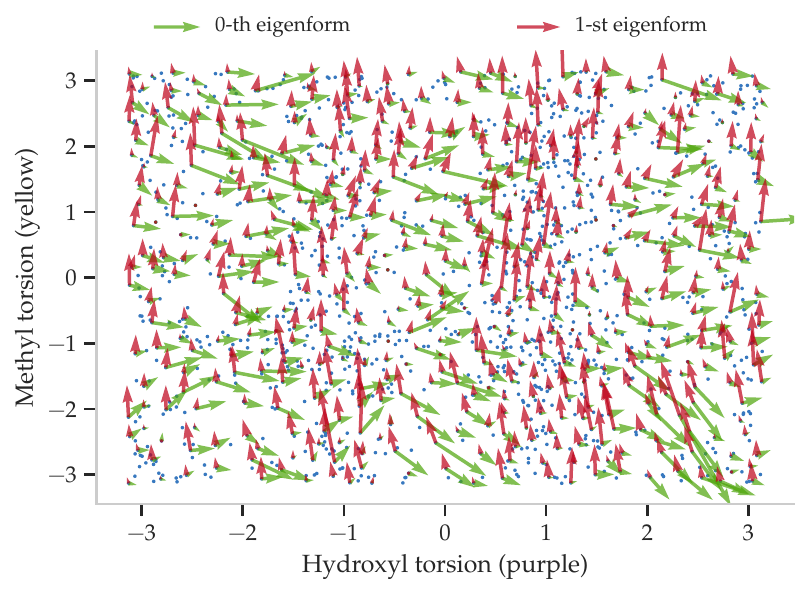}
    \label{fig:rev-int-comp-exps-ethanol-mse}}\hfill

    \subfloat[][]
    {\includegraphics[width=0.3\linewidth]{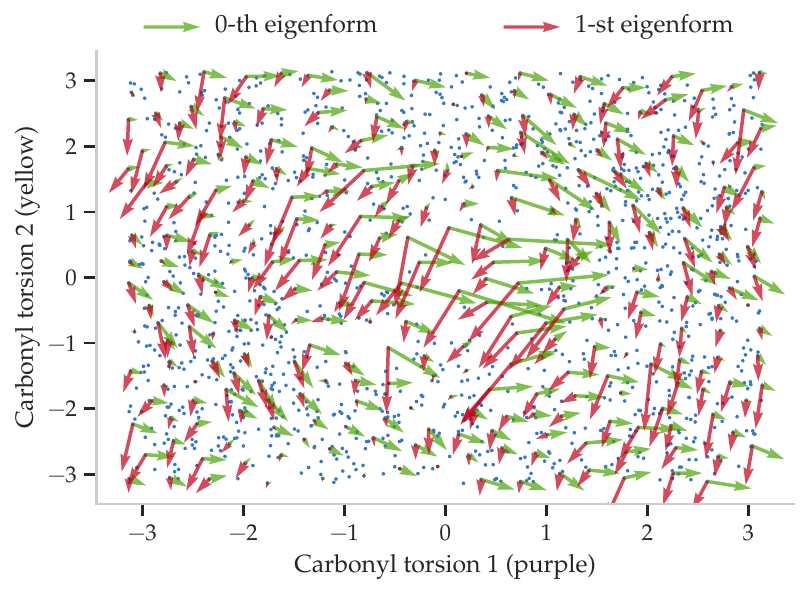}
    \label{fig:rev-int-comp-exps-mda-damp0}}\hfill
\subfloat[][]
    {\includegraphics[width=0.3\linewidth]{figs/MDA/rev_int_exp/eigenform_tau_space_label_description}
    \label{fig:rev-int-comp-exps-mda-corr}}\hfill
\subfloat[][]
    {\includegraphics[width=0.3\linewidth]{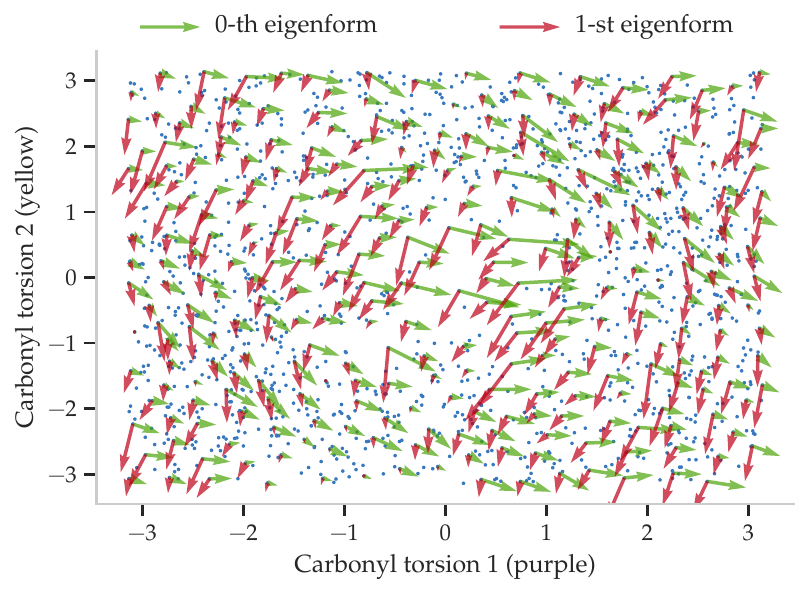}
    \label{fig:rev-int-comp-exps-mda-mse}}\hfill

    \caption{Comparisons of the estimated velocity fields from 1-cochain for various datasets with different regularization coefficient $\lambda$'s. Columns from left to right are $\lambda = 0$, $\lambda^*_\rho$, and $\lambda^*_\text{MSE}$ (see more detail in Supplement \ref{sec:reverse-interpolation-damped-lsqrt} and Figure \ref{fig:rev-int-comp}). Rows from top to bottom correspond to the eigenflows of synthetic circle, eigenflows of synthetic torus, eigenflows of synthetic flat torus, eigenflows of ethanol in torsion space, and eigenflows of MDA in torsion space. The last two vector fields are mapped from original PCA space $\vX$ to torsion space using \eqref{eq:vec-field-mapping-jacobian}.}
    \label{fig:rev-int-comp-exps-1}
\end{figure}

\begin{figure}[!htb]
    \subfloat[][]
    {\includegraphics[width=0.3\linewidth]{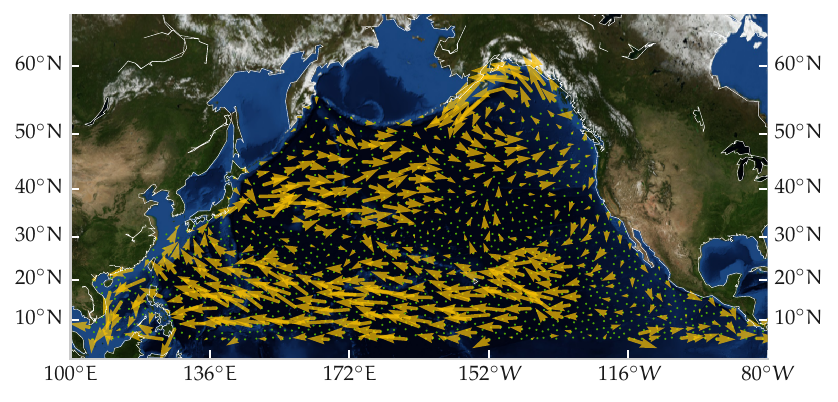}
    \label{fig:rev-int-comp-exps-ocean-smooth-damp0}}\hfill
\subfloat[][]
    {\includegraphics[width=0.3\linewidth]{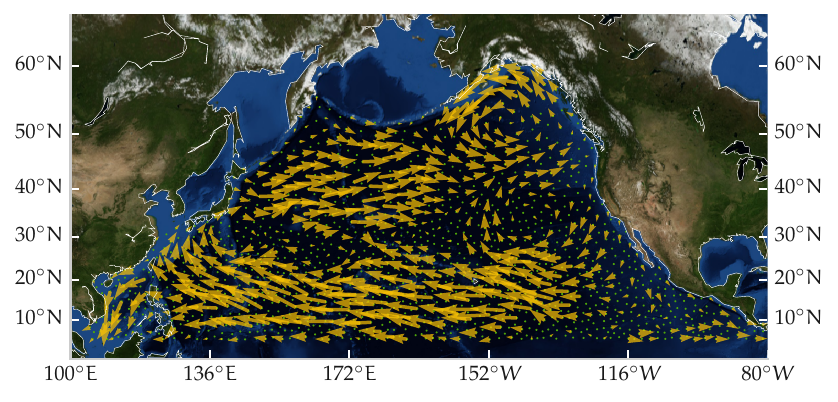}
    \label{fig:rev-int-comp-exps-ocean-smooth-corr}}\hfill
\subfloat[][]
    {\includegraphics[width=0.3\linewidth]{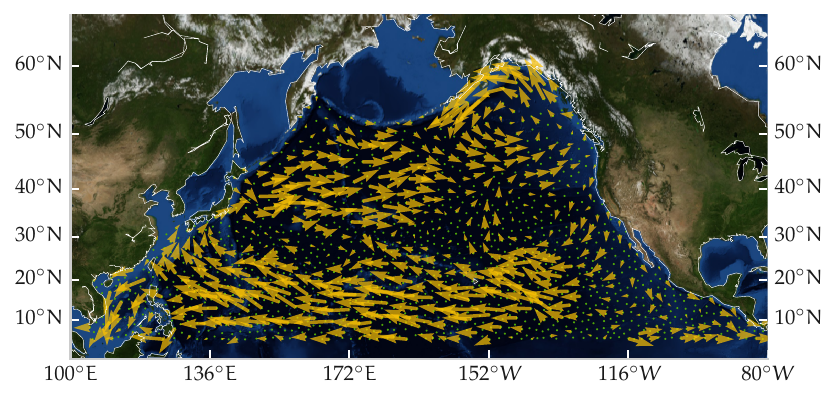}
    \label{fig:rev-int-comp-exps-ocean-smooth-mse}}\hfill

    \subfloat[][]
    {\includegraphics[width=0.3\linewidth]{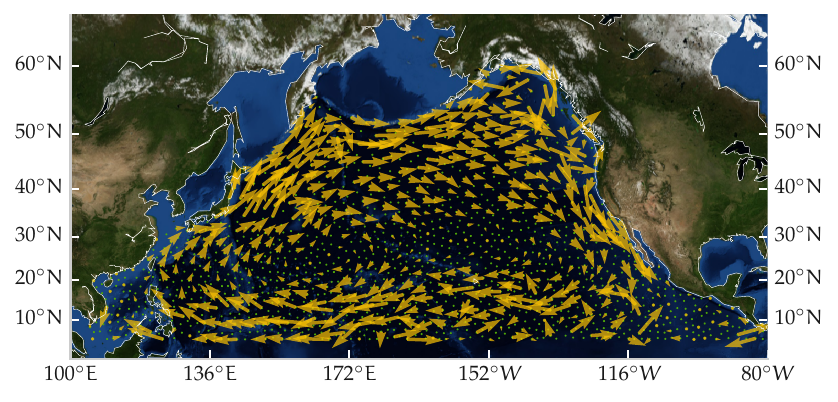}
    \label{fig:rev-int-comp-exps-ocean-ip-damp0}}\hfill
\subfloat[][]
    {\includegraphics[width=0.3\linewidth]{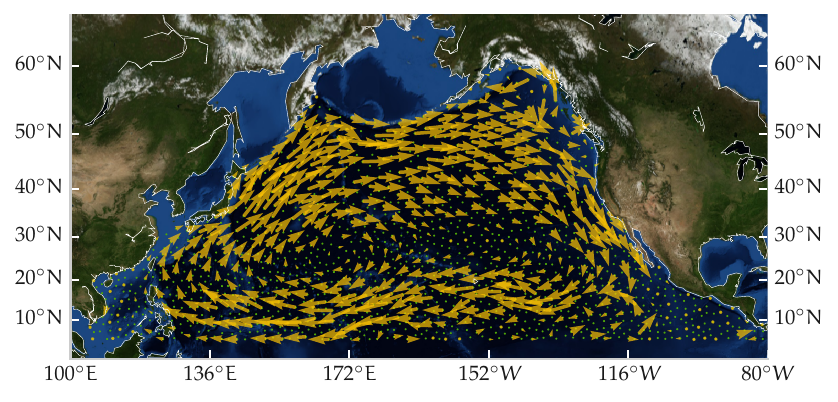}
    \label{fig:rev-int-comp-exps-ocean-ip-corr}}\hfill
\subfloat[][]
    {\includegraphics[width=0.3\linewidth]{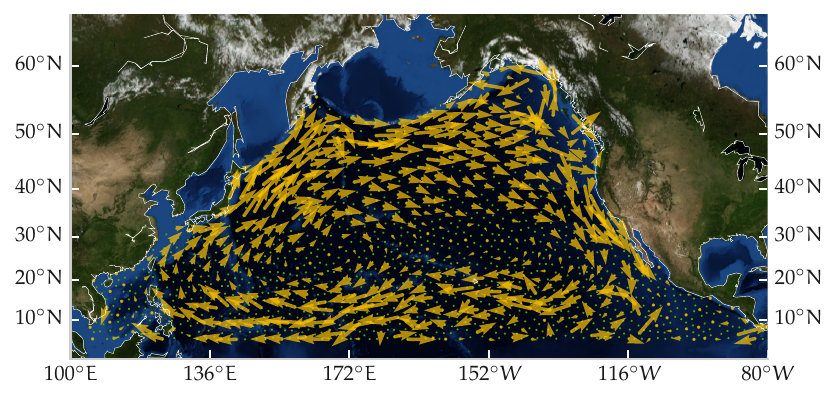}
    \label{fig:rev-int-comp-exps-ocean-ip-mse}}\hfill

    \subfloat[][]
    {\includegraphics[width=0.3\linewidth]{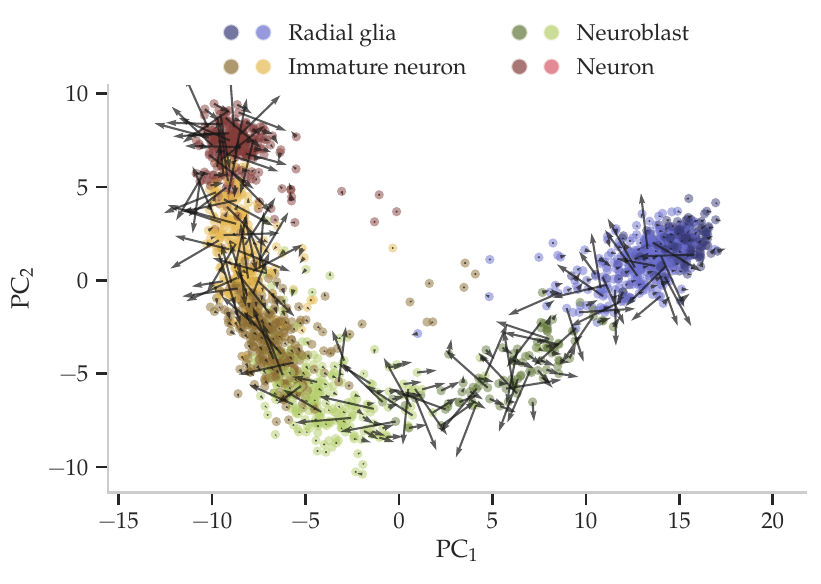}
    \label{fig:rev-int-comp-exps-rna-embryo-count-damp0}}\hfill
\subfloat[][]
    {\includegraphics[width=0.3\linewidth]{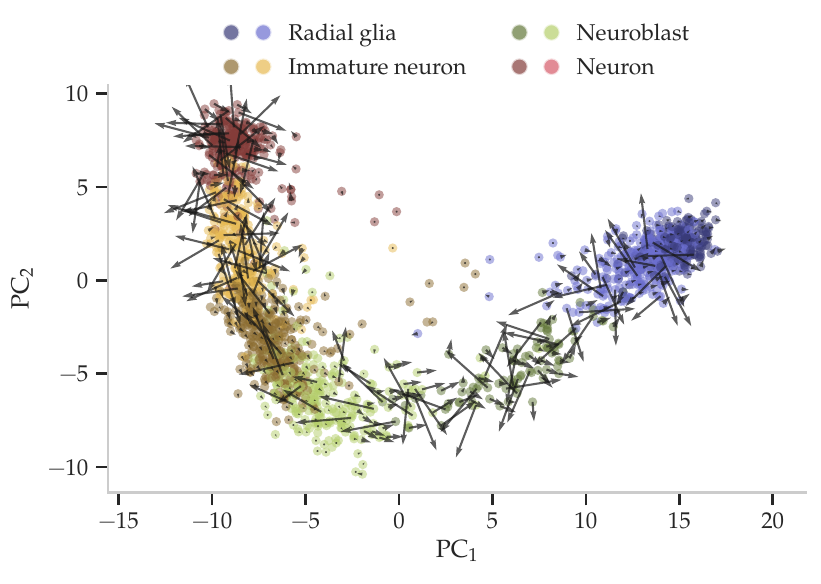}
    \label{fig:rev-int-comp-exps-rna-embryo-count-corr}}\hfill
\subfloat[][]
    {\includegraphics[width=0.3\linewidth]{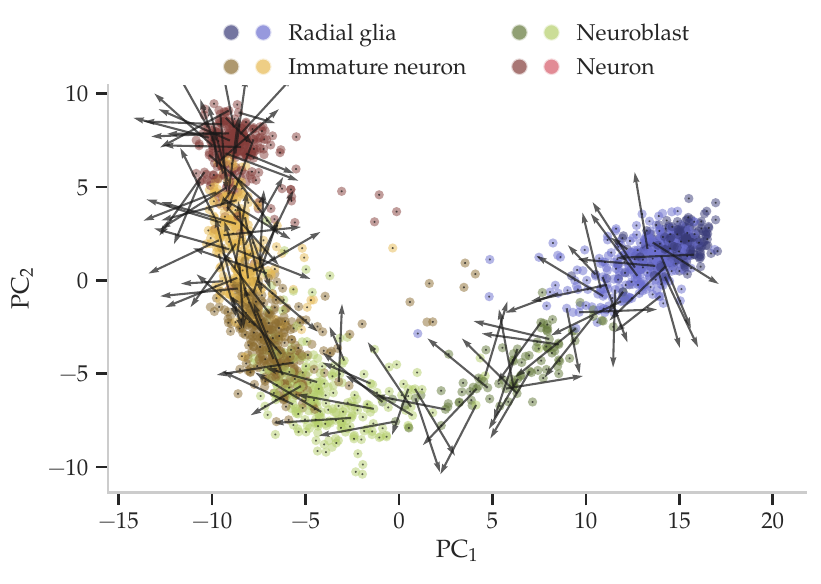}
    \label{fig:rev-int-comp-exps-rna-embryo-count-mse}}\hfill

    \subfloat[][]
    {\includegraphics[width=0.3\linewidth]{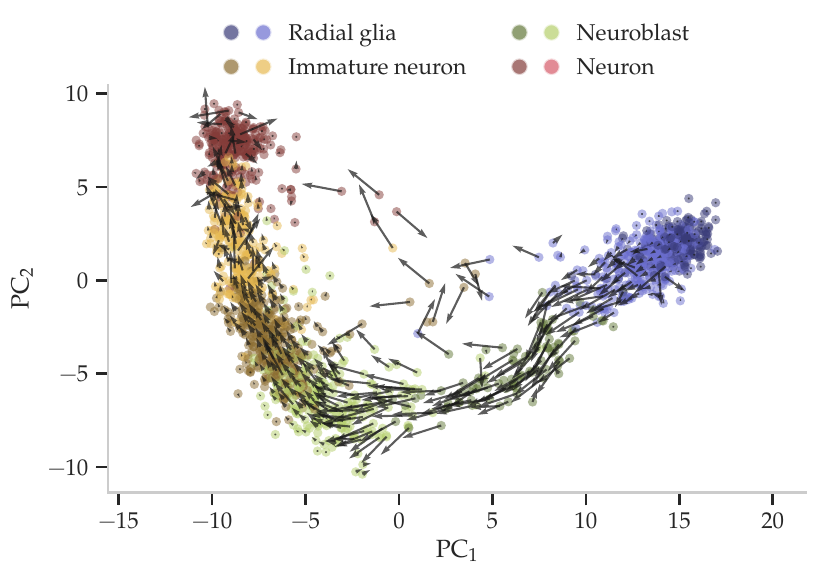}
    \label{fig:rev-int-comp-exps-rna-embryo-ip-damp0}}\hfill
\subfloat[][]
    {\includegraphics[width=0.3\linewidth]{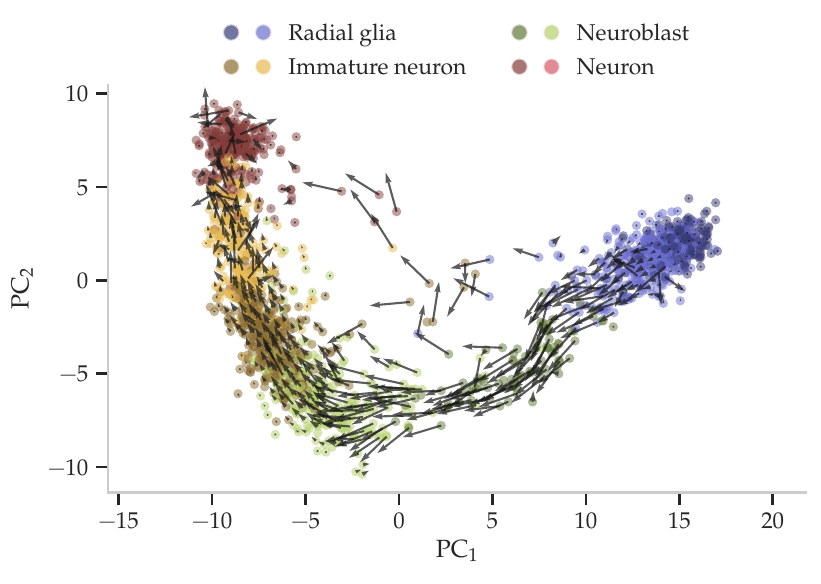}
    \label{fig:rev-int-comp-exps-rna-embryo-ip-corr}}\hfill
\subfloat[][]
    {\includegraphics[width=0.3\linewidth]{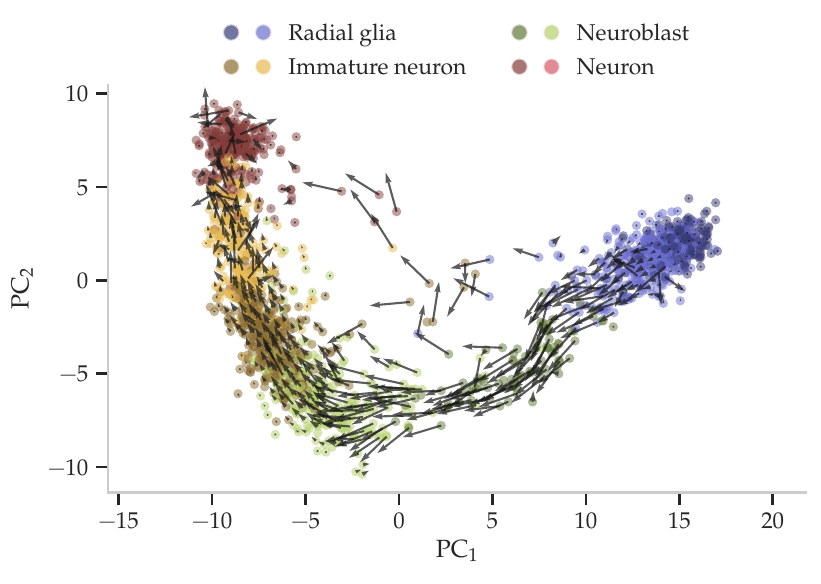}
    \label{fig:rev-int-comp-exps-rna-embryo-ip-mse}}\hfill

    \caption{Comparisons of the estimated velocity fields from 1-cochain for various datasets with different regularization coefficient $\lambda$'s. Columns from left to right are $\lambda = 0$, $\lambda^*_\rho$, and $\lambda^*_\text{MSE}$ (see more detail in Supplement \ref{sec:reverse-interpolation-damped-lsqrt} and Figure \ref{fig:rev-int-comp}). Rows from top to bottom correspond to the smoothed velocity field of ocean buoy dataset, estimated velocity field from the inverse problem on ocean buoy data, estimated velocity field of the zero-padded cochain on the human glutamatergic neuron data, and the estimated velcoity from the inverse problem on human glutamatergic neuron dataset.}
    \label{fig:rev-int-comp-exps-2}
\end{figure}

\end{document}